\newcommand{\commentout}[1]{}
\newcommand{\card}{\mathrm{Card}}
\newcommand{\ReLU}{\mathrm{ReLU}}
\newcommand{\CL}{\mathrm{CL}}
\newcommand{\Var}{\mathrm{Var}}
\theoremstyle{plain}
\newtheorem{theorem}{Theorem}[]
\newtheorem{lemma}{Lemma}
\newtheorem{setting}{Setting}
\newtheorem{corollary}{Corollary}
\newtheorem{example}{Example}
\theoremstyle{definition}
\newtheorem{definition}{Definition}
\newtheorem{assumption}{Assumption}
\theoremstyle{remark}
\newtheorem{remark}{Remark}
\newcommand{\vquarter}{\vspace{-.04in}}
\newcommand{\vhalf}{\vspace{-.09in}}
\title{ Generalization Error Guaranteed  Auto-Encoder-Based Nonlinear Model Reduction for Operator Learning}
\author{
Hao Liu, Biraj Dahal, Rongjie Lai, Wenjing Liao
\thanks{Hao Liu is affiliated with the Math department of Hong Kong Baptist University; Biraj Dahal and Wenjing Liao are affiliated with the School of Math at Georgia Tech; Rongjie Lai is affiliated with the Department of Mathematics at Purdue University. Email: \text{haoliu@hkbu.edu.hk, lairj@purdue.edu, $\{$bdahal, wliao60$\}$@gatech.edu}. This research is partially supported by National Natural Science Foundation of China  12201530, HKRGC ECS 22302123, HKBU 179356,  NSF DMS-2401297, NSF DMS--2012652,  NSF DMS-2145167 and DOE SC0024348.}}
\date{}
\begin{document}

\setlength{\abovedisplayskip}{3pt}
\setlength{\belowdisplayskip}{3pt}

\maketitle

\begin{abstract}

Many physical processes in science and engineering are naturally represented by operators between infinite-dimensional function spaces. The problem of operator learning, in this context, seeks to extract these physical processes from empirical data, which is challenging due to the infinite or high dimensionality of data. An integral component in addressing this challenge is model reduction, which reduces both the data dimensionality and problem size. In this paper, we utilize low-dimensional nonlinear structures in model reduction by investigating Auto-Encoder-based Neural Network (AENet). AENet first learns the latent variables of the input data and then learns the transformation from these latent variables to corresponding output data. Our numerical experiments validate the ability of AENet to accurately learn the solution operator of nonlinear partial differential equations. Furthermore, we establish a mathematical and statistical estimation theory that analyzes the generalization error of AENet. Our theoretical framework shows that the sample complexity of training AENet is intricately tied to the intrinsic dimension of the modeled process, while also demonstrating the remarkable resilience of AENet to noise.
\end{abstract}

\section{Introduction}

In the last two decades, deep learning has made remarkable successes in various fields such as computer vision \citep{krizhevsky2012imagenet, goodfellow2014generative}, natural language processing \citep{graves2013speech}, healthcare \citep{miotto2017deep}, and robotics \citep{gu2017deep}, among others. More recently, deep neural networks have been extended to a wide range of applications in scientific computing. This expansion includes numerical partial differential equations (PDEs) \citep{sirignano2018dgm,khoo2021solving,han2018solving,raissi2019physics,zang2020weak}, computational inverse problems \citep{ongie2020deep,khoo2019switchnet}, dynamics prediction \citep{ling2016reynolds},  and model reduction \citep{otto2019linearly,wang2018model,lee2020model,fresca2021comprehensive,gonzalez2018deep,fresca2021comprehensive}, to name a few. These developments demonstrate the versatility and potential of deep learning in scientific machine learning.

In a wide array of scientific and engineering applications, numerous objects of interest are represented as functions or vector fields. For instance, many physical processes are modeled by operators that act between these function spaces. Differential equations are typical tools used to model such physical processes. With the advancement of machine learning, there has been a surge in data-driven approaches to understand physical processes. These methods enable the characterization and simulation of physical processes based on training data.
In recent years, significant advances have been made in the realm of operator learning, which focuses on learning unknown operators within functional spaces. 
Representative works include DeepONets \citep{lu2021learning} based on the universal approximation theory in \cite{chen1995universal}, Neural Operators \citep{kovachki2021neural,li2020neural,li2020fourier}, BelNet \citep{zhang2023belnet}, etc. Mathematical theories on the approximation and generalization errors of operator learning can be found in \cite{lanthaler2022error} for DeepONets, in \cite{kovachki2021universal} for FNO, and in \cite{bhattacharya2020model,liu2022deep} for operator learning with dimension reduction techniques.

One of the major challenges of operator learning arises from the infinite or high dimensionality of the problem/data.
Recent approximation theories on neural networks for operator learning \cite{lanthaler2023curse} demonstrate that,  operator learning methods, including DeepONet \citep{lu2021learning}, NOMAD \citep{NEURIPS2022_24f49b2a} and Fourier Neural Operator (FNO) \citep{li2020fourier}, suffer from the curse of dimensionality without additional assumptions on low-dimensional structures. 
Specifically, \cite{lanthaler2023curse} proves that, there exists a $r$-times Fr\'echet differentiable functional, such that in order to achieve an $\epsilon$ approximation error for this functional, the network size of DeepONet, NOMAD (NOnlinear MAnifold Decoder) and FNO  is lower bounded by ${\rm exp}(C\epsilon^{-1/(cr)})$ where $C,c$ are constants depending on the problem. Such results demonstrate that, a huge network is needed to universally approximate $r$-times Fr\'echet differentiable functionals on an infinite dimensional space. It is impossible to reduce the network size unless additional structures about the operator (or input and output) are exploited.

Fortunately, the vast majority of real-world problems exhibit low-dimensional structures. For example, functions generated from translations or rotations only depend on few parameters \citep{tenenbaum2000global,roweis2000nonlinear,coifman2005geometric}; Whale vocal signals  can be parameterized by polynomial phase coefficients \citep{xian2016intrinsic}; In molecular dynamics, the dynamical evolution is often governed by a small number of slow modes \citep{ferguson2010systematic,ferguson2011nonlinear}. Thus it is natual to consider model reduction to reduce the data dimension and the problem size. 

In the literature, linear reduction methods have shown considerable success when applied to models existing within low-dimensional linear spaces. Examples include the reduced-basis technique \cite{prud2002reliable,rozza2008reduced}, proper orthogonal decomposition \cite{carlberg2011low,holmes2012turbulence}, Galerkin projection \cite{holmes2012turbulence}, and many others. A survey about model reduction can be found in \cite{benner2015survey,benner2017model}. More recently,
linear model reduction methods have been combined with deep learning in various ways.   \cite{hesthaven2018non,wang2019non} consider very low-dimensional inputs and employ Principal Component Analysis (PCA) \citep{hotelling1933analysis} for the output space. \cite{bhattacharya2020model} use PCA for both the input and output spaces. The active subspace method is used in \cite{o2022derivative} for dimension reduction. In these works, dimension reduction is achieved by existing linear model reduction methods, and operator learning is carried on the latent variables by a neural network. 
Theoretically, the network approximation error and stochastic error of PCA are analyzed in \cite{bhattacharya2020model}. A generalization error analysis on operator learning with linear dimension reduction techniques is given in \cite{liu2022deep}. This paper shows that fixed linear encoders given by Fourier basis or Legendre polynomials give rise to a slow rate of convergence of the generalization error as $n$ increases, and data-driven PCA encoders are suitable for input and output functions concentrated near low-dimensional linear subspaces.

However, many physical processes in practical applications are inherently nonlinear, such as fluid motion, nonlinear optical processes, and shallow water wave propagation. As a result, the functions of interest frequently reside on low-dimensional manifolds rather than within low-dimensional subspaces.
Addressing these nonlinear structures is vital in model reduction. Recent studies have shown that deep neural networks are capable of representing a broad spectrum of nonlinear functions~\citep{yarotsky2017error,lu2021deep,suzuki2018adaptivity} and adapting to the low-dimensional structures of data \citep{chen2019efficient,chen2022nonparametric,hao2021icml,liu2023deep,nakada2020adaptive}. Auto-Encoders, in particular, have gained widespread use in identifying low-dimensional latent variables within data~\citep{kramer1991nonlinear,kingma2019introduction}.  Approximation and Statistical guarantees of Auto-Encoders for data near a low-dimensional manifold are established in~\citet{schonsheck2019chart,tang2021empirical,liu2023deep}.

In literature,  Auto-Encoder-based neural networks have been proposed for model reduction in various ways \citep{otto2019linearly,wang2018model,lee2020model,fresca2021comprehensive,gonzalez2018deep,fresca2021comprehensive,franco2023deep,NEURIPS2022_24f49b2a,kontolati2023learning,kim2020efficient}.  \cite{NEURIPS2022_24f49b2a} assumes that the output functions in operator learning are concentrated near a low-dimensional manifold, and proposes NOnlinear MAnifold Decoder (NOMAD) for the solution submanifold. Numerical experiments in \cite{NEURIPS2022_24f49b2a} demonstrate that nonlinear decoders significantly outperform linear decoders, when the output functions are indeed on a low-dimensional manifold.
In \cite{kontolati2023learning}, Auto-Encoders are used to extract the latent features for the inputs and outputs respectively, and DeepONet is applied on latent features for operator learning. Numerical experiments in \cite{kontolati2023learning} demonstrate improved predictive
accuracy when DeepONet is applied on latent features.

Despite the experimental success witnessed in Auto-Encoder-based neural networks for nonlinear model reduction, there is currently no established mathematical and statistical theory that can justify the heightened accuracy and reduced sample complexity achieved by these networks. Our paper aims to investigate this line of research through a comprehensive generalization error analysis of Auto-Encoder-based Neural Networks (AENet) within the context of nonlinear model reduction in operator learning. We present theoretical analysis that demonstrates the sample complexity of AENet depends on the intrinsic dimension of the model, rather than the dimension of its ambient space. This analysis provides a theoretical foundation for understanding how Auto-Encoder-based neural networks effectively exploit low-dimensional nonlinear structures in the realm of operator learning, offering a novel perspective on this subject.

\subsection*{Summary of our main results}

This paper explores the use of Auto-Encoder-based neural network (AENet) for operator learning in function
spaces, leveraging the Auto-Encoder-based nonlinear model reduction technique. 
Our goal is to achieve numerical success of nonlinear model reduction in comparison with linear model reduction methods. More importantly, as a novel part of this paper, we will establish a generalization error analysis in this context.

We explore AENet to handle the operator learning problems when the inputs are concentrated on a low-dimensional nonlinear manifold. Our algorithm has two stages. The first stage is to build an Auto-Encoder to learn the latent variable for the input. The second stage is to learn a transformation from the input latent variable to the output. The architecture of AENet is shown in Figure \ref{figaenet}(a). Furthermore, we provide a framework to analyze the generalization error and sample complexity of AENet.

Let $\cX$ and $\cY$ be two sets of functions in two Hilbert spaces  and $\Psi:\cX\rightarrow \cY$ be an unknown Lipschitz operator. Consider i.i.d. samples $\{u_i\}_{i=1}^{2n} \subset\cX $ and the noisy outputs 
$$\whv_i = v_i +\epsilon_i, \ \text{with} \ v=\Psi(u),$$ where the i.i.d. noise $\{\epsilon_i\}_{i=1}^{2n}$ is independent of the $u_i$'s. The functions $u\in \cX,v\in \cY$ are  discretized as $ \cSX(u)\in \RR^{D_1},\cSY(v)\in \RR^{D_2}$, where $\cSX$ and $\cSY$ are discretization operators for functions  in $\cX$ and $\cY$, respectively. Given the discretized data $\{\cSX(u_i),\cSY(\whv_i)\}_{i=1}^{2n}$, we aim to learn the operator $\Psi$. 

When the input $u\in \cX$ is concentrated on a low-dimensional nonlinear set parameterized by $d$ latent variables, we study AENet which learn the input latent variable and the operator in two stages.

{\bf Stage I:} We use $\{\cSX(u_i)\}_{i=1}^{n}$ to train an Auto-Encoder $(\EXN,\DXN)$ with $$\EXN:\RR^{D_1}\rightarrow\RR^{d} \ \text{ and }\ \DXN:\RR^{d}\rightarrow\RR^{D_1}$$ for the input. This Auto-Encoder gives rise to the input latent variable $\EXN\circ\cSX(u) \in \RR^d$.

{\bf Stage II:} We use $\{\cSX(u_i),\cSY(\whv_i)\}_{i=n+1}^{2n}$ to learn a  transformation from the input latent variable to the output:
\begin{equation}
		 \Gamma_{\rm NN}^n\in \argmin_{\Gamma'_{\rm NN}\in \cF_{\rm NN}^{\Gamma}} \frac{1}{n}\sum_{i=n+1}^{2n}\| \Gamma'_{\rm NN}\circ E_{\cX}^n \circ \cSX(u_i) -\cSY(\whv_i)\|_{\cSY}^2,
  \label{eq:gamma:loss}
	\end{equation}
 where $\|\cdot\|_{\cSY}$ is the discretized counterpart of the function norm in $\cY$.

Combining Stage I and Stage II gives rise to the operator estimate in the discretized space
$$\Phi^n_{\rm NN} =\Gamma_{\rm NN}^n \circ\EXN: \ \RR^{D_1} \rightarrow \RR^{D_2},$$
which transforms the discretized input function $\cSX(u)$  to the discretized output function $\cSY(v)$. 

 Numerical experiments are provided in Section \ref{secnum} to learn the solutions of nonlinear PDEs from various initial conditions. We consider the transport equation for transportation models, the Burgers' equation with viscosity $10^{-3}$ in fluid mechanics and the  Korteweg–De Vries (KdV) equation modeling waves on shallow water surfaces. Our experiments demonstrate that AENet significantly outperforms linear model reduction methods \citep{bhattacharya2020model}. AENet is effective in handling nonlinear structures in the input, and are robust to noise. 

 This paper  provides a solid mathematical and statistical estimation theory on the generalization error of AENet.
Our Theorem \ref{thm:generalization} shows that, the squared generalization error decays exponentially as the sample size $n$ increases, and the rate of decay depends on the intrinsic dimension $d$. Specifically, Theorem \ref{thm:generalization} proves the following upper bound on the squared generalization error:
\begin{equation}\EE_{\rm Data}\EE_{u} \|\Phi^n_{\rm NN}\circ \cSX(u) -\cSY\circ\Psi(u)\|^2_{\cSY}
\leq C(1+\sigma^2) n^{-\frac{1}{2+d}}\log^3 n, 
\label{eq:generalizationerrorintro}
\end{equation}
where $C$ is a constant depending on the model parameters, and $\sigma^2$ represents the variance of noise.
The contributions of  Theorem \ref{thm:generalization} are summarized below:

{\bf Leverage the dependence on intrinsic parameters:} This theory justifies the benefits of model reduction by AENet. The rate of convergence for the generalization error depends on the intrinsic dimension $d$, even though the unknown operator is between two infinite-dimensional function spaces.
To our best knowledge, this is the first statistical estimation theory on the generalization error of nonlinear model reduction by deep neural networks.

{\bf Robustness to noise:} The constant $C(1+\sigma^2)$ in \eqref{eq:generalizationerrorintro} is proportional to the variance of noise. Our result demonstrates that AENet is robust to noise. Moreover, AENet has a denoising effect as the sample size increases, since squared generalization error decreases to  $0$ as $n$ increases to $\infty$.

\textbf{Dependence on the interpolation error:}
In some applications, test functions are discretized on a different grid as training functions. We can interpolate the test function on the training grid, and evaluate the output. In Remark  \ref{remark.gene.interp}, we show that, in this case the squared generalization error has an additional term about the interpolation error.

\subsection*{Organization} This paper is organized as follows: We provide  preliminary definitions and discuss function discretization in Section \ref{sec:prelim}. We then  introduce the operator learning problem, explore nonlinear models and describe our AENet architecture in Section \ref{secaenet}.  Our main results, including the approximation theory and generalization error guarantees of AENet,  are presented in Section \ref{secerror}. Numerical experiments are detailed in Section \ref{secnum} and the proof of our main results is given in Section \ref{sec:proof}. Proofs of lemmas are postponed to Appendix \ref{applemma}.  Finally, we conclude our paper with discussions in Section \ref{secdiscussion}.

\section{Preliminaries and  discretization of functions }
\label{sec:prelim}

In this section, we delineate the notations utilized throughout this paper. Additionally, we define key concepts such as Lipschitz operators, the Minkowski dimension and ReLU networks. Furthermore, we provide details about the function spaces of interest and the discretization operators employed in discretizing functions.

\subsection{Notation}

We use bold letters to denote vectors, and capital letters to denote matrices.
For any vector $\xb \in \RR^{d}$, we denote its Euclidean norm by $\|\xb\|_2 =\sqrt{\sum_i |x_i|^2}$, its $\ell^\infty$ norm by $\|\xb\|_\infty =\sup_i |x_i|$, and its $\ell^1$ norm by $\|\xb\|_1 =\sum_i |x_i|$. We use $\|\cdot\|_0$ to denote the number of nonzero elements of its argument. We use $B_2^d(\xb,\delta)$ to represent the open Euclidean ball in $\RR^d$ centered at $\xb$ with radius $\delta$. Similarly,  $B_\infty^{d}(\xb,\delta)$ denotes the $L^\infty$ ball in $\RR^d$ centered at $\xb$ with radius $\delta$. We use $\#E$ to denote the cardinality of the set $E$ and $|E|$ to denote the volume of $E$. For a function $u:\Omega\rightarrow \RR$, its $L^p$ norm is $\|u\|_{L^p(\Omega)}:=\left(\int_{\Omega}|u(\xb)|^p d\xb \right)^{1/p}$ and its $L^\infty$ norm is $\|u\|_{L^{\infty}(\Omega)}:=\sup_{\xb\in\Omega}|u(\xb)|$. For a vector-valued function $\fb$ defined on $\Omega$, we denote $\|\fb\|_{\infty,\infty}=\sup_{\xb\in \Omega}\|\fb(\xb)\|_{\infty}$. Throughout the paper, we use letters with a tilde to denote their discretized counterpart,  letters with a subscript NN to denote networks, letters with a superscript $n$ to denote empirical estimations.

\subsection{Preliminaries}

\begin{definition}[Lipschitz operators]
\label{deflip}
An operator $\Theta:\cA\rightarrow \cB$ is Lipschitz if 
\[L_{\Theta} :=\sup_{u_1\neq u_2\in\cA} \frac{\|\Theta(u_1)-\Theta(u_2)\|_{\cB}}{\|u_1-u_2\|_{\cA}} <\infty,\]
where $L_{\Theta} $ is called the Lipschitz constant of $\Theta$. 
\end{definition}

\begin{definition}[Minkowski dimension]
Let $E\subset \RR^D$. For any $\varepsilon>0$, $\cN(\varepsilon,E,\|\cdot\|_\infty)$ denotes the fewest number of $\varepsilon$-balls that cover $E$ in terms of $\|\cdot\|_\infty$.
The (upper) Minkowski dimension of $E$ is defined as

\[d_{M}E:=\limsup_{\varepsilon\rightarrow 0^{+}} \frac{\log \cN(\varepsilon,E,\|\cdot\|_\infty)}{
\log(1/\varepsilon)}.\]

\end{definition}

The Minkowski dimension is also called the box-counting dimension. It describes how the box covering number  $\cN(\varepsilon,E,\|\cdot\|_\infty)$ scales with respect to the box side length $\varepsilon$. If $\cN(\varepsilon,E,\|\cdot\|_\infty)\approx C \varepsilon^{-d}$, then $d_{M}E=d$.
\\

\noindent 
{\bf Deep neural networks:}
We study the ReLU activated feedforward neural network (FNN):
\begin{equation}
	f_{\rm NN}(\xb)=  W_L\cdot\ReLU\big( W_{L-1}\cdots \ReLU(W_1\xb+\bbb_1) + \cdots +\bbb_{L-1}\big)+\bbb_L,	
 \label{eq.FNN.f}
\end{equation}
where $W_l$'s are weight matrices, $\bbb_l$'s are biases and $\ReLU(a)=\max\{a,0\}$ denotes the rectified linear unit (ReLU). We consider the following class of FNNs
\begin{align}
	&\cF_{\rm NN}(d_1,d_2,L,p,K,\kappa,M)=
 \Big\{f_{\rm NN}:\RR^{d_1}\rightarrow \RR^{d_2}| f_{\rm NN}(\xb) \mbox{ is in the form of (\ref{eq.FNN.f}) with } L \mbox{ layers,} \label{eq.FNN}\\
 &\quad \mbox{width bounded by } p,  \|f_{\rm NN}\|_{L^\infty}\leq M, \ \|W_l\|_{\infty,\infty}\leq \kappa, \|\bbb_l\|_{\infty}\leq \kappa,\  \textstyle \sum_{l=1}^L \|W_l\|_0+\|\bbb_l\|_0\leq K   \Big\},
	\nonumber
\end{align}
where
$\|W\|_{\infty,\infty}=\max_{i,j} |W_{i,j}|,\ \|\bbb\|_{\infty}=\max_i |b_i|
$ for any matrix $W$ and vector $\bbb$. 

\subsection{Function spaces and discretization}

We consider compact domains $\Omega_{\cX} \subset \RR^{d_{\Omega_{\cX}}}$ and $\Omega_{\cY} \subset \RR^{d_{\Omega_{\cY}}}$, and Hilbert spaces $L^2(\Omega_{\cX})$ and $L^2(\Omega_{\cY})$. 
The space 
$L^2(\Omega_{\cX}):=\{u:\Omega_{\cX} \rightarrow \RR:  \int_{\Omega_{\cX}}  |u(\xb)|^2d\xb < \infty\}
$
 is equipped with the inner product $
\textstyle \langle u_1,u_2\rangle:= \int_{\Omega_{\cX}}  {u_1(\xb)}u_2(\xb)d\xb,\ \forall  u_1,u_2 \in L^2(\Omega_{\cX}).
$
The norms of $L^2(\Omega_{\cX})$ and $L^2(\Omega_{\cY})$ are denoted by $\|\cdot\|_{\cX} = \|\cdot\|_{L^2(\OcX)}$ and $\|\cdot\|_{\cY} = \|\cdot\|_{L^2(\OcY)}$, respectively.
 \commentout{
\begin{equation}
\textstyle \langle u,v\rangle:= \int_{\Omega}  {u(\xb)}v(\xb)d\xb,\ \forall  u,v \in L^2(\Omega).
\label{eql2innerproduct}
\end{equation}}
Let $\cX\subset L^2(\Omega_{\cX})$ and $\cY \subset L^2(\Omega_{\cY})$. 
This paper considers differentiable input and output functions:
\begin{align}
&\cX \subset C^1(\Omega_{\cX}):=\{u\in L^2(\Omega_{\cX}):\ \nabla u \text{ is continuous}  \},
\label{eqcXdiff}
\\
& \cY \subset C^1(\Omega_{\cY}):=\{v\in L^2(\Omega_{\cY}): \ \nabla v \text{ is continuous} \},
\label{eqcYdiff}
\end{align}
and
\begin{equation}
{\sup_{u\in\cX}\sup_{\xb \in \OcX} \|\nabla u(\xb)\|_1} <\infty, \quad {\sup_{v\in \cY}\sup_{\yb \in \OcY} \|\nabla v(\yb)\|_1} <\infty.
\label{eqdiffbound}
\end{equation}

In applications, functions need to be  discretized. 
Let $\{\xb_i\}_{i=1}^{D_1} \subset \Omega_{\cX}$ and $\{\yb_j\}_{j=1}^{D_2} \subset\Omega_{\cY}$ be the  discretization grid on the $\Omega_{\cX}$ and $\Omega_{\cY}$ domain respectively. The discretization operator on $\cX$ and $\cY$ are
\begin{align*}
& \cS_{\cX} : \cX \rightarrow \RR^{D_1}, \ \text{s.t.} \ \cS_{\cX}(u) = \{u(\xb_i)\}_{i=1}^{D_1},
\\
& \cS_{\cY} : \cY \rightarrow \RR^{D_2}, \ \text{s.t.} \ \cS_{\cY}(v) = \{v(\yb_j)\}_{j=1}^{D_2}.
\end{align*}
This discretization operator $\cSX$ gives rise to an inner product and the induced norm on $\RR^{D_1}$ such that
\begin{equation}
   \langle\cSX(u_1),\cSX(u_2)\rangle_{\cSX} = \sum_{i=1}^{D_1}\omega_i u_1(\xb_i)u_2(\xb_i),
    \label{eqdiscreteinnerproduct}
\end{equation}
where $\{\omega_i:\ \omega_i>0\}_{i=1}^D
$ is given by a proper quadrature rule for the integral $\langle u_1,u_2\rangle$. Popular quadrature rules in numerical analysis include  the midpoint, trapezoidal, Simpson's rules, etc \citep{atkinson1991introduction}. The basic properties of the norms $\|\cdot \|_{\cSX}$ and $\|\cdot \|_{\cSY}$ are given in Appendix \ref{sec:empirical:property}.

For regular function sets $\cX$ and $\cY$ of practical interests, the convergence of Riemann integrals yields 
$\|\cSX(u)\|_{\cSX}^2  \approx \|u\|^2_{L^2(\Omega_{\cX})}$ for any  $u\in \cX$ and $\|\cSY(v)\|_{\cSY}^2  \approx \|v\|^2_{L^2(\Omega_{\cY})}$ for any  $v\in \cY$,
when the discretization grid is sufficiently fine. This motivates us to assume the following property:
\begin{assumption}
\label{assumption:samplinglip}
Suppose the function spaces $\cX$ and $\cY$ are sufficiently regular such that: there exist discretization operators $\cSX$ and $\cSY$ satisfying the  property:
\begin{align}
0.5\|u\|_{\cX}\le \|\cSX(u)\|_{\cSX} \le 2\|u\|_{\cX}, \quad 
0.5\|v\|_{\cY}\le \|\cSY(v)\|_{\cSY} \le 2\|v\|_{\cY}
\label{eqsamplinglip}
\end{align}
for all  functions $u\in \cX$ and $v\in \cY$.
\end{assumption}

Assumption \ref{assumption:samplinglip} is a weak assumption which holds for large classes of regular functions as long as the discretization grid is sufficiently fine. For simplicity, we consider $\Omega_{\cX} = [0,1]^{d_{\Omega_{\cX}}}$ and $\cX \subset C^1(\OcX) $. Suppose the grid points $\{\xb_i\}_{i=1}^{D_1}$ are on a uniform grid of  $[0,1]^{d_{\Omega_{\cX}}}$ with spacing $\Delta x$ and the quadrature rule in \eqref{eqdiscreteinnerproduct} is given by the Newton-Cotes formula where the integrand is approximated by splines. Piecewise constant, linear, quadratic approximations of the integrand gives rise to the Midpoint, Trapezoid and Simpson rules, respectively. Taking the Midpoint rule as an example, we can express
\begin{equation*}
    \|\cSX(u)\|_{\cSX} = \|I_{\rm const}\circ \cSX(u)\|_{L^2(\OcX)}, \text{ where } I_{\rm const}: \RR^{D_1}\rightarrow  L^2(\OcX)
\end{equation*}
is the piecewise constant interpolation operator, and $I_{\rm const}\circ \cSX(u)$ is the piecewise constant approximation of $u$.
As a result,
\begin{align*}
  \|u\|_{L^2(\OcX)} - \|I_{\rm const}\circ \cSX(u)-u\|_{L^2(\OcX)} \le  \|\cSX(u)\|_{\cSX}  \le \|u\|_{L^2(\OcX)} + \|I_{\rm const}\circ \cSX(u)-u\|_{L^2(\OcX)}.
\end{align*}
Assumption \ref{assumption:samplinglip} holds as long as $\|I_{\rm const}\circ \cSX(u)-u\|_{L^2(\OcX)} \le \frac 1 2 \|u\|_{L^2(\OcX)}$ uniformly for all functions $u \in \cX$. By Calculus, piecewise constant approximation of $u$ at a uniform grid with spacing $\Delta x$ gives rise to the error
\begin{equation*}
\|I_{\rm const}\circ \cSX(u)-u\|_{L^\infty(\OcX)} \le \Delta x \sup_{\xb \in \OcX} \|\nabla u(\xb)\|_1
\end{equation*}
where $\nabla u$ denotes the gradient of $u$, and $\|\nabla u(\xb)\|_1$ is the $\ell^1$ norm of the gradient vector $\nabla u(\xb)$.

If all functions in $\cX$ satisfy mild conditions such that
\begin{equation}
  \delta :=  \inf_{u \in \cX} \left( \frac{\frac 1 2 \|u\|_{L^2(\OcX)}}{\sup_{\xb \in \OcX} \|\nabla u(\xb)\|_1} \right) >0,
  \label{eqsamplingcondition}
\end{equation}
then the discretization operator $\cSX$ satisfies Assumption \ref{assumption:samplinglip} for all the function $u\in \cX$ as long as $\Delta x \le \delta$. Roughly speaking, the condition in \eqref{eqsamplingcondition} excludes  functions whose function norm is  too small, or whose derivative is too large. From the viewpoint of Fourier analysis, the condition in \eqref{eqsamplingcondition}  excludes infinitely oscillatory functions.

\begin{example}
\label{examplesampling}
Let $ A\ge a >0$ and
$$\cX = \left\{ \sum_{k=-N}^N a_k e^{2\pi i k x}: \ a\le |a_k| \le A\right\} \subset L^2([0,1]).$$
When the uniform sampling grid is sufficiently fine that 
\begin{equation}
\Delta x \le \frac{a\sqrt{2N+1}}{2\pi A N(N+1)},
\label{eqsamplingexample}
\end{equation}
then the discretization operator $\cSX$ satisfies Assumption \ref{assumption:samplinglip}.
\end{example}

Example \ref{examplesampling} is proved in Appendix \ref{applemma:examplesampling}.
In Example \ref{examplesampling}, the function set $\cX$ includes Fourier series up to frequency $N$. Assumption \ref{assumption:samplinglip} holds as long as the grid spacing is sufficiently small to resolve the resolution up to frequency $N$, as shown in \eqref{eqsamplingexample}. The larger $N$ is, the more oscillatory the functions in $\cX$ are, and therefore, $\Delta x$ needs to be smaller.

\section{Nonlinear model reduction by AENet}
\label{secaenet}
In this section, we will present the problem setup and the AENet architecture.

\subsection{Problem formulation}
\label{secproblem}

In this paper, we represent the unknown physical process by an operator $\Psi:\cX\rightarrow\cY$, where $\cX$ and $\cY$ are subsets of two separable Hilbert spaces $L^2(\Omega_{\cX})$ and $L^2(\Omega_{\cY})$ respectively. Our goal is to learn the operator $\Psi$ from the given  samples:  $\{u_i,\whv_i\}_{i=1}^{2n}$, where $u_i$ is an input of $\Psi$ and $\whv_i$ is the noisy output. In practice, the functions $u,v$ are discretized in the given data sets $\{\cSX(u_i),\cSY(\whv_i)\}_{i=1}^{2n}$.

\begin{setting}\label{setting1}
Let $\Omega_{\cX} \subset \RR^{d_{\Omega_{\cX}}}$ and $\Omega_{\cY} \subset \RR^{d_{\Omega_{\cY}}}$ be compact domains, and  $\cX \subset C^1(\Omega_{\cX})\subset L^2(\Omega_{\cX})$ and $\cY \subset C^1(\Omega_{\cY}) \subset L^2(\Omega_{\cY})$ such that \eqref{eqdiffbound} holds. Suppose the function sets $\cX$ and $\cY$ and the discretization operators $\cSX$ and $\cSY$ satisfy Assumption \ref{assumption:samplinglip}.   The unknown operator $\Psi: \cX \rightarrow \cY$ is  Lipschitz  with Lipschitz constant $L_{\Psi} >0$, and $\gamma$ is a probability measure on $\cX$.
 Suppose $\{u_i\}_{i=1}^{2n}$ are i.i.d. samples from $\gamma$ and the $\whv_i$'s are generated according to model:
	\begin{equation}
	v_i = \Psi(u_i) \text{ and }\   {\whv}_i = v_i +{\epsilon_i},
	\label{eq.v}
\end{equation}
where the ${\epsilon}_i$'s are i.i.d. samples from a probability measure on $\cY$, independently of the $u_i$'s. 
 The given data are 
 \begin{equation}
\cJ = \{\cSX(u_i),\cSY(\whv_i)\}_{i=1}^{2n},
\label{eqtrainingsample}
\end{equation}
where $\cSY(\whv_i)=\cSY(v_i)+\cSY(\epsilon_i)$.
\end{setting}

For simplicity, we denote the discretized functions as $\wtu = \cSX(u)$ and $\wtv = \cSY(v)$ for the rest of the paper.

\subsection{Low-dimensional nonlinear models}

Even though $L^2(\OcX)$ and $L^2(\OcY)$ are infinite-dimensional function spaces, the functions of practical interests often exhibit low-dimensional structures. The simplest low-dimensional model is the linear subspace model. However, a large amount of functions in real-world applications exhibit nonlinear structures. For example, functions generated from translations or rotations have a nonlinear dependence on few parameters  \citep{tenenbaum2000global,roweis2000nonlinear,coifman2005geometric}, which motivates us to consider functions with a low-dimensional nonlinear parameterization.
\begin{assumption}
\label{assumptiong:globalparametrization}
In Setting \ref{setting1}, the probability measure $\gamma$ is supported on a low-dimensional set $\cM\subset \cX$  such that: There exist invertible Lipschitz maps 
$$\fb:\cM\rightarrow [-1,1]^d,\ \text{and } \ \gb:[-1,1]^d\rightarrow \cM$$ such that 		$
			u=\gb\circ \fb(u)
		$
		for any $u\in\cM$.
 The Lipchitz constants $\fb$ and $\gb$ are  $L_{\fb}>0$ and $L_{\gb}>0$ respectively, such that
 \begin{align*}
		\|\fb(u_1)-\fb(u_2)\|_2 &\le L_{\fb} \|u_1-u_2\|_{\cX}, \quad 
		\|\gb(\zb_1)-\gb(\zb_2)\|_{\cX} \le L_{\gb} \|\zb_1-\zb_2\|_2
		\end{align*}
		for any $u_1,u_2 \in \cM$, and $\zb_1,\zb_2\in[-1,1]^d$. 
		Additionally, there exists $c_1>0$ such that
	\begin{equation}	
  \cM \subset \{u:\ \|u\|_{L^\infty(\Omega_{\cX})} \le c_1\|u\|_{\cX}\}, \label{assumptiong:globalparametrizationeq1}
		\end{equation}
  and $R_{\cY}:=\sup_{u\in\cM}\{\|\Psi(u)\|_{L^\infty(\Omega_{\cY})}\}<\infty.$
  \vquarter
\end{assumption}

Assumption \ref{assumptiong:globalparametrization} says that, even though the input $u$ is in the infinite-dimensional space, it can be parameterized by a $d$-dimensional latent variable. The intrinsic dimension of the inputs is $d$. Assumption \ref{assumptiong:globalparametrization} includes linear and nonlinear models since $\fb$ and $\gb$ can be linear and nonlinear maps. The condition in \eqref{assumptiong:globalparametrizationeq1} is a mild assumption excluding the case that the large values of $u$ concentrate at a set with a small Lebesgue measure. Assumption \ref{assumptiong:globalparametrization} implies that  
$
    R_{\cX}:=\sup_{u\in\cM} \|u\|_{L^\infty(\Omega_{\cX})} <\infty.
$
Assumption \ref{assumptiong:globalparametrization} also implies a low-dimensional parameterization of $\cSX(u)$.
We denote the range of  $\cM$ under $\cSX$ by
$$\wtcM:= \cSX(\cM)=\{\cSX(u):\ u \in \cM\} \subset \RR^{D_1}.$$

\begin{lemma}
\label{lemma:f1g1lip}
In Setting \ref{setting1} and under Assumptions \ref{assumption:samplinglip} and \ref{assumptiong:globalparametrization},
every point in $\wtcM$ exhibits the low-dimensional parameterization: 
$
\widetilde{\fb}: \ \wtcM \subset \RR^{D_1} \rightarrow [-1,1]^d,$
 such that $\widetilde{\fb} ( \cSX(u)):= \fb(u)$ and $
  \widetilde{\gb}: \  [-1,1]^d\rightarrow \wtcM \subset \RR^{D_1} ,$ such that $  \widetilde{\gb}(\theta):=\cSX\circ \gb(\theta)
$
which guarantees
\begin{equation}
\label{lemma:f1g1lipeq1}
\widetilde{\gb} \circ \widetilde{\fb} (\cSX(u)) = \cSX(u), \ \forall u \in \cM.
\end{equation}
$\widetilde{\fb}$ and $\widetilde{\gb}$ are invertible and Lipschitz with Lipchitz constants bounded by $2L_{\fb}$ and $2L_{\gb}$ respectively. 
  \vquarter
\end{lemma}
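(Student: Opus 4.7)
The plan is to establish the four claims (well-definedness of both maps, the identity in \eqref{lemma:f1g1lipeq1}, mutual invertibility, and the Lipschitz bounds $2L_\fb$, $2L_\gb$) in that order, each reducing essentially to combining the linearity of the point-evaluation operator $\cSX$ with the norm-equivalence from Assumption \ref{assumption:samplinglip} and the properties of $\fb,\gb$ already given in Assumption \ref{assumptiong:globalparametrization}. Since $\widetilde{\gb}(\theta)=\cSX\circ\gb(\theta)$ is defined by composition of explicitly given maps, it is automatically well-defined; the only well-definedness question is for $\widetilde{\fb}$, which is the step I expect to be the main obstacle because the defining rule $\widetilde{\fb}(\cSX(u)):=\fb(u)$ depends on the preimage $u\in\cM$, not just on $\cSX(u)\in\wtcM$.

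To resolve that obstacle, I would argue that $\cSX$ is injective on $\cM$. Given $u_1,u_2\in\cM$ with $\cSX(u_1)=\cSX(u_2)$, linearity of point evaluation gives $\cSX(u_1-u_2)=0$, and the lower bound in Assumption \ref{assumption:samplinglip} (applied to the $C^1$ difference $u_1-u_2$) forces $\|u_1-u_2\|_{\cX}\le 2\|\cSX(u_1)-\cSX(u_2)\|_{\cSX}=0$, so $u_1=u_2$ in $\cX$ and hence $\fb(u_1)=\fb(u_2)$. This is exactly the step that hinges on a mild extension of Assumption \ref{assumption:samplinglip} to differences within the ambient $C^1$ class, which I would either verify directly from the quadrature-error derivation sketched after Assumption \ref{assumption:samplinglip} or include in the standing hypotheses.

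With well-definedness in hand, the identity in \eqref{lemma:f1g1lipeq1} follows by direct substitution: $\widetilde{\gb}\circ\widetilde{\fb}(\cSX(u))=\cSX\circ\gb\circ\fb(u)=\cSX(u)$, using $\gb\circ\fb=\mathrm{id}_\cM$ from Assumption \ref{assumptiong:globalparametrization}. Mutual invertibility then comes from also computing $\widetilde{\fb}\circ\widetilde{\gb}(\theta)=\fb(\gb(\theta))=\theta$, which uses the other direction of the invertibility of $\fb,\gb$ supplied by Assumption \ref{assumptiong:globalparametrization}.

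For the Lipschitz estimates I would stack the two available bounds. For $\widetilde{\gb}$, the upper bound in Assumption \ref{assumption:samplinglip} gives $\|\widetilde{\gb}(\theta_1)-\widetilde{\gb}(\theta_2)\|_{\cSX}=\|\cSX(\gb(\theta_1)-\gb(\theta_2))\|_{\cSX}\le 2\|\gb(\theta_1)-\gb(\theta_2)\|_{\cX}\le 2L_\gb\|\theta_1-\theta_2\|_2$. For $\widetilde{\fb}$, running the Step~1 argument as an inequality rather than an equality gives $\|u_1-u_2\|_{\cX}\le 2\|\cSX(u_1)-\cSX(u_2)\|_{\cSX}$, which combined with the Lipschitz bound on $\fb$ yields $\|\widetilde{\fb}(\cSX(u_1))-\widetilde{\fb}(\cSX(u_2))\|_2\le 2L_\fb\|\cSX(u_1)-\cSX(u_2)\|_{\cSX}$. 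Nothing else is needed.
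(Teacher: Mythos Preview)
Your proposal is correct and follows essentially the same route as the paper's proof: the identity \eqref{lemma:f1g1lipeq1} by direct substitution using $\gb\circ\fb=\mathrm{id}_\cM$, and the two Lipschitz bounds by combining the Lipschitz constants of $\fb,\gb$ with the respective lower and upper norm-equivalence inequalities from Assumption \ref{assumption:samplinglip}. The paper's proof is terser and does not explicitly verify the well-definedness of $\widetilde{\fb}$ or the second inverse identity $\widetilde{\fb}\circ\widetilde{\gb}=\mathrm{id}$; your added injectivity argument for $\cSX|_\cM$ (and your flagging of the need to apply Assumption \ref{assumption:samplinglip} to differences $u_1-u_2$) is a legitimate point of rigor that the paper uses implicitly in the Lipschitz estimate for $\widetilde{\fb}$ as well.
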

Lemma \ref{lemma:f1g1lip} is proved in Appendix \ref{applemma:f1g1lipproof}. The low-dimensional parameterizations in Assumption \ref{assumptiong:globalparametrization} and Lemma \ref{lemma:f1g1lip} motivate us to perform nonlinear dimension reductions of $\wtcM$, shown in Figure \ref{figaenet}(b).
Another advantage is that $\wtcM$ is a $d$-dimensional manifold in $\RR^{D_1}$.
The following lemma shows that $\wtcM$ has  Minkowski dimension no more than $ d$ (proof in Appendix \ref{applemma:tildemmindimproof}). 
\begin{lemma}
\label{lemma:tildemmindim}
In Setting \ref{setting1} and under Assumptions \ref{assumption:samplinglip} and \ref{assumptiong:globalparametrization}, the Minkowski dimension of $\wtcM$ is no more than $d$:
$$d_M \wtcM \le d.$$
\vquarter
\end{lemma}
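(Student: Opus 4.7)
The plan is to exhibit a Lipschitz surjection from a $d$-dimensional cube onto $\widetilde{\cM}$, and then read off the covering-number estimate directly. Concretely, by Lemma \ref{lemma:f1g1lip} the map $\widetilde{\gb}:[-1,1]^d\to\widetilde{\cM}$ is Lipschitz (with constant at most $2L_{\gb}$) when the target carries the norm $\|\cdot\|_{\cSX}$, and it is surjective onto $\widetilde{\cM}$ since every $\cSX(u)$ with $u\in\cM$ equals $\widetilde{\gb}(\fb(u))$. Thus covering $[-1,1]^d$ with small $\|\cdot\|_\infty$-balls will yield, via the image under $\widetilde{\gb}$, a cover of $\widetilde{\cM}$.

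The only subtlety is that the Minkowski dimension is measured with the ambient $\ell^\infty$-norm on $\RR^{D_1}$, whereas the Lipschitz constant of $\widetilde{\gb}$ is controlled in the $\|\cdot\|_{\cSX}$-norm. First I would invoke the elementary norm comparison on $\RR^{D_1}$: writing $\|\wtu\|_{\cSX}^2=\sum_{i=1}^{D_1}\omega_i \wtu_i^2$ with all $\omega_i>0$, there exist constants $c_{\min},c_{\max}>0$ (depending on the quadrature weights and on $D_1$) with
\begin{equation*}
c_{\min}\|\wtu\|_\infty \;\le\; \|\wtu\|_{\cSX} \;\le\; c_{\max}\|\wtu\|_\infty \qquad \text{for all } \wtu\in\RR^{D_1}.
\end{equation*}
These constants are fixed once the discretization is fixed, so they do not affect the dimension count.

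Next I would do the covering estimate. Fix $\varepsilon>0$ small, let $\delta=c_{\min}\varepsilon/(2\sqrt{d}\,L_{\gb})$, and cover the cube $[-1,1]^d$ by $N(\delta)\lesssim \delta^{-d}$ balls $B_\infty^d(\zb_k,\delta)$. For any $\zb\in B_\infty^d(\zb_k,\delta)$,
\begin{equation*}
\|\widetilde{\gb}(\zb)-\widetilde{\gb}(\zb_k)\|_\infty
\le c_{\min}^{-1}\|\widetilde{\gb}(\zb)-\widetilde{\gb}(\zb_k)\|_{\cSX}
\le 2 c_{\min}^{-1} L_{\gb}\|\zb-\zb_k\|_2
\le 2 c_{\min}^{-1} L_{\gb}\sqrt{d}\,\delta \le \varepsilon,
\end{equation*}
so the $\ell^\infty$-balls of radius $\varepsilon$ around $\widetilde{\gb}(\zb_k)$ cover $\widetilde{\gb}([-1,1]^d)=\widetilde{\cM}$. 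Hence $\cN(\varepsilon,\widetilde{\cM},\|\cdot\|_\infty)\le N(\delta)\le C\varepsilon^{-d}$ for some constant $C$ independent of $\varepsilon$.

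Taking logarithms and passing to $\limsup_{\varepsilon\to 0^+}$ in the definition of the Minkowski dimension,
\begin{equation*}
d_M\widetilde{\cM} \;=\; \limsup_{\varepsilon\to 0^+}\frac{\log\cN(\varepsilon,\widetilde{\cM},\|\cdot\|_\infty)}{\log(1/\varepsilon)}
\;\le\; \limsup_{\varepsilon\to 0^+}\frac{\log C + d\log(1/\varepsilon)}{\log(1/\varepsilon)}\;=\;d,
\end{equation*}
which is the desired bound. The one step requiring any care is the norm-comparison sandwich; the rest is a standard Lipschitz-image covering argument, and I do not anticipate a real obstacle here.
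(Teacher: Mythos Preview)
Your argument is correct and follows the same overall strategy as the paper: cover $[-1,1]^d$, push forward through the Lipschitz surjection $\widetilde{\gb}$, and read off the $\varepsilon^{-d}$ covering bound. The one genuine difference is in how the $\ell^\infty$-Lipschitz bound for $\widetilde{\gb}$ is obtained. The paper goes back to function space: it writes $\widetilde{\gb}=\cSX\circ\gb$, bounds $\|\cSX(\cdot)\|_\infty\le\|\cdot\|_{L^\infty(\Omega_\cX)}$, and then invokes the extra hypothesis \eqref{assumptiong:globalparametrizationeq1} from Assumption~\ref{assumptiong:globalparametrization} (namely $\|u\|_{L^\infty}\le c_1\|u\|_\cX$) together with the Lipschitz property of $\gb$ in $\|\cdot\|_\cX$. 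You instead stay entirely in $\RR^{D_1}$: you take the $\|\cdot\|_{\cSX}$-Lipschitz constant of $\widetilde{\gb}$ from Lemma~\ref{lemma:f1g1lip} and convert to $\|\cdot\|_\infty$ via finite-dimensional norm equivalence, using only that the quadrature weights $\omega_i$ are strictly positive. Your route is slightly more elementary and, notably, does not require condition~\eqref{assumptiong:globalparametrizationeq1}; the paper's route has the advantage that its constant $c_1L_{\gb}$ is tied to the continuous model rather than to the particular quadrature (your $c_{\min}$ depends on $\min_i\omega_i$ and hence on $D_1$), but since constants are irrelevant for the Minkowski dimension this makes no difference to the conclusion.
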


\begin{figure*}[t!]
\vspace{-0.4cm}
\centering
\subfigure[AENet architecture]{
\includegraphics[height = 2.8cm, width=6cm]{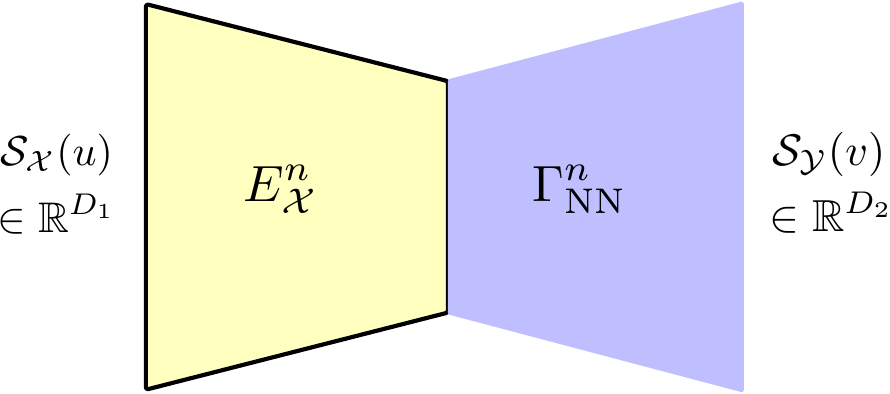}}
\hspace{0.2cm}
\subfigure[A transformation flow chart]{
\includegraphics[trim={0 0 0 0.8cm},clip,scale=0.38]{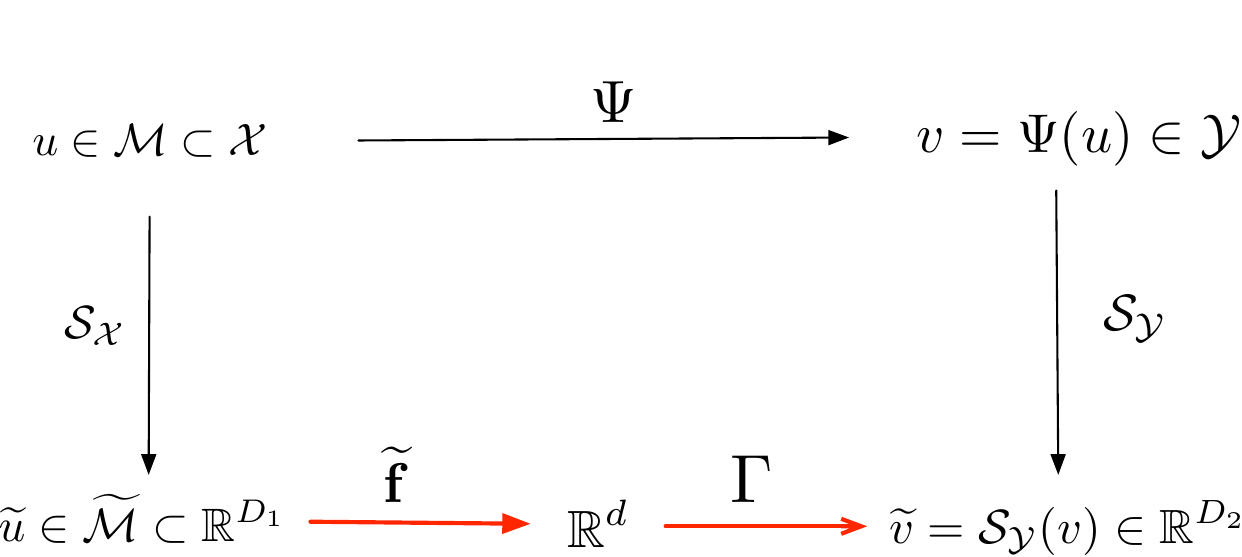}
}
\vspace*{-0.2cm}
\caption{An illustration of the AENet architecture and the transformation flow chart. The oracle transformation $\Phi$ has a dimension reduction component $\widetilde\fb$ and a forward transformation component $\Gamma$. These two components are marked in red. }
\vspace*{-0.5cm}
\label{figaenet}
\end{figure*}

\subsection{AENet}

Since functions are always discretized in numerical simulations, in order to learn the operator $\Psi$, it is sufficient to learn the transformation $\Phi$  on the discretized objects between $\wtu :=\cSX(u)\in \RR^{D_1}$ and $\wtv:=\cSY(v)\in \RR^{D_2}$. Specifically, 
$\Phi: \RR^{D_1} \rightarrow \RR^{D_2}$ is the oracle transformation  such that $ \ \Phi\circ\cSX(u) = \cSY \circ \Psi(u), \ \forall u\in \cX.$ In some applications, the training and test data are sampled on different grids. We will discuss interpolation and the discretization-invariant evaluation for the test data at the end of Section \ref{secerror}.

In this paper, we study AENet which learns the input latent variable by an Auto-Encoder, and then learns the transformation from this input latent variable to the output. The architecture of AENet is demonstrated in Figure \ref{figaenet} (a).
AENet aims to approximate the oracle transformation 
$$\Phi = \cSY\circ\Psi\circ\gb \circ\widetilde{\fb}: \wtcM \subset \RR^{D_1} \rightarrow \RR^{D_2}$$
 such that
 $$
\Phi(\cSX(u)) = \cSY(v).
$$
The oracle transformation $\Phi$ has two components shown in Figure \ref{figaenet}(b):
\begin{itemize} 
\item \textit{A dimension reduction component:} $\widetilde{\fb}: \RR^{D_1} \rightarrow \RR^d;$
\item \textit{A forward transformation component:} 
\begin{equation}
\Gamma:=\cSY\circ\Psi\circ\gb: \RR^{d} \rightarrow \RR^{D_2}.
\label{eqGamma}
\end{equation}
\end{itemize}

We propose to learn AENet  in two stages. Given the training data $\cJ=\{\wtu_i,\cSY(\whv_i)\}_{i=1}^{2n}$ in \eqref{eqtrainingsample} with $\wtu_i = \cSX(u_i)$,
 we split the data into two subsets $\cJ_1=\{\wtu_i,\cSY(\whv_i)\}_{i=1}^{n}$ and $\cJ_2=\{\wtu_i,\cSY(\whv_i)\}_{i=n+1}^{2n}$ (data can be  split unevenly as well), where $\cJ_1$ is used to build the Auto-Encoder for the input space and $\cJ_2$ is used to learn the transformation $\Gamma$ from the input latent variable to the output.

{\bf Stage I:} Based on the inputs $\{\widetilde{u}_i\}_{i=1}^n$  in $\cJ_1$, we learn an Auto-Encoder for the input space. 
The encoder $E_{\cX}^n: \wtcM\rightarrow \RR^{d}$ and the corresponding decoder $D_{\cX}^n:\RR^{d}\rightarrow \RR^{D_1}$ are given by the minimizers of the empirical mean squared loss
\begin{equation}
	 (D_{\cX}^n,E_{\cX}^n)=
 \argmin_{{\DX\in \cF_{\rm NN}^{\DX}, \EX\in \cF_{\rm NN}^{\EX}}} \frac{1}{n}\sum_{i=1}^n \|\wtu_i-\DX\circ \EX( \wtu_i)\|_{\cSX}^2,
	\label{eq:autoencoder:loss}
\end{equation}
with proper network architectures of $\cF_{\rm NN}^{\EX}$ and $\cF_{\rm NN}^{\DX}$.
The Auto-Encoder in Stage I yields the input latent variable
$ \EXN(u)\in \RR^{d}.$
Stage I of AENet is represented by the yellow part of Figure \ref{figaenet} (a).

{\bf Stage II:} We next learn a transformation $\Gamma_{\rm NN}$ between the input latent variable $\EXN(\wtu)$ and the output $\cSY(v)$ using the data in $\cJ_2$ by solving the  optimization problem in \eqref{eq:gamma:loss} to obtain $\Gamma_{\rm NN}^n$.
Stage II of AENet is indicated by the blue part of Figure \ref{figaenet} (a).

Finally, the unknown transformation $\Phi$ is estimated by 
$$\Phi^n_{\rm NN} =\Gamma_{\rm NN}^n \circ\EXN.$$
The performance of AENet can be measured by the squared generalization error 
$$\EE_{\cJ}\EE_{u\sim \gamma} \|\Phi^n_{\rm NN}\circ \cSX(u) -\cSY\circ\Psi(u)\|^2_{\cSY},
$$
where $\EE_{u\sim \gamma}$ is the expectation over the test sample $u\sim \gamma$, and $\EE_{\cJ}$ is the expectation over the joint distribution of training samples. 

\section{Main results on approximation and generalization errors}
\label{secerror}
 
We will state our main theoretical results on approximation and generalization errors in this section and defer detailed proof in Section~\ref{sec:proof}.  Our results show that AENet can efficiently learn the input latent variables and the operator with the sample complexity depending on the intrinsic dimension of the model.

\subsection{Approximation theory of AENet}

Our first result is on the approximation theory of AENet. We show that, the oracle transformation $\Phi$, its reduction component $\widetilde{\fb}$ and transformation component $\Gamma$ can be well approximated by  neural networks with proper  architectures.

\begin{theorem} \label{thm:approximation}
 In Setting \ref{setting1}, suppose  Assumptions  \ref{assumption:samplinglip} and \ref{assumptiong:globalparametrization} hold.
\begin{itemize}[noitemsep,topsep=0pt,leftmargin=*] 
    \item[(i)]  For any $\varepsilon_1\in (0,1/4),\varepsilon_2\in(0,1)$, chose the network architectures $ 
    \cF_{\rm NN}^{E_{\cX}}=\cF_{\rm NN}(D_1,d,L_{1},p_{1},K_{1},\kappa_{1},M_{1})$ and $\cF_{\rm NN}^{D_{\cX}}=\cF_{\rm NN}(d,D_1,L_{2},p_{2},K_{2},\kappa_{2},M_{2})$ with parameters
    \begin{align*}
    & L_{1}=O(\log \varepsilon_1^{-1}),\   p_{1}=O(\varepsilon_1^{-d}),\ 
    K_{1}=O(\varepsilon_1^{-d}),\   \kappa_{1}=O(\varepsilon_1^{-7}),  \ M_{1}=1, \\
        & L_{2}=O(\log \varepsilon_2^{-1}),\   p_{2}=O(\varepsilon_2^{-d}), \  K_{2}=O(\varepsilon_2^{-d}\log \varepsilon_2^{-1}),\  \kappa_{2}= O(\varepsilon_2^{-1}),\ M_{2}=R_{\cX}.
    \end{align*}
    
    There exists $\widetilde{\fb}_{\rm NN}\in \cF_{\rm NN}^{E_{\cX}}$ and $ \widetilde{\gb}_{\rm NN}\in \cF_{\rm NN}^{D_{\cX}}$ satisfying 
    \begin{align*}
        &\textstyle \sup_{\widetilde{u}\in \widetilde{\cM}}\|\widetilde{\fb}_{\rm NN}(\widetilde{u})- \widetilde{\fb}(\widetilde{u})\|_{\infty} \leq \varepsilon_1,
        \\
        & \textstyle \sup_{\zb\in[-1,1]^d}\|\widetilde{\gb}_{\rm NN}(\zb)- \widetilde{\gb}(\zb)\|_{\infty} \leq \varepsilon_2,\\ 
      & \textstyle
        \sup_{\widetilde{u}\in \widetilde{\cM}}\|\widetilde{\gb}_{\rm NN}\circ\widetilde{\fb}_{\rm NN}(\widetilde{u})- \widetilde{\gb}\circ\widetilde{\fb}(\widetilde{u})\|_{\infty} \leq \varepsilon_2+ 2\sqrt{d}L_{\gb}\varepsilon_1.
    \end{align*} 
    The constant hidden in $O(\cdot)$ depends on $d,L_{\fb},L_{\gb},R_{\cX}$ 
    and is polynomial in $D_1$.

    \item[(ii)] For any $\varepsilon_3\in(0,1)$, there exists a network $\Gamma_{\rm NN} \in\cF_{\rm NN}(d,D_2,L_{3},p_{3},K_{3},\kappa_{3},M_{3})$ with
    \begin{align*}
        &L_{3}=O(\log \varepsilon_3^{-1}), \ p_{3}=O(\varepsilon_3^{-d}), \ K_{3}=O(\varepsilon_3^{-d}\log \varepsilon_3^{-1}), \ \kappa_{3}= O(\varepsilon_3^{-1}),\  M_{3}=R_{\cY}
    \end{align*}
    satisfying 
    $$
      \textstyle  \sup_{\zb \in [-1,1]^d}\|\Gamma_{\rm NN}(\zb)- \Gamma(\zb)\|_{\infty}\leq \varepsilon_3.
    $$
    The constant hidden in $O(\cdot)$ depends on $d,L_{\Psi},L_{\gb}$ and is linear in $D_2$.
      \vquarter
\end{itemize}

\end{theorem}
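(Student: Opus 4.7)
The theorem splits into two essentially independent parts, both of which reduce to approximation of Lipschitz functions by ReLU networks. The low-dimensional structure enters through Assumption \ref{assumptiong:globalparametrization} and Lemmas \ref{lemma:f1g1lip}--\ref{lemma:tildemmindim}, which control the intrinsic complexity of the maps being approximated. For the decoder $\widetilde{\gb}:[-1,1]^d\to\RR^{D_1}$ in part (i), the domain is the $d$-dimensional cube and each of the $D_1$ components is Lipschitz with constant at most $2L_{\gb}$ by Lemma \ref{lemma:f1g1lip}. I would apply a Yarotsky-style approximation to each component to obtain a ReLU subnetwork of size $O(\varepsilon_2^{-d}\log\varepsilon_2^{-1})$ with uniform error below $\varepsilon_2$, then stack the $D_1$ subnetworks in parallel; the polynomial-in-$D_1$ constant arises only from this parallelization.

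For the encoder $\widetilde{\fb}:\wtcM\to[-1,1]^d$ the domain is an embedded $d$-dimensional set in ambient dimension $D_1$, and we want the network size to scale as $\varepsilon_1^{-d}$ rather than $\varepsilon_1^{-D_1}$. My plan is a chart-based construction: by Lemma \ref{lemma:tildemmindim} the set $\wtcM$ admits an $L^\infty$-cover by $N=O(\varepsilon_1^{-d})$ balls of radius $\varepsilon_1$ with centers $\{\wtu_j\}$. On each ball $B_j$ I would form a local affine approximation $\widetilde{\fb}_j$ of $\widetilde{\fb}$ using the Lipschitz bound from Lemma \ref{lemma:f1g1lip}, then build a ReLU partition of unity $\{\rho_j\}$ subordinate to the cover (via tent-like constructions) and assemble $\widetilde{\fb}_{\rm NN}(\wtu)=\sum_j\rho_j(\wtu)\widetilde{\fb}_j(\wtu)$, with the product implemented by a standard ReLU multiplication gadget. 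A careful count gives $O(\varepsilon_1^{-d})$ nonzero parameters, depth $O(\log\varepsilon_1^{-1})$, and weight magnitude $O(\varepsilon_1^{-7})$, the last coming from combining partition sharpness $\sim\varepsilon_1^{-1}$ with the multiplication gadget.

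For the composed bound I would use the triangle inequality and Lipschitz continuity of $\widetilde{\gb}$:
\begin{align*}
\|\widetilde{\gb}_{\rm NN}\!\circ\widetilde{\fb}_{\rm NN}(\wtu)-\widetilde{\gb}\!\circ\widetilde{\fb}(\wtu)\|_{\infty}
&\leq \|\widetilde{\gb}_{\rm NN}(\widetilde{\fb}_{\rm NN}(\wtu))-\widetilde{\gb}(\widetilde{\fb}_{\rm NN}(\wtu))\|_{\infty}\\
&\quad+\|\widetilde{\gb}(\widetilde{\fb}_{\rm NN}(\wtu))-\widetilde{\gb}(\widetilde{\fb}(\wtu))\|_{\infty}
\leq \varepsilon_2+2L_{\gb}\sqrt{d}\,\varepsilon_1,
\end{align*}
where a clip layer ensures $\widetilde{\fb}_{\rm NN}(\wtu)\in[-1,1]^d$ so the first term is bounded by $\varepsilon_2$, and $\sqrt{d}$ converts $\|\cdot\|_\infty$ to $\|\cdot\|_2$ for the Lipschitz estimate. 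For part (ii), $\Gamma=\cSY\circ\Psi\circ\gb:[-1,1]^d\to\RR^{D_2}$ is a composition of Lipschitz maps whose constants are controlled by $L_{\Psi}$, $L_{\gb}$, and (via Assumption \ref{assumption:samplinglip}) the sampling constant $2$. Componentwise Yarotsky approximation followed by parallel stacking of the $D_2$ scalar subnetworks yields $\Gamma_{\rm NN}$ with the stated parameters and linear dependence on $D_2$.

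\emph{Main obstacle.} The encoder approximation is by far the most technical step. The chart construction must produce a sparse ReLU network whose size depends on the intrinsic dimension $d$ rather than the ambient dimension $D_1$, while simultaneously keeping the weight bound polynomial in $\varepsilon_1^{-1}$. Careful bookkeeping of the partition-of-unity weights, local affine coefficients, and multiplication-gadget weights is where most of the effort goes; this is also the reason a nontrivial bound $\kappa_1=O(\varepsilon_1^{-7})$ appears rather than a more benign $O(\varepsilon_1^{-1})$.
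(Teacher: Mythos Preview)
Your proposal is correct and follows essentially the same route as the paper: componentwise Yarotsky approximation for the decoder $\widetilde{\gb}$ and for $\Gamma$, an intrinsic-dimension approximation for the encoder $\widetilde{\fb}$ exploiting that $\wtcM$ has Minkowski dimension at most $d$, and the same triangle-inequality/Lipschitz argument for the composed bound. The only packaging difference is that the paper first extends $\widetilde{\fb}$ to the full cube via Kirszbraun, clips it, and then invokes a black-box lemma (a variant of Nakada--Imaizumi) for approximation on low-Minkowski-dimension sets, whereas you sketch the underlying cover-plus-partition-of-unity construction directly; both lead to the same architecture bounds, including the $\kappa_1=O(\varepsilon_1^{-7})$ weight estimate.
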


Theorem \ref{thm:approximation} provides a construction of three neural networks to approximate $\widetilde{\fb},\widetilde{\gb},\Gamma$ with accuracy $\varepsilon_1,\varepsilon_2,\varepsilon_3$ respectively.
A proper choice of $\varepsilon_1$ and $\varepsilon_3$ yields the following approximation result on the oracle transformation $\Phi$:
\begin{corollary}\label{coro:approximation}
Under the assumptions in Theorem \ref{thm:approximation}, for any $\varepsilon\in(0,1)$, set $\varepsilon_1=\frac{\varepsilon}{4\sqrt{d}L_{\Psi}L_{\gb}}, \varepsilon_3=\frac{\varepsilon}{2}$ in Theorem \ref{thm:approximation} and denote the network $\Phi_{\rm NN}= \Gamma_{\rm NN}\circ \widetilde{\fb}_{\rm NN}.$ Then we have
$$
 \textstyle  \sup_{\wtu \in \wtcM} \|\Phi_{\rm NN}(\widetilde{u})-\Phi(\widetilde{u})\|_{\infty}\leq \varepsilon.
$$
  \vquarter
\end{corollary}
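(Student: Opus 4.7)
Since $\Phi = \cSY\circ \Psi\circ \gb \circ \widetilde{\fb} = \Gamma\circ \widetilde{\fb}$ by the definition of $\Gamma$ in \eqref{eqGamma} and $\Phi_{\rm NN} = \Gamma_{\rm NN}\circ \widetilde{\fb}_{\rm NN}$ by construction, I would insert the intermediate point $\Gamma(\widetilde{\fb}_{\rm NN}(\widetilde{u}))$ and apply the triangle inequality to obtain
\[
\|\Phi_{\rm NN}(\widetilde{u}) - \Phi(\widetilde{u})\|_\infty
\le \underbrace{\|\Gamma_{\rm NN}(\widetilde{\fb}_{\rm NN}(\widetilde{u})) - \Gamma(\widetilde{\fb}_{\rm NN}(\widetilde{u}))\|_\infty}_{T_1} + \underbrace{\|\Gamma(\widetilde{\fb}_{\rm NN}(\widetilde{u})) - \Gamma(\widetilde{\fb}(\widetilde{u}))\|_\infty}_{T_2}.
\]
This splits the error into an approximation-error term $T_1$ for $\Gamma$, handled by Theorem~\ref{thm:approximation}(ii), and a stability term $T_2$ that measures how the encoder error propagates through $\Gamma$.

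\textbf{Bounding $T_1$.} The first thing to check is that the argument $\widetilde{\fb}_{\rm NN}(\widetilde{u})$ lies in $[-1,1]^d$, which is the domain on which the approximation bound for $\Gamma_{\rm NN}$ is stated. This is immediate from the choice $M_1 = 1$ in the architecture $\cF_{\rm NN}^{E_{\cX}}$ prescribed in Theorem~\ref{thm:approximation}(i), which forces $\|\widetilde{\fb}_{\rm NN}\|_{L^\infty} \le 1$. Then Theorem~\ref{thm:approximation}(ii) applied with $\varepsilon_3 = \varepsilon/2$ yields $T_1 \le \varepsilon/2$ directly.

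\textbf{Bounding $T_2$.} Here I would chain the Lipschitz constants along $\Gamma = \cSY\circ \Psi\circ \gb$: the map $\gb$ is $L_{\gb}$-Lipschitz from $([-1,1]^d,\|\cdot\|_2)$ into $(\cX,\|\cdot\|_{\cX})$, $\Psi$ is $L_{\Psi}$-Lipschitz from $(\cX,\|\cdot\|_{\cX})$ into $(\cY,\|\cdot\|_{\cY})$, and the discretization operator $\cSY$ contributes a factor bounded by $2$ via Assumption~\ref{assumption:samplinglip}, for an overall Lipschitz estimate $\|\Gamma(\zb_1) - \Gamma(\zb_2)\| \le 2 L_{\Psi} L_{\gb}\,\|\zb_1 - \zb_2\|_2$ in the relevant output norm. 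Combining this with $\|\cdot\|_2 \le \sqrt{d}\,\|\cdot\|_\infty$ and the encoder approximation bound $\|\widetilde{\fb}_{\rm NN}(\widetilde{u}) - \widetilde{\fb}(\widetilde{u})\|_\infty \le \varepsilon_1$ from Theorem~\ref{thm:approximation}(i) yields
\[
T_2 \le 2 L_{\Psi} L_{\gb}\sqrt{d}\,\varepsilon_1 = 2 L_{\Psi} L_{\gb}\sqrt{d}\cdot\frac{\varepsilon}{4\sqrt{d}\,L_{\Psi} L_{\gb}} = \frac{\varepsilon}{2},
\]
so that $T_1 + T_2 \le \varepsilon$, which is the desired bound. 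Overall this is a short triangle-inequality argument; the only mildly delicate point, and the natural place to trip up, is tracking the norm conversions between the function-space norms underlying $L_{\Psi}$ and $L_{\gb}$ and the ambient $\|\cdot\|_\infty$ in which the corollary is phrased, which passes through the norm-equivalence provided by Assumption~\ref{assumption:samplinglip}.
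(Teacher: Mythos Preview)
Your proposal is correct and follows essentially the same route as the paper: the same triangle-inequality split through $\Gamma(\widetilde{\fb}_{\rm NN}(\widetilde{u}))$, the same use of Theorem~\ref{thm:approximation}(ii) for $T_1$, and the same Lipschitz chain (equivalently Lemma~\ref{lem:GammaLip}) together with $\|\cdot\|_2\le\sqrt{d}\,\|\cdot\|_\infty$ for $T_2$. Your explicit remark that $M_1=1$ forces $\widetilde{\fb}_{\rm NN}(\widetilde{u})\in[-1,1]^d$ is a point the paper leaves implicit, and your flag about the norm conversions is well taken.
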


 Theorem \ref{thm:approximation} and Corollary \ref{coro:approximation} are proved in Section \ref{proof:thm:approximation} and  \ref{proof:coro:approximation} respectively. 
Corollary \ref{coro:approximation} provides an explicit construction of neural networks to approximate the oracle transformation $\Phi$. It demonstrates that AENet with properly chosen parameters has the representation power for the oracle transformation with an arbitrary accuracy $\varepsilon$.

\begin{remark}
    \label{remark.approx.interp}
In Corollary \ref{coro:approximation}, the input is a  discretization by $\cS_{\cX}$, and $\Phi_{\rm NN}$ is constructed for inputs discretized by $\cS_{\cX}$.  By interpolating functions on $\Omega_{\cX}$, we can apply this network to functions discretized on a different grid. 
Suppose a new input $u$ is discretized by another discretization operator $\cSX'(u)\in \RR^{D'_1}$, where the  grid points of $\cSX$ and $\cSX'$ are different. Then $\cSX'(u)$ cannot be directly passed to $\Phi_{\rm NN}$. Let $P_{{\rm intp},\cX}$ be an interpolation operator from the grid to the whole domain $\Omega_{\cX}$. For the input $\cSX'(u)$, we can interpolate it by $P_{{\rm intp},\cX}$ and then discretize it by $\cS_{\cX}$ to obtain $\cSX(P_{{\rm intp},\cX}(\cSX'(u))) \in \RR^{D_1}$. Based on this setting and under the same condition of Corollary \ref{coro:approximation}, we have (see details in Appendix \ref{sec:approx.interpolation.x})
\begin{align}
    &\sup_{u \in \cM} \|\Phi_{\rm NN}\circ \cSX(P_{{\rm intp},\cX}(\cSX'(u)))-\Phi(\cSX(u))\|_{\infty}\leq \varepsilon+ \sup_{u \in \cM} \|\Phi_{\rm NN}\circ \cSX(P_{{\rm intp},\cX}(\cSX'(u)))-\Phi_{\rm NN}\circ \cSX(u)\|_{\infty},
    \label{eq.approx.inter.x}
\end{align}
where the first term captures the network approximation error, the second term arises from the interpolation error on $\Omega_{\cX}$.
\end{remark}

\subsection{Generalization error of AENet}

Our second main result is on the generalization error of AENet with i.i.d. sub-Gaussian noise. 
\begin{assumption}
\label{assumption:noise}
Let $\cSY$ be the discretization operator in $\cY$ under Setting \ref{setting1}. 
The noise distribution $\mu$ in Setting \ref{setting1} satisfies: For any sample $\epsilon \sim \mu$ (a random function defined on $\Omega_{\cY}$), $\{(\cSY(\epsilon))_k\}_{k=1}^{D_2}$ with $(\cSY(\epsilon))_k=\epsilon(\yb_k)$ are i.i.d. sub-Gaussian with $\EE \left[(\cSY(\epsilon))_{k}\right]=0$ and variance proxy $\sigma^2$ for $k=1,\ldots,D_2$. 

\end{assumption}

\begin{theorem} \label{thm:generalization}
Consider Setting \ref{setting1} and suppose Assumptions \ref{assumption:samplinglip}, \ref{assumptiong:globalparametrization} and \ref{assumption:noise} hold. 
In Stage I, set the network architectures $\cF_{\rm NN}^{E_{\cX}}=\cF_{\rm NN}(D_1,d,L_1,p_1,K_1,\kappa_1,M_1)$ and $\cF_{\rm NN}^{D_{\cX}}=\cF_{\rm NN}(d,D_1,L_2,p_2,K_2,\kappa_2,M_2)$ with parameters
\begin{align}
    &L_1=O(\log n), \ p_1=O(n^{\frac{d}{2+d}}),\  K_1=O(n^{\frac{d}{2+d}}), \ \kappa_1=O(n^{\frac{7}{2+d}}),\  M_1=1,
    \label{eq:gene:para:1}
    \\
    &L_2=O(\log n), \ p_2=O(n^{\frac{d}{2+d}}),\  K_2=O(n^{\frac{d}{2+d}} \log n), \ \kappa_2=O(n^{\frac{1}{2+d}}),\  M_2=R_{\cX}.
    \label{eq:gene:para:2}
\end{align}
In Stage II, set the network architecture $\cF_{\rm NN}^{\Gamma}=\cF_{\rm NN}(d,D_2,L_3,p_3,K_3,\kappa_3,M_3)$
with
\begin{align}
    &L_3=O(\log n), \ p_3=O(n^{\frac{d}{2(2+d)}}),\  K_3=O(n^{\frac{d}{2(2+d)}} \log n), \ \kappa_3=O(n^{\frac{7}{2(2+d)}}),\  M_3=R_{\cY}.
    \label{eq:gene:para:3}
\end{align}
Let $E_{\cX}^n,D_{\cX}^n$ be the empirical minimizer of (\ref{eq:autoencoder:loss}) in Stage I, and $\Gamma_{\rm NN}^n$ be the empirical minimizer of (\ref{eq:gamma:loss}) in Stage II. For $n>4^{2+d}$, the squared generalization error of $\Phi^n_{\rm NN}=\Gamma_{\rm NN}^n\circ E_{\cX}^n$ satisfies
\begin{align}
    &\EE_{\cJ}\EE_{u\sim \gamma} \|\Phi^n_{\rm NN}\circ \cSX(u) -\cSY\circ\Psi(u)\|^2_{\cSY}
    \leq  C(1+\sigma^2) n^{-\frac{1}{2+d}}\log^3 n,
    \label{eq:generalizationerr}
\end{align}
for some $C>0$. The constant hidden in $O(\cdot)$ and $C$ depend on $d,L_{\fb},\ L_{\gb},\ L_{\Psi},\ R_{\cX},\ R_{\cY},\ |\Omega_{\cX}|$, $|\Omega_{\cY}|$ and is polynomial in $D_1$ and is linear in $D_2$.
\end{theorem}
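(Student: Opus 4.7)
My plan is to split the squared error into a Stage I autoencoder reconstruction error and a Stage II operator-learning error, and to bound each via empirical risk minimization theory (approximation error plus a stochastic deviation). To create a well-defined Stage II target on all of $\RR^d$, I would first extend $\widetilde{\fb}\colon \wtcM\to [-1,1]^d$ to a Lipschitz map $\bar{\fb}\colon \RR^{D_1}\to \RR^d$ (for instance, coordinate-wise McShane extension, losing at most a factor $\sqrt{d}$ in the Lipschitz constant), and set $\bar{\Phi}:=\cSY\circ\Psi\circ\gb\circ\bar{\fb}$, which agrees with $\Phi$ on $\wtcM$ and is Lipschitz on $\RR^{D_1}$. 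Putting $h:=\bar{\Phi}\circ D_{\cX}^n\colon\RR^d\to\RR^{D_2}$, the triangle inequality gives
\begin{align*}
\|\Gamma_{\rm NN}^n(E_{\cX}^n(\wtu)) - \Phi(\wtu)\|_{\cSY}^2
&\leq 2\|\Gamma_{\rm NN}^n(E_{\cX}^n(\wtu)) - h(E_{\cX}^n(\wtu))\|_{\cSY}^2 \\
&\quad + 2L_{\bar{\Phi}}^2 \|D_{\cX}^n(E_{\cX}^n(\wtu)) - \wtu\|_{\cSX}^2.
\end{align*}
The data-split $\cJ=\cJ_1\cup\cJ_2$ is essential: independence lets me condition on Stage I and treat $(E_{\cX}^n,D_{\cX}^n)$ as deterministic when analyzing Stage II.

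For the Stage I reconstruction error I decompose it into an approximation error plus a stochastic deviation. Theorem \ref{thm:approximation}(i) supplies $(\widetilde{\fb}_{\rm NN},\widetilde{\gb}_{\rm NN})\in \cF_{\rm NN}^{E_{\cX}}\times\cF_{\rm NN}^{D_{\cX}}$ with $L^\infty$ reconstruction error at most $\varepsilon_2+2\sqrt{d}L_{\gb}\varepsilon_1$, which controls the best-in-class error; the stochastic deviation is bounded by Dudley/Rademacher integrals for the product class, whose pseudo-dimension is of order $(K_1+K_2)$ up to log factors. For Stage II, conditional on Stage I, I apply ERM to the loss in (\ref{eq:gamma:loss}). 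Using that $\cSY(\epsilon_i)$ is mean-zero sub-Gaussian and independent of the inputs, expanding the quadratic loss and applying sub-Gaussian chaining produces
\begin{align*}
\EE_{\cJ_2}\EE_u\|\Gamma_{\rm NN}^n(E_{\cX}^n(\wtu))-\Phi(\wtu)\|_{\cSY}^2
&\leq C\inf_{\Gamma^{*}\in\cF_{\rm NN}^{\Gamma}}\EE_u\|\Gamma^{*}(E_{\cX}^n(\wtu))-\Phi(\wtu)\|_{\cSY}^2 \\
&\quad + C(1+\sigma^2)\frac{K_3\log^2 n}{n},
\end{align*}
where the $(1+\sigma^2)$ prefactor comes from the variance of the signal-noise cross term in the quadratic loss. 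I then take $\Gamma^{*}$ to be the network approximation of $h$ on $[-1,1]^d$ guaranteed by Theorem \ref{thm:approximation}(ii), so that the infimum is bounded by $2D_2\varepsilon_3^2 + 2L_{\bar{\Phi}}^2\EE\|D_{\cX}^n(E_{\cX}^n(\wtu))-\wtu\|_{\cSX}^2$, which reduces the second piece back to the Stage I reconstruction error.

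Combining the two bounds yields a master inequality
\begin{align*}
\EE_{\cJ}\EE_u\|\Phi_{\rm NN}^n(\wtu)-\Phi(\wtu)\|_{\cSY}^2
&\lesssim \varepsilon_3^2 + L_{\bar{\Phi}}^2\left[(\varepsilon_2+\sqrt{d}L_{\gb}\varepsilon_1)^2 + \frac{(K_1+K_2)\log^2 n}{n}\right] \\
&\quad + (1+\sigma^2)\frac{K_3\log^2 n}{n}.
\end{align*}
Inserting the choices (\ref{eq:gene:para:1})--(\ref{eq:gene:para:3}) balances all error sources at $(1+\sigma^2)n^{-1/(2+d)}\log^3 n$, with the Stage II approximation error $\varepsilon_3^2=O(n^{-1/(2+d)})$ dominating. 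The main obstacle will be the Stage II approximation step: the target $h=\bar{\Phi}\circ D_{\cX}^n$ is data-dependent, and the Lipschitz constant of a ReLU network in $\cF_{\rm NN}^{D_{\cX}}$ can scale polynomially (or worse) in the network parameters, so one must approximate $h$ uniformly over all realizations of $D_{\cX}^n$ while keeping $\Gamma^{*}$ inside $\cF_{\rm NN}^{\Gamma}$. This uniformity constraint forces the Stage II network to be substantially smaller than the Stage I one and is what yields the rate $n^{-1/(2+d)}$ rather than the classical nonparametric rate $n^{-2/(2+d)}$. Care is also needed to track the sub-Gaussian constants so that the $(1+\sigma^2)$ prefactor and the $\log^3 n$ polylogarithmic factor emerge with the correct dependence on $D_1,D_2$.
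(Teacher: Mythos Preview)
Your high-level strategy---split into approximation plus stochastic error, use Stage~I reconstruction to control Stage~II approximation, sub-Gaussian chaining for the noise cross-term---matches the paper's. The gap is at the Stage~II approximation step you yourself flag as the ``main obstacle''. You propose to take $\Gamma^*$ as the network approximation of $h = \bar{\Phi} \circ D_{\cX}^n$ on $[-1,1]^d$ guaranteed by Theorem~\ref{thm:approximation}(ii), but that theorem approximates the fixed Lipschitz map $\Gamma$, not $h$; to invoke an analogue for $h$ you would need a Lipschitz bound on $h$ uniform in $D_{\cX}^n$, and as you correctly note, the Lipschitz constant of a ReLU decoder in $\cF_{\rm NN}^{D_{\cX}}$ is uncontrolled. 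Making $\cF_{\rm NN}^{\Gamma}$ smaller does not help: it only reduces expressiveness, and there is no uniform-in-$D_{\cX}^n$ approximation of the family $\{h\}$ by a class of the stated size. You identify the obstacle but do not resolve it.

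The paper's fix is structural rather than approximation-theoretic: the Stage~II class is taken large enough to contain every composition $\bar{\Gamma}_{\rm NN}\circ\bar{\fb}_{\rm NN}\circ\widetilde{\gb}'_{\rm NN}$ with $\widetilde{\gb}'_{\rm NN}\in\cF_{\rm NN}^{D_{\cX}}$, so one may simply set $\Gamma^*=\bar{\Gamma}_{\rm NN}\circ\bar{\fb}_{\rm NN}\circ D_{\cX}^n$, never approximating $D_{\cX}^n$ or bounding its Lipschitz constant. The approximation error then reduces to (i) $\bar{\Gamma}_{\rm NN}\approx\Gamma$ on $[-1,1]^d$ and (ii) $\bar{\fb}_{\rm NN}\approx\CL\circ\widetilde{\fb}$ on a $\xi$-neighborhood $T_\xi(\widetilde{\cM})$ (Lemma~\ref{lem:Ex:1}, exploiting that the neighborhood inherits Minkowski dimension $\le d$). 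Step~(ii) is only useful on the event $\{\|D_{\cX}^n\circ E_{\cX}^n(\widetilde{u})-\widetilde{u}\|_\infty\le\xi\}$; the complement is controlled by Markov's inequality against the Stage~I bound $\lesssim n^{-2/(2+d)}\log^3 n$. Balancing $\xi^2$ against $n^{-2/(2+d)}/\xi^2$ yields $\xi^2=n^{-1/(2+d)}$, and this Markov trade-off, not the relative sizes of the two network classes, is what produces the rate $n^{-1/(2+d)}$ instead of $n^{-2/(2+d)}$.
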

Theorem \ref{thm:generalization} is proved in Section \ref{proof:generalization}. Its contributions  are summarized in the introduction.

\begin{remark}
\label{remark.gene.interp}
In Theorem \ref{thm:generalization},  $\Phi_{\rm NN}^n$ is trained on the inputs discretized by $\cS_{\cX}$. 
    The result of Theorem \ref{thm:generalization} can be applied to a new input discretized on a different grid, as considered in Remark \ref{remark.approx.interp}.
    Under the condition of Theorem \ref{thm:generalization}, we have (see details in Appendix \ref{sec:gene.interpolation.x})
    \begin{align}
    &\EE_{\cJ}\EE_{u\sim \gamma} \|\Phi^n_{\rm NN}\circ \cSX(P_{{\rm intp},\cX}(\cSX'(u))) -\cSY\circ\Psi(u)\|^2_{\cSY} \nonumber\\
   \leq & C(1+\sigma^2) n^{-\frac{1}{2+d}}\log^3 n +   \EE_{\cJ}\EE_{u\sim \gamma} \|\Phi^n_{\rm NN}\circ \cSX(P_{{\rm intp},\cX}(\cSX'(u))) - \Phi^n_{\rm NN}\circ \cSX(u) \|^2_{\cSY},
   \label{eq.gene.inter.x}
\end{align}
where the first term captures the network estimation error and the second term arises from the interpolation error on $\Omega_{\cX}$.
\end{remark}

\section{Numerical experiments}
\label{secnum}

We next present several numerical experiments to demonstrate the efficacy of AENet. We consider the solution operators of the linear transport equation, the nonlinear viscous Burgers' equation, and the Korteweg-de Vries (KdV) equation. 

For all examples, we use a 512 dimensional equally spaced grid for the spatial domain. The networks are all fully connected feed-forward ReLU networks as in our theory. We trained every neural network for 500 epochs with the Adam optimization algorithm using the MSE loss, a learning rate of $10^{-3}$, and a batch size of 64. For training of the neural networks, all data (input and outputs function values) were scaled down to fit into the range $[-1, 1]$. 

All training was done with 2000 training samples and 500 test samples, except for the training involved in Figure \ref{fig:transport:generr}, Figure \ref{fig:burgers:generr}, and Figure \ref{fig:kdvexperiment:generr} where the training sample varies.

For all examples, the input data has a low dimensional nonlinear structure. Indeed, the input data matrix (by stacking the initial conditions) for all examples we consider has slowly decaying singular values (see Figure \ref{fig:transportsvd}, \ref{fig:burgerssvd} and \ref{fig:kdvsvd}), which indicates the shortcomings of using a linear encoder and the necessity of using a nonlinear encoder as in AENet. Additional plots showing the nonlinearity of the data can be found in Figure \ref{fig:transportcomponents}, \ref{fig:burgerscomponents}, and \ref{fig:kdvcomponents}. 

We compare AENet with two methods involving dimension reduction and neural networks. PCANet refers to the method in \cite{bhattacharya2020model}, which consists of a PCA encoder for the input, a PCA decoder for the output, and a neural network in between. We also consider  DeepONet \citep{lu2021learning} implemented with the DeepXDE package \citep{lu2021deepxde}, a popular method for operator learning that also involves a dimension reduction component (i.e. the branch net). In DeepONet, we take  the output dimension of the branch/trunk net as the reduced dimension.

For all examples, we implement the Auto-Encoder for AENet with layer widths 500, 500, 500, 500, $d_{ae}$, 500, 500, 500, and we implement the operator neural network for AENet and PCANet with layer widths 500, 500, 500. We use 40 dimensional PCA for the output in PCANet. We use a simple unstacked DeepONet.

\subsection{Transport equation}
\label{sec:transport}

We consider the linear transport equation given by
\begin{align}
\label{eq:transport}
\begin{aligned}
u_t = -u_x, \quad &x \in [0, 1], t \in [0, 0.3]
\end{aligned}
\end{align}
with zero Dirichlet boundary condition and initial condition $u(x, 0) = g(x), x \in [0, 1]$. 
We seek to approximate the operator $\Psi$ that takes $g(x) = u(x, 0)$ as input and outputs $u(x, 0.3)$ from the solution of \eqref{eq:transport} at $t=0.3$. We consider the weak version of this PDE, allowing us to consider an $g$ that is not differentiable everywhere. Note that the analytic solution to this equation is $u(x, t) = g(x - t)$.

Let $\sigma(x) = \max(x, 0)$, and fix $\epsilon = 0.05$. For any $\alpha$ and $t$, define the ``hat'' function 
\[H_{\alpha, t} (x) = \frac{2\alpha}{\epsilon}\left(\sigma(x - t) - 2\sigma\left(x - t - \frac{\epsilon}{2}\right) + \sigma(x - t - \epsilon)\right).\]
Let $a \in [1, 4]$ and $h \in [0, 1]$. We define the ``two-hat'' function 
\begin{equation}
g_{a, h}(x) = H_{a, 0.1}(x) + H_{2.5, 0.2 + 0.1h}(x).
\label{eq:ftransport}
\end{equation}
Our sampling measure $\gamma_{transport}$ is defined on $\cM := \{ g_{a, h} : a \in [1, 4], h \in [0, 1] \}$ by sampling $a, h$ uniformly and then constructing $g_{a, h}$. 

Figure \ref{fig:transportsvd} plots the singular values in descending order of the input data sampled from $\gamma_{transport}$. The slow decay of the singular values indicates that a non-linear encoder would be a better choice than a linear encoder for this problem. Figure \ref{fig:transportcomponents} further shows the non-linearity of the data, when we project the data to the top $2$ principal components. Figure \ref{fig:transportcomponents3} and \ref{fig:transportpca6} further show the  projections of this data set to the 1st-6th principal components. This data set is nonlinearly parametrized by 2 intrinsic parameters, but the top 2 principal components are not sufficient to represent the data, as shown in Figure \ref{fig:transportcomponents}. Figure \ref{fig:transportcomponents3} shows that the top 3 linear principal components yield a better representation of the data, since the coloring by $a$ and $h$ is well recognized.

\begin{figure}[h]
\centering
\subfigure[Singular values]{
    \includegraphics[height=3.5cm]{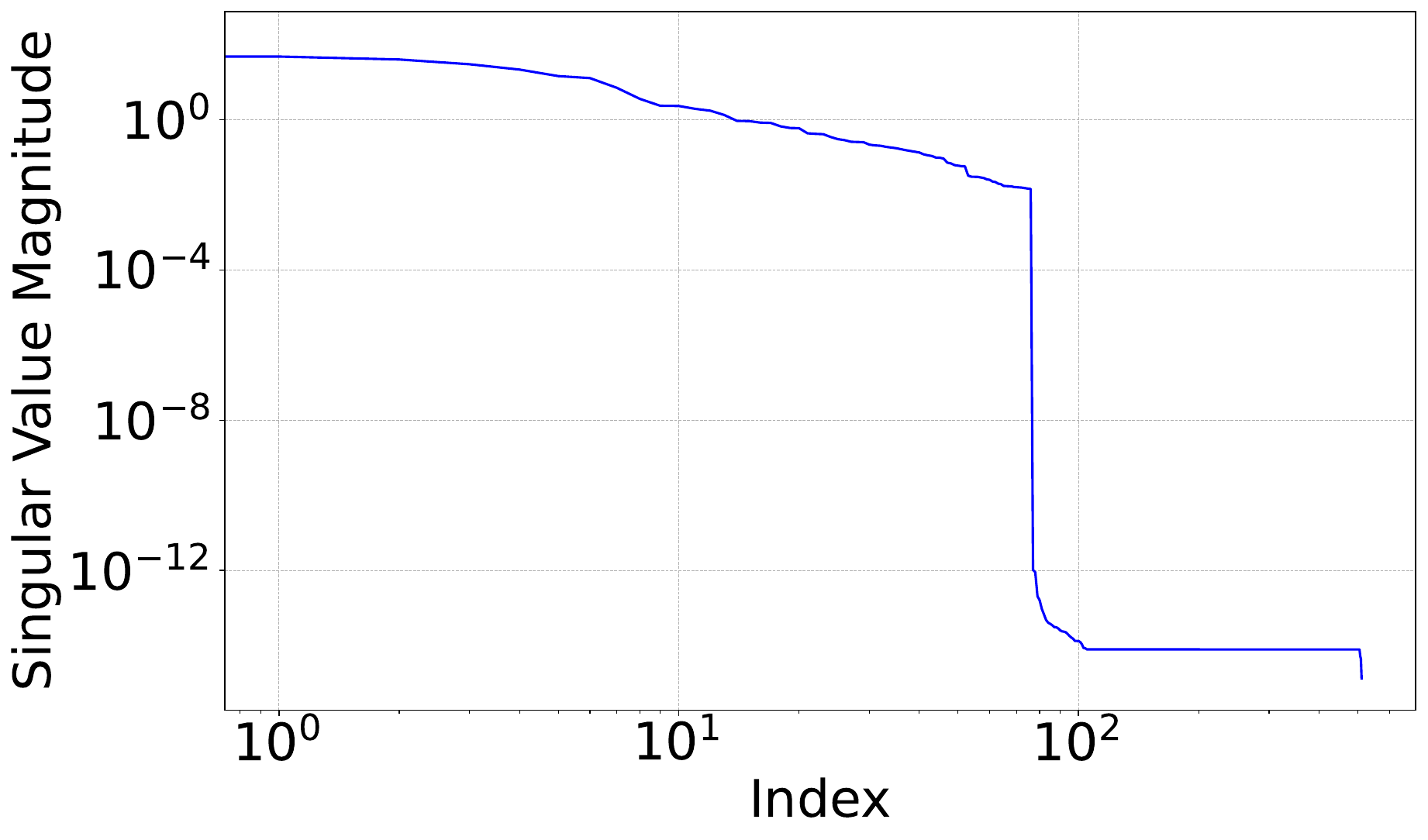}
    \label{fig:transportsvd}
}
\subfigure[Projection to top 2 principal components]{
    \includegraphics[height=4cm]{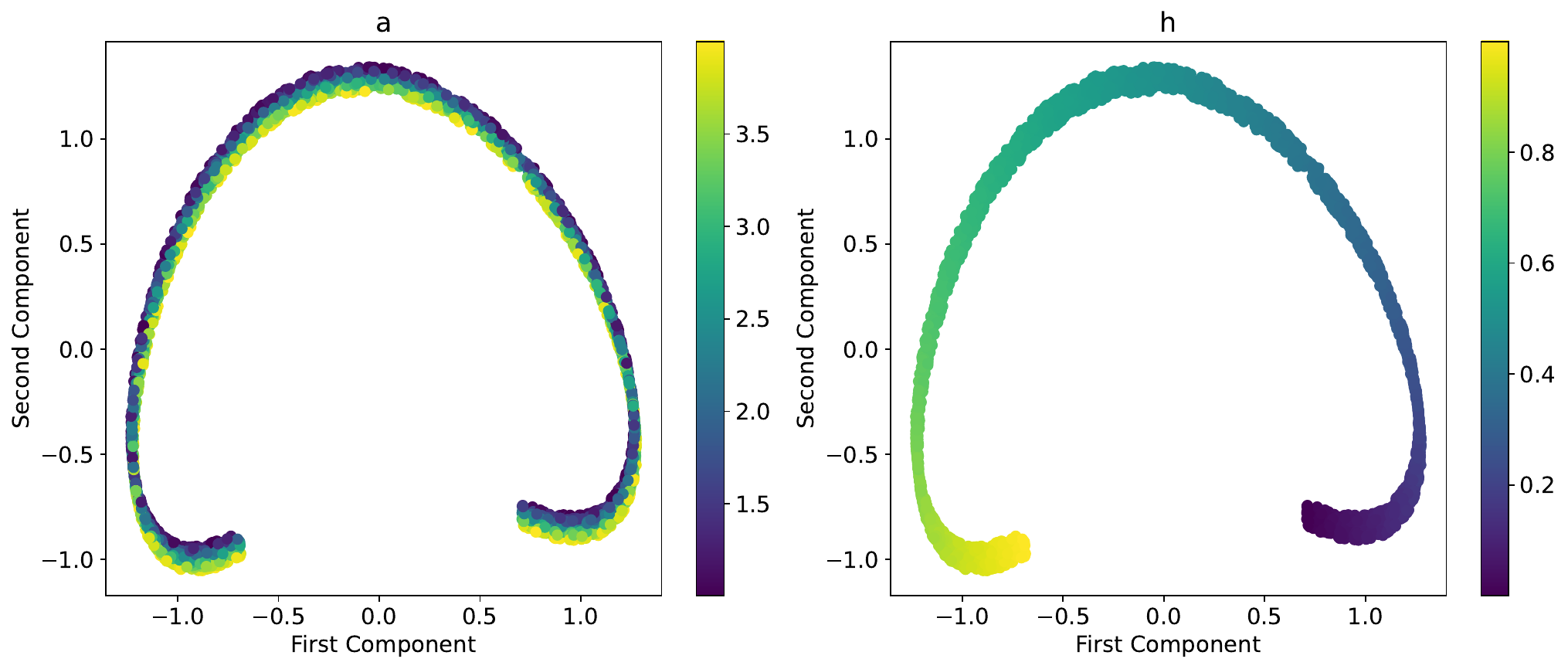}
    \label{fig:transportcomponents}
}
\subfigure[Projection to top 3 principal components]{
    \includegraphics[height=3.5cm]{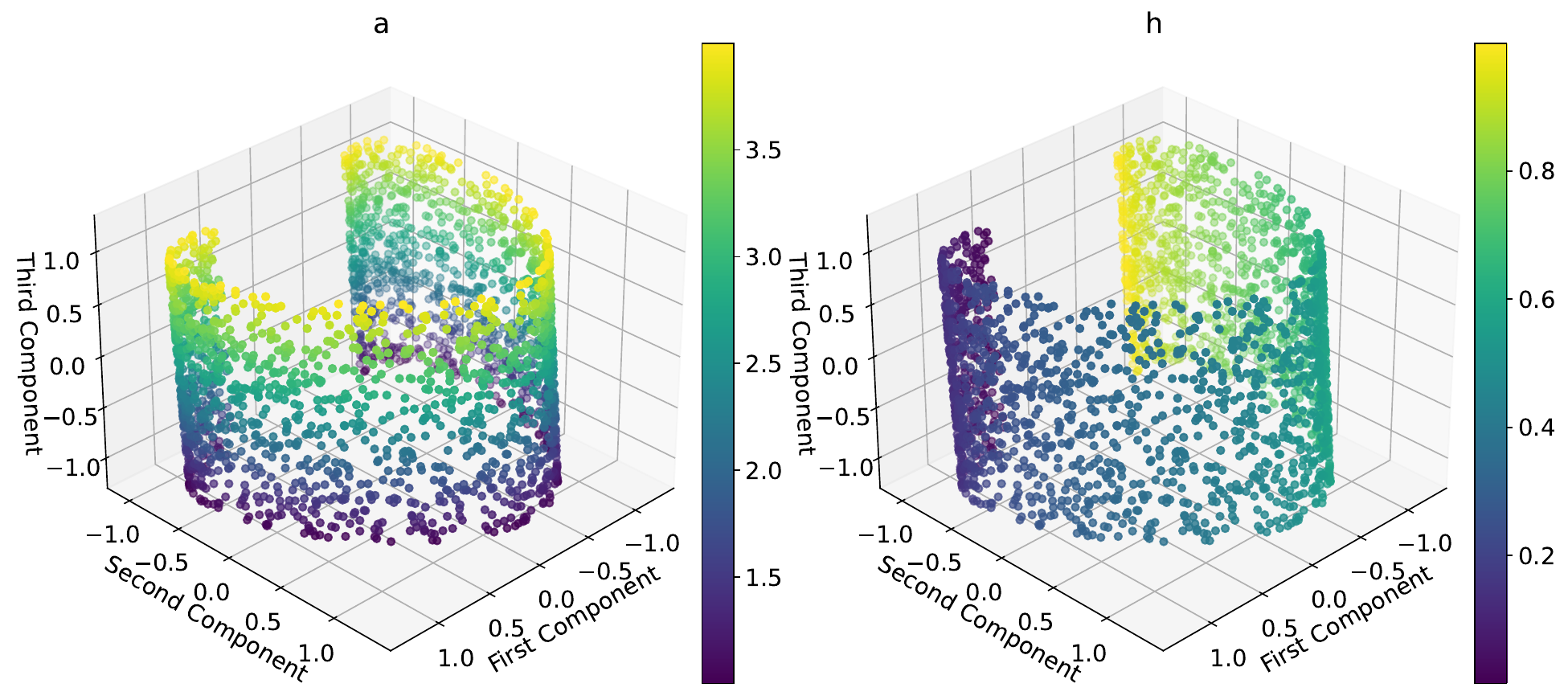}
    \label{fig:transportcomponents3}
}
\subfigure[Projection to 4th-6th principal components]{
    \includegraphics[height=3.5cm]{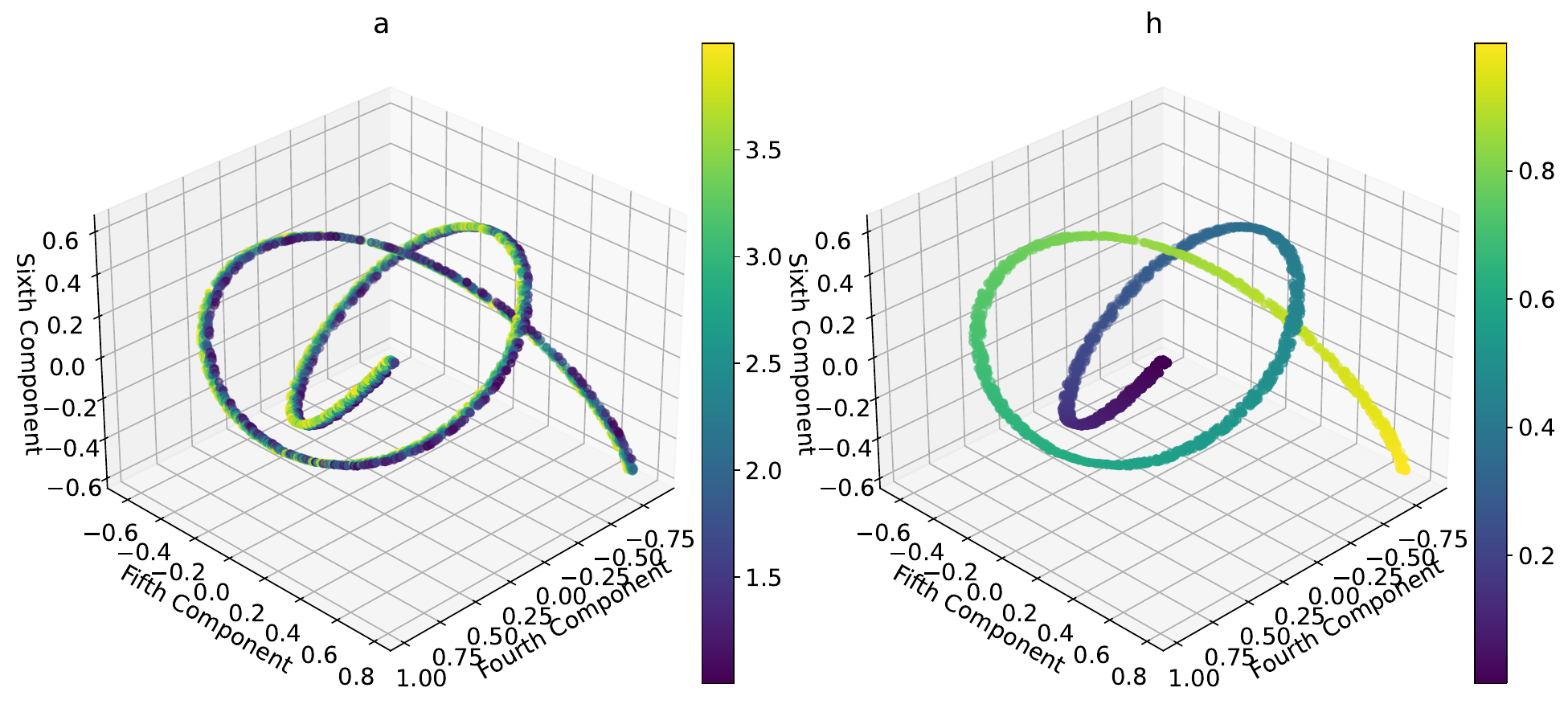}
    \label{fig:transportpca6}
}
\caption{Nonlinearity of the initial conditions for the transport equation. (a) shows the singular values of the data matrix. (b) shows the projection of data to the top 2 principal components and (c) shows the projection to top 3 principal components. (d) shows the projection to the 4th-6th principal components. In (b), (c) and (d), the projections are colored according to the $a$ parameter in the left subplots and according to the $h$ parameter in the right subplots.}
\label{fig:transportnonlinear}
\end{figure}

We then use the nonlinear Auto-Encoder for a nonlinear dimension reduction of the data. Figure \ref{fig:transport:project} shows the latent features given by the Auto-Encoder with reduced dimension $2$.  The intrinsic parameters $a$ and $h$ are well represented in the latent space.

\begin{figure}
    \centering
    \includegraphics[height=4.5cm]{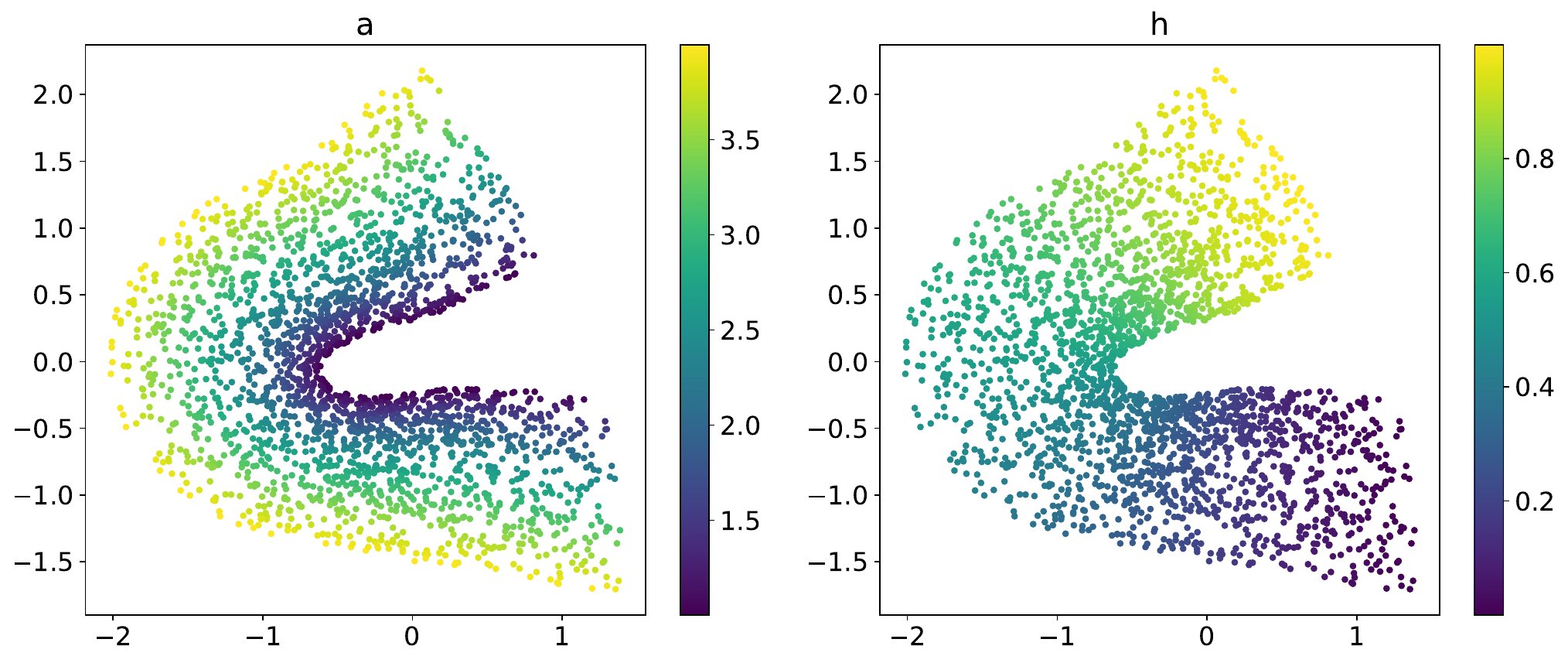}
    \caption{Latent features of the initial conditions $g_{\alpha,h}$ (Transport) in \eqref{eq:ftransport} given by the Auto-Encoder. The left plot is colored according to $a$ and the right plot is colored according to $h$.
    }
    \label{fig:transport:project}
\end{figure}

\begin{figure}[h]
\centering
\subfigure[One draw from $\gamma_{transport}$]{
    \includegraphics[width=5cm]{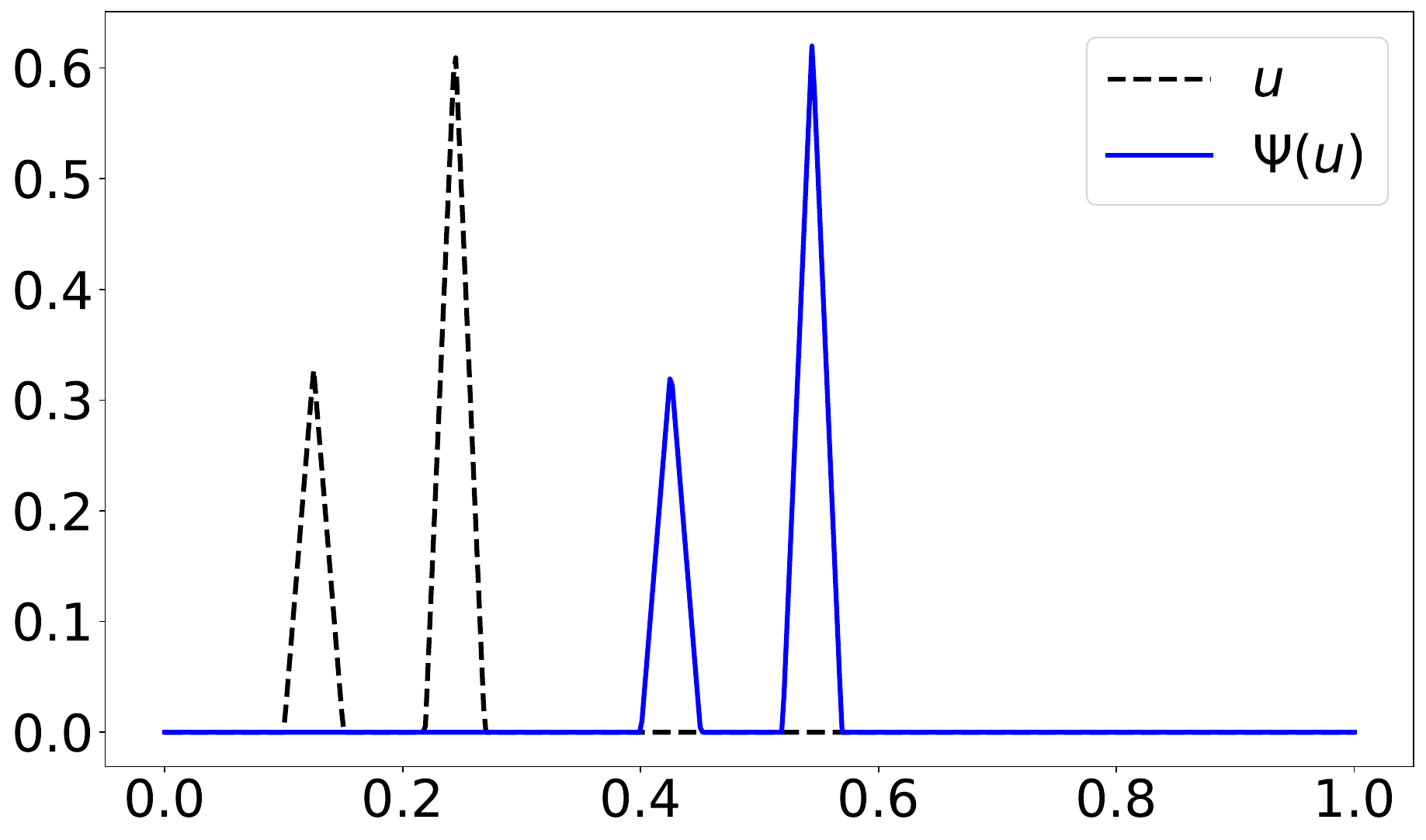}
    \label{fig:transport:samples}
}
\subfigure[Relative projection error]{
    \includegraphics[width=5cm]{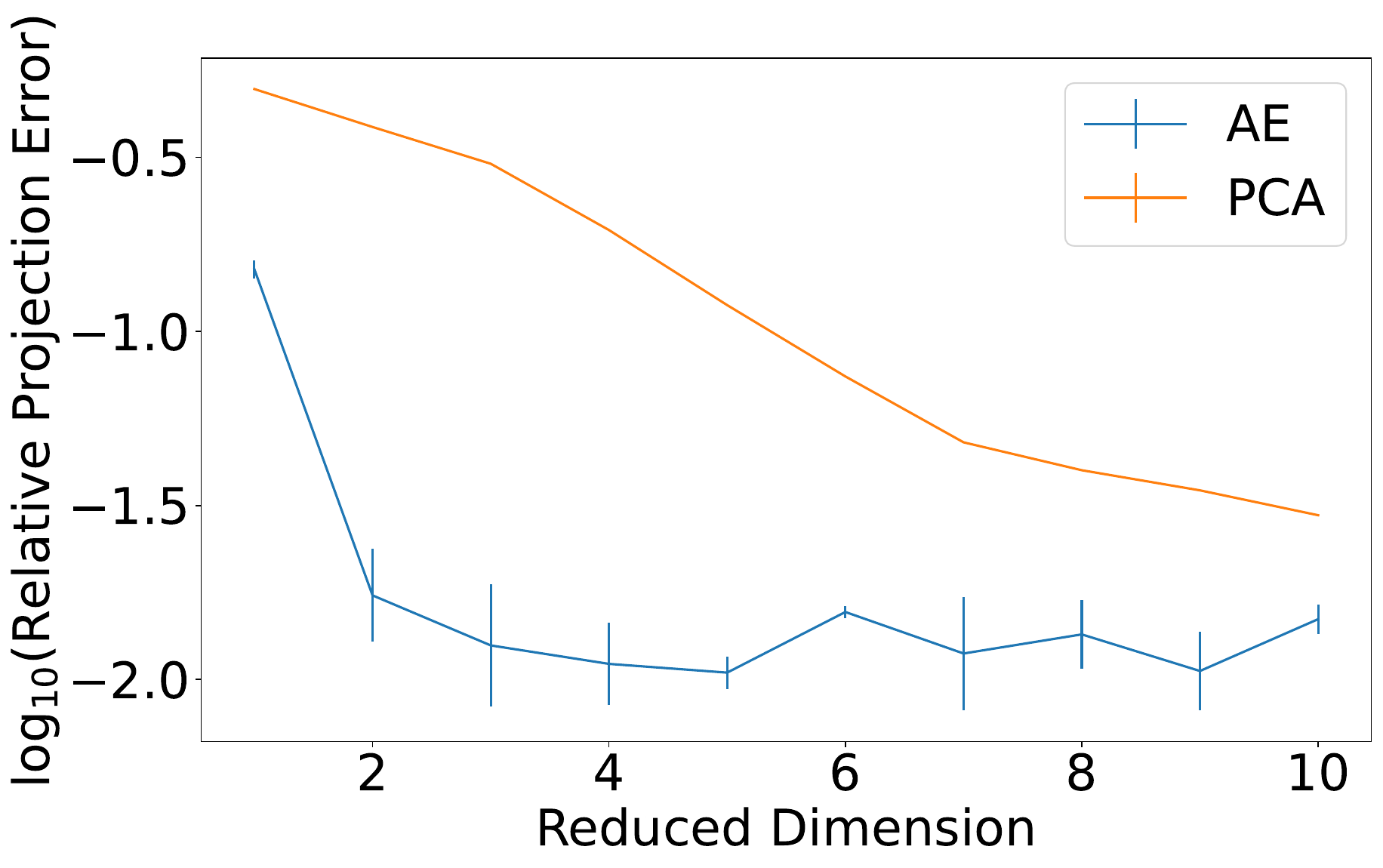}
    \label{fig:transport:projerr}
}
\subfigure[Relative test error]{
    \includegraphics[width=5cm]{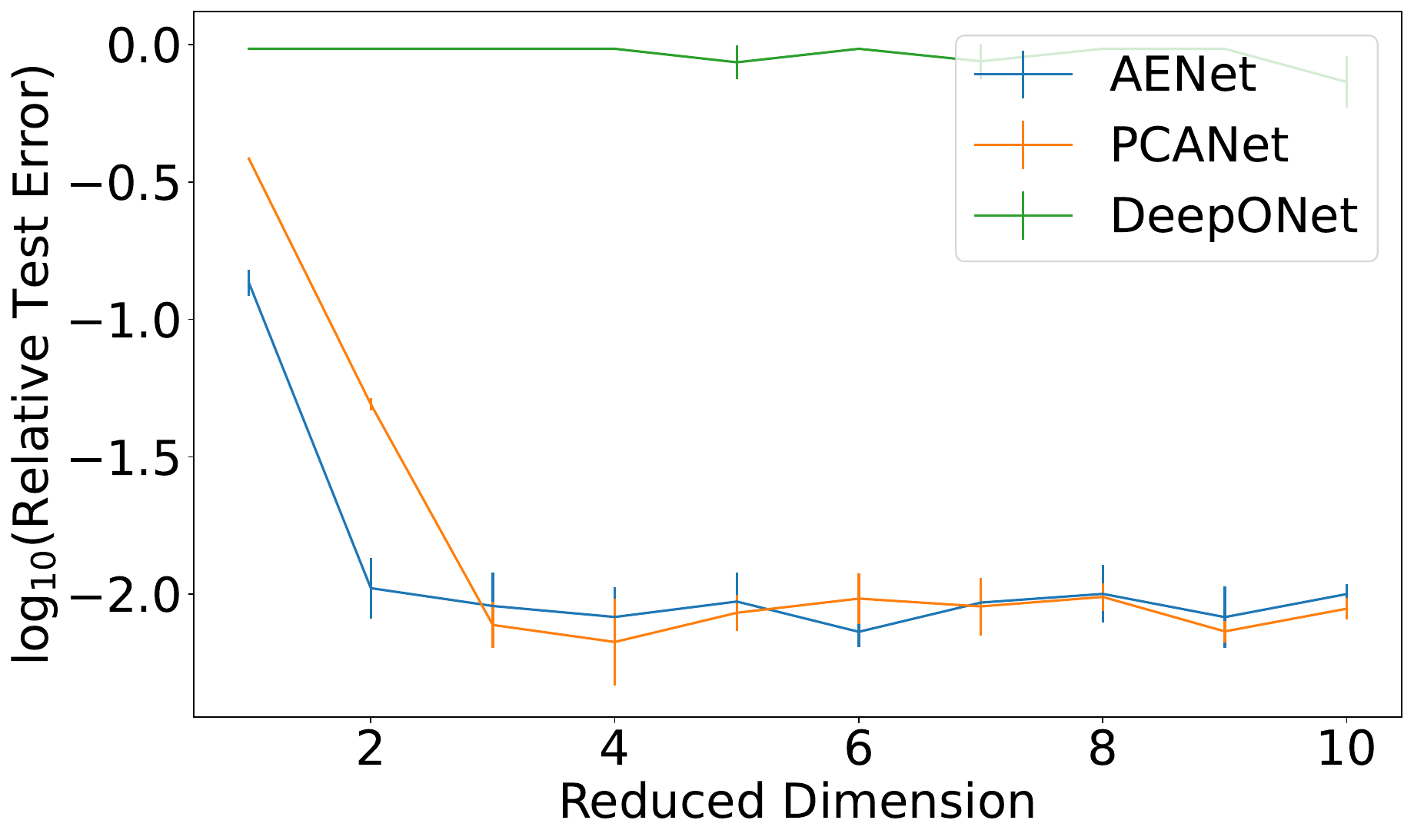}
    \label{fig:transport:testerr}
}
\subfigure[Predicted solutions]{
    \includegraphics[width=5cm]{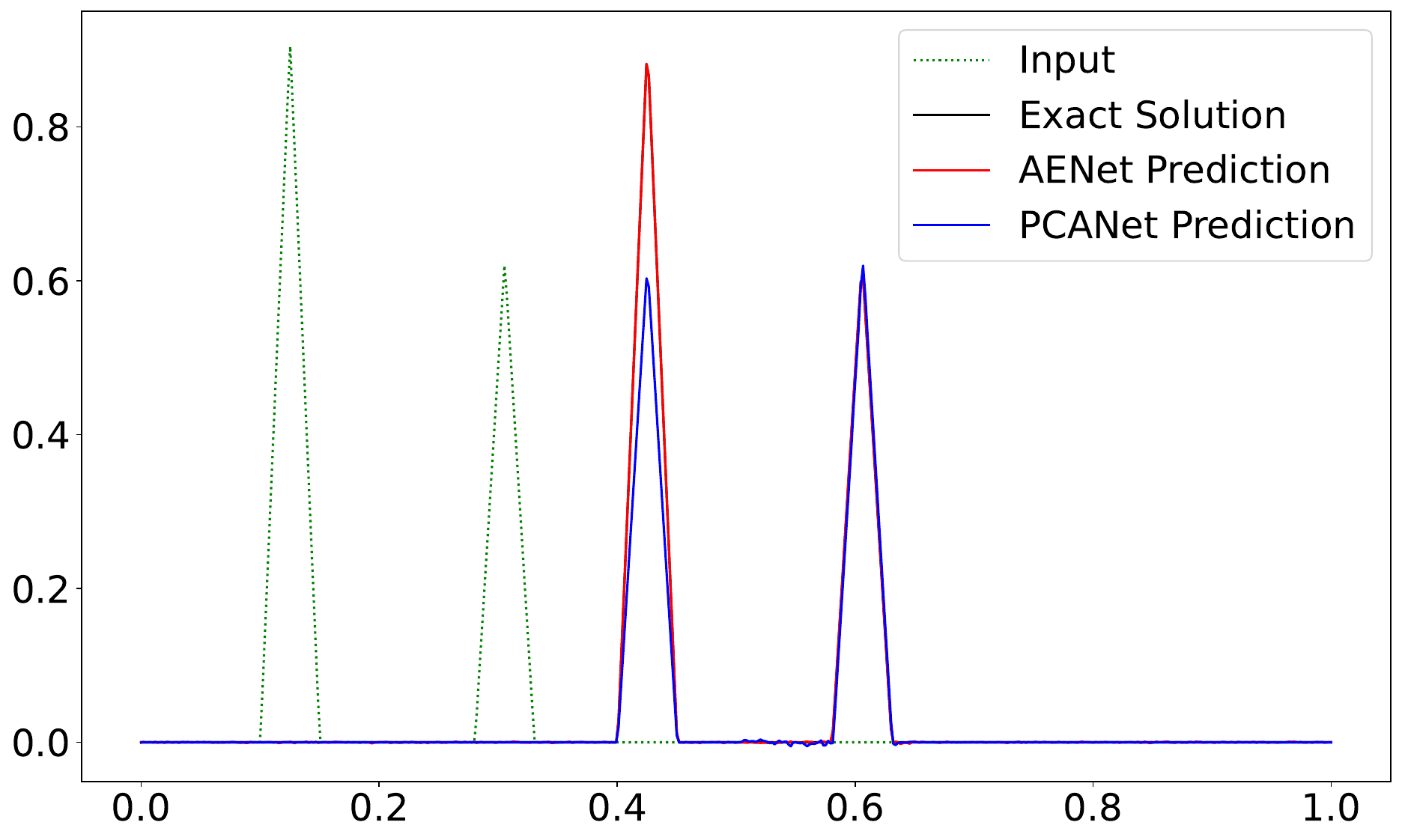}
    \label{fig:transport:examples}
}
\subfigure[Squared test error versus $n$]{
    \includegraphics[width=5cm]{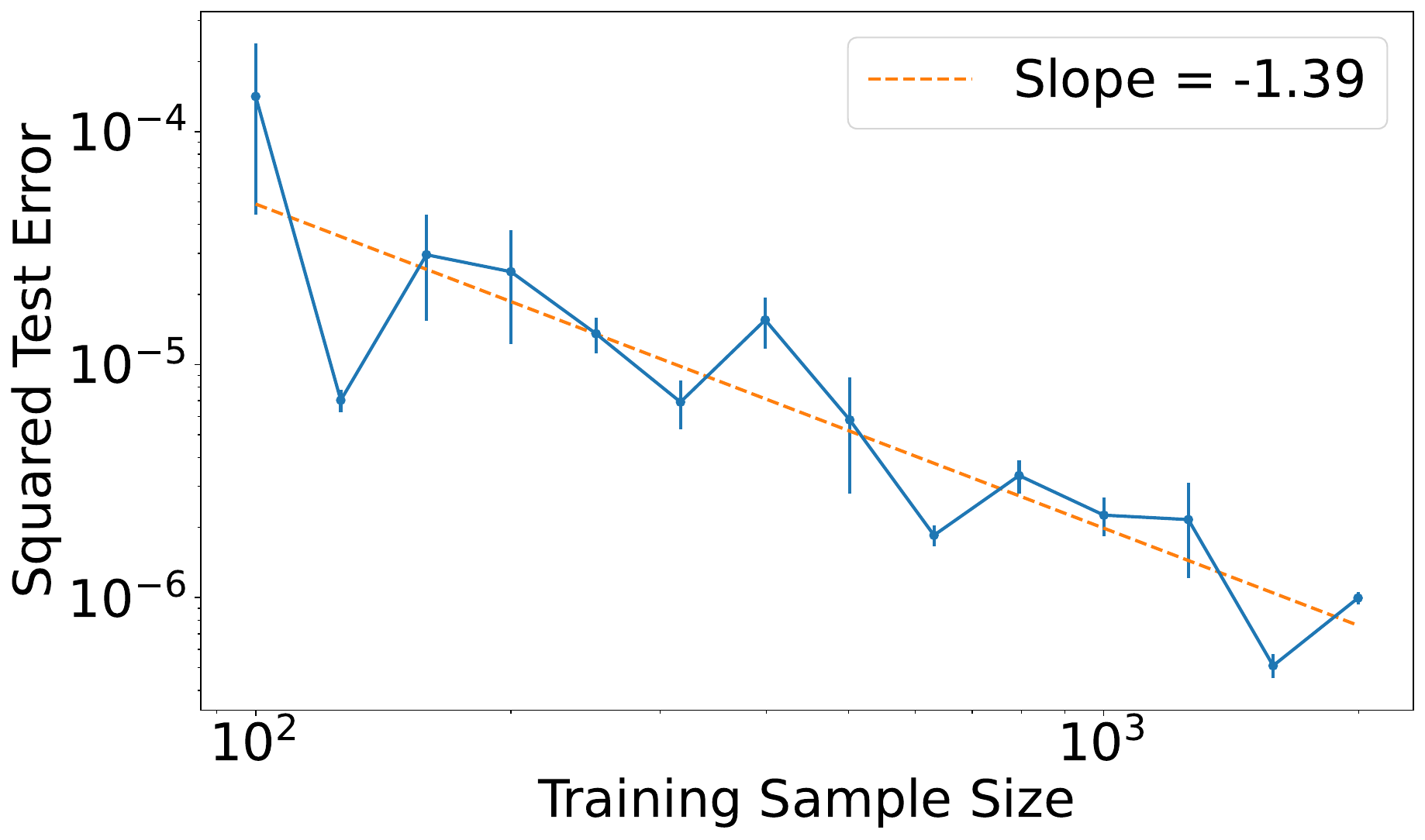}
    \label{fig:transport:generr}
}
\subfigure[Robustness to noise]{
    \includegraphics[width=5cm]{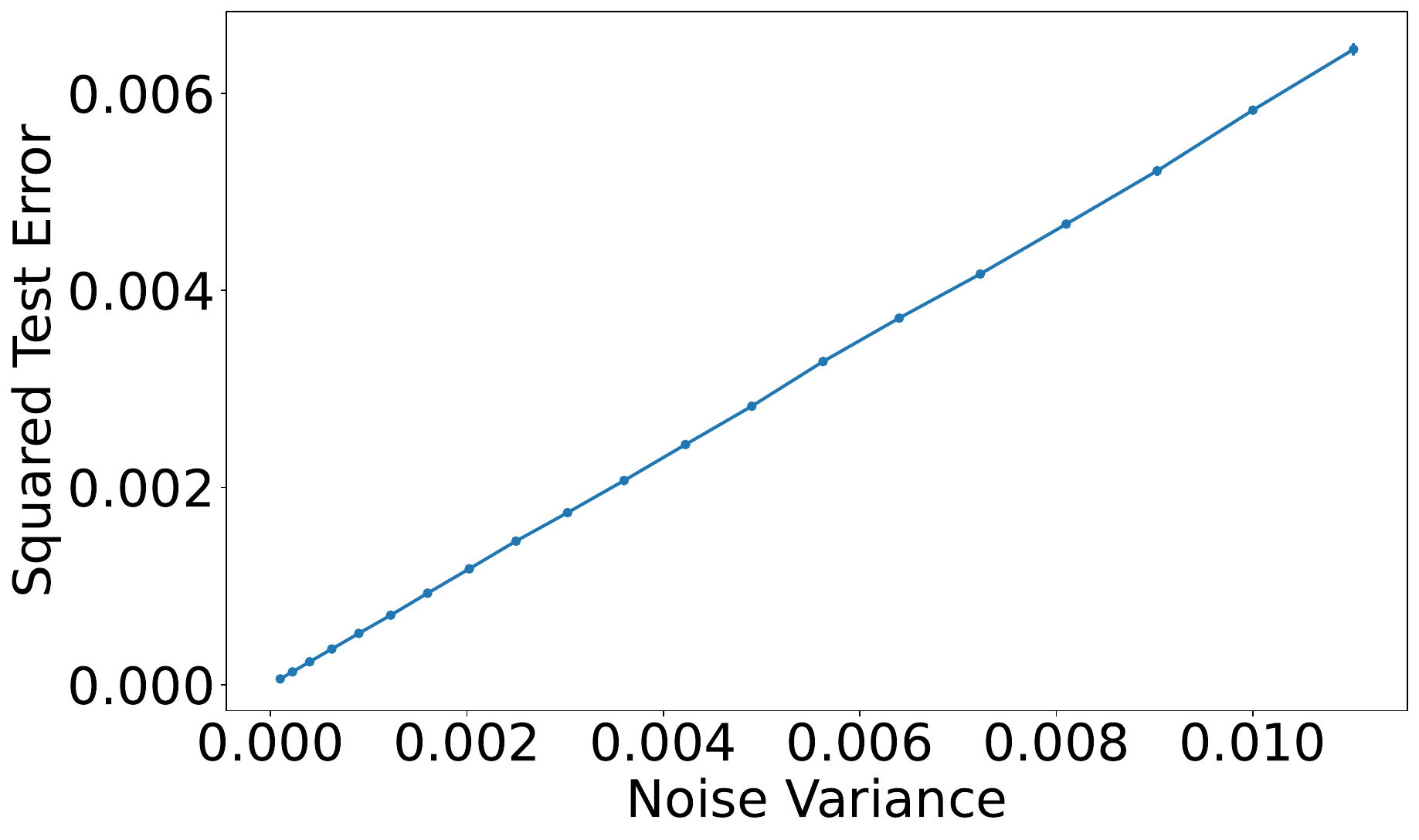}
    \label{fig:transport:noise}
}
\caption{Results of the transport equation.}
\label{fig:transport}
\end{figure}

Figure \ref{fig:transport:samples} shows a sample $u \sim \gamma_{transport}$, as well as $\Psi(u)$. Before learning the operator $\Psi$, we compare the projection error of Auto-Encoder and PCA on a test sample from $\gamma$ in Figure \ref{fig:transport:projerr}.  In Figure \ref{fig:transport:projerr}, Auto-Encoder is trained three times with different initilizations, and the average squared test error is shown with standard deviation error bar. 
Auto-Encoder yields a significantly smaller projection error than PCA for the same reduced dimension.
Figure \ref{fig:transport:testerr} shows the relative test error of AENet, PCANet, and DeepONet (after learning $\Psi$) as functions of the reduced dimension.
We further show the comparison of relative test error (as a percent) in Table \ref{tab:table}(a). AENet outperforms PCANet when the reduced dimension is the intrinsic dimension $2$, and they are comparable when the reduced dimension is bigger than $2$. 
Finally, Figure \ref{fig:transport:examples} shows an example of the predicted solution at $t=1$ for AENet and PCANet with input reduced dimension 2.

To validate our theory in Theorem \ref{thm:generalization}, we show a log-log plot of the absolute squared test error versus training sample size in Figure \ref{fig:transport:generr} for AENet. The curve is almost linear, depicting the theorized exponential relationship. 
To show robustness to noise, we plot the squared test error versus the variance of Gaussian noise added to the output data in Figure \ref{fig:transport:noise}, depicting the theorized relationship. In Figure \ref{fig:transport:noise}, the latent dimension of AENet is taken as $2$. In Figure \ref{fig:transport:generr} and \ref{fig:transport:noise}, we perform three experimental runs, and show the mean with standard deviation error bar.

\subsection{Burgers' equation}
\label{sec:burgers}

We consider the viscous Burgers' equation with periodic boundary conditions given by (for fixed viscosity $\nu = 10^{-3}$)
\begin{align}
\label{eq:burgers}
\begin{aligned}
u_t = \nu u_{xx} - u u_x,\quad &x \in [0, 1), t \in (0, 1] 
\end{aligned}
\end{align}
with a periodic boundary condition and the initial condition $u(x, 0) = g(x), x \in [0, 1). $ 
We seek to approximate the operator $\Psi$ which takes $g(x) = u(x, 0)$ as input and outputs $u(x, 1)$ from the solution of \eqref{eq:burgers} at $t=1$. 

\begin{figure}[t]
    \centering
    \includegraphics[height=3.8cm]{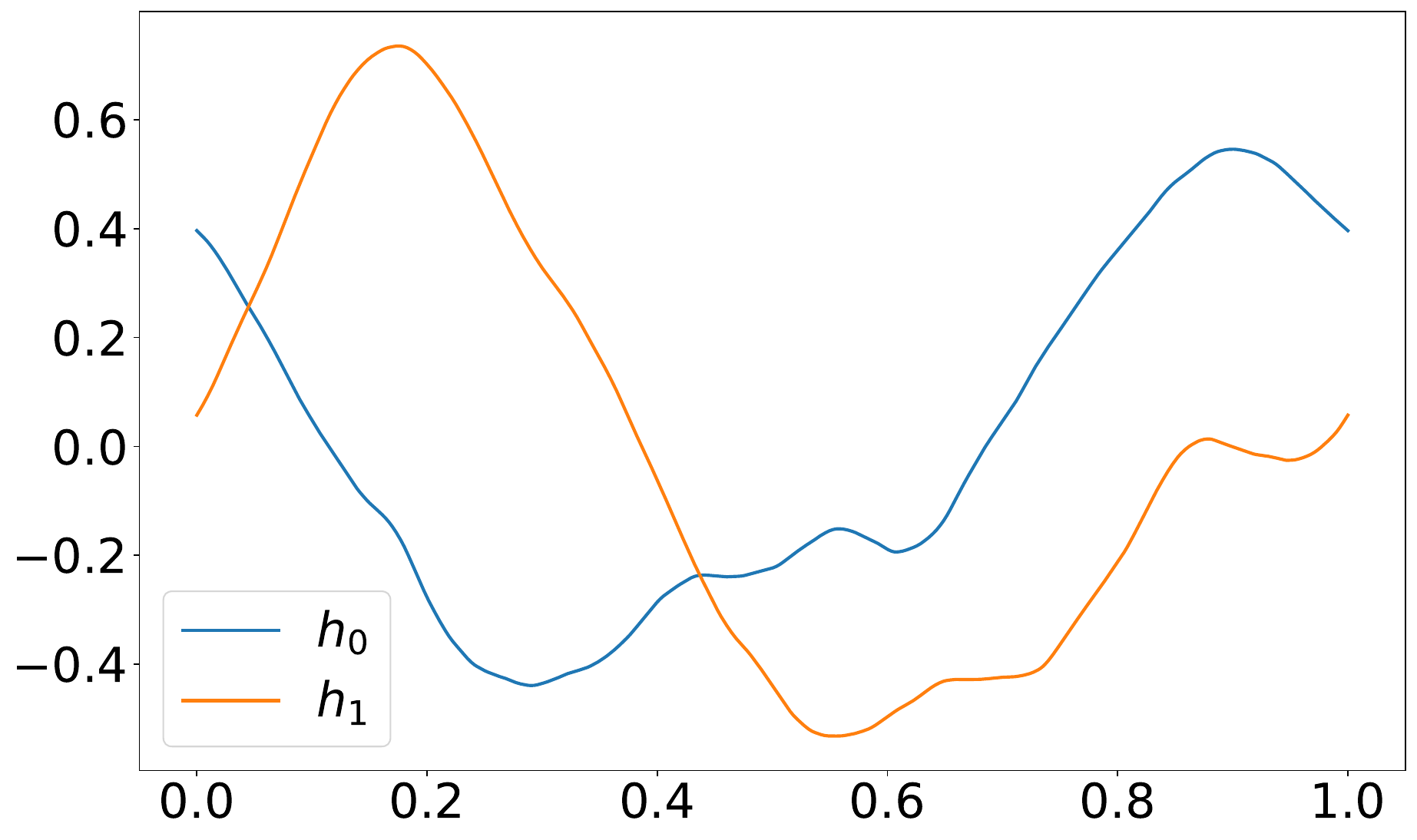}
    \caption{$w_0$ and $w_1$ used for the Burgers' example}
    \label{fig:grfsamples}
\end{figure}

 Let $w_0$ and $w_1$ be two functions sampled from the probability measure $N\left(0, 7^4(-\frac{d^2}{dx^2}+7^2I)^{-2.5}\right)$ on $[0, 1)$, which is considered in  \cite{bhattacharya2020model}. Figure \ref{fig:grfsamples} shows a plot of the $b_0$ and $b_1$ used for the results in this section. For any $a \in [-0.9, 0.9]$ and $h \in [0, 1]$, define 
 \begin{equation} g_{a, h}(x) = a w_0(x - h) + \sqrt{1 - a^2} w_1(x - h). 
 \label{eq:fburgers}
 \end{equation}
Our sampling measure $\gamma_{burg}$ is defined on $\cM := \{g_{a, h} : a \in [-0.9, 0.9], h \in [0, 1]\}$ by sampling $a$ and $h$ uniformly and then constructing $g_{a, h}$ restricted to $x \in [0, 1)$. 

Figure \ref{fig:burgerssvd} plots the singular values in descending order of the input data sampled from $\gamma_{burg}$. The slow decay of the singular values indicates that a non-linear encoder would be a better choice than a linear encoder for this problem. Figure \ref{fig:burgerscomponents} shows the non-linearity of the data, when we project the data into the top $2$ principal components. Figure \ref{fig:burgerscomponents3} and \ref{fig:burgerspca6} further show the  projections of this data set to the 1st-6th principal components. This data set is nonlinearly parametrized by 2 intrinsic parameters, but the top 2 principal components are not sufficient to represent the data, as shown in Figure \ref{fig:burgerscomponents}. 

\begin{figure}[h]
\centering
\subfigure[Singular values]{
    \includegraphics[height=3.5cm]{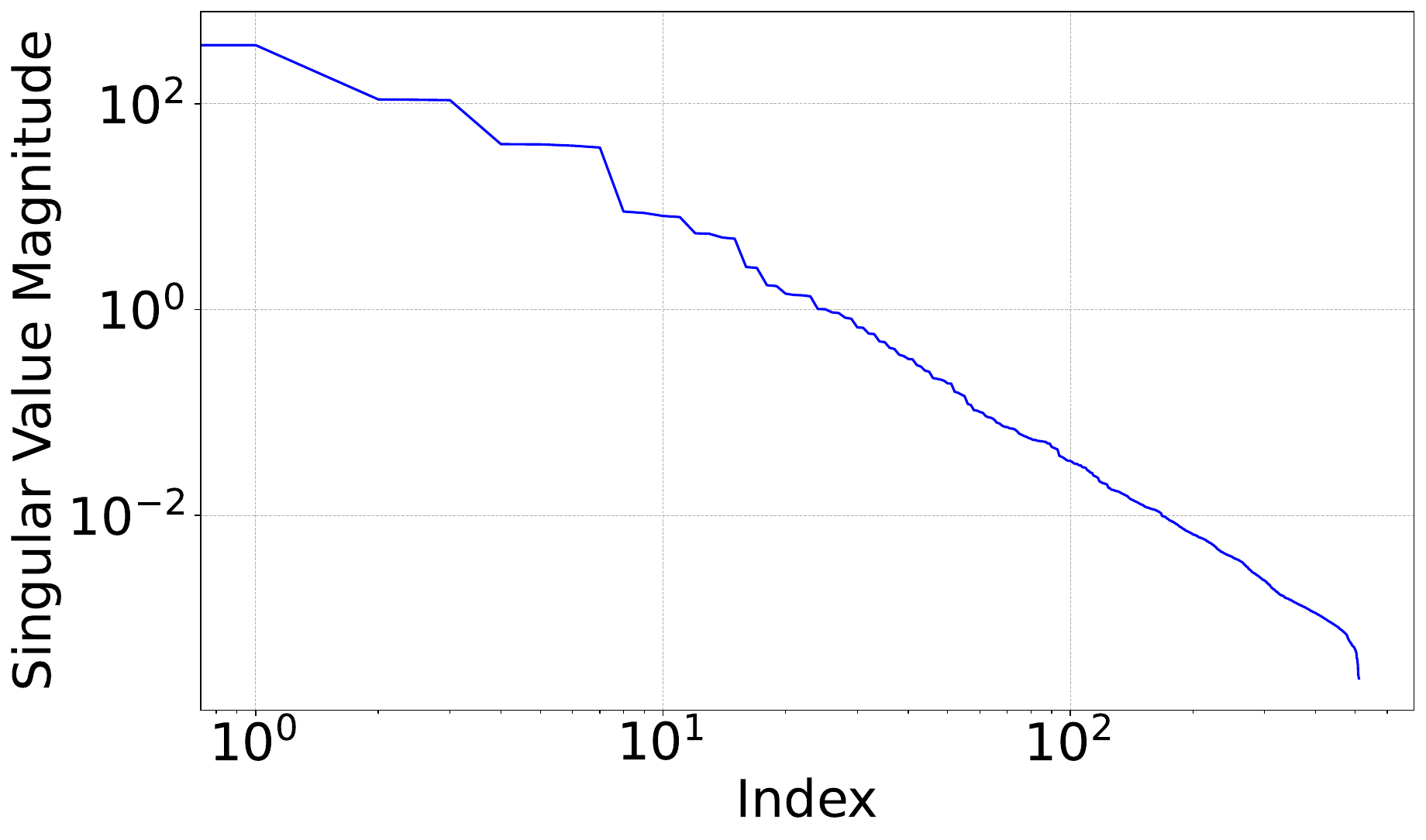}
    \label{fig:burgerssvd}
}
\subfigure[Projection to top 2 principal components]{
    \includegraphics[height=4cm]{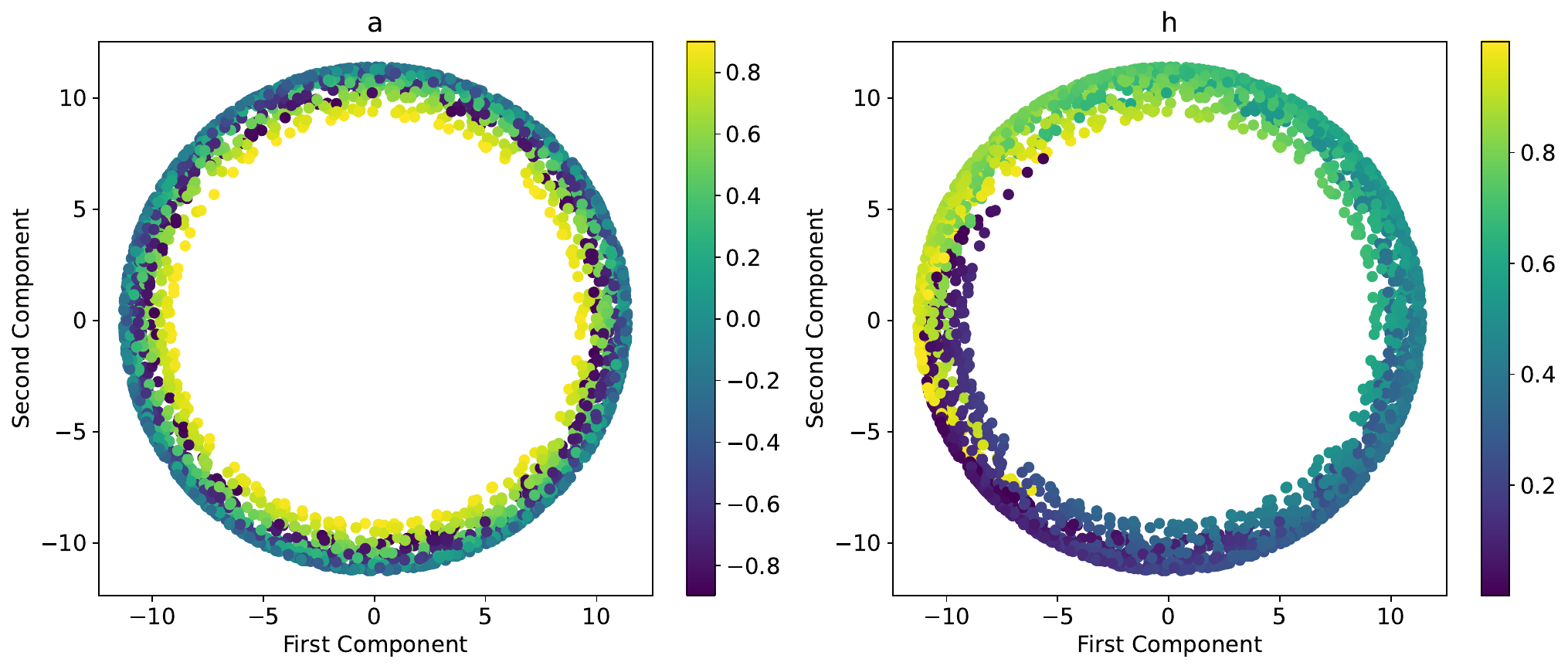}
    \label{fig:burgerscomponents}
}
\subfigure[Projection to top 3 principal components]{
    \includegraphics[height=3.5cm]{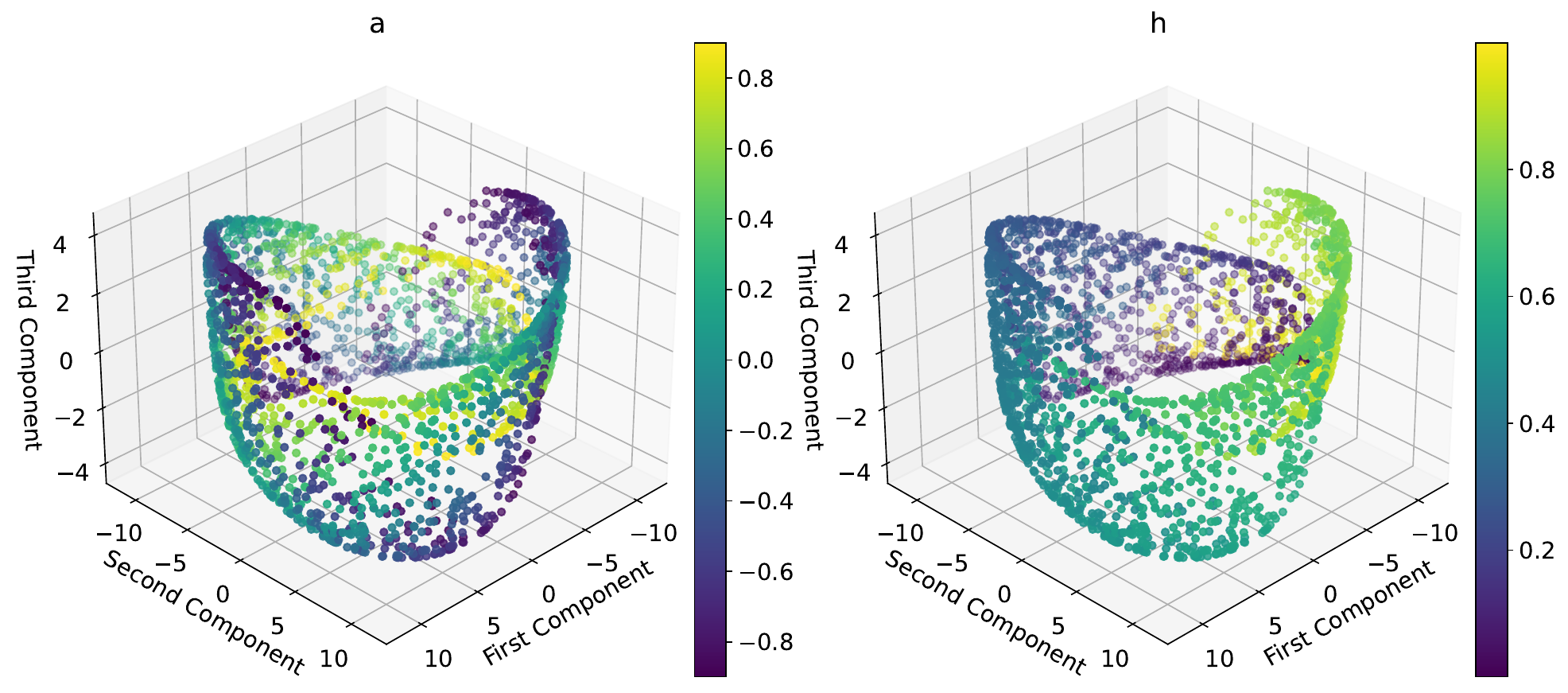}
    \label{fig:burgerscomponents3}
}
\hspace{-0.4cm}
\subfigure[Projection to 4th-6th principal components]{
    \includegraphics[height=3.5cm]{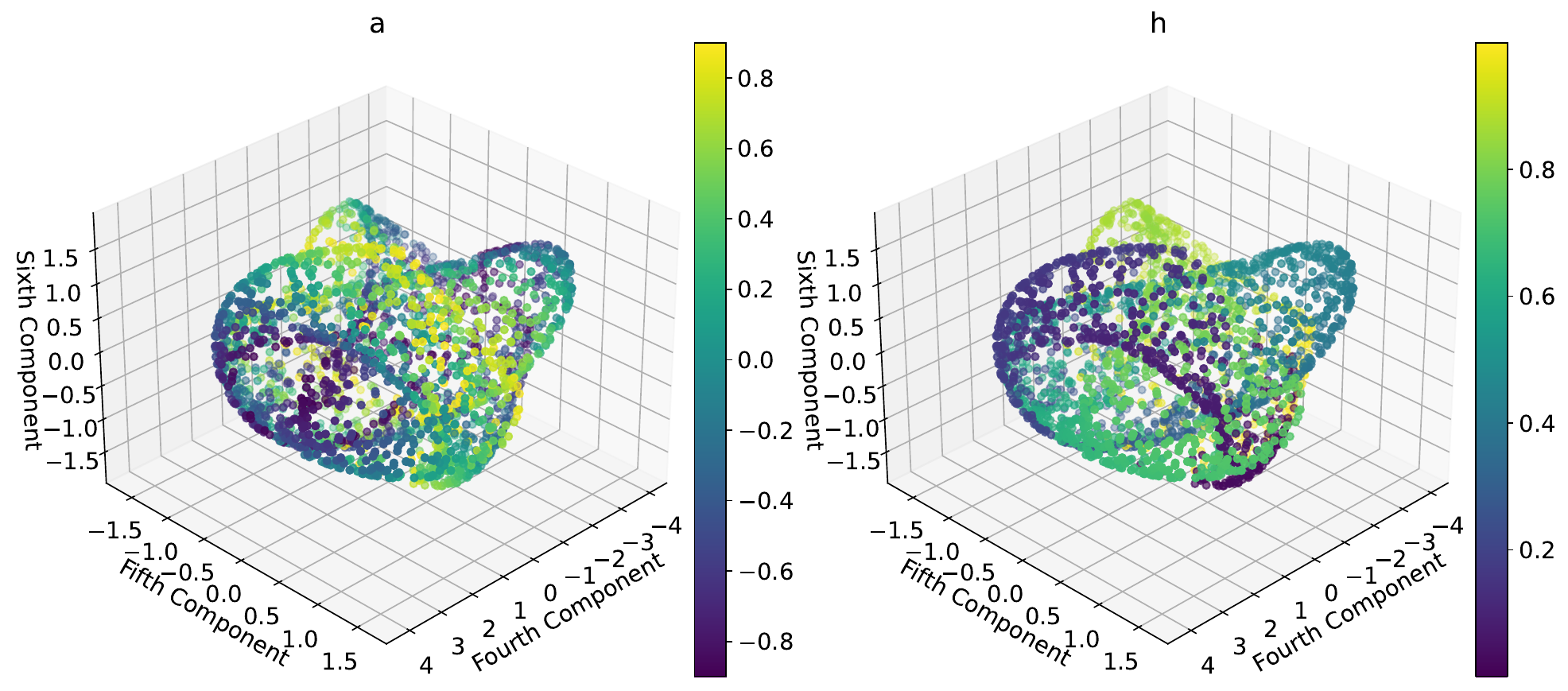}
    \label{fig:burgerspca6}
}
\caption{Nonlinearity of the initial conditions for the Burgers' equation. (a) shows the singular values of the data matrix. (b) shows the projection of data to the top 2 principal components and (c) shows the projection to top 3 principal components. (d) shows the projection to the 4th-6th principal components. In (b), (c) and (d), the projections are colored according to the $a$ parameter in the left subplots and according to the $h$ parameter in the right subplots.}
\label{fig:burgersnonlinear}
\end{figure}

On the other hand, we can use  Auto-Encoder for nonlinear dimension reduction. Figure \ref{fig:burgersproject} shows the projection of the training data by the encoder of a trained Auto-Encoder with reduced dimension $2$. The latent parameters reveals the geometry of an  annulus, i.e. the Cartesian product of an interval and a circle. This matches the distribution of parameters in $\cM$, because the first parameter $a$ varies on a closed interval, and the second parameter $b$ represents translation on the periodic domain which represents a circle.

\begin{figure}
    \centering
    \includegraphics[height=5cm]{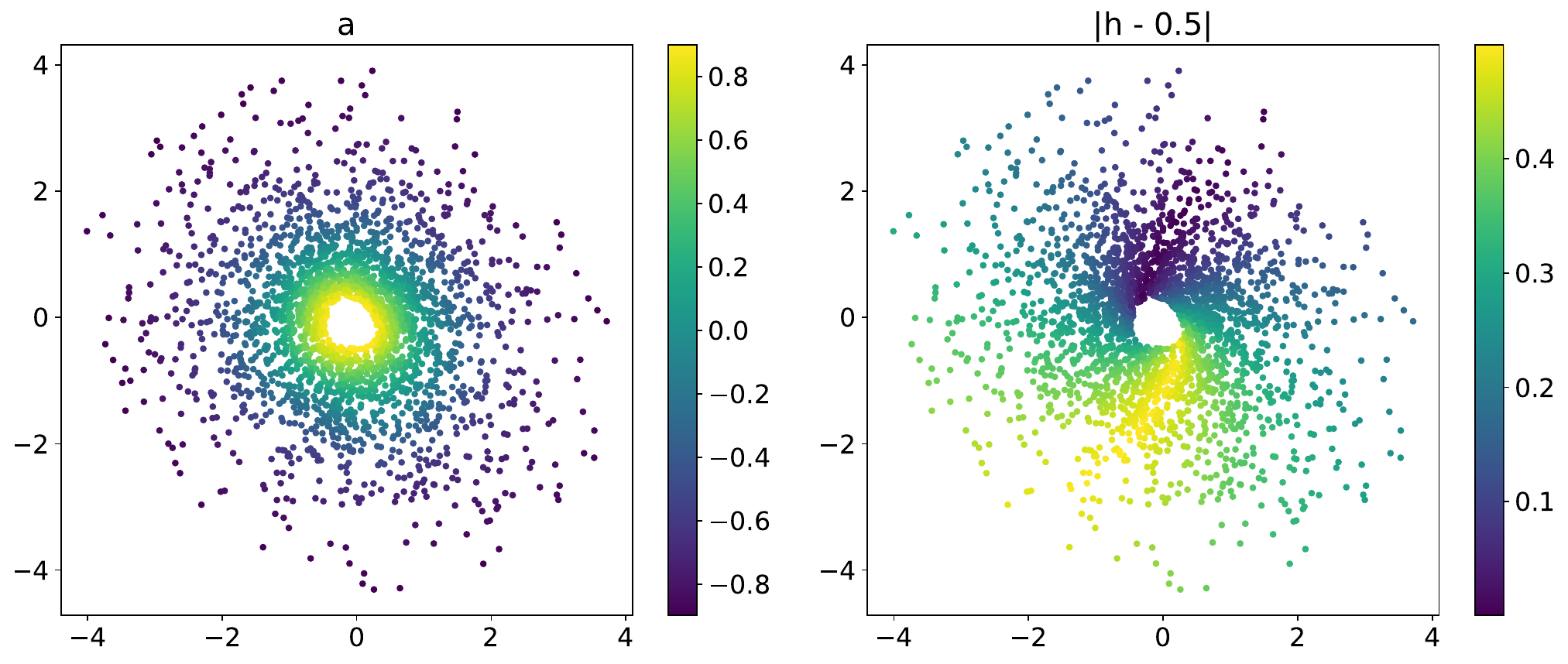}
    \caption{
    Latent features of the initial conditions $g_{a,h}$ (Burgers') in \eqref{eq:fburgers} given by the Auto-Encoder. The left plot is colored according to $a$ and the right plot is colored according to $|h-0.5|$. 
    }
    \label{fig:burgersproject}
\end{figure}

\begin{figure}[h]
\centering
\subfigure[One draw from $\gamma_{burg}$]{
    \includegraphics[width=5cm]{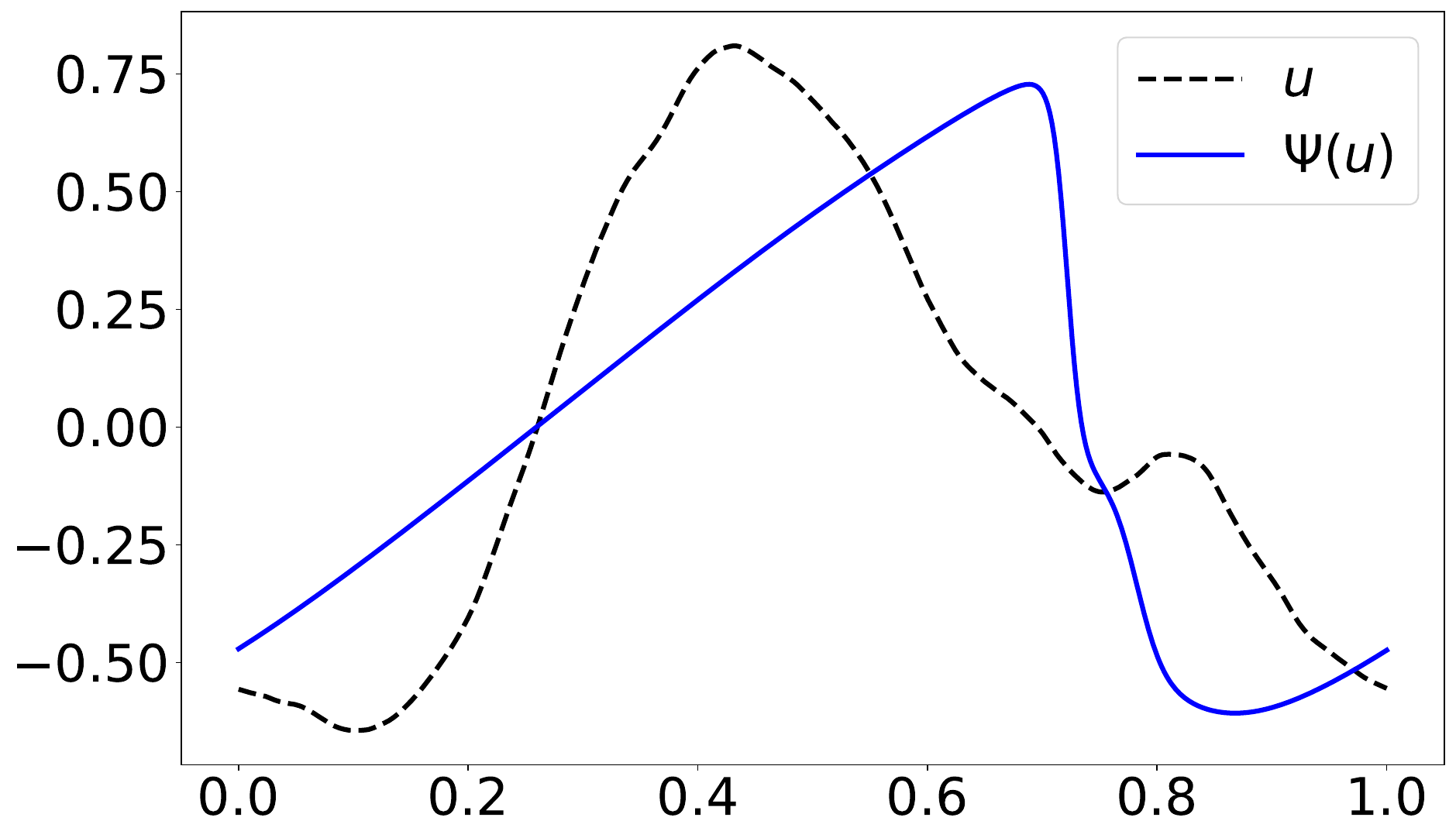}
    \label{fig:burgers:samples}
}
\subfigure[Relative projection error]{
    \includegraphics[width=5cm]{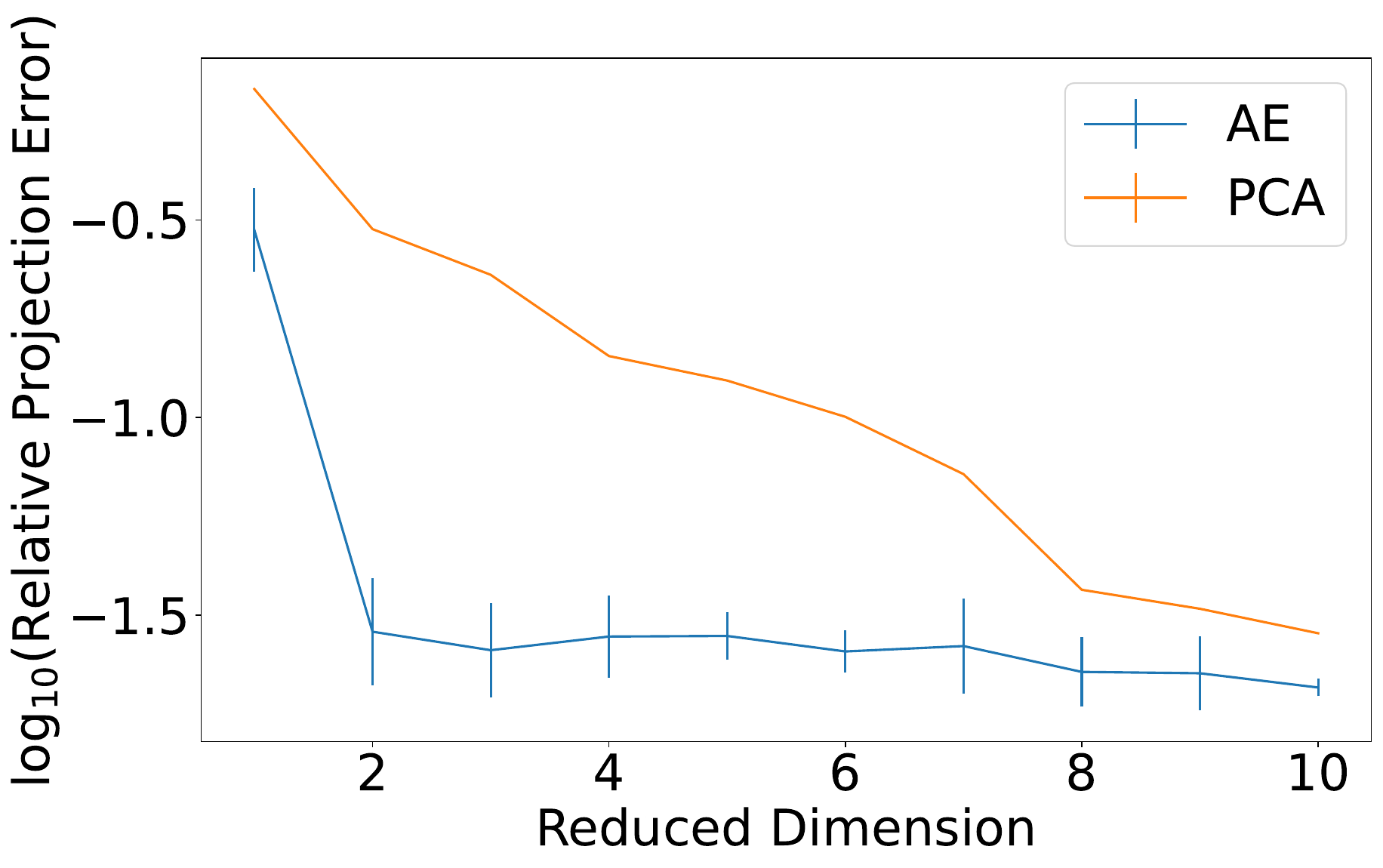}
    \label{fig:burgers:projerr}
}
\subfigure[Relative testing error]{
    \includegraphics[width=5cm]{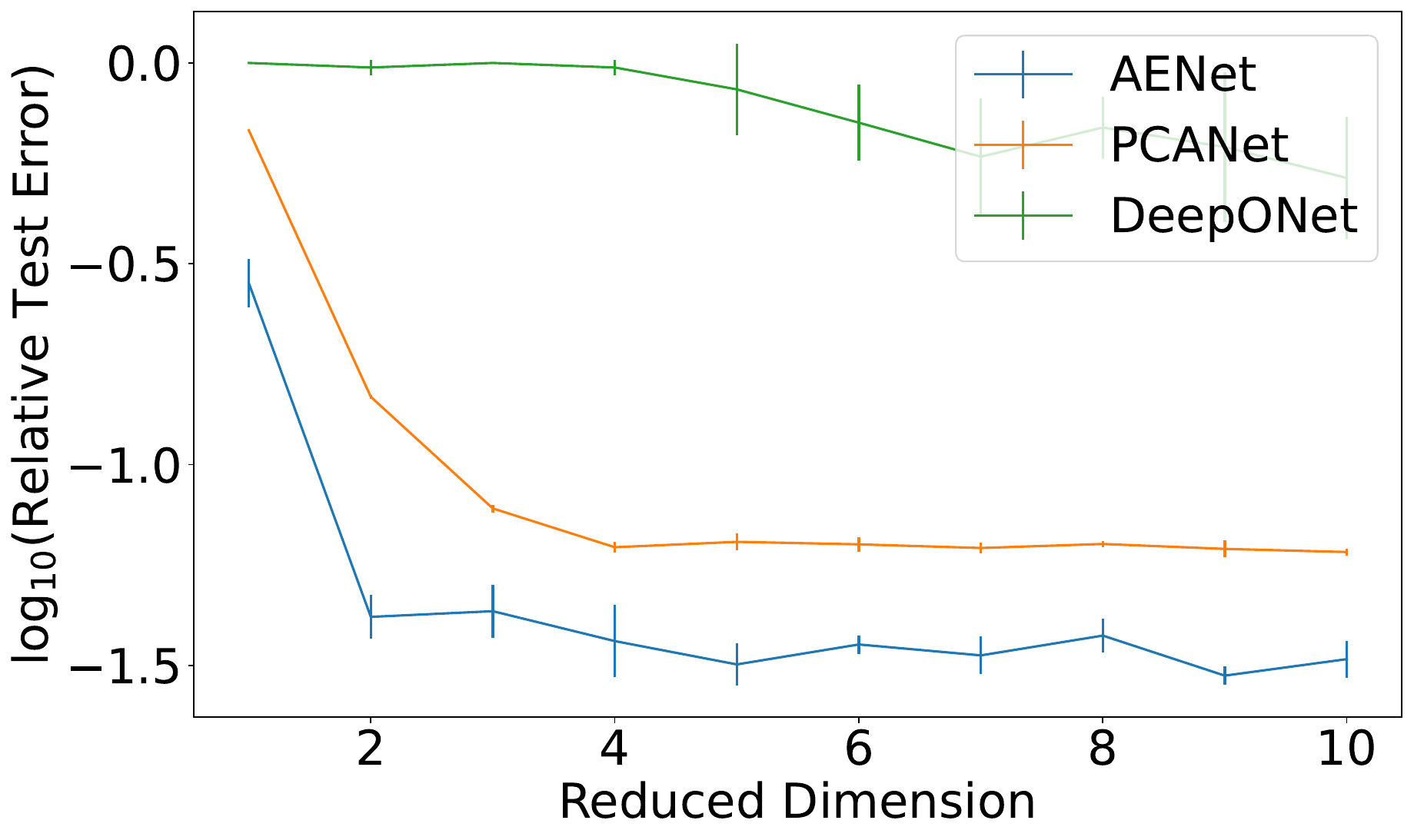}
    \label{fig:burgers:testerr}
} \vspace*{-0.3cm}\\

\subfigure[Predicted solutions]{
    \includegraphics[width=5cm]{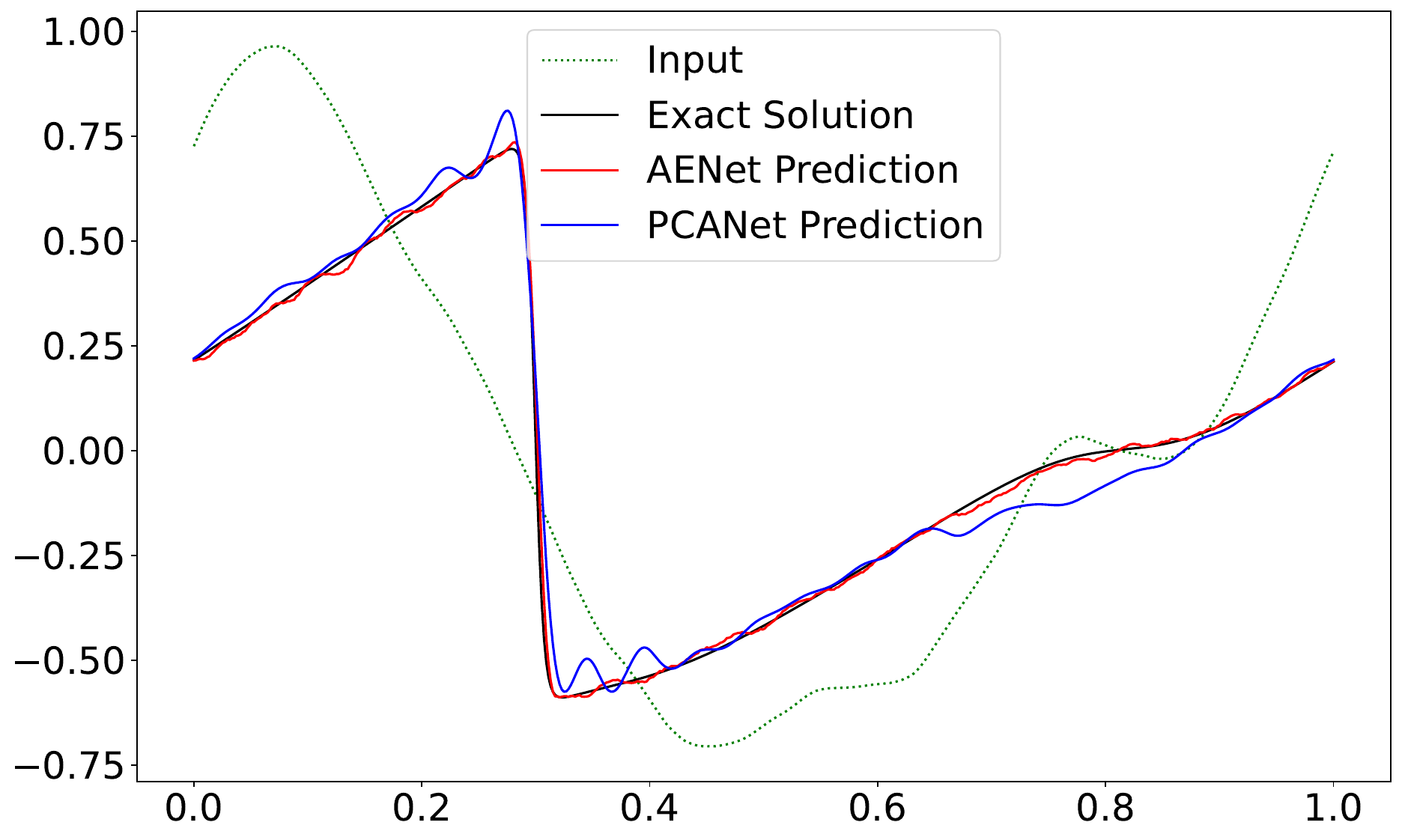}
    \label{fig:burgers:examples}
}
\subfigure[Squared test error versus $n$]{
    \includegraphics[width=5cm]{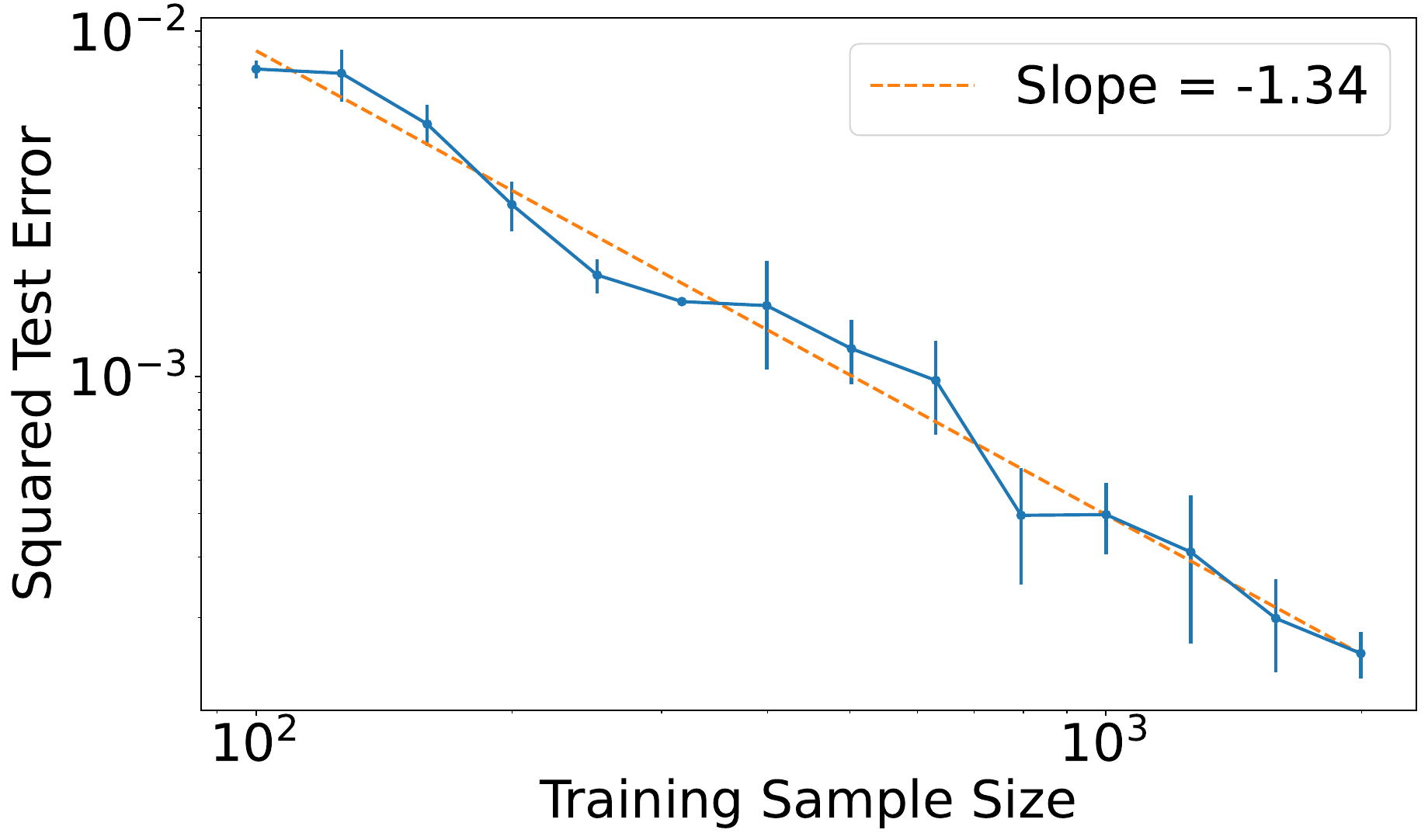}
    \label{fig:burgers:generr}
}
\subfigure[Robustness to noise]{
    \includegraphics[width=5cm]{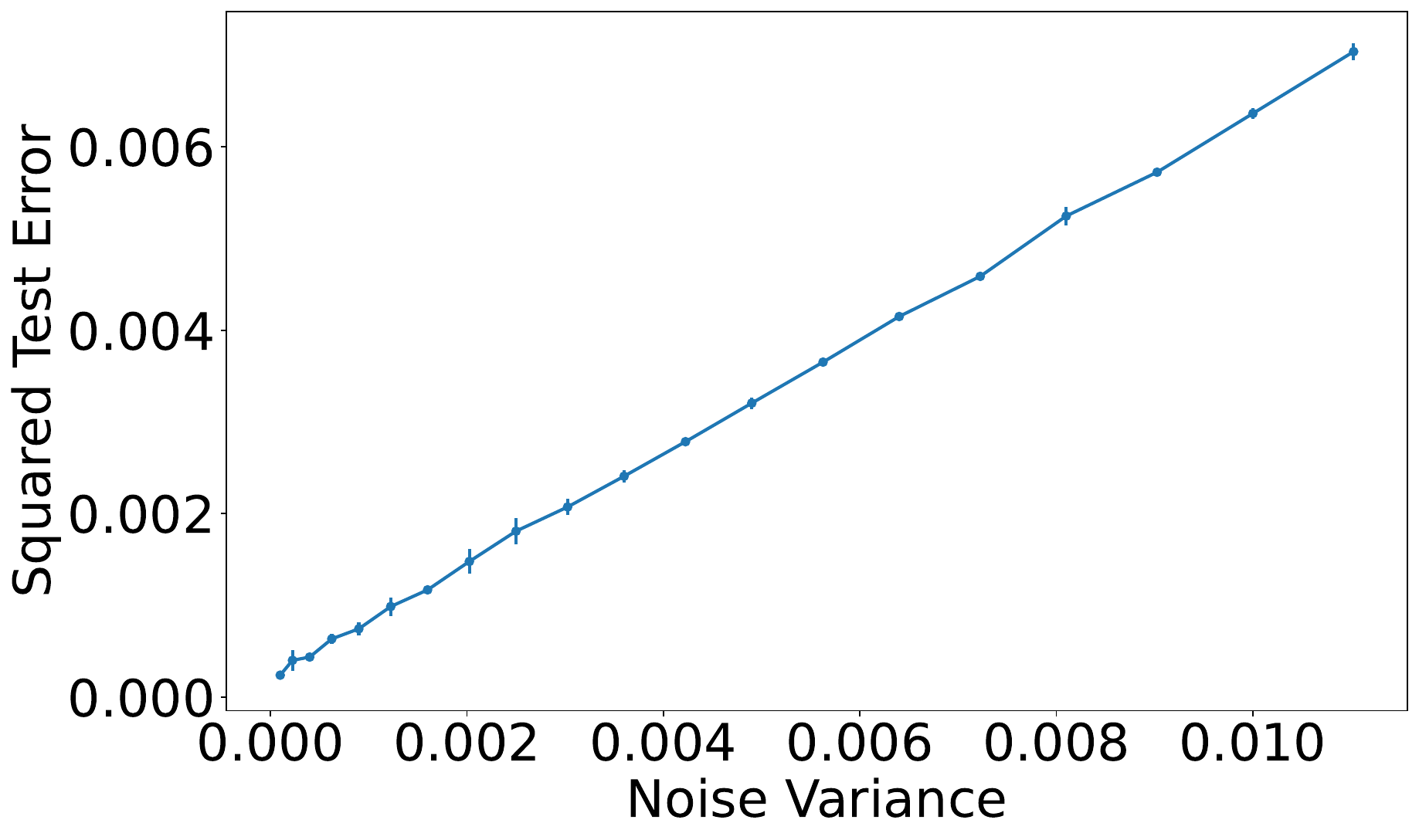}
    \label{fig:burgers:noise}
}

\caption{Results of the Burgers' equation. }
\label{fig:burgers}
\end{figure}

Figure \ref{fig:burgers} contains various plots comparing AENet, PCANet, and DeepONet for the Burgers' equation, analogous to the role Figure \ref{fig:transport} plays for the transport equation. In Figure \ref{fig:burgers:testerr} AENet outperforms PCANet for all reduced dimensions. 
Figure \ref{fig:burgers:examples} compares AENet with reduced dimension 2 to PCANet with reduced dimension 2 for domain and 40 for range. 
Figure \ref{fig:burgers:generr} and \ref{fig:burgers:noise} display the absolute squared test error. Figures \ref{fig:burgers:generr} and \ref{fig:burgers:noise} are also generated using AENet with reduced dimension 2. The comparison of relative test error (as a percent) is further shown in Table \ref{tab:table}(b).

\subsection{Korteweg–De Vries (KdV) equation}
\label{sec:kdv}

We consider one dimensional KdV equation given by 
\begin{equation}
\label{eq:kdv}
\begin{aligned}
u_t = - u_{xxx} - u u_x,\quad &x \in [0, 6], t \in (0, 0.01] 
\end{aligned}
\end{equation}
with initial condition $u(x, 0) = f(x), x \in [0, 6].$ We seek to approximate the operator which takes $u(x, 0)$ as input and outputs $u(x, 0.01)$ from the solution of \eqref{eq:kdv}.

For any $a \in [6, 18]$ and $h \in [0, 3]$, consider the function
\begin{align}
g_{a, h}(x) = \frac{a^2}{2}\text{sech}\left(\frac{a}{2}(x - 1))\right)^2 + \frac{6^2}{2} \text{ sech} \left(\frac{6^2}{2}(x - 2 - h))\right)^2.
\label{eq:fkdv}
\end{align}
Our sampling measure $\gamma_{kdv}$ is defined on \begin{equation}
\cM := \{g_{a, h} : a \in [6, 18], h \in [0, 3]\}
\label{kdvinitialnarrow}
\end{equation}
by sampling $a$ and $h$ uniformly and then constructing $g_{a, h}$ restricted to $x \in [0, 6]$. 

Figure \ref{fig:kdvsvd} plots the singular values in descending order of the input data sampled from $\gamma_{kdv}$. The slow decay of the singular values indicates that a non-linear encoder would be a better choice than a linear encoder for this problem. Figure \ref{fig:kdvcomponents} further shows the non-linearity of the data, when we project the data into the top $2$ principal components. Figure \ref{fig:kdvcomponents3} and \ref{fig:kdvpca6} show the  projections of this data set to the 1st-6th principal components. This data set is nonlinearly parametrized by 2 intrinsic parameters, but the top 2 principal components are not sufficient to represent the data, as shown in Figure \ref{fig:kdvcomponents}. Figure \ref{fig:kdvcomponents3} shows that the top 3 linear principal components yield a better representation of the data, since the coloring by $a$ and $h$ is well recognized.

Figure \ref{fig:kdv:project} shows latent parameters of the training data given by the Auto-Encoder with reduced dimension $2$. The intrinsic parameters $a$ and $h$ are well represented in the latent space. 

\begin{figure}[h]
\centering
\subfigure[Singular values]{
    \includegraphics[height=3.5cm]{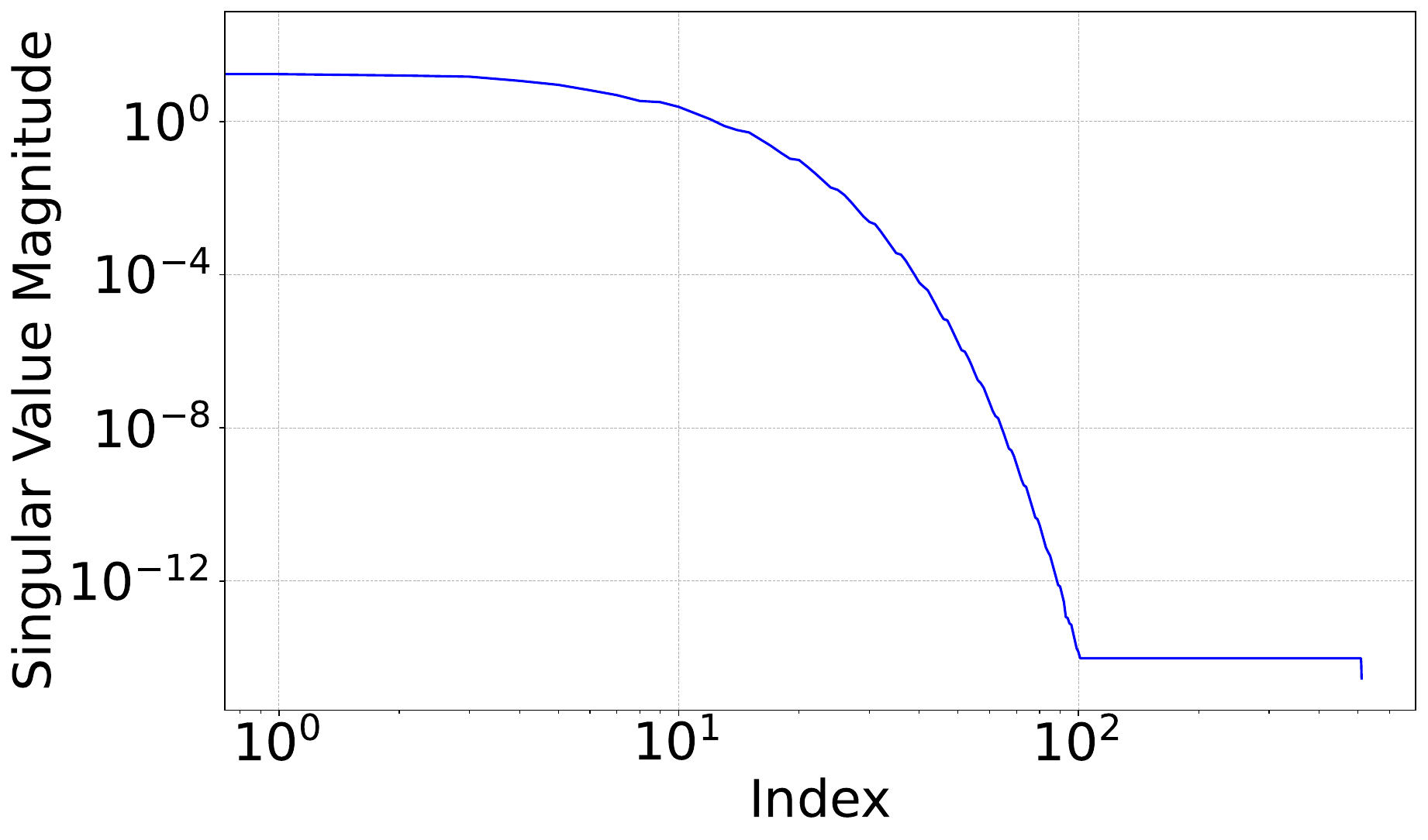}
    \label{fig:kdvsvd}
}
\subfigure[Projection to 2 principal components]{
    \includegraphics[height=4cm]{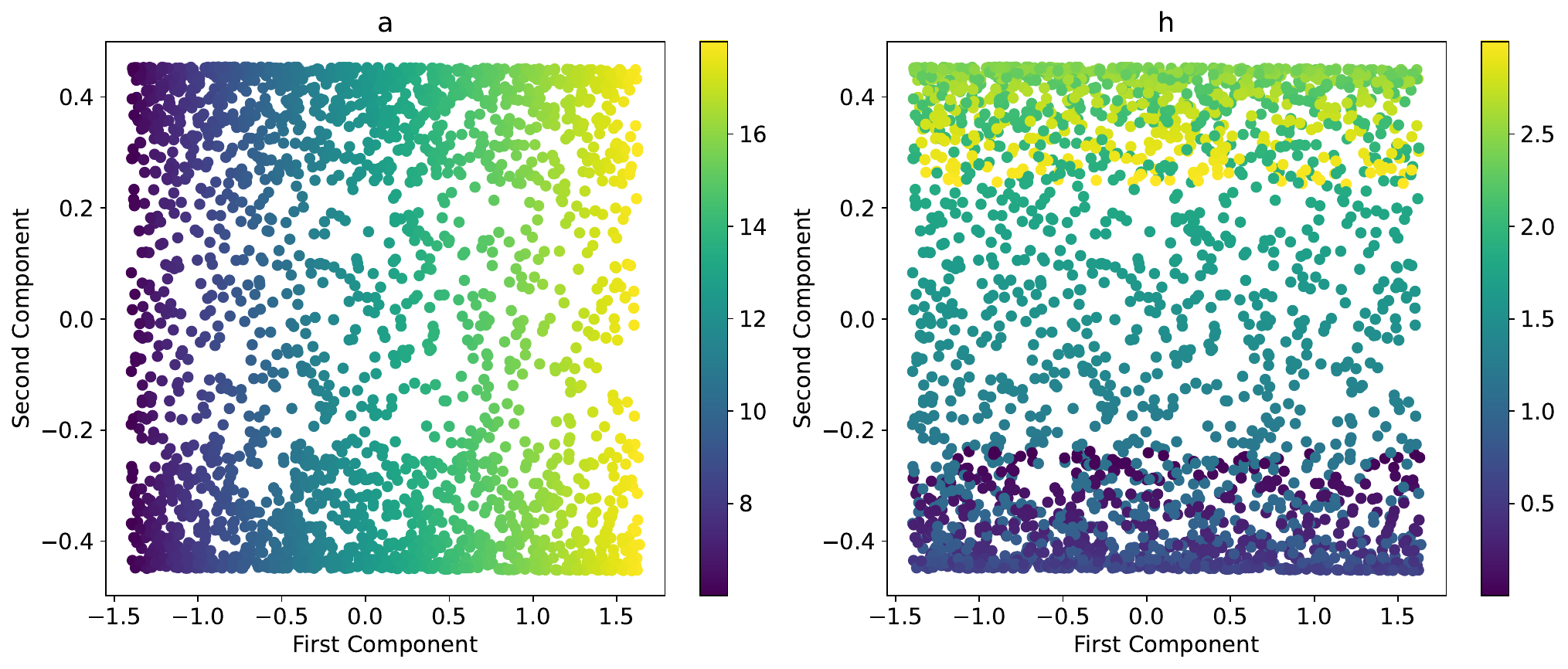}
    \label{fig:kdvcomponents}
}
\subfigure[Projection to 3 principal components]{
    \includegraphics[height=3.5cm]{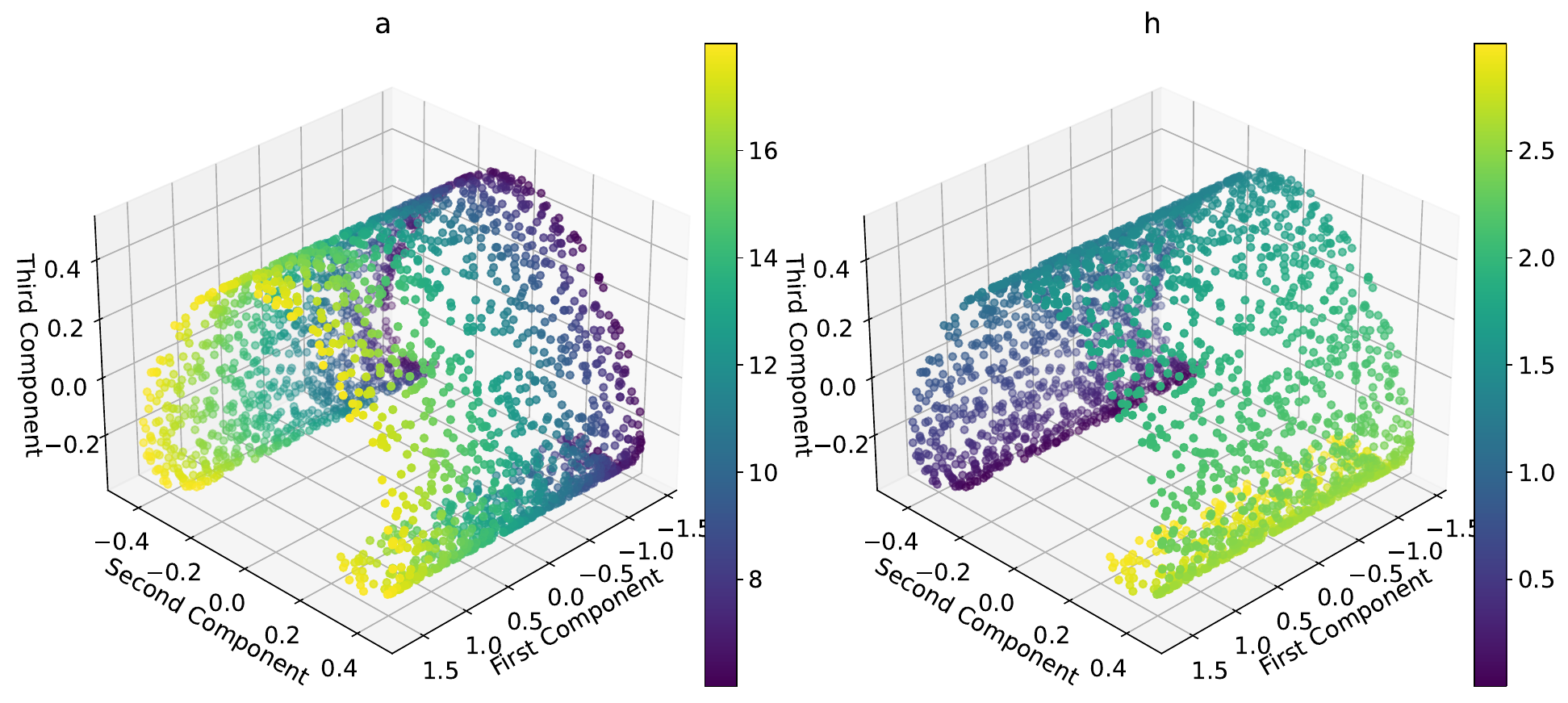}
    \label{fig:kdvcomponents3}
}
\subfigure[Projection to 4th-6th principal components]{
    \includegraphics[height=3.5cm]{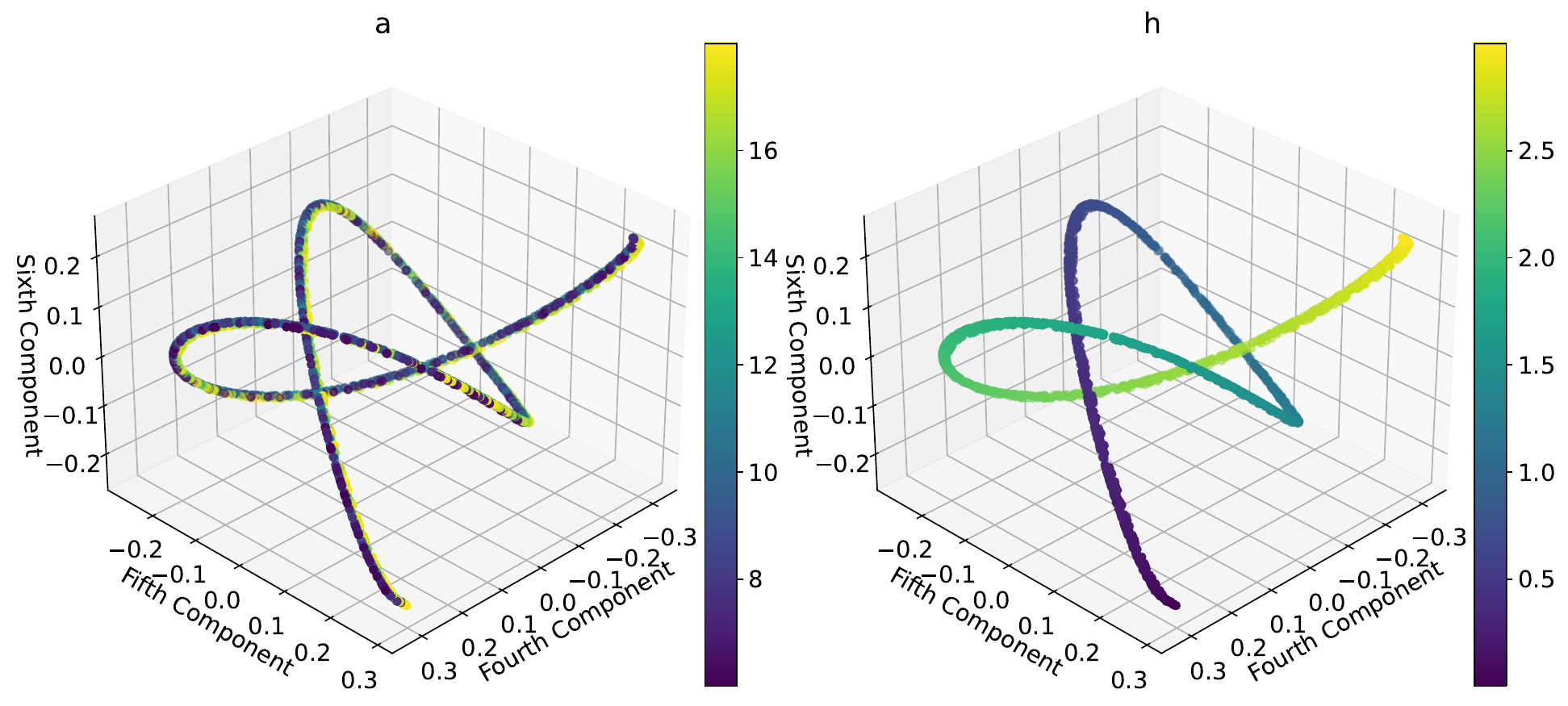}
    \label{fig:kdvpca6}
}
\caption{Nonlinearity of the initial conditions for the Burgers' equation. (a) shows the singular values of the data matrix. (b) shows the projection of data to the top 2 principal components and (c) shows the projection to top 3 principal components. (d) shows the projection to the 4th-6th principal components. In (b), (c) and (d), the projections are colored according to the $a$ parameter in the left subplots and according to the $h$ parameter in the right subplots.}
\label{fig:kdvnonlinear}
\end{figure}

\begin{figure}[h]
    \centering
    \includegraphics[height=4cm]{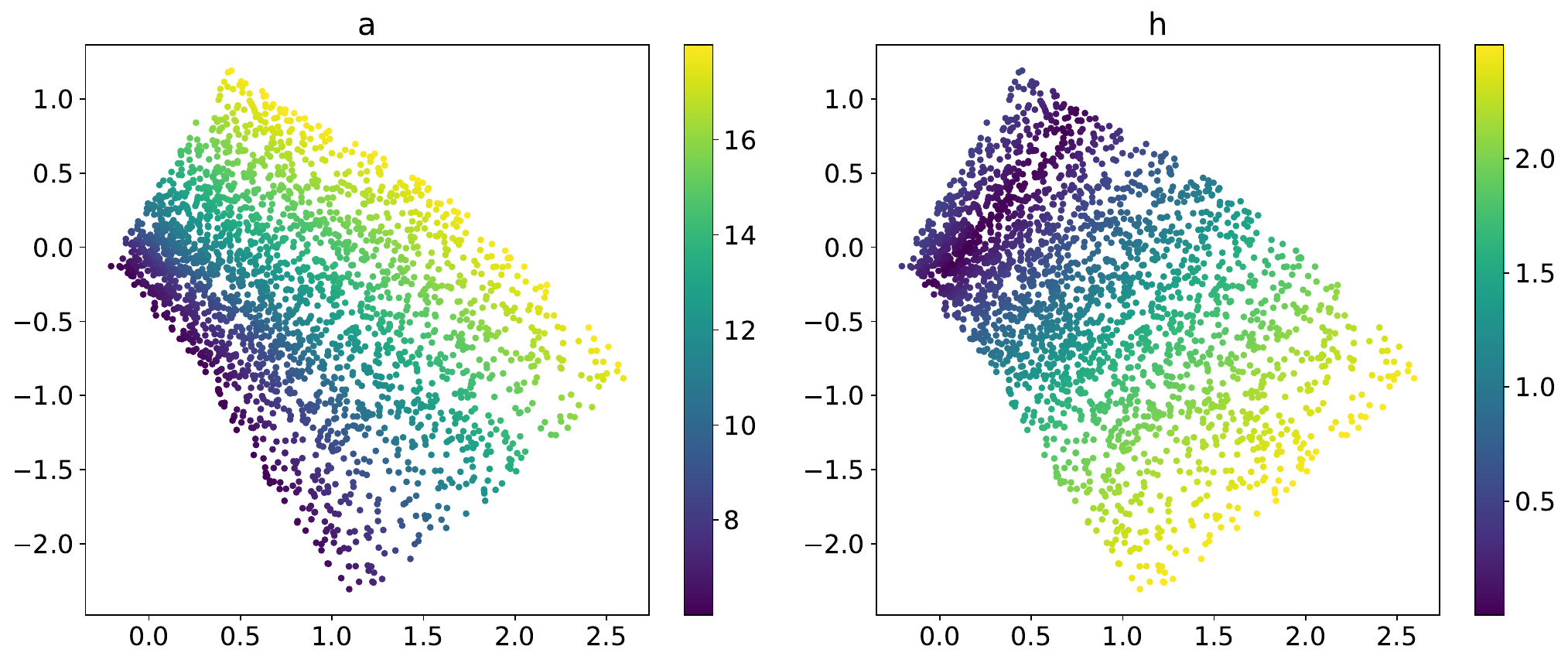}
    \caption{
    Latent features of the initial conditions $g_{a,h}$ (KdV) in \eqref{eq:fkdv} given by the Auto-Encoder. The left plot is colored according to $a$ and the right plot is colored according to $h$.
    }
    \label{fig:kdv:project}
\end{figure}

Figure \ref{fig:kdv} contains various plots comparing AENet, PCANet, and DeepONet for the KdV equation, analogous to the role Figure \ref{fig:transport} plays for the transport equation. Figure \ref{fig:kdv:examples} compares AENet with reduced dimension 2 to PCANet with reduced dimension 2 for domain and 40 for range.
Figure \ref{fig:kdvexperiment:generr} and \ref{fig:kdvexperiment:noise} display the absolute squared test error with respect to $n$ (in log-log plot) and noise variance respectively. 
Figures \ref{fig:kdvexperiment:generr} and \ref{fig:kdvexperiment:noise} are  generated using AENet with reduced dimension 2.
We further compare the relative test error (as a percent) of AENet, PCANet, and DeepONet in Table \ref{tab:table}(c).

\begin{figure}[h]
\centering
\subfigure[One draw from $\gamma_{kdv}$]{
    \includegraphics[width=5cm]{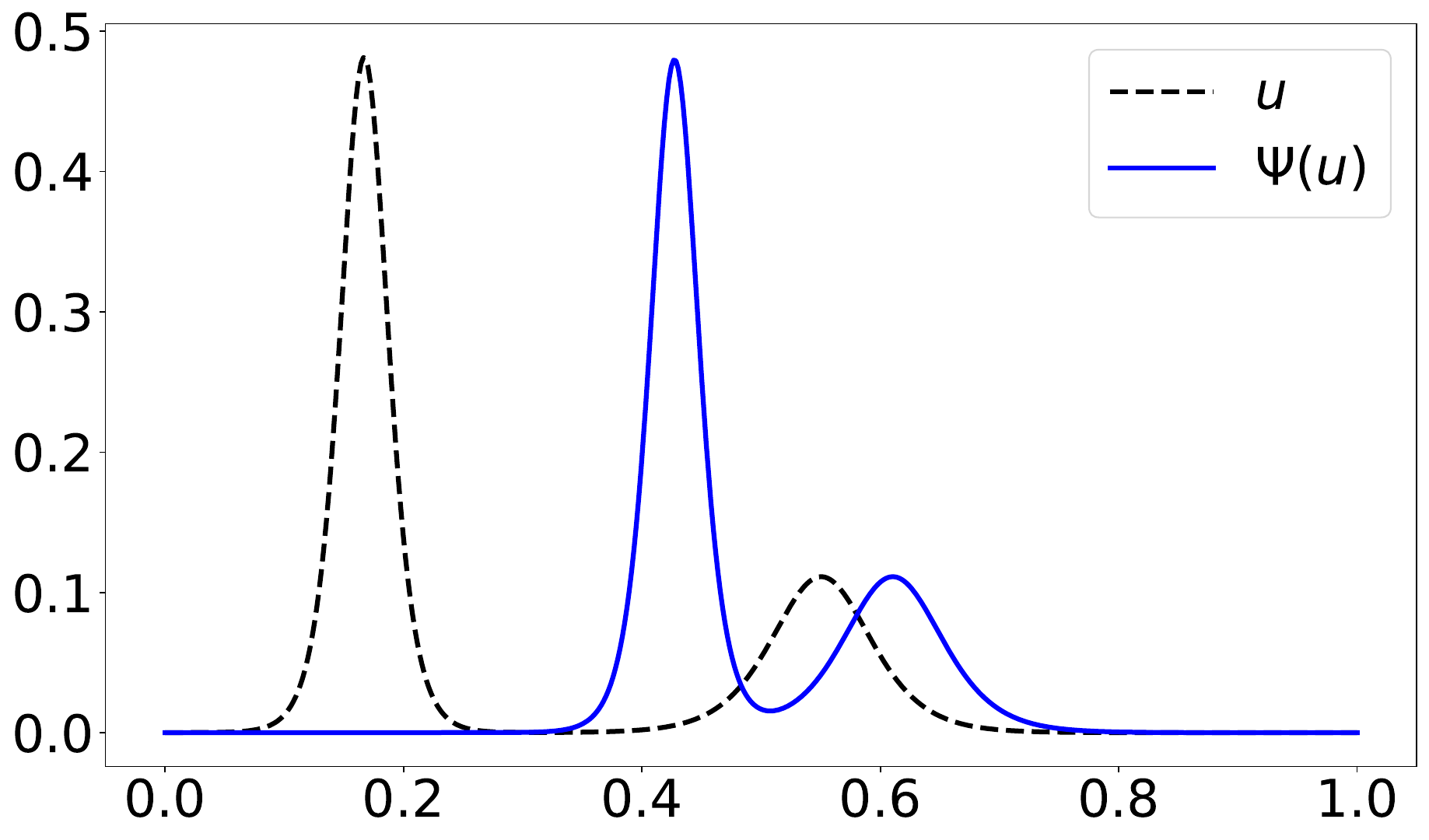}
    \label{fig:kdv:samples}
}
\subfigure[Relative projection error]{
    \includegraphics[width=5cm]{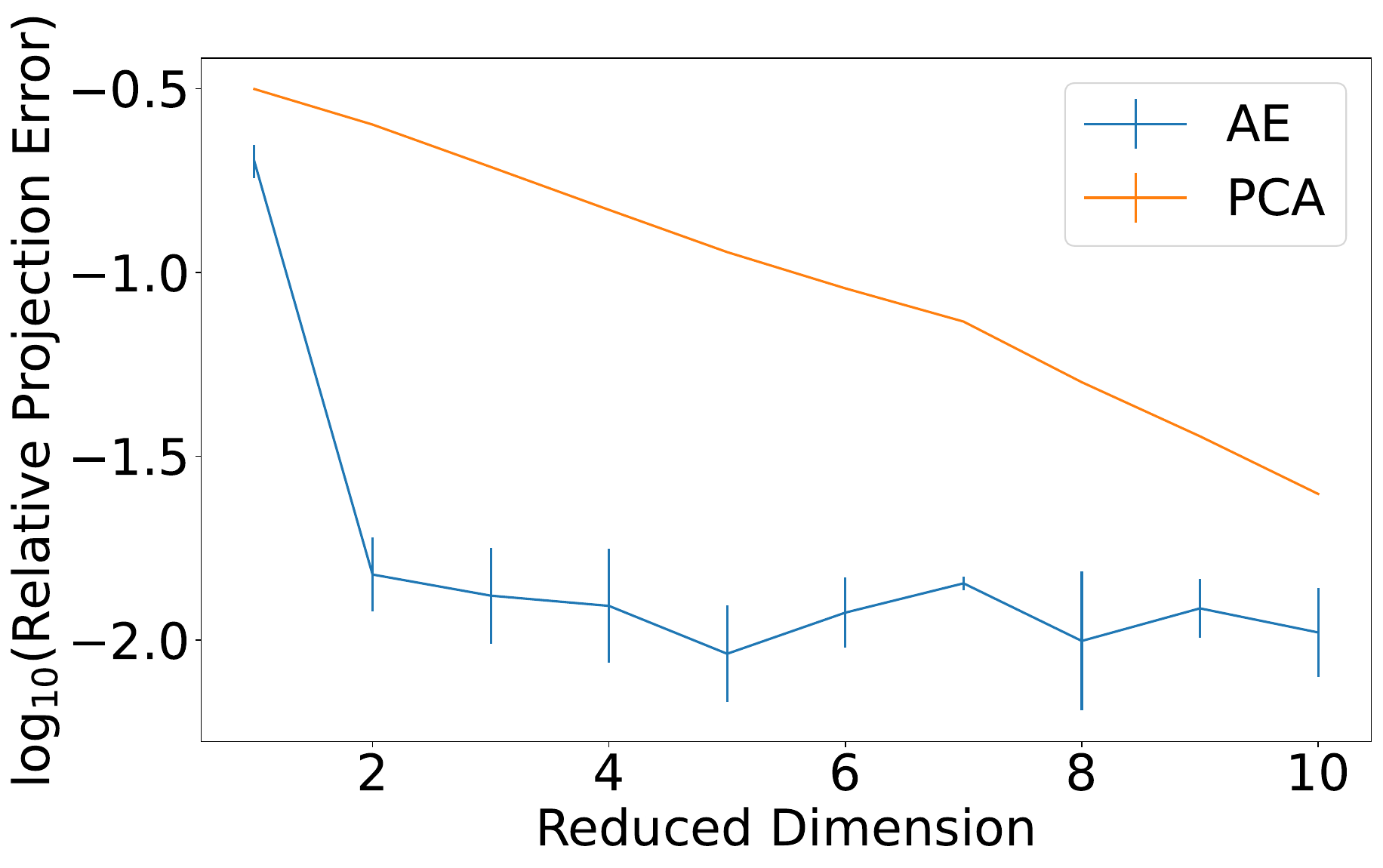}
    \label{fig:kdv:projerr}
}
\subfigure[Relative testing error]{
    \includegraphics[width=5cm]{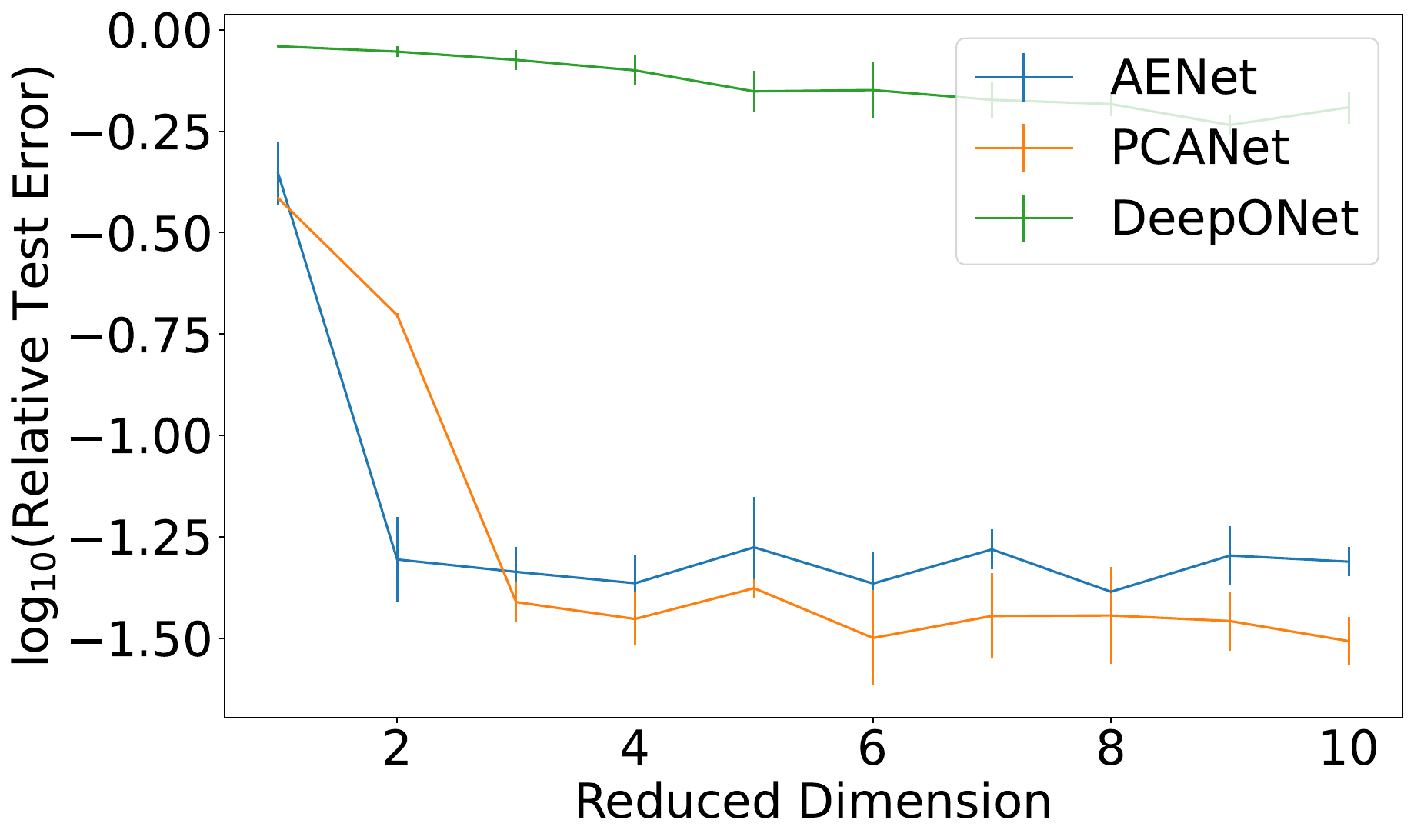}
    \label{fig:kdv:testerr}
}
\vspace*{-0.3cm}\\

\subfigure[Predicted solutions]{
    \includegraphics[width=5cm]{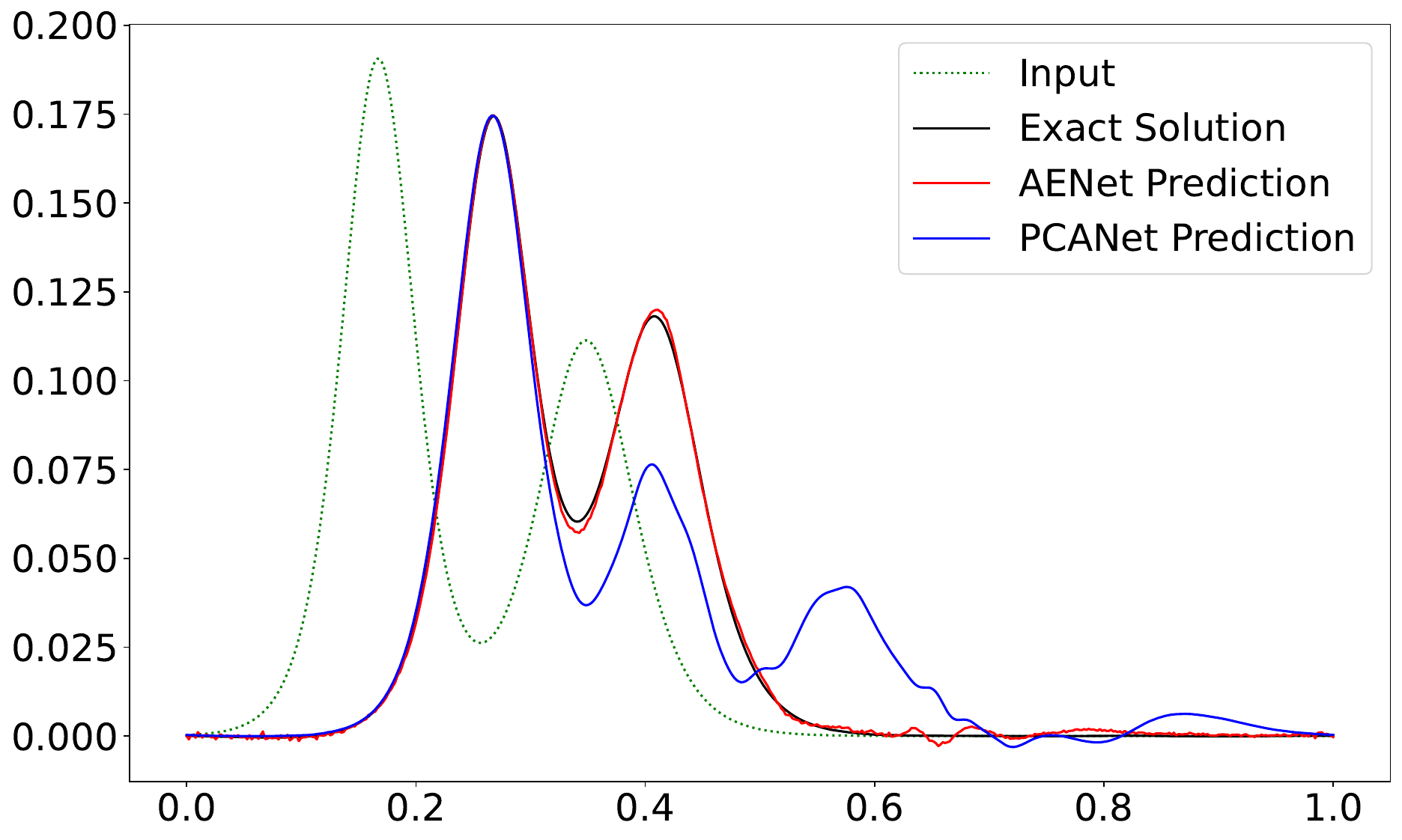}
    \label{fig:kdv:examples}
}
\subfigure[Squared test error versus $n$]{
    \includegraphics[width=5cm]{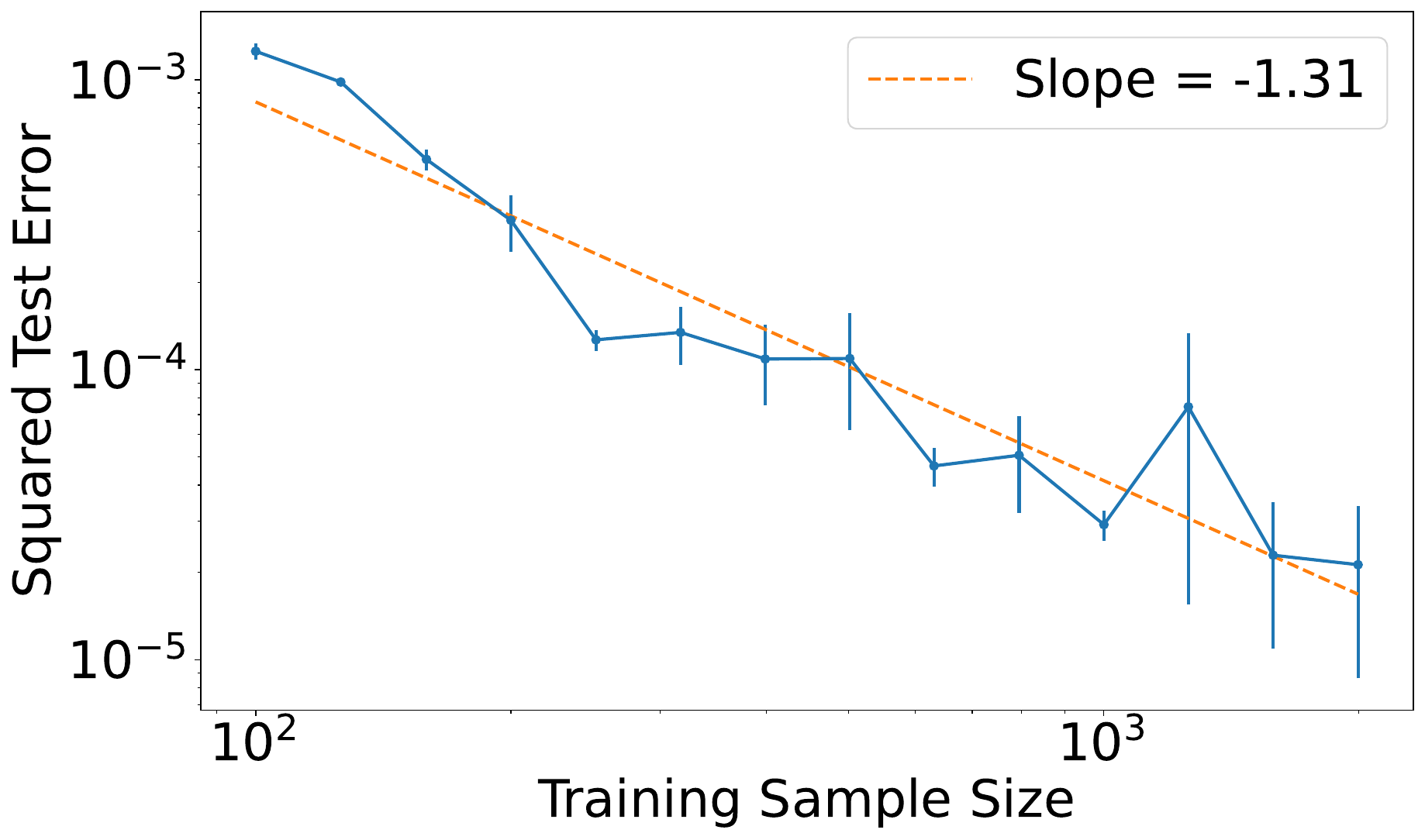}
    \label{fig:kdvexperiment:generr}
}
\subfigure[Robustness to noise]{
    \includegraphics[width=5cm]{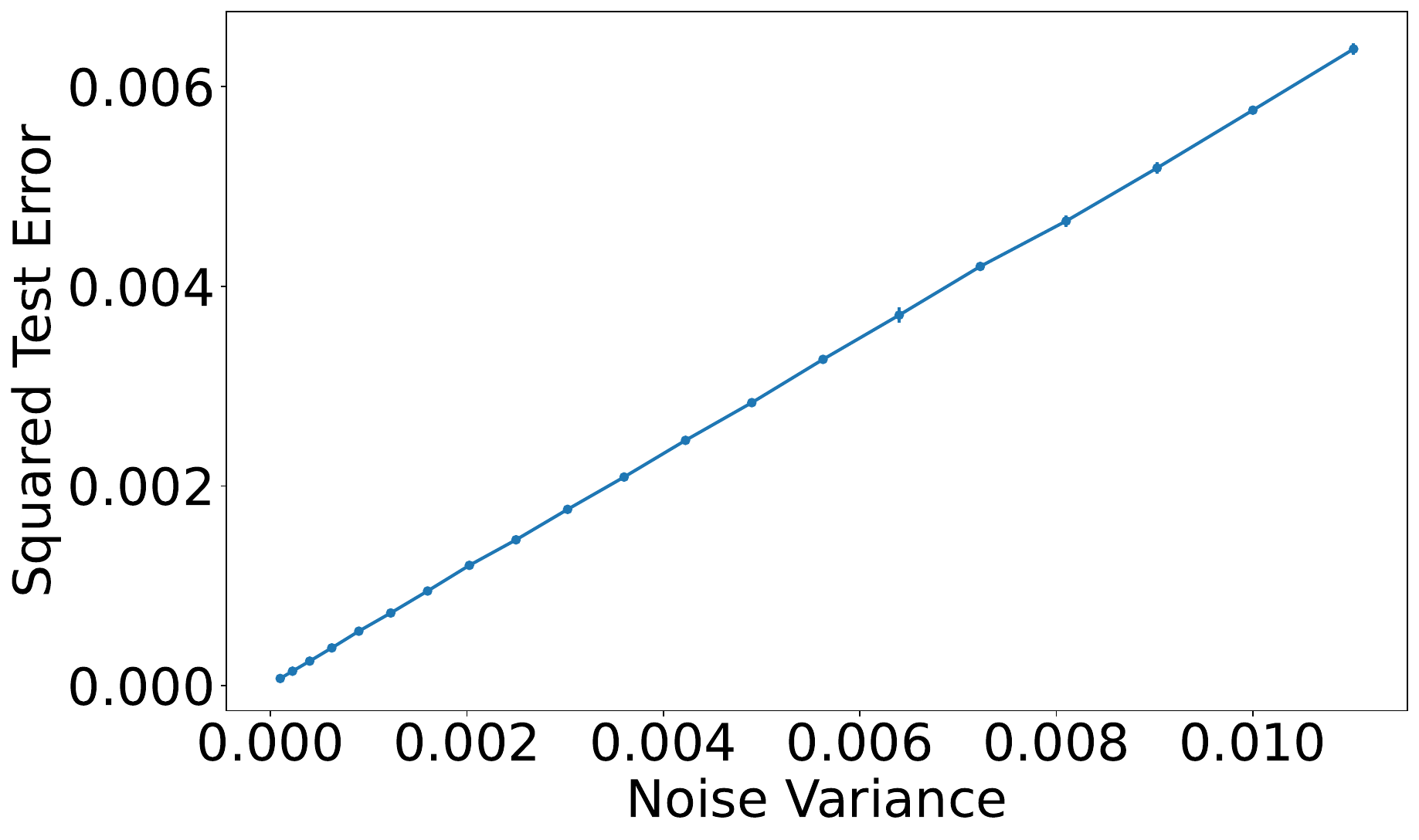}
    \label{fig:kdvexperiment:noise}
}
\caption{Results of the KdV equation.
}
\label{fig:kdv}
\end{figure}

\subsection{Comparison of relative test error}
\label{app:experiments:table}

We compare AENet, PCANet, and DeepONet with various reduced dimensions for the input on the three PDEs mentioned above (using a reduced dimension of 40 for the output of PCANet). We repeat the experiments 3 times and report the mean relative test error along with the standard deviation among the runs in Table \ref{tab:table}(a)-(c). DeepONet becomes successful when the reduced dimension is $100$ or more.

\begin{table}[h!]
    \centering
    \caption{Comparison table of the relative test error \textbf{(as a percent)} when the reduced dimension (in the first row) for the input  varies. Each column corresponds to a reduced dimension for the input. The first number is the mean, and in parenthesis is the standard deviation among 3 trials. \vhalf}
    \label{tab:table}

\subfigure[Transport equation]{
\setlength{\tabcolsep}{0.3em}
\centering
\scriptsize
\begin{tabular}{|l|l|l|l|l|l|l|l|l|l|}
\hline
\textbf{Method} & \textbf{1} & \textbf{2} & \textbf{4} & \textbf{6} & \textbf{8} & \textbf{10} & \textbf{20} & \textbf{40} & \textbf{100} \\\hline
\textbf{AENet} & 12.1 (0.2) & 1.1 (0.5) & 1.3 (0.7) & 1.0 (0.2) & 0.9 (0.1) & 1.5 (0.4) & 0.8 (0.2) & 0.9 (0.2) & 1.0 (0.1) \\\hline
\textbf{PCANet} & 38.6 (0.0) & 5.1 (0.2) & 0.9 (0.3) & 1.0 (0.2) & 1.1 (0.1) & 0.9 (0.0) & 1.1 (0.2) & 0.8 (0.3) & 0.7 (0.0) \\\hline
\textbf{DeepONet} & 96.4 (0.0) & 96.4 (0.0) & 96.4 (0.0) & 96.4 (0.0) & 96.4 (0.0) & 66.0 (6.2) & 33.7 (26.0) & 18.5 (8.3) & 5.0 (1.6) \\\hline
\end{tabular}
}

\subfigure[Burgers' equation]{
\setlength{\tabcolsep}{0.3em}
\centering
\scriptsize
\begin{tabular}{|l|l|l|l|l|l|l|l|l|l|}
\hline
\textbf{Method}   & \textbf{1}  & \textbf{2} & \textbf{4} & \textbf{6}  & \textbf{8}  & \textbf{10} & \textbf{20} & \textbf{40} & \textbf{100} \\ \hline
\textbf{AENet}    & 28.6 (4.2)  & 4.2 (0.6)  & 3.7 (0.8)  & 3.6 (0.2)   & 3.8 (0.4)   & 3.3 (0.4)   & 3.1 (0.5)   & 3.1 (0.2)   & 3.2 (0.2)    \\ \hline
\textbf{PCANet}   & 68.0 (0.1)  & 14.7 (0.2) & 6.2 (0.2)  & 6.3 (0.3)   & 6.3 (0.1)   & 6.1 (0.1)   & 6.1 (0.1)   & 6.5 (0.2)   & 6.6 (0.3)    \\ \hline
\textbf{DeepONet} & 100.0 (0.0) & 97.5 (4.3) & 97.5 (4.3) & 72.8 (16.7) & 70.1 (12.5) & 55.2 (21.0) & 31.2 (6.2)  & 18.2 (2.6)  & 10.7 (0.9)   \\ \hline
\end{tabular}
}
\subfigure[KdV equation]{
\setlength{\tabcolsep}{0.3em}
\centering
\scriptsize
\begin{tabular}{|l|l|l|l|l|l|l|l|l|l|}
\hline
\textbf{Method} & \textbf{1} & \textbf{2} & \textbf{4} & \textbf{6} & \textbf{8} & \textbf{10} & \textbf{20} & \textbf{40} & \textbf{100} \\\hline
\textbf{AENet} & 51.9 (0.6) & 5.7 (0.9) & 4.7 (0.5) & 5.2 (1.2) & 3.9 (0.1) & 4.4 (0.7) & 4.7 (1.0) & 4.8 (0.0) & 5.0 (0.3) \\\hline
\textbf{PCANet} & 38.7 (0.0) & 19.9 (0.4) & 4.0 (0.5) & 3.9 (0.4) & 4.8 (1.4) & 3.4 (0.2) & 3.7 (0.2) & 4.2 (0.7) & 3.8 (0.2) \\\hline
\textbf{DeepONet} & 91.1 (0.1) & 91.1 (0.1) & 68.9 (3.1) & 68.0 (2.2) & 62.6 (3.4) & 56.0 (2.9) & 42.5 (5.2) & 20.9 (0.7) & 9.1 (0.8) \\\hline
\end{tabular}
}
\end{table}

\begin{figure}[h]
    \centering
    \subfigure[Transport]{\includegraphics[width=5cm,height=3.2cm]{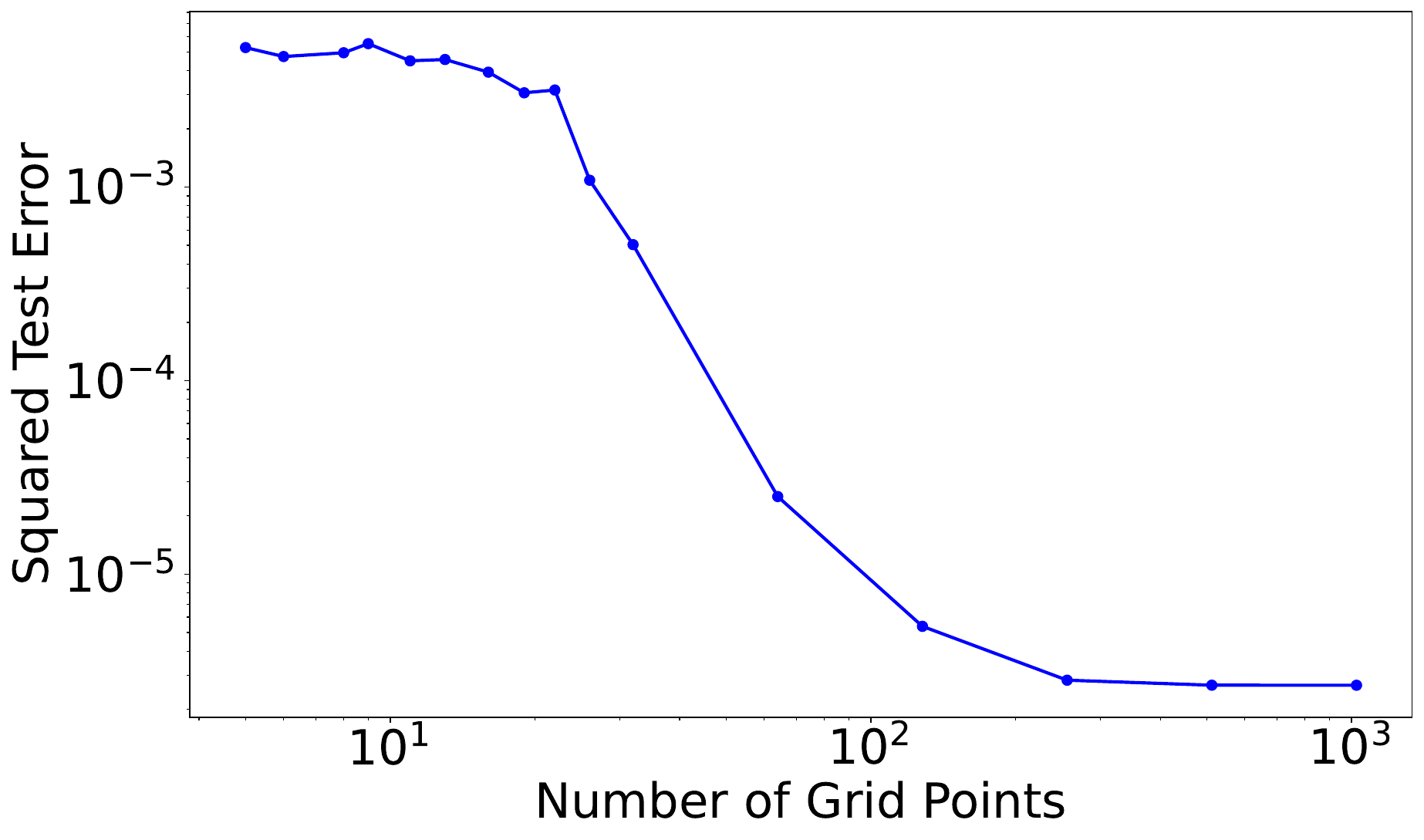}
    \label{fig:transport:interpolation}
    }
    \subfigure[Burgers']{\includegraphics[width=5cm,height=3.2cm]{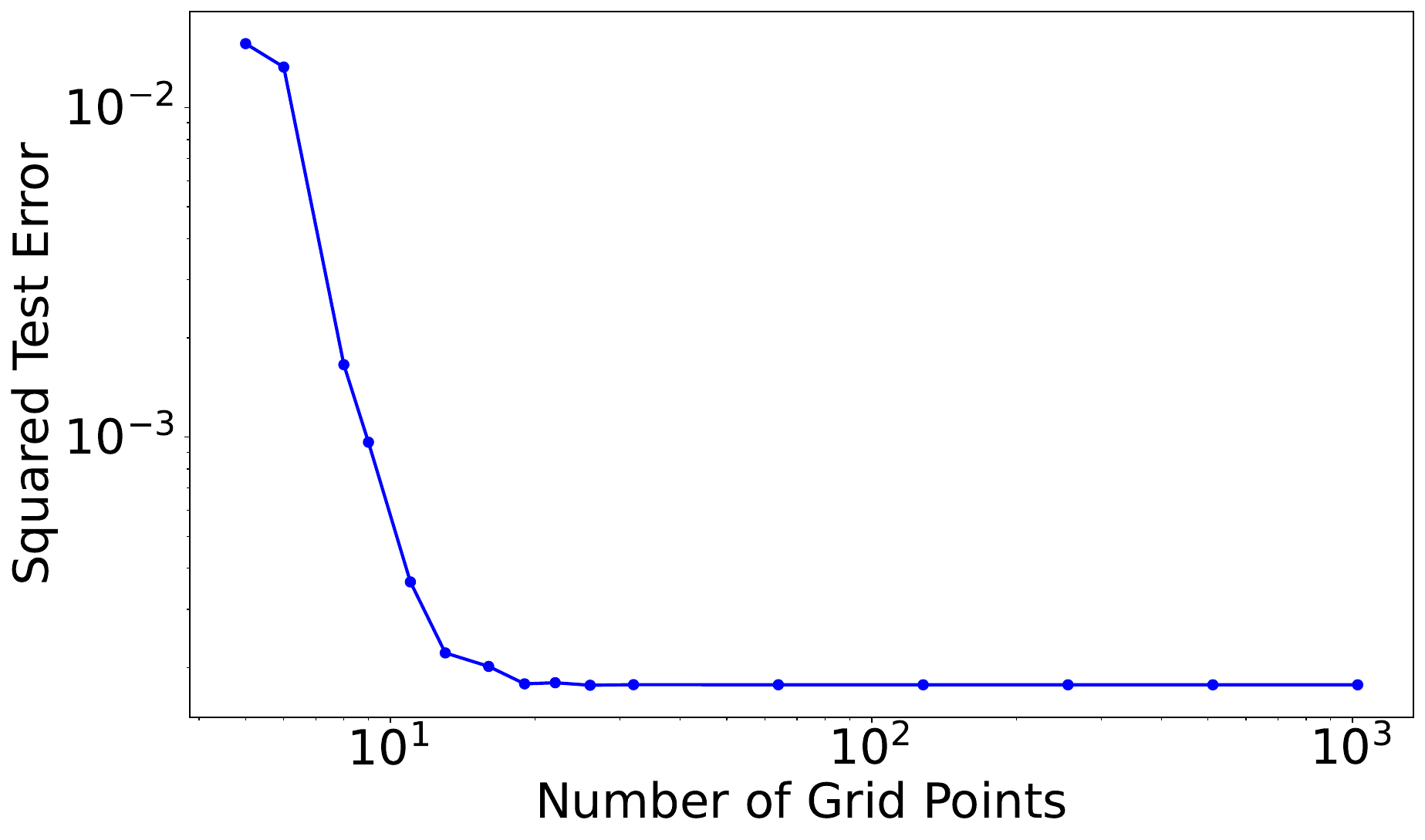}
    \label{fig:burgers:interpolation}
    }
    \subfigure[KdV]{\includegraphics[width=5cm,height=3.2cm]{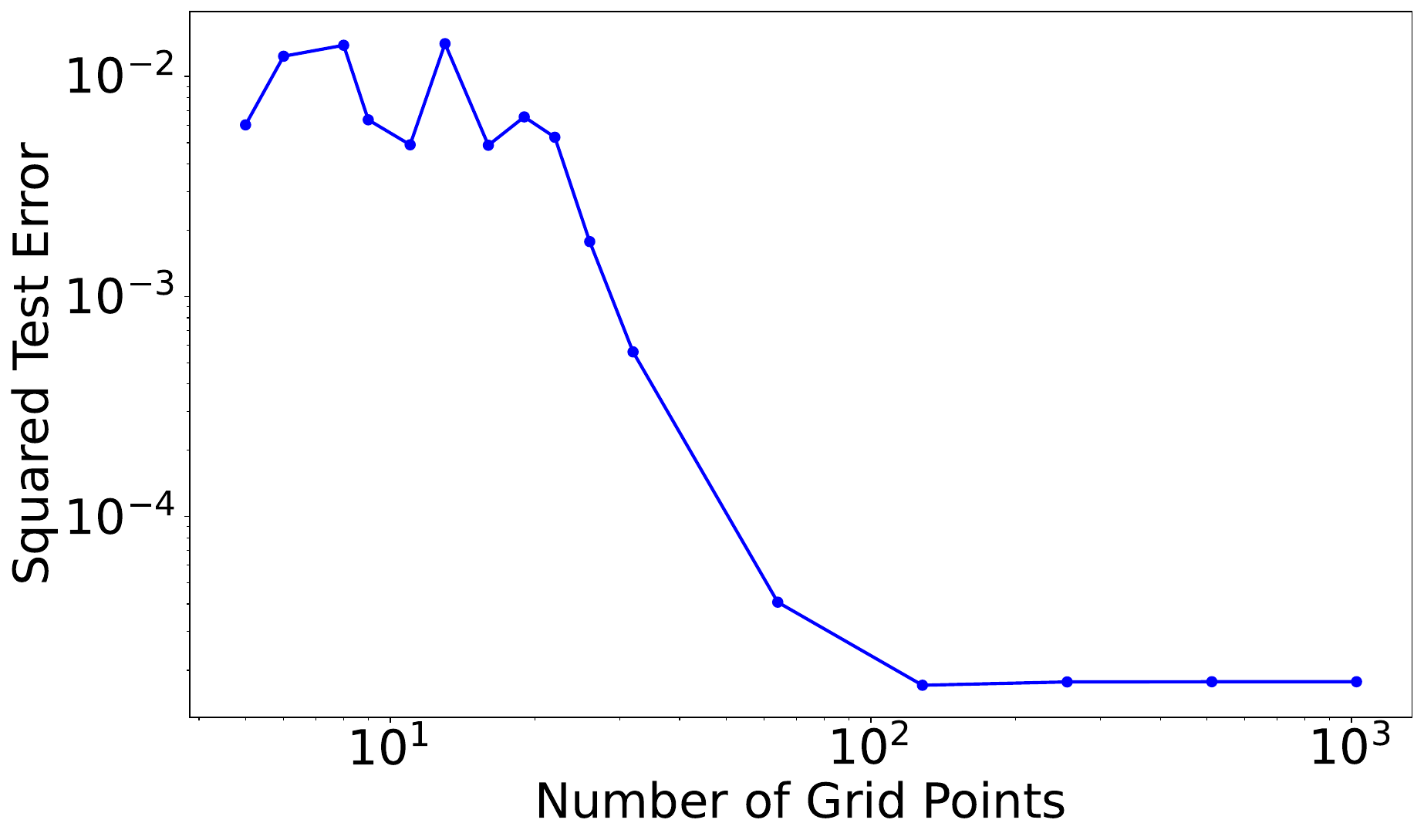}
    \label{fig:kdv:interpolation}
    }
    \caption{Squared test error of AENet versus  grid size of the test data for the transport equation in Subsection \ref{sec:transport}, the Burgers' equation in Subsection \ref{sec:burgers} and the KdV equation in Subsection \ref{sec:kdv}. To evaluate the operator for a test sample, we interpolate the test sample to the same grid as the training data. }
    \label{fig:interpolation}
\end{figure}

\subsection{Test data on a different grid as the training data}

Finally, we show the robustness of our method when the test data are sampled on a different grid from that of the training data, as shown in Remark \ref{remark.approx.interp} and \ref{remark.gene.interp}. Our training data are sampled on a uniform grid with $256$ grid points. When the test data are sampled on a different grid, we interpolate the test data to the same grid as the training data and then evaluate the operator. Figure \ref{fig:interpolation} shows the squared test error for operator prediction  when the test data are sampled on a different grid size for the  transport equation in Subsection \ref{sec:transport}, the Burgers' equation in Subsection \ref{sec:burgers} and the KdV equation in Subsection \ref{sec:kdv}. We used cubic interpolation for all equations. The operator prediction on test samples by this simple interpolation technique is almost discretization invariant as long as the test samples have a sufficient resolution. For the transport equation, the squared test error is almost the same when the grid size is more than $10^2$. For the Burgers' equation and the KdV equation, the squared test error is almost the same when the grid size is more than $10^1$ and $10^2$, respectively.

\commentout{
\begin{figure}
    \centering
    \includegraphics[height=3.8cm]{figures/burgers/burgers_interpolation.pdf}
    \caption{Squared generalization error of AENet versus grid size (via cubic interpolation) for Burgers}
\end{figure}
\begin{figure}
    \centering
    \includegraphics[height=3.8cm]{figures/kdv/kdv_interpolation.pdf}
    \caption{Squared generalization error of AENet versus grid size (via cubic interpolation) for KdV}
\end{figure}
}

\section{Proof of main results}
\label{sec:proof}

In this section, we present the proof of our main results: Theorem \ref{thm:approximation}, Corollary \ref{coro:approximation} and Theorem \ref{thm:generalization}. Proofs of lemmas are given in Appendix \ref{applemma}.

\subsection{Proof of the approximation theory in Theorem \ref{thm:approximation}}
\label{proof:thm:approximation}
\begin{proof}[Proof of Theorem \ref{thm:approximation}]
We will prove the approximation theory for the Auto-Encoder of the input $\widetilde{u}$ and the transformation $\Gamma$ in order.

\noindent $\bullet$ {\bf Approximation theory of Auto-Encoder for the input $\widetilde{u}$:} 
    We first prove an approximation theory for the Auto-Encoder of  $\widetilde{u} =\cSX(u)$. We will show that $\widetilde{\fb}$ and $\widetilde{\gb}$ can be well approximated  by neural networks.
    Note that $\widetilde{\fb}$ is only defined on $\widetilde{\cM}$. The following lemma shows that it can be extended to the cubical domain $[-R_{\cX},R_{\cX}]^{D_1}$ while keeping the same Lipschitz constant.
\begin{lemma}[Kirszbraun theorem \citep{kirszbraun1934zusammenziehende}]
\label{kem:extension}
    If $E\subset \RR^D$
, then any Lipschitz function $\fb: E\rightarrow \RR^d$
 can be extended to the whole $\RR^D$
 keeping the Lipschitz constant of the original function.
\end{lemma}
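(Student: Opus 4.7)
The plan is to follow the classical Valentine-style proof via Helly's theorem in $\RR^d$. The proof proceeds in three reduction steps.

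First, I would apply Zorn's lemma to the partially ordered set of Lipschitz extensions of $\fb$ (ordered by domain inclusion, with the Lipschitz constant fixed at $L := L_{\fb}$). A maximal extension $\fb^{*}$ with domain $E^{*}$ exists. If $E^{*} \subsetneq \RR^D$, picking any $x_0 \in \RR^D \setminus E^{*}$ and extending $\fb^{*}$ to $E^{*} \cup \{x_0\}$ while preserving $L$ would contradict maximality. Thus it suffices to prove the one-point extension lemma: for any $x_0 \in \RR^D \setminus E$, there exists $y_0 \in \RR^d$ with $\|y_0 - \fb(x)\| \le L \|x_0 - x\|$ for every $x \in E$.

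Second, the existence of such $y_0$ is equivalent to the non-emptiness of $\bigcap_{x \in E} \overline{B}(\fb(x), L\|x_0 - x\|) \subset \RR^d$. Each ball is compact and convex, so by Helly's theorem in $\RR^d$ (infinite version, valid for a family of compact convex sets in a fixed compact region), it is enough to show that every subcollection of at most $d+1$ of these balls has non-empty intersection.

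Third and main: given $x_1,\ldots,x_k \in E$ with $k \le d+1$, I would prove $\bigcap_{i=1}^k \overline{B}(\fb(x_i), L\|x_0 - x_i\|) \neq \emptyset$. Consider the minimax value $r^{*} := \inf_{y \in \RR^d} \max_i \|y - \fb(x_i)\|^2 / \|x_0 - x_i\|^2$, attained at some $y^{*}$ by coercivity and continuity. Assume for contradiction $r^{*} > L^2$. Let $I$ denote the active indices at $y^{*}$. The first-order optimality condition supplies weights $\lambda_i \ge 0$ with $\sum_{i \in I} \lambda_i = 1$ such that $\sum_{i \in I} \lambda_i (y^{*} - \fb(x_i)) / \|x_0 - x_i\|^2 = 0$, with each active term at the common level $r^{*}$. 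Expanding $\sum_{i,j \in I} \lambda_i \lambda_j \|\fb(x_i) - \fb(x_j)\|^2$ via polarization and comparing against the analogous sum $\sum_{i,j} \lambda_i \lambda_j \|x_i - x_j\|^2$ (bounded above via the Lipschitz hypothesis $\|\fb(x_i) - \fb(x_j)\| \le L \|x_i - x_j\|$) forces $r^{*} \le L^2$, a contradiction.

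The main obstacle is precisely this third step: Kirszbraun's theorem genuinely exploits the Euclidean (inner-product) structure of both domain and codomain, and can fail for general Banach codomains. Making the minimax argument clean requires careful bookkeeping of the active set, the Lagrange multipliers from the first-order condition, and the algebraic identity that converts the bound on $\|\fb(x_i)-\fb(x_j)\|$ into a bound on $r^{*}$. Once this geometric lemma is established, the Zorn and Helly reductions are routine, and the statement of the lemma follows.
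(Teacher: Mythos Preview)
The paper does not prove this lemma at all; it is stated as a classical result with a citation to Kirszbraun's original 1934 paper and then used as a black box in the proof of Theorem~\ref{thm:approximation}. So there is no ``paper's own proof'' to compare against.

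Your outline is the standard Valentine--Helly route and is correct in structure. The Zorn reduction to a one-point extension and the Helly reduction to $d+1$ balls are routine. Your third step is the right idea, though the bookkeeping you flag is real: with the weights $\lambda_i$ normalized to sum to $1$, the stationarity condition actually reads $\sum_i \lambda_i (y^*-\fb(x_i))/\|x_0-x_i\|^2=0$, so the barycentric identity you want applies with the \emph{rescaled} weights $w_i=\lambda_i/\|x_0-x_i\|^2$, not with the $\lambda_i$ themselves. Once you use $w_i$, the variance identity $\sum_i w_i\|y^*-\fb(x_i)\|^2=\tfrac{1}{2W}\sum_{i,j}w_iw_j\|\fb(x_i)-\fb(x_j)\|^2$ together with the Lipschitz bound and the inequality $\sum_i w_i\|\bar x-x_i\|^2\le\sum_i w_i\|x_0-x_i\|^2=1$ gives $r^*\le L^2$ directly, with no contradiction argument needed. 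Since the paper simply invokes the result, your write-up would in fact supply more than the paper does.
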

In the rest of this paper, without other specification, we use $\widetilde{\fb}$ to denote the extended function.

    For the network construction to approximate $\widetilde{\fb}$, we will use the following neural network approximation result on a set in $\RR^{D}$ with Minkowski dimension $d<D$, which is a variant of \cite[Theorem 5]{nakada2020adaptive} (see a proof in Appendix \ref{proof:Ex:0}):
    \begin{lemma}
    \label{lem:Ex:0}
        Let $D,d$ be positive integers with $d<D$, $M>0$ and $\Xi\subset [0,1]^D$. Suppose $d\geq d_M \Xi$. For any $\varepsilon>0$, consider a network class $\cF_{\rm NN}(D,1,L,p,K,\kappa,M)$ with
        \begin{align*}
    L=O(\log \varepsilon^{-1}),\ p=O(\varepsilon^{-d}),  \ K=O(\varepsilon^{-d}), \ \kappa=O(\varepsilon^{-3-4(1+\lceil\log_2 M \rceil}).
        \end{align*}
        Then for $\varepsilon\in (0,1/4)$ and any Lipschitz function $f^*: [0,1]^{D} \rightarrow [-M,M]$ with function value and Lipschitz constant bounded by $M$, there exists a network $f_{\rm NN} \in \cF_{\rm NN}(D,1,L,p,K,\kappa,M)$ satisfying
        \begin{align*}
            \|f_{\rm NN}-f^*\|_{L^\infty(\Xi)} \leq \varepsilon.
        \end{align*}
        The constant hidden in $O(\cdot)$ depends on $d,M$ and is only polynomial in $D$. 
    \end{lemma}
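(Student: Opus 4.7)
The plan is to exploit the Minkowski dimension hypothesis by covering $\Xi$ with a small number of $L^\infty$-cubes, then build a ReLU network as a sum of locally supported bump functions weighted by the values of $f^*$ at the cube centers. Since $d_M\Xi\le d$, the cover size is $O(\varepsilon^{-d})$ rather than $O(\varepsilon^{-D})$, which is the only place the low effective dimension enters; the rest is standard ReLU-circuit engineering.

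First I would select, from the Minkowski dimension definition, centers $\{x_i\}_{i=1}^N\subset [0,1]^D$ with $N\le C_d\,\varepsilon^{-d}$ such that $\Xi\subset\bigcup_i B_\infty^D(x_i,\varepsilon)$. For each $i$ I would build a coordinate-wise trapezoid bump $\chi_i:[0,1]^D\to[0,1]$ that equals $1$ on $B_\infty^D(x_i,\varepsilon)$ and vanishes off $B_\infty^D(x_i,2\varepsilon)$. The one-dimensional trapezoid is a combination of four $\ReLU$ units with slopes $\pm\varepsilon^{-1}$, and the $D$-ary coordinate minimum is implemented via a binary tree of pairwise minima using the identity $\min(a,b)=\tfrac{1}{2}(a+b)-\tfrac12(\ReLU(a-b)+\ReLU(b-a))$, contributing depth $O(\log D)$ and $O(D)$ nonzero weights per bump. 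With a bounded-overlap property of the cover (at most $3^D$ bumps active at any point, absorbed into the constant since $D$ is treated as fixed in the hidden constants), one can arrange $\{\chi_i\}$ into a partition of unity on a slight neighborhood of $\Xi$, so that the candidate is simply
$$f_{\rm NN}(x)=\sum_{i=1}^N f^*(x_i)\,\chi_i(x).$$
By Lipschitz continuity of $f^*$, for $x\in\Xi$ we have $|f_{\rm NN}(x)-f^*(x)|\le L\cdot 2\varepsilon\le 2M\varepsilon$, so rescaling $\varepsilon$ by $1/(2M)$ buys the claimed $\varepsilon$ accuracy.

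Next I would account for the architectural parameters. Depth is $O(\log \varepsilon^{-1})$ coming from the min-tree together with the multiplication subnetwork used to realize the scalar multipliers $f^*(x_i)\in[-M,M]$ times a bump (e.g.\ the Yarotsky-style iterated square-and-multiply circuit of depth $O(\log(1/\delta))$ for precision $\delta$). Width is $O(\varepsilon^{-d})$ since we run $N$ bump subnetworks in parallel, and total nonzero weights $K=O(\varepsilon^{-d})$ after absorbing the $D$-dependent constants. The $L^\infty$ bound of $f_{\rm NN}$ is $M$ because it is a convex combination of values in $[-M,M]$. The weight magnitude $\kappa$ is the delicate bookkeeping: the factor $\varepsilon^{-3}$ comes from the slopes $\varepsilon^{-1}$ of the trapezoids together with the precision needed when implementing products to error $O(\varepsilon)$, while the factor $\varepsilon^{-4(1+\lceil\log_2 M\rceil)}$ comes from iterating the squaring subroutine $\lceil\log_2 M\rceil$ times to encode constants of magnitude up to $M$, with four passes per binary bit.

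The main obstacle I expect is not the approximation estimate itself, which is a routine Lipschitz/cover argument, but the verbatim matching of the parameter bounds $L,p,K,\kappa$. In particular, the unusual exponent $3+4(1+\lceil\log_2 M\rceil)$ in $\kappa$ forces a specific ReLU multiplication construction, and keeping the ambient dimension $D$ only inside constants (as the statement requires: the bounds are polynomial in $D$ but independent of $D$ in exponents of $\varepsilon^{-1}$) requires that all depth-expanding subroutines depend on $D$ only through $\log D$. Once this circuit-engineering is in place, the proof reduces to verifying (i) that $f_{\rm NN}$ lies in the specified class and (ii) the $\varepsilon$-error bound from Lipschitz continuity on each cube, both of which are straightforward. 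If the exact $\kappa$-bound cannot be matched by a direct trapezoid construction, a cleaner route is to first reduce to Lipschitz approximation on the unit cube by a piecewise-linear surrogate on an $\varepsilon$-grid restricted to the cover, and then invoke an off-the-shelf ReLU realization result (as in \cite{nakada2020adaptive}) whose parameters already produce the stated bounds.
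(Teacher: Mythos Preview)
Your primary construction has a real gap at the partition-of-unity step. Coordinate-wise minima of trapezoid bumps centered at \emph{arbitrary} cover points do not sum to $1$ on $\Xi$; without normalization the error $\bigl|\sum_i f^*(x_i)\chi_i(x)-f^*(x)\bigr|$ carries a non-vanishing term $|f^*(x)|\cdot\bigl|\sum_i\chi_i(x)-1\bigr|$ of order $M$, so the Lipschitz argument alone does not give accuracy $\varepsilon$. Turning the $\chi_i$ into a genuine partition of unity would require dividing by $S(x)=\sum_i\chi_i(x)$, which you never address. Relatedly, your remark that the $3^D$ overlap is ``absorbed into the constant since $D$ is treated as fixed'' contradicts the lemma's explicit requirement that hidden constants be only \emph{polynomial} in $D$; an exponential-in-$D$ factor is not permitted, and the subsequent claim that $f_{\rm NN}$ is a convex combination of values in $[-M,M]$ (hence bounded by $M$) presupposes the very partition of unity that has not been built.

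The paper takes the route you mention only as a fallback: it cites Theorem~5 of Nakada--Imaizumi, whose construction is \emph{max-based} rather than sum-based (so no partition of unity or division is needed), and makes a single architectural modification. In the original, the local approximant is assembled as $\Phi_{\varepsilon}^{f_1}=\Phi^{\max,5^D}\odot\Phi^{\rm sum}\odot\Phi^{\rm simul}_{\varepsilon/2}$, engineered for constant depth; the paper replaces this by a global $\Phi^{\max,\mathrm{Card}(\mathcal{I})}\odot\Phi^{\rm simul}_{\varepsilon/2}$, where $\mathrm{Card}(\mathcal{I})=O(\varepsilon^{-d})$ is the number of covering cubes, and realizes the max via a binary tree of depth $O(\log\varepsilon^{-1})$, matching the stated $L$. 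So your last sentence is the correct plan and is exactly the paper's proof; the direct trapezoid/partition-of-unity route as written does not close.
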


To use Lemma \ref{lem:Ex:0} to derive an approximation result of $\widetilde{\fb}$, we need an upper bound on $\sup_{\widetilde{u}\in [-R_{\cX},R_{\cX}]^{D_1}}\|\widetilde{\fb}(\widetilde{u})\|_{\infty}$. Note that $\widetilde{\fb}$ is the extended function from $\widetilde{\cM}$ to $[-R_{\cX},R_{\cX}]^{D_1}$. 
Even though $\|\widetilde{\fb}(\widetilde{u})\|_{\infty}$ is bounded by $1$ for $\widetilde u \in \widetilde{\cM}$, $\|\widetilde{\fb}(\widetilde{u})\|_{\infty}$ may exceed $1$ for $\widetilde{u} \in [-R_{\cX},R_{\cX}]^{D_1}$.
Since $\widetilde{\fb}(\widetilde{u})\in [-1,1]^d$ for any $\widetilde{u}\in \widetilde{\cM}$ and we are building the approximation theory of $\widetilde{\fb}$ on $\widetilde{\cM}$, 
we can clip the value of $\widetilde{\fb}(\widetilde u)$ for  $\widetilde u \in [-R_{\cX},R_{\cX}]^{D_1}$ to $[-1,1]^d$. Specifically, we introduce the clipping operator  
\begin{align}
    \CL(\widetilde{\fb})=\min\{ \max\{ \widetilde{\fb},-1\}, 1\},
    \label{eq:clip}
\end{align}
where $\min,\max$ are applied element-wisely. The operator $\CL(\widetilde{\fb})$ clips the outputs of $\widetilde{\fb}$ to $[-1,1]^d$. It is easy to show that the functions $\CL(\widetilde{\fb})$ is Lipschitz with the same Lipschitz constant as $\widetilde{\fb}$.

Now, we are ready to conduct approximation analysis on $\widetilde{\fb}:\widetilde{\mathcal{M}}\rightarrow [-1,1]^d$. Denote $\widetilde{\fb}=[\widetilde{f}_{1},...,\widetilde{f}_{d}]^{\top}$. For $k=1,...,d$, by Lemma \ref{lemma:f1g1lip}, each $\CL(\widetilde{f}_{k})$ is a function from $[-R_{\cX},R_{\cX}]^{D_1}$  to $[-1,1]$ and with Lipschitz constant $2L_{\fb}$. According to Lemma \ref{lemma:tildemmindim}, we have $d_M \cSX(\cM) \le d$. For any $\varepsilon_1>0$, by Lemma \ref{lem:Ex:0} with a proper scaling and shifting, there exists a network architecture $\cF_{\rm NN}(D_1,1,L_{4},p_{4},K_{4},\kappa_{4},M_{4})$ with 
    \begin{align*}
    &L_{4}=O(\log \varepsilon^{-1}),\ p_{4}=O(\varepsilon_1^{-d}), \ K_{4}=O(\varepsilon_1^{-d}), \ \kappa_{4}=O(\varepsilon_1^{-7}), \ M_{4}=1
        \end{align*} 
    so that for each $\widetilde{f}_{k}$, there exists $\widetilde{f}_{{\rm NN},k}\in \cF_{\rm NN}(D_1,1,L_4,p_4,K_4,\kappa_4,M_4)$ satisfying
    \begin{align*}
        \|\widetilde{f}_{{\rm NN},k}- \CL(\widetilde{f}_{k})\|_{L^\infty(\wtcM)}\leq \varepsilon.
    \end{align*}
    The constant hidden in $O(\cdot)$ depends on $d,L_{\fb},R_{\cX}$ and polynomially in $D_1$.
    
    Define the network $\widetilde{\fb}_{\rm NN}=[\widetilde{f}_{{\rm NN},1},...,\widetilde{f}_{{\rm NN},d}]^{\top}$ as the concatenation of $\widetilde{f}_{{\rm NN},k}$'s. We have  
    \begin{align*}
        \sup_{\widetilde{u}\in \widetilde{\cM}}\|\widetilde{\fb}_{\rm NN}(\widetilde{u})- \widetilde{\fb}(\widetilde{u})\|_{\infty}=\sup_{\widetilde{u}\in \widetilde{\cM}}\|\widetilde{\fb}_{\rm NN}(\widetilde{u})- \CL\circ\widetilde{\fb}(\widetilde{u})\|_{\infty}\leq \varepsilon_1
    \end{align*} 
    since $\CL\circ\widetilde{\fb}(\widetilde{u})=\widetilde{\fb}(\widetilde{u})\in [-1,1]^d$ for any $u\in \widetilde{\cM}$.
    
  Furthermore, we have
    $\widetilde{\fb}_{\rm NN}\in \cF_{\rm NN}^{E_{\cX}}=\cF_{\rm NN}(D_1,d,L_{1},p_{1},K_{1},\kappa_{1},M_{1})$  with
    \begin{align*}
    &L_{1}=L_4=O(\log \varepsilon^{-1}),\ p_{1}=dp_4=O(\varepsilon_1^{-d}),\ K_{1}=dK_4=O(\varepsilon_1^{-d}), \ \kappa_{1}=\kappa_4=O(\varepsilon_1^{-7}), \ M_{1}=1.
        \end{align*}

    For the network approximation of $\widetilde{\gb}$, we will use the following result:
    \begin{lemma}[Theorem 1 of \cite{yarotsky2017error}]
    \label{lem:Sobolev}
        For any $\varepsilon\in(0,1)$, there is a network architecture $\cF_{\rm NN}(D,1,L,p,K,\kappa,M)$ with 
        \begin{align*}
            &L=O(\log \varepsilon^{-1}), \ p=O(\varepsilon^{-D}), \ K=O(\varepsilon^{-D}\log \varepsilon^{-1}), \ \kappa= O(\varepsilon^{-1}), \ M=1
        \end{align*}
        so that for any Lipschitz function $f^*:[0,1]^D\rightarrow[-1,1]$ with Lipschitz constant bounded by 1, there exists a network $f_{\rm NN}\in \cF_{\rm NN}(D,1,L,p,K,\kappa,M)$ satisfying
        \begin{align*}
            \|f_{\rm NN}-f^*\|_{L^\infty([0,1]^D)}\leq \varepsilon.
        \end{align*}
        The constants hidden in $O(\cdot)$ depends on $D$.
    \end{lemma}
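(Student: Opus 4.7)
The plan is to follow Yarotsky's original construction: build a local-patch approximation on a uniform grid of $[0,1]^D$ using constants (since we only exploit Lipschitz regularity), and then implement the resulting sum-of-products formula with a ReLU network via efficient approximate multiplication.

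First I would tile $[0,1]^D$ with a uniform grid of resolution $1/N$, where $N \asymp \varepsilon^{-1}$, and associate to each node $\nu \in \{0,\ldots,N\}^D$ the tensor-product hat $\phi_{\nu}(\xb) = \prod_{i=1}^{D}\psi(3N x_i - 3\nu_i)$, where $\psi$ is the continuous piecewise-linear tent that equals $1$ at $0$ and vanishes outside $[-1,1]$. The $\phi_{\nu}$'s form a partition of unity with uniformly bounded overlap multiplicity. Since $f^*$ is $1$-Lipschitz, the piecewise-constant interpolant $\widetilde f(\xb) := \sum_{\nu} f^*(\nu/N)\,\phi_{\nu}(\xb)$ satisfies $\|f^*-\widetilde f\|_{L^\infty([0,1]^D)} = O(\sqrt{D}/N) = O(\varepsilon)$ after absorbing a dimensional constant into $N$.

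Next I would realise each $\phi_{\nu}$ by a ReLU subnetwork. The one-dimensional tent $\psi$ is itself exactly expressible with two ReLU units, but the $D$-fold product of tents is not piecewise linear. To handle products I would invoke Yarotsky's squaring gadget, which approximates $x\mapsto x^2$ on $[0,1]$ to accuracy $\delta$ using depth $O(\log 1/\delta)$ and $O(\log 1/\delta)$ nonzero weights via the telescoping identity $x^2 = x - \sum_{k\ge 1} g_k(x)/4^k$ with $g_k$ iterated tent maps. Turning squaring into multiplication through $xy = \tfrac14\bigl((x+y)^2-(x-y)^2\bigr)$ and chaining $D-1$ such bilinear blocks gives $\widehat\phi_{\nu}$ of depth $O(\log 1/\varepsilon)$ and $O(D\log 1/\varepsilon)$ nonzero parameters with $\|\widehat\phi_{\nu}-\phi_{\nu}\|_\infty \le \varepsilon/\#\{\nu\}_{\mathrm{overlap}}$.

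Finally I would assemble $f_{\rm NN}(\xb) = \sum_{\nu} f^*(\nu/N)\,\widehat\phi_{\nu}(\xb)$ by parallelising the $\widehat\phi_{\nu}$ blocks and performing a final affine combination. With $(N+1)^D = O(\varepsilon^{-D})$ nodes this produces width $p=O(\varepsilon^{-D})$, nonzero weights $K=O(\varepsilon^{-D}\log 1/\varepsilon)$, depth $L=O(\log 1/\varepsilon)$, and weight magnitudes $\kappa = O(\varepsilon^{-1})$ (driven by the $3N$ scaling inside each $\phi_{\nu}$). The triangle inequality between the patch error and the per-patch multiplication error across the bounded number of overlapping supports yields the claimed $\|f_{\rm NN}-f^*\|_\infty \le \varepsilon$. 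I expect the main obstacle to be the careful calibration of the per-patch multiplication accuracy $\delta$, together with the bookkeeping required to keep weight magnitudes at $O(\varepsilon^{-1})$ rather than letting them grow across the iterated squaring and the outer sum of $O(\varepsilon^{-D})$ terms.
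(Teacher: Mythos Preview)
Your proposal is correct and aligns with the paper's treatment: the paper does not give its own proof of this lemma but simply cites it as Theorem~1 of Yarotsky (2017), and your sketch faithfully reproduces Yarotsky's construction specialized to the Lipschitz case $n=1$ (partition of unity on a grid of mesh $N\asymp\varepsilon^{-1}$, approximate multiplication via the iterated-tent squaring gadget, and parallel assembly of the $(N+1)^D$ local patches).
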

    
    Denote $\widetilde{\gb}=[\widetilde{g}_{1},...,\widetilde{g}_{D_1}]$. By Lemma \ref{lemma:f1g1lip}, each $\widetilde{g}_{k}$ is Lipschitz with Lipschitz constant $2L_{\gb}$. By Lemma \ref{lem:Sobolev} with a proper scaling and shifting, for any $\varepsilon_2\in(0,1)$, there exists a network architecture $\cF_{\rm NN}(d,1,L_5,p_5,K_5,\kappa_5,M_5)$ with
    \begin{align*}
            &L_5=O(\log (1/\varepsilon_2)),\  p_5=O(\varepsilon_2^{-d}), \ K_5=O(\varepsilon_2^{-d}\log (1/\varepsilon_2)), \ \kappa_5= O(\varepsilon_2^{-1}), \ M_5=R_{\cX}
        \end{align*}
    so that for each $g_{1,k}$, there is a $\widetilde{g}_{{\rm NN},k}\in \cF_{\rm NN}(d,1,L_5,p_5,K_5,\kappa_5,M_5)$ satisfying
    \begin{align}
        \|\widetilde{g}_{{\rm NN},k}-\widetilde{g}_{k}\|_{L^\infty([-1,1]^d)}\leq \varepsilon_2.
        \label{eqapproxg1}
    \end{align}
    The constant hidden in $O(\cdot)$ depends on $d,L_{\gb}$ and $R_{\cX}$.

    Define $\widetilde{\gb}=[\widetilde{g}_{{\rm NN},1},...,\widetilde{g}_{{\rm NN},D_1}]^{\top}$ as the concatenation of $\widetilde{g}_{k}$'s. According to \eqref{eqapproxg1}, we have 
    \begin{align*}
        \sup_{\zb\in[-1,1]^d}\|\widetilde{\gb}_{\rm NN}(\zb)-\widetilde{\gb}(\zb)\|_{\infty}\leq \varepsilon_2,
    \end{align*}
    and $\widetilde{\gb}_{\rm NN}\in \cF_{\rm NN}^{D_{\cX}}=\cF_{\rm NN}(d,D_1,L_{2},p_{2},K_{2},\kappa_{2},M_{2})$ with
    \begin{align*}
        &L_{2}=L_5=O(\log (1/\varepsilon_2)), \ p_{2}=D_1p_5=O(\varepsilon_2^{-d}), \\ &K_{2}=D_1K_5=O(\varepsilon_2^{-d}\log (1/\varepsilon_2)), \ \kappa_{2}=\kappa_5= O(\varepsilon_2^{-1}),\  M_{2}=R_{\cX}.
    \end{align*}
    The constant hidden in $O(\cdot)$ depends on $d,L_{\gb},R_{\cX}$ and is linear in $D_1$.

    As a result, we have the following for any $u \in \cM$:
    \begin{align}
& \|\widetilde{\gb}_{\rm NN}\circ\widetilde{\fb}_{\rm NN}\circ\cS_{\cX}(u)- \widetilde{\gb}\circ\widetilde{\fb}\circ \cS_{\cX}(u)\|_{\infty} \nonumber\\
        \leq & \|\widetilde{\gb}_{\rm NN}\circ\widetilde{\fb}_{\rm NN}\circ\cS_{\cX}(u)- \widetilde{\gb}\circ\widetilde{\fb}_{\rm NN}\circ \cS_{\cX}(u)\|_{\infty}  + \|\widetilde{\gb}\circ\widetilde{\fb}_{\rm NN}\circ\cS_{\cX}(u)- \widetilde{\gb}\circ\widetilde{\fb}\circ \cS_{\cX}(u)\|_{\infty} \nonumber\\
        \leq& \varepsilon_2+ 2\sqrt{d}L_{\gb}\varepsilon_1.
    \end{align}

\noindent $\bullet$ {\bf Approximation theory for the transformation $\Gamma$ from the input latent variable to the output:} 
     The network approximation of the transformation $\Gamma$ can be proved using Lemma \ref{lem:Sobolev}. 
    First, we notice that the operator $\Gamma$ is Lipschitz (see a proof in Appendix \ref{proof:GammaLip}).
    \begin{lemma}\label{lem:GammaLip}
    The operator $\Gamma$ defined in \eqref{eqGamma} is Lipschitz with Lipschitz constant $2L_{\Psi}L_{\gb}$:
    \begin{align}
        \|\Gamma(\zb_1)-\Gamma(\zb_2)\|_{\cS_{\cY}}\leq 2L_{\Psi}L_{\gb}\|\zb_1-\zb_2\|_2
    \end{align}
    for any $\zb_1,\zb_2\in [-1,1]^d$.
\end{lemma}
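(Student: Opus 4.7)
The plan is to chain together the Lipschitz properties of the three constituent maps making up $\Gamma = \cS_{\cY}\circ\Psi\circ\gb$, together with the norm comparison from Assumption \ref{assumption:samplinglip}. Specifically, I would start from the definition of $\Gamma$ and apply the maps from inside out: first use that $\gb:[-1,1]^d\to\cM$ is Lipschitz with constant $L_{\gb}$ (Assumption \ref{assumptiong:globalparametrization}), then that $\Psi:\cX\to\cY$ is Lipschitz with constant $L_{\Psi}$ (Setting \ref{setting1}), and finally invoke Assumption \ref{assumption:samplinglip} to pass from the $\cY$-norm to the discretized $\|\cdot\|_{\cS_{\cY}}$ norm at the cost of a factor $2$.

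Concretely, first write
\[
\|\Gamma(\zb_1)-\Gamma(\zb_2)\|_{\cS_{\cY}} = \|\cS_{\cY}(\Psi(\gb(\zb_1)))-\cS_{\cY}(\Psi(\gb(\zb_2)))\|_{\cS_{\cY}}.
\]
Since $\cS_{\cY}$ is a pointwise sampling operator and hence linear, the difference inside equals $\cS_{\cY}(\Psi(\gb(\zb_1))-\Psi(\gb(\zb_2)))$. Applying Assumption \ref{assumption:samplinglip} to the function $w := \Psi(\gb(\zb_1))-\Psi(\gb(\zb_2))\in\cY$ yields $\|\cS_{\cY}(w)\|_{\cS_{\cY}}\le 2\|w\|_{\cY}$. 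Chaining the Lipschitz bounds on $\Psi$ and $\gb$ then gives
\[
\|\Gamma(\zb_1)-\Gamma(\zb_2)\|_{\cS_{\cY}}\le 2\|\Psi(\gb(\zb_1))-\Psi(\gb(\zb_2))\|_{\cY} \le 2L_{\Psi}\|\gb(\zb_1)-\gb(\zb_2)\|_{\cX} \le 2L_{\Psi}L_{\gb}\|\zb_1-\zb_2\|_2,
\]
which is the desired bound.

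The only subtle point — hardly an obstacle, but worth flagging — is to justify that Assumption \ref{assumption:samplinglip} can be applied to the difference $w = \Psi(\gb(\zb_1))-\Psi(\gb(\zb_2))$. The assumption is stated for all $v\in\cY$, and since $\cY\subset C^1(\Omega_{\cY})$ with the regularity in \eqref{eqcYdiff}--\eqref{eqdiffbound}, differences of elements in the image of $\Psi$ lie in a function class for which the norm equivalence \eqref{eqsamplinglip} holds (one may need to extend the assumption to the linear span of $\cY$, but this is the natural reading of the discretization setup in Section \ref{sec:prelim}). Once this is in place, the proof is essentially a one-line chain of inequalities.
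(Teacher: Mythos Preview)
Your proposal is correct and follows essentially the same chain of inequalities as the paper's proof: expand $\Gamma=\cS_{\cY}\circ\Psi\circ\gb$, apply the factor-$2$ bound from Assumption~\ref{assumption:samplinglip} (packaged in the paper as Lemma~\ref{lem:sampleLip}), then the Lipschitz constants of $\Psi$ and $\gb$ in turn. The subtle point you flag about applying the norm comparison to the difference $w$ is exactly the content of Lemma~\ref{lem:sampleLip}, which the paper also uses without further comment.
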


     Denote $\Gamma=[\Gamma_1,...,\Gamma_{D_2}]^{\top}$. According to Lemma \ref{lem:GammaLip}, each $\Gamma_k$ is Lipschitz with Lipschitz constant $2L_{\Psi}L_{\gb}$. By Lemma \ref{lem:Sobolev} with proper scaling and shifting, for any $\varepsilon_3\in(0,1)$, there exists a network architecture $\cF_{\rm NN}(d,1,L_6,p_6,K_6,\kappa_6,M_6)$ with
    \begin{align*}
            &L_6=O(\log (1/\varepsilon_3)), \ p_6=O(\varepsilon_3^{-d}), \ K_6=O(\varepsilon_3^{-d}\log (1/\varepsilon_3)), \ \kappa_6= O(\varepsilon_3^{-1}), M_6=R_{\cY}
        \end{align*}
    so that for each $\Gamma_{k}$, there is a $\Gamma_{{\rm NN},k}\in \cF_{\rm NN}(d,1,L_6,p_6,K_6,\kappa_6,M_6)$ satisfying
    \begin{align*}
        \|\Gamma_{{\rm NN},k}-\Gamma_{k}\|_{L^\infty((-1,1]^d)}\leq \varepsilon_3.
    \end{align*}
    The constant hidden in $O(\cdot)$ depends on $d,L_{\gb},L_{\Psi}$ and $R_{\cY}$. 

    Define the network $\Gamma_{\rm NN}=[\Gamma_{{\rm NN},1},...,\Gamma_{{\rm NN},d}]^{\top}$ as the concatenation of the $\Gamma_{{\rm NN},k}$'s. Then we have  
    \begin{align*}
     \sup_{\zb\in [-1,1]^d}   \|\Gamma_{\rm NN}(\zb)- \Gamma(\zb)\|_{\infty}\leq \varepsilon_3.
    \end{align*} 
Furthermore, we have
    $\Gamma_{\rm NN}\in \cF_{\rm NN}^{\Gamma}=\cF_{\rm NN}(d,D_2,L_{3},p_{3},K_{3},\kappa_{3},M_{3})$ with
    \begin{align*}
        &L_{3}=L_6=O(\log (1/\varepsilon_3)), \ p_{3}=dp_6=O(\varepsilon_3^{-d}), \\
        &K_{3}=dK_6=O(\varepsilon_3^{-d}\log (1/\varepsilon_3)), \ \kappa_{3}=\kappa_6= O(\varepsilon_3^{-1}),\  M_{3}=R_{\cY}.
    \end{align*}
    The constant hidden in $O(\cdot)$ depends on $d,L_{\gb},L_{\Psi}, R_{\cY}$ and is linear in $D_2$.
\end{proof}

\subsection{Proof of Corollary \ref{coro:approximation}}
\label{proof:coro:approximation}
\begin{proof}[Proof of Corollary \ref{coro:approximation}]
    For any $\widetilde{u}\in\widetilde{\cM}$, we have
    \begin{align*}
        \|\Phi_{\rm NN}(\widetilde{u})-\Phi(\widetilde{u})\|_{\infty}
        = & \| \Gamma_{\rm NN}\circ \widetilde{\fb}_{\rm NN}(\widetilde{u})-\Gamma\circ\widetilde{\fb}(\widetilde{u})\|_{\infty}\\
        \leq &\|\Gamma_{\rm NN}\circ \widetilde{\fb}_{\rm NN}(\widetilde{u})-\Gamma\circ \widetilde{\fb}_{\rm NN}(\widetilde{u})\|_{\infty}+\|\Gamma\circ \widetilde{\fb}_{\rm NN}(\widetilde{u})-\Gamma\circ \widetilde{\fb}(\widetilde{u})\|_{\infty}\\
        \leq& \|\Gamma_{\rm NN}-\Gamma\|_{\infty} +2L_{\Psi}L_{\gb}\|\widetilde{\fb}_{\rm NN}-\widetilde{\fb}\|_{\infty}
        \leq \varepsilon_3+2\sqrt{d}L_{\Psi}L_{\gb}\varepsilon_1 
        = \varepsilon.
    \end{align*}
\end{proof}

\subsection{Proof of the generalization theory in Theorem \ref{thm:generalization}}
\label{proof:generalization}
In this section, we first give an upper bound of the generalization error of Stage I in Section \ref{appgeneralizationstagei}. The generalization error combining Stage I and II is analyzed in Section \ref{appgeneralizationstageii}.
\subsubsection{An upper bound for the generalization error in Stage I}
\label{appgeneralizationstagei}
In Stage I, the encoder $E_{\cX}^n$ and decoder $D_{\cX}^n$ are learned based on the first half data $\cJ_1$. We expect that $D_{\cX}^n\circ E_{\cX}^n(\widetilde{u})$ is close to $\widetilde{u}$ for any $\widetilde{u}\in \widetilde{\cM}$. We study the generalization error
\begin{align}
    \EE_{\cJ_1}\EE_{u\sim \gamma} \|D_{\cX}^n\circ E_{\cX}^n(\widetilde{u})- \widetilde{u}\|_{\infty}^2.
    \label{eq:gene:stage1}
\end{align}

Let $\cF=\cF_{\rm NN}(d_1,d_2,L,p,K,\kappa,M)$ be a network class from $[-B,B]^{d_1}$ to $[-R,R]^{d_2}$ for some $B,R>0$. We denote $\cN(\delta,\cF,\|\cdot\|_{L^{\infty,\infty}})$ as the $\delta$-covering number of $\cF$, where the norm $\|\cdot\|_{L^{\infty,\infty}}$ is defined as $\|F_{\rm NN}\|_{L^{\infty,\infty}}=\sup_{\xb\in [-B,B]^{d_1}} \|F_{\rm NN}(\xb)\|_{\infty}$ for any $F_{\rm NN}\in \cF$.

An upper bound of the generalization error in (\ref{eq:gene:stage1}) is given in the following lemma
(see a proof in Appendix \ref{proof:autoXGene}).
    \begin{lemma}\label{lem:autoXGene}
        Suppose Assumption \ref{assumption:samplinglip} and \ref{assumptiong:globalparametrization} hold. For any $\varepsilon_1\in(0,1/4),\varepsilon_2\in(0,1)$, set the network architectures $ 
    \cF_{\rm NN}^{E_{\cX}}=\cF_{\rm NN}(D_1,d,L_{1},p_{1},K_{1},\kappa_{1},M_{1})$ with
    \begin{align*}
    &L_{1}=O(\log \varepsilon_1^{-1}),\ p_{1}=O(\varepsilon_1^{-d}),\
    K_{1}=O(\varepsilon_1^{-d}), \ \kappa_{1}=O(\varepsilon_1^{-7}), \ M_{1}=1,
    \end{align*}
    and 
    $\cF_{\rm NN}^{D_{\cX}}=\cF_{\rm NN}(d,D_1,L_{2},p_{2},K_{2},\kappa_{2},M_{2})$ with
    \begin{align*}
        &L_{2}=O(\log \varepsilon_2^{-1}), \ p_{2}=O(\varepsilon_2^{-d}), \ K_{2}=O(\varepsilon_2^{-d}\log  \varepsilon_2^{-1}), \ \kappa_{2}= O(\varepsilon_2^{-1}),\  M_{2}=R_{\cX}.
    \end{align*}
    Denote the network architecture 
    \begin{align*}
    \cF_{\rm NN}^G=\{G_{\rm NN}:\RR^{D_1}\rightarrow \RR^{D_1}| G_{\rm NN}=D_{\rm NN}\circ E_{\rm NN} \ \mbox{ for }\ D_{\rm NN}\in \cF_{\rm NN}^{D_{\cX}}, \ E_{\rm NN}\in \cF_{\rm NN}^{E_{\cX}}\}.
\end{align*}
    Let $E_{\cX}^n,D_{\cX}^n$ be the learned autoencdoer in Stage I given by \eqref{eq:autoencoder:loss}. For any $\delta>0$, we have
    \begin{align}
        \EE_{\cJ_1}\EE_{u\sim \gamma} \left[\|D_{\cX}^n\circ E_{\cX}^n(\widetilde{u})-\widetilde{u}\|_{\infty}^2\right]\leq &16dL_{\gb}^2\varepsilon_1^2+ 4\varepsilon_2^2 \nonumber\\
        &+ \frac{48R_{\cX}^2}{n}\log\cN\left(\frac{\delta}{4R_{\cX}},\cF_{\rm NN}^G,\|\cdot\|_{L^{\infty,\infty}}\right)+ 6\delta.
    \end{align}
    \end{lemma}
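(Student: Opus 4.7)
My plan is the standard ``approximation plus estimation'' decomposition for empirical risk minimization, combined with a fast-rate concentration argument based on the covering number of the composite class $\cF_{\rm NN}^G$. First I invoke Theorem~\ref{thm:approximation}(i) with parameters $\varepsilon_1,\varepsilon_2$ to obtain $\widetilde{\fb}_{\rm NN}\in \cF_{\rm NN}^{E_{\cX}}$ and $\widetilde{\gb}_{\rm NN}\in \cF_{\rm NN}^{D_{\cX}}$ with sup-norm errors $\varepsilon_1$ on $\wtcM$ and $\varepsilon_2$ on $[-1,1]^d$, and set the deterministic target $G^\star:=\widetilde{\gb}_{\rm NN}\circ \widetilde{\fb}_{\rm NN}\in \cF_{\rm NN}^G$. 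The composite estimate already contained in Theorem~\ref{thm:approximation}(i) gives $\|G^\star(\widetilde u)-\widetilde u\|_\infty\le \varepsilon_2+2\sqrt{d}L_{\gb}\varepsilon_1$ uniformly for $u\in \cM$; squaring and applying $(a+b)^2\le 2a^2+2b^2$ produces the deterministic piece $4\varepsilon_2^2+16 d L_{\gb}^2\varepsilon_1^2$ of the stated bound.

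Next I set up the oracle inequality for the ERM. Define the population and empirical losses $L(G):=\EE_u\|G(\widetilde u)-\widetilde u\|_\infty^2$ and $\widehat L(G):=\frac{1}{n}\sum_{i=1}^n \|G(\widetilde u_i)-\widetilde u_i\|_\infty^2$; the equivalence between the $\ell^\infty$ losses analyzed here and the $\|\cdot\|_{\cSX}$-loss used in \eqref{eq:autoencoder:loss} is handled through the norm properties recorded in Appendix~\ref{sec:empirical:property}. Since $\widehat G:=D_{\cX}^n\circ E_{\cX}^n$ minimizes the empirical loss over $\cF_{\rm NN}^G$ and $G^\star\in \cF_{\rm NN}^G$, the identity $L(\widehat G)=[L(\widehat G)-2\widehat L(\widehat G)]+2\widehat L(\widehat G)$ together with $\widehat L(\widehat G)\le \widehat L(G^\star)$ gives, after expectation,
\[
\EE L(\widehat G)\le \EE\sup_{G\in \cF_{\rm NN}^G}\bigl[L(G)-2\widehat L(G)\bigr]+2L(G^\star),
\]
where the second term contributes exactly $4\varepsilon_2^2+16dL_{\gb}^2\varepsilon_1^2$ by the previous step.

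It remains to bound the stochastic supremum. Pick a $\delta/(4R_\cX)$-net $\{G_j\}_{j=1}^N$ of $\cF_{\rm NN}^G$ in $\|\cdot\|_{L^{\infty,\infty}}$, with $N:=\cN(\delta/(4R_\cX),\cF_{\rm NN}^G,\|\cdot\|_{L^{\infty,\infty}})$. Using $\|G(\widetilde u)-\widetilde u\|_\infty\le 2R_\cX$ and the $(4R_\cX)$-Lipschitz behaviour of $t\mapsto t^2$ on $[0,2R_\cX]$, replacing $G$ by its nearest net point shifts $L(G)-2\widehat L(G)$ by at most $6\delta$; hence $\EE\sup_G[L(G)-2\widehat L(G)]\le \EE\max_{j\le N}[L(G_j)-2\widehat L(G_j)]+6\delta$. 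For each fixed $j$ I apply Bernstein's inequality to the i.i.d. terms $\|G_j(\widetilde u_i)-\widetilde u_i\|_\infty^2-L(G_j)$, whose variance is bounded by $4R_\cX^2 L(G_j)$; the offset factor $2$ in front of $\widehat L$ absorbs this variance into $\widehat L$ and upgrades the naive $\sqrt{\log N/n}$ rate to the fast rate $(\log N)/n$, yielding $\EE\max_{j\le N}[L(G_j)-2\widehat L(G_j)]\le 48R_\cX^2 n^{-1}\log N$. Adding the three contributions produces the stated bound.

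The main obstacle is securing the fast $1/n$ rate in the stochastic term rather than the naive $1/\sqrt{n}$ rate. This requires the ``offset $2$'' trick together with the variance-mean comparison $\Var\|G_j(\widetilde u)-\widetilde u\|_\infty^2\le 4R_\cX^2 L(G_j)$ enabled by the uniform bound on residuals. Calibrating the Bernstein constants so that the numerical factors $48$ and $6$ match the statement, and performing the norm conversion between $\|\cdot\|_{\cSX}$ and $\|\cdot\|_\infty$ cleanly via Assumption~\ref{assumption:samplinglip}, also require some care but are otherwise routine.
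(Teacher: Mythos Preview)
Your approach is essentially the same as the paper's: it too uses the offset decomposition $\EE L(\widehat G)=\EE\bigl[L(\widehat G)-2\widehat L(\widehat G)\bigr]+2\EE\widehat L(\widehat G)$, bounds the second piece via Theorem~\ref{thm:approximation}(i) to produce $16dL_{\gb}^2\varepsilon_1^2+4\varepsilon_2^2$, and handles the first piece (its Lemma~\ref{lem:TX2}) by covering plus a Bernstein-type moment-generating-function argument yielding $48R_\cX^2 n^{-1}\log N+6\delta$. The passage from the $\|\cdot\|_{\cSX}$-ERM in \eqref{eq:autoencoder:loss} to the $\ell^\infty$-loss that you flag is likewise glossed over in the paper's own proof, so you are already at least as careful as the original.
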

In Lemma \ref{lem:autoXGene}, $\varepsilon_1,\varepsilon_2$ correspond to the approximation error of $\cF_{\rm NN}^{E_{\cX}}$, $\cF_{\rm NN}^{D_{\cX}}$ in Theorem \ref{thm:approximation}, respectively. 
For any $\varepsilon\in(0,1/4)$, by Theorem \ref{thm:approximation}, we choose $\varepsilon_1=\varepsilon_2=\varepsilon$ and set the network architectures $ 
    \cF_{\rm NN}^{E_{\cX}}=\cF_{\rm NN}(D_1,d,L_{1},p_{1},K_{1},\kappa_{1},M_{1})$ with
    \begin{align}
    &L_{1}=O(\log \varepsilon^{-1}),\ p_{1}=O(\varepsilon^{-d}),\
    K_{1}=O(\varepsilon^{-d}), \ \kappa_{1}=O(\varepsilon^{-7}), \ M_{1}=1,
    \label{eq.thm2.FE}
    \end{align}
    and 
    $\cF_{\rm NN}^{D_{\cX}}=\cF_{\rm NN}(d,D_1,L_{2},p_{2},K_{2},\kappa_{2},M_{2})$ with
    \begin{align}
        &L_{2}=O(\log (1/\varepsilon)), \ p_{2}=O(\varepsilon^{-d}), \ K_{2}=O(\varepsilon^{-d}\log (1/\varepsilon)), \ \kappa_{2}= O(\varepsilon^{-1}),\  M_{2}=R_{\cX}.
        \label{eq.thm2.FD}
    \end{align}
There exist $\widetilde{\fb}_{\rm NN}\in \cF_{\rm NN}^{E_{\cX}}, \widetilde{\gb}_{\rm NN}\in \cF_{\rm NN}^{D_{\cX}}$ satisfying
    \begin{align*}
        \sup_{\widetilde{u}\in \widetilde{\cM}}\|\widetilde{\fb}_{\rm NN}(\widetilde{u})- \widetilde{\fb}(\widetilde{u})\|_{\infty}\leq \varepsilon,\quad 
        \sup_{\zb\in[-1,1]^d}\|\widetilde{\gb}_{\rm NN}(\zb)- \widetilde{\gb}(\zb)\|_{\infty}\leq \varepsilon.
    \end{align*} 
    The constant hidden in $O(\cdot)$ depends on $d,L_{\fb},L_{\gb},R_{\cX}$ and is polynomial in $D_1$. 

    By Lemma \ref{lem:autoXGene}, we have
    \begin{align}
        \EE_{\cJ_1}\EE_{u\sim \gamma} \left[\|D_{\cX}^n\circ E_{\cX}^n(\widetilde{u})-\widetilde{u}\|_{\infty}^2\right]\leq &(16dL_{\gb}^2+ 4)\varepsilon^2 \nonumber\\
        &+ \frac{48R_{\cX}^2}{n}\log\cN\left(\frac{\delta}{4R_{\cX}},\cF_{\rm NN}^G,\|\cdot\|_{L^{\infty,\infty}}\right)+ 6\delta.
        \label{eq:autoXGen:err:1}
    \end{align}
    The following lemma gives an upper bound of the covering number of any given network architecture:
\begin{lemma}[Lemma 5.3 of \cite{chen2022nonparametric}]
\label{lem:covering}
    Let $\cF_{\rm NN}(d_1,d_2,L,p,K,\kappa,M)$ be a network architecture from $[-B,B]^{d_1}$ to $[-R,R]^{d_2}$ for some $B,R>0$. We have
    \begin{align*}
        \cN(\delta,\cF_{\rm NN},\|\cdot\|_{L^{\infty,\infty}})\leq \left( \frac{2L^2(pB+2)\kappa^L p^{L+1}}{\delta}\right)^{d_2K}.
    \end{align*}
\end{lemma}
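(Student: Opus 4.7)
The plan is to build a $\delta$-net on $\cF_{\rm NN}$ by discretizing the parameter space and then transferring the discretization error to the output via a layer-by-layer perturbation bound. First, observe that every $f_{\rm NN} \in \cF_{\rm NN}(d_1,d_2,L,p,K,\kappa,M)$ is specified by at most $K$ non-zero scalar parameters (entries of the weight matrices $W_l$ and biases $\bbb_l$), each lying in $[-\kappa,\kappa]$. I would uniformly discretize $[-\kappa,\kappa]$ at spacing $h$ (to be chosen), and round every non-zero parameter of $f_{\rm NN}$ to the nearest grid point, producing a rounded network $\widetilde{f}_{\rm NN}$ with the same sparsity pattern.

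The central technical step is a parameter-perturbation lemma: if $f_{\rm NN}$ and $\widetilde{f}_{\rm NN}$ have the same architecture and all corresponding weights/biases differ by at most $h$, then
\[
\|f_{\rm NN}-\widetilde{f}_{\rm NN}\|_{L^{\infty,\infty}} \;\le\; C(L,p,\kappa,B)\cdot h .
\]
This is proved by induction on the layer index. Writing $\zb_l,\widetilde{\zb}_l$ for the activations after layer $l$, the $1$-Lipschitzness of ReLU and $\|W_l\|_{\infty\to\infty}\le p\kappa$ yield
\[
\|\zb_{l+1}-\widetilde{\zb}_{l+1}\|_\infty \;\le\; p\kappa\,\|\zb_l-\widetilde{\zb}_l\|_\infty + \bigl(p\|\zb_l\|_\infty+1\bigr)h.
\]
Combined with the forward bound $\|\zb_l\|_\infty \le (p\kappa)^l B + \text{lower order}$, unrolling the recursion over $L$ layers gives an overall Lipschitz constant of order $L^2(pB+2)\kappa^{L-1}p^{L}$, which matches the constant appearing in the target bound.

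Having established this, I would set $h = \delta/\bigl(2L^2(pB+2)\kappa^L p^{L+1}\bigr)$ (up to absorbing factors), which guarantees $\|f_{\rm NN}-\widetilde f_{\rm NN}\|_{L^{\infty,\infty}}\le \delta$. It remains to count discretized networks. Each of the $K$ non-zero parameters takes one of at most $2\kappa/h$ grid values, and one has to account for the choice of which entries are non-zero (a sparsity pattern in a network of at most $Lp(p{+}1)$ weight/bias slots). A standard enumeration shows both factors can be folded into an expression of the form $(2\kappa/h)^{CK}$ with $C$ an architectural constant; pushing this into the exponent $d_2 K$ absorbs the accounting for sparsity patterns and for each of the $d_2$ output coordinates being tracked independently. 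Substituting $h$ yields the stated bound.

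The main obstacle is the bookkeeping in the induction: one must keep track of both $\|\zb_l-\widetilde{\zb}_l\|_\infty$ and $\|\zb_l\|_\infty$, since the perturbation at each layer picks up a factor of the signal size multiplied by $h$, and the signal itself grows geometrically in $p\kappa$. Making the constants come out exactly as $L^2(pB+2)\kappa^L p^{L+1}$ (rather than a generic $\mathrm{poly}(L,p,\kappa,B)$) requires being slightly careful about whether to use the row-sum or entry-wise norm at each step. The counting of sparsity patterns is elementary and is absorbed into the exponent $d_2 K$ by a loose but legitimate union bound; the essential content of the lemma is the perturbation estimate.
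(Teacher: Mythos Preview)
The paper does not supply its own proof of this lemma; it is quoted verbatim as Lemma~5.3 of \cite{chen2022nonparametric} and used as a black box. Your sketch is precisely the standard parameter-discretization argument that underlies such covering bounds (and is the argument in the cited reference): grid the $K$ nonzero parameters at spacing $h$, prove a layer-wise Lipschitz estimate $\|f_{\rm NN}-\widetilde f_{\rm NN}\|_{L^{\infty,\infty}}\lesssim L(p\kappa)^{L}(pB+2)h$ by tracking both the perturbation $\|\zb_l-\widetilde\zb_l\|_\infty$ and the signal size $\|\zb_l\|_\infty$ through the recursion, choose $h$ accordingly, and count grid points together with sparsity patterns. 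One minor inaccuracy in your bookkeeping: the factor $d_2$ in the exponent does not come from ``tracking each output coordinate independently'' (parameters are shared across outputs), but rather enters through the count of parameter slots in the final layer when bounding the number of sparsity patterns; this is a cosmetic point and does not affect the structure of the argument.
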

According to the definition of $\cF_{\rm NN}^{G}$ , we have $\cF_{\rm NN}^{G}\subset \cF_{\rm NN}(D_1,D_1,L_4,p_4,K_4,\kappa_4,M_4)$ with
\begin{align}
    L_4=O(\log \varepsilon^{-1}), \ p_4=O(\varepsilon^{-d}), \ K_4=O(\varepsilon^{-d}\log \varepsilon^{-1}),\ \kappa_4=O(\varepsilon^{-7}), \ M_4=R_{\cX}.
    \label{eq:err:G:para}
\end{align}
Substituting (\ref{eq:err:G:para}) into Lemma \ref{lem:covering}, we get
\begin{align}
    &\log \cN\left(\frac{\delta}{4R_{\cX}},\cF_{\rm NN}^G,\|\cdot\|_{L^{\infty,\infty}}\right)\leq C_1 \varepsilon^{-d}\log^2 \varepsilon^{-1} (\log \varepsilon^{-1}+\log \delta^{-1}),
    \label{eq:err:G:logCover}
\end{align}
for some $C_1$ depending on $d,L_{\fb},L_{\gb},R_{\cX},|\Omega_{\cX}|,$ and is polynomial in $D_1$.

Substituting (\ref{eq:err:G:logCover}) into (\ref{eq:autoXGen:err:1}) gives rise to
\begin{align}
    &\EE_{\cJ_1}\EE_{u\sim \gamma} \|D_{\cX}^n\circ E_{\cX}^n\circ \cSX(u)-\cSX(u)\|^2_{\infty} \nonumber\\
    \leq& (16dL_{\gb}^2+ 4)\varepsilon^2+\frac{C_148R_{\cX}^2}{n}\varepsilon^{-d}\log^2 \frac{1}{\varepsilon} (\log \frac{1}{\varepsilon}+\log\frac{1}{\delta}+D_1)+6 \delta. 
    \label{eq:auXGene:error:2}
\end{align}
By balancing the terms in (\ref{eq:auXGene:error:2}), we set $\varepsilon=n^{-\frac{1}{2+d}}, \delta=\frac{1}{n}$. The bound in (\ref{eq:auXGene:error:2}) reduces to
\begin{align*}
    \EE_{\cJ_1}\EE_{u\sim \gamma} \|D_{\cX}^n\circ E_{\cX}^n\circ \cSX(u)-\cSX(u)\|^2_{\infty} \leq C_2 n^{-\frac{2}{2+d}}\log^3 n
\end{align*}
for some $C_2$ depending on $d,L_{\fb},L_{\gb},R_{\cX}$ and is polynomial in $D_1$. 

By Markov inequality, we further have the probability bound
\begin{align}
    \PP(\| D_{\cX}^n\circ E_{\cX}^n\circ \cSX(u)- \cSX(u)\|_{\infty}^2\geq t)\leq \frac{\EE_{\cJ_1}\EE_{u\sim \gamma} \|D_{\cX}^n\circ E_{\cX}^n\circ \cSX(u)-\cSX(u)\|^2_{\infty}}{t}  \leq \frac{C_2 n^{-\frac{2}{2+d}}\log^3 n}{t}
    \label{eq:autoXGene:markov}
\end{align}
for $t>0$ and $u\sim \gamma$.

\subsubsection{Proof of Theorem \ref{thm:generalization}}
\label{appgeneralizationstageii}
\begin{proof}[Proof of Theorem \ref{thm:generalization}]
    Recall that the dataset is evenly splitted into $\cJ_1$ and $\cJ_2$, which are used in Stage I and Stage II, respectively. We decompose the error as
    \begin{align}
    &\EE_{\cJ}\EE_{u\sim \gamma} \left[\|\Phi^n_{\rm NN}\circ \cSX(u) -\cSY\circ\Psi(u)\|^2_{\cSY} \right] \nonumber \\
    =&\EE_{\cJ}\EE_{u\sim \gamma} \left[\|\Phi^n_{\rm NN}(\widetilde{u}) -\widetilde{v}\|_{\cSY}^2\right] \nonumber\\
        =& \underbrace{\EE_{\cJ_1}\Bigg[ 2\EE_{\cJ_2} \left[ \frac{1}{n} \sum_{i=n+1}^{2n} \|\Phi^n_{\rm NN}(\widetilde{u})_i-\widetilde{v}_i\|_{\cSY}^2\bigg|\cJ_1\right]\Bigg]}_{\rm T_1}\nonumber \\
        &+\underbrace{\EE_{\cJ_1}\Bigg[\EE_{\cJ_2}\EE_{u\sim \gamma} \left[\|\Phi^n_{\rm NN}(\widetilde{u})-\widetilde{v}\|_{\cSY}^2 \Big|\cJ_1\right]-2\EE_{\cJ_2} \left[ \frac{1}{n} \sum_{i=n+1}^{2n} \|\Phi^n_{\rm NN}(\widetilde{u}_i)-\widetilde{v}_i\|_{\cS_{\cY}}^2\bigg|\cJ_1\right]\Bigg]}_{\rm T_2}.
        \label{eq:gene:err}
    \end{align}
    The term ${\rm T_1}$ captures the bias of network approximation. The term ${\rm T_2}$ captures the variance. We will derive the upper bound for each term in the rest of the proof.

    \paragraph{Bounding ${\rm T_1}$.} 
    We deduce 
    \begin{align}
        {\rm T_1}=& \EE_{\cJ_1} \left[2\EE_{\cJ_2}\left[\frac{1}{n}\sum_{i=n+1}^{2n}\|\Gamma_{\rm NN}^n\circ E_{\cX}^n\circ \cS_{\cX}(u_i) -\cSY\circ\Psi(u_i)\|^2_{\cSY} \Big| \cJ_1\right]\right] \nonumber\\
        =&\EE_{\cJ_1} \left[2\EE_{\cJ_2}\left[\frac{1}{n}\sum_{i=n+1}^{2n}\|\Gamma_{\rm NN}^n\circ E_{\cX}^n\circ \cSX(u_i) -\cSY\circ\Psi(u_i)- \cSY(\epsilon_i)+\cSY(\epsilon_i)\|^2_{\cSY} \Big| \cJ_1\right] \right] \nonumber\\
        =& \EE_{\cJ_1} \left[\frac{2}{n}\EE_{\cJ_2}\left[\sum_{i=n+1}^{2n}\|\Gamma_{\rm NN}^n\circ E_{\cX}^n\circ \cSX(u_i) -\cSY(\widehat{v}_i)+\cSY(\epsilon_i)\|^2_{\cSY} \Big| \cJ_1\right] \right] \nonumber\\
        =& \EE_{\cJ_1}\bigg[ \frac{2}{n}\EE_{\cJ_2}\bigg[\sum_{i=n+1}^{2n} \Big(\|\Gamma_{\rm NN}^n\circ E_{\cX}^n\circ \cSX(u_i) -\cSY(\widehat{v}_i)\|^2_{\cSY}+ \|\cSY(\epsilon_i)\|^2_{\cSY} \nonumber \\
        &+ 2\left\langle \Gamma_{\rm NN}^n\circ E_{\cX}^n \circ \cSX(u_i)-\cSY\circ\Psi(u_i)-\cSY(\epsilon_i), \cSY(\epsilon_i)\right\rangle_{\cSY}\Big) \Big| \cJ_1 \bigg] \bigg]\nonumber\\
        =& \EE_{\cJ_1}\bigg[  \frac{2}{n}\EE_{\cJ_2}\bigg[\sum_{i=n+1}^{2n} \Big(\|\Gamma_{\rm NN}^n\circ E_{\cX}^n\circ \cSX(u_i) -\cSY(\widehat{v}_i)\|^2_{\cSY}- \|\cSY(\epsilon_i)\|^2_{\cSY} \nonumber\\
        &+ 2\left\langle \Gamma_{\rm NN}^n\circ E_{\cX}^n\circ \cSX(u_i), \cSY(\epsilon_i)\right\rangle_{\cSY}\Big) \Big| \cJ_1\bigg] \bigg] \nonumber\\
        =& \EE_{\cJ_1}\bigg[  2\EE_{\cJ_2} \bigg[\inf_{\Gamma'_{\rm NN}\in \cF_{\rm NN}^{\Gamma}} \frac{1}{n}  \sum_{i=n+1}^{2n} \bigg(\|\Gamma'_{\rm NN}\circ E_{\cX}^n\circ \cSX(u_i) -\cSY(\widehat{v}_i)\|^2_{\cSY}- \|\cSY(\epsilon_i)\|^2_{\cSY}\bigg)\Big| \cJ_1\bigg] \nonumber\\
        &+\EE_{\cJ_2} \bigg[ \frac{4}{n} \sum_{i=n+1}^{2n}\left\langle \Gamma_{\rm NN}^n\circ E_{\cX}^n\circ \cSX(u_i), \cSY(\epsilon_i)\right\rangle_{\cSY} \Big| \cJ_1 \bigg]\bigg] \nonumber\\
        \leq & \EE_{\cJ_1} \bigg[ 2\inf_{\Gamma'_{\rm NN}\in \cF_{\rm NN}^{\Gamma}} \EE_{\cJ_2} \bigg[\frac{1}{n} \sum_{i=n+1}^{2n} \bigg(\|\Gamma'_{\rm NN}\circ E_{\cX}^n\circ\cSX(u_i) -\cSY(\widehat{v}_i)\|^2_{\cSY}- \|\cSY(\epsilon_i)\|^2_{\cSY}\bigg)\Big| \cJ_1\bigg] \nonumber\\
        &+\EE_{\cJ_2} \bigg[ \frac{4}{n} \sum_{i=n+1}^{2n}\left\langle \Gamma_{\rm NN}^n\circ E_{\cX}^n\circ \cSX(u_i), \cSY(\epsilon_i)\right\rangle_{\cSY} \Big| \cJ_1 \bigg]\bigg] \nonumber\\
        =& \EE_{\cJ_1}\bigg[  2\inf_{\Gamma'_{\rm NN}\in \cF_{\rm NN}^{\Gamma}} \EE_{\cJ_2} \bigg[\frac{1}{n} \sum_{i=n+1}^{2n} \bigg(\|\Gamma'_{\rm NN}\circ E_{\cX}^n \circ \cSX(u_i) -\cSY(v_i)-\cSY(\epsilon_i)\|^2_{\cSY}- \|\cSY(\epsilon_i)\|^2_{\cSY}\bigg) \Big| \cJ_1\bigg]\nonumber\\
        &+\EE_{\cJ_2} \bigg[ \frac{4}{n} \sum_{i=n+1}^{2n}\left\langle \Gamma_{\rm NN}^n\circ E_{\cX}^n\circ \cSX(u_i), \cSY(\epsilon_i)\right\rangle_{\cSY} \Big| \cJ_1 \bigg] \bigg]\nonumber\\
        =&\EE_{\cJ_1}\bigg[  2\inf_{\Gamma'_{\rm NN}\in \cF_{\rm NN}^{\Gamma}} \EE_{\cJ_2} \bigg[\frac{1}{n} \sum_{i=n+1}^{2n} \bigg(\|\Gamma'_{\rm NN}\circ E_{\cX}^n \circ\cSX(u_i) -\cSY\circ\Psi(u_i)\|_{\cSY}^2 \nonumber \\
        &-2\left\langle \Gamma_{\rm NN}\circ E_{\cX}^n \circ \cSX(u_i) -\cSY(v_i), \cSY(\epsilon_i) \right\rangle_{\cSY}\bigg) \Big| \cJ_1 \bigg]
        +\EE_{\cJ_2} \bigg[ \frac{4}{n} \sum_{i=n+1}^{2n}\left\langle \Gamma_{\rm NN}^n\circ E_{\cX}^n\circ \cSX(u_i), \cSY(\epsilon_i)\right\rangle_{\cSY} \Big| \cJ_1 \bigg]\bigg] \nonumber\\
        =&\underbrace{\EE_{\cJ_1}\bigg[ 2\inf_{\Gamma'_{\rm NN}\in \cF_{\rm NN}^{\Gamma}} \EE_{\rm u\sim \gamma} \bigg[ \|\Gamma'_{\rm NN}\circ E_{\cX}^n \circ \cSX(u) -\cSY\circ\Psi(u)\|_{\cSY}^2 \Big| \cJ_1 \bigg]\bigg]}_{\rm T_{1a}} \nonumber\\
        & +\underbrace{\EE_{\cJ} \bigg[ \frac{4}{n} \sum_{i=n+1}^{2n}\left\langle \Phi^n_{\rm NN}\circ \cSX(u_i), \cSY(\epsilon_i)\right\rangle_{\cSY}  \bigg]}_{\rm T_{1b}}.
        \label{eq:T1:err:1}
    \end{align}
    Term ${\rm T_{1a}}$ captures the approximation error and term ${\rm T_{1b}}$ captures the stochastic error. We will analyze ${\rm T_{1a}}$ and ${\rm T_{1b}}$ separately.

\paragraph{Bounding ${\rm T_{1a}}$.}
We first focus on ${\rm T_{1a}}$ in (\ref{eq:T1:err:1}) and derive an upper bound using approximation results of $\widetilde{\fb},\widetilde{\gb}$ and $\Gamma$. We define the $\xi$-neighbourhood of a set as follows:
\begin{definition}
        For a set $\Xi\in [-M,M]^D$ for some $M>0$, the $\xi$-neighbourhood containing $\Xi$ is defined as the set
        \begin{align}
        T_{\xi}(\Xi)=\{\zb|\inf_{\zb_1\in\Xi}\|\zb-\zb_1\|_{\infty}\leq \xi\}.
        \end{align}
    \end{definition}

We next show that for any  $\xi\geq0$, the function $\widetilde{\fb}$ defined in Lemma \ref{lemma:f1g1lip} extended to $\RR^{D_1}$ according to Lemma \ref{kem:extension} can be well approximated by a network on $T_{\xi}(\widetilde{\cM})$.

 The following lemma shows that we can approximate $\CL(\widetilde{\fb})$ well on a $\xi$-neighbourhood of $\widetilde{\cM}$, where $\CL(\cdot)$ is the clipping operator defined in (\ref{eq:clip}). 

    \begin{lemma}\label{lem:Ex:1}
        For any $\varepsilon\in(0,1/4)$ and $\xi\geq 0$, 
        set network architectures $ 
    \cF_{\rm NN}(D_1,d,L_{1},p_{1},K_{1},\kappa_{1},M_{1})$ with
    \begin{align*}
    &L_{1}=O(\log \varepsilon^{-1}),\ p_{1}=O(\varepsilon_1^{-d}),\
    K_{1}=O(\varepsilon_1^{-d}), \ \kappa_{1}=O(\varepsilon_1^{-7}), \ M_{1}=1.
    \end{align*}
    For any Lipschitz function $\widetilde{\fb}:\widetilde{\cM}\rightarrow [-1,1]^d$ extended to domain $[-R_{\cX},R_{\cX}]^D$ according to Lemma \ref{kem:extension}, with Lipscthiz constant bounded by $L_{\fb}$, there exists $\widetilde{\fb}_{\rm NN}\in \cF_{\rm NN}(D_1,d,L_{1},p_{1},K_{1},\kappa_{1},M_{1})$ such that
    \begin{align}
        &\sup_{\widetilde{u}\in T_{\xi}(\widetilde{\cM})}\|\widetilde{\fb}_{\rm NN}(\widetilde{u})- \CL\circ\widetilde{\fb}(\widetilde{u})\|_{\infty}\leq CD(\varepsilon+\xi),
    \end{align}
    for some absolute constant $C$. The constant hidden in $O(\cdot)$ depends on $d, L_{\fb}, \xi,R_{\cX}$ and is polynomial in $D_1$. 
    \end{lemma}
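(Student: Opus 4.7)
The plan is to first establish the approximation on $\widetilde{\cM}$ via Lemma \ref{lem:Ex:0}, and then transfer the bound to the $\xi$-neighborhood $T_\xi(\widetilde{\cM})$ by playing off the Lipschitz constants of the network and the target. I would begin by invoking Lemma \ref{kem:extension} to extend $\widetilde{\fb}$ from $\widetilde{\cM}$ to $[-R_{\cX},R_{\cX}]^{D_1}$ while preserving the Lipschitz constant $2L_{\fb}$ in $\|\cdot\|_{\cSX}$. Since the quadrature weights in \eqref{eqdiscreteinnerproduct} satisfy $\sum_i\omega_i\leq|\Omega_{\cX}|$, one has $\|\cdot\|_{\cSX}\leq\sqrt{|\Omega_{\cX}|}\,\|\cdot\|_\infty$; combined with $\|\cdot\|_\infty\leq\|\cdot\|_2$ on the output side and the fact that the clipping operator $\CL$ defined in \eqref{eq:clip} is componentwise nonexpansive, the clipped extension $\CL\circ\widetilde{\fb}:[-R_{\cX},R_{\cX}]^{D_1}\to[-1,1]^d$ is $\ell^\infty$-to-$\ell^\infty$ Lipschitz with constant $L^{*}:=2L_{\fb}\sqrt{|\Omega_{\cX}|}$.

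Next, after affinely rescaling the cube to $[0,1]^{D_1}$ and using $d_M\widetilde{\cM}\leq d$ from Lemma \ref{lemma:tildemmindim}, I would apply Lemma \ref{lem:Ex:0} componentwise to $\CL\circ\widetilde{\fb}$ with target accuracy $\varepsilon$. Concatenating the resulting $d$ scalar networks, absorbing the rescaling into the first-layer weights/biases, and appending a coordinatewise clipping layer (realizable with two ReLU units per output) yields $\widetilde{\fb}_{\rm NN}\in\cF_{\rm NN}(D_1,d,L_{1},p_{1},K_{1},\kappa_{1},1)$ with parameters matching the stated bounds, and satisfying $\sup_{\widetilde u\in\widetilde{\cM}}\|\widetilde{\fb}_{\rm NN}(\widetilde u)-\CL\circ\widetilde{\fb}(\widetilde u)\|_\infty\leq\varepsilon$.

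To transfer this to $T_\xi(\widetilde{\cM})$, for any $\widetilde u\in T_\xi(\widetilde{\cM})$ choose a nearest $\widetilde u_0\in\widetilde{\cM}$ with $\|\widetilde u-\widetilde u_0\|_\infty\leq\xi$. The triangle inequality then bounds $\|\widetilde{\fb}_{\rm NN}(\widetilde u)-\CL\circ\widetilde{\fb}(\widetilde u)\|_\infty$ by $\|\widetilde{\fb}_{\rm NN}(\widetilde u)-\widetilde{\fb}_{\rm NN}(\widetilde u_0)\|_\infty + \varepsilon + L^{*}\xi$. The residual first piece requires a Lipschitz estimate on $\widetilde{\fb}_{\rm NN}$. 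Opening up the Nakada--Imaizumi construction used to prove Lemma \ref{lem:Ex:0}, the network is built as a partition-of-unity sum of the (nearly constant) target values at centers of an $\varepsilon$-scale grid of cubes intersecting $\widetilde{\cM}$, with only $O(D_1)$ pieces active at any input and with adjacent piece values differing by $O(L^{*}\varepsilon)$ across an $\varepsilon$-scale transition. Telescoping these local differences yields a global Lipschitz bound $L_{\widetilde{\fb}_{\rm NN}}\leq C_0 D_1 L^{*}$, so the first piece is at most $C_0 D_1 L^{*}\xi$. Absorbing $L^{*}$, $|\Omega_{\cX}|$, and $C_0$ into a single $C$ gives the claimed bound $CD_1(\varepsilon+\xi)$ with $D=D_1$.

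The main obstacle is precisely the Lipschitz estimate on $\widetilde{\fb}_{\rm NN}$: a direct bound via the product of weight operator norms gives $(\kappa_1 p_1)^{L_1}$, which explodes under the prescribed parameters. Overcoming this requires careful inspection of the partition-of-unity structure in Lemma \ref{lem:Ex:0} to show that only a bounded number of pieces are simultaneously active and that values of adjacent pieces differ only by $O(L^{*}\varepsilon)$; the sole dimensional factor arises because an $\ell^\infty$-step can cross up to $D_1$ cube boundaries simultaneously, producing the $D_1$ multiplier in the final estimate. Because this modification affects only the way we bound the Lipschitz seminorm and not the network's weight bookkeeping, the architecture parameters $L_1,p_1,K_1,\kappa_1$ stay as stated.
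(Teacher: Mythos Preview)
Your approach differs from the paper's, and the key Lipschitz step has a gap. The paper does not build the network on $\widetilde{\cM}$ and then transfer the bound; instead it reopens the Nakada--Imaizumi construction and modifies the cover itself. The original construction covers $\widetilde{\cM}$ by grid cubes $\{\Lambda_{k,r}\}$ of diameter $r=\varepsilon/(3MD)$; the paper keeps the same centers but inflates each cube to diameter $r+2\xi$, so that $T_\xi(\widetilde{\cM})\subset\bigcup_k\Lambda_{k,r+2\xi}$. The remainder of the Nakada--Imaizumi analysis then runs unchanged on these larger cubes, and the per-cube error (proportional to cube diameter times the ambient-dimension factor) yields the $CD(\varepsilon+\xi)$ bound directly on $T_\xi(\widetilde{\cM})$. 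The $\xi$-dependence of the architecture constants in the statement comes precisely from this enlargement.

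Your telescoping argument for $L_{\widetilde{\fb}_{\rm NN}}\leq C_0 D_1 L^{*}$ is only valid while the path stays inside the union of cubes belonging to the cover: there, neighboring cubes both carry center values $f(c_k),f(c_{k'})$ differing by $O(L^{*}r)$, and your picture is correct. The problem is at the boundary of the cover. That cover was chosen to contain $\widetilde{\cM}$, not $T_\xi(\widetilde{\cM})$; once $\xi$ exceeds the cube scale $r=O(\varepsilon/D_1)$---and in the paper's application one has $\xi=\varepsilon_1$, so $\xi/r\asymp D_1$---the segment from $\widetilde u\in T_\xi(\widetilde{\cM})$ to its footpoint $\widetilde u_0\in\widetilde{\cM}$ can leave the cover. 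At that edge the max (or partition-of-unity) structure no longer transitions between two nearby center values; it falls off toward the default floor over a width of order $r$, so the local slope is $O(M/r)=O(MD_1/\varepsilon)$, not $O(D_1 L^{*})$. Plugging that into your triangle inequality produces a term of order $MD_1\xi/\varepsilon$, which does not give $CD_1(\varepsilon+\xi)$. Since the network you build via Lemma~\ref{lem:Ex:0} never uses $\xi$, there is no mechanism in your construction to prevent this; the paper's inflation of the cubes by $2\xi$ is exactly what absorbs the tube into the region where the approximation is controlled.
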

    Lemma \ref{lem:Ex:1} is proved in Appendix \ref{proof:Ex:1}.

    Let $\varepsilon_1, \xi\in(0,1/4)$.
    By Lemma \ref{lem:Ex:1}, set the network architecture $ 
    \bar{\cF}_{\rm NN}^{E_{\cX}}=\cF_{\rm NN}(D_1,d,L_{5},p_{5},K_{5},\kappa_{5},M_{5})$ with
    \begin{align*}
    &L_{5}=O(\log \varepsilon_1^{-1}),\ p_{5}=O(\varepsilon_1^{-d}),\
    K_{5}=O(\varepsilon_1^{-d}), \ \kappa_{5}=O(\varepsilon_1^{-7}), \ M_{5}=1.
    \end{align*}
There exists $\bar{\fb}_{\rm NN}\in \bar{\cF}_{\rm NN}^{E_{\cX}}$ satisfying
    \begin{align}
        \sup_{\widetilde{u}\in T_{\xi}(\widetilde{\cM})}\|\bar{\fb}_{\rm NN}(\widetilde{u})- \CL\circ\widetilde{\fb}(\widetilde{u})\|_{\infty}\leq C_3(\varepsilon_1+\xi)
        \label{eq:fgp:err}
    \end{align} 
    for some $C_3$ linear in $D_1$, where $\CL(\cdot)$ is the clipping operator defined in (\ref{eq:clip}).

    By Theorem \ref{thm:approximation}, set the network architecture $\bar{\cF}_{\rm NN}^{\Gamma}=\cF_{\rm NN}(d,D_2,L_{6},p_{6},K_{6},\kappa_{6},M_{6})$ with
    \begin{align*}
        &L_{6}=O(\log \varepsilon_1^{-1}), \ p_{6}=O(\varepsilon_1^{-d}), \ K_{6}=O(\varepsilon_1^{-d}\log \varepsilon_1^{-1}), \ \kappa_{6}= O(\varepsilon_1^{-1}),\  M_{6}=R_{\cY}.
    \end{align*}
    There exists $\bar{\Gamma}_{\rm NN}\in \bar{\cF}_{\rm NN}^{\Gamma}$
    satisfying 
    \begin{align*}
        \sup_{\zb\in[-1,1]^d}\|\bar{\Gamma}_{\rm NN}(\zb)- \Gamma(\zb)\|_{\infty}\leq \varepsilon_1.
    \end{align*} 
    The constant hidden in $O(\cdot)$ depends on $d,L_{\Psi},L_{\gb}$ and is linear in $D_2$.

    Define the network
    \begin{align*}
        \cF_{\rm NN}^{\Gamma}=\big\{\Gamma'_{\rm NN}:[-1,1]^d\rightarrow[-R_{\cY},R_{\cY}]^{D_2}|&\ \Gamma'_{\rm NN}=\bar{\Gamma}'_{\rm NN}\circ\bar{\fb}'_{\rm NN}\circ\widetilde{\gb}'_{\rm NN} \ \\
        &\mbox{ for } \ \bar{\Gamma}'_{\rm NN}\in \bar{\cF}_{\rm NN}^{\Gamma}, \ \bar{\fb}'_{\rm NN}\in \bar{\cF}_{\rm NN}^{E_{\cX}}, \ \widetilde{\gb}'_{\rm NN}\in \cF_{\rm NN}^{D_{\cX}}\big\}
    \end{align*}
    where $\cF_{\rm NN}^{D_{\cX}}$ is given in (\ref{eq.thm2.FD}).
    
    We have $\cF_{\rm NN}^{\Gamma}\in \cF_{\rm NN}(d,D_2,L_7,p_7,K_7,\kappa_7,M_7)$ with
    \begin{align*}
         &L_7=O\left(\log (\varepsilon^{-1}+\varepsilon_1^{-1})\right), \ p_7=O\left(\varepsilon^{-d}+\varepsilon_1^{-d}\right), \ K_7=O\left((\varepsilon^{-d}+\varepsilon_1^{-d})\log (\varepsilon^{-1}+\varepsilon_1^{-1}) \right),\\
         &\kappa_7=O\left(\varepsilon^{-1}+ \varepsilon_1^{-7}\right), \ M_7=R_{\cY}
    \end{align*}
    and $\bar{\Gamma}_{\rm NN}\circ \bar{\fb}_{\rm NN}\circ D_{\cX}^n\in \cF_{\rm NN}^{\Gamma}$. We thus have  
    \begin{align}
        {\rm T_{1a}}=&\EE_{\cJ_1}\bigg[ 2\inf_{\Gamma'_{\rm NN}\in \cF_{\rm NN}^{\Gamma}} \EE_{\rm u\sim \gamma} \bigg[ \|\Gamma'_{\rm NN}\circ E_{\cX}^n \circ \cSX(u) -\cSY\circ\Psi(u)\|_{\cSY}^2 \Big| \cJ_1 \bigg] \nonumber\\
        \leq & \EE_{\cJ_1}\left[2\EE_{u\sim \gamma} \left[ \|\bar{\Gamma}_{\rm NN}\circ \bar{\fb}_{\rm NN}\circ D_{\cX}^n\circ E_{\cX}^n\circ\cSX(u)-\cSY\circ\Psi(u)\|_{\cS_{\cY}}^2\Big|\cJ_1 \right]\right] \nonumber\\
        = & \EE_{\cJ}\left[2\EE_{u\sim \gamma} \left[ \|\bar{\Gamma}_{\rm NN}\circ \bar{\fb}_{\rm NN}\circ D_{\cX}^n\circ E_{\cX}^n\circ \cSX(u)-\Gamma\circ \widetilde{\fb}\circ \cSX(u)\|_{\cS_{\cY}}^2 \right] \right] \nonumber\\
        \leq& \EE_{\cJ}\Big[2\EE_{u\sim \gamma} \Big[ 3\|\bar{\Gamma}_{\rm NN}\circ \bar{\fb}_{\rm NN}\circ D_{\cX}^n\circ E_{\cX}^n\circ \cSX(u)-\Gamma\circ \bar{\fb}_{\rm NN}\circ D_{\cX}^n\circ E_{\cX}^n\circ \cSX(u)\|_{\cS_{\cY}}^2 \nonumber\\
        &+ 3\|\Gamma\circ \bar{\fb}_{\rm NN}\circ D_{\cX}^n\circ E_{\cX}^n\circ \cSX(u)-\Gamma\circ (\CL\circ\widetilde{\fb})\circ D_{\cX}^n\circ E_{\cX}^n\circ \cSX(u)\|_{\cS_{\cY}}^2 \nonumber\\
        & + 3\|\Gamma\circ (\CL\circ\widetilde{\fb})\circ D_{\cX}^n\circ E_{\cX}^n\circ \cSX(u)-\Gamma\circ \widetilde{\fb}\circ \cSX(u)\|_{\cS_{\cY}}^2\Big]\Big] \nonumber\\
        \leq & \EE_{\cJ}\Big[2\EE_{u\sim \gamma} \Big[ 3|\Omega_{\cY}|\varepsilon_1^2 \nonumber\\
        &+ 12L_{\Psi}^2L_{\gb}^2\|\bar{\fb}_{\rm NN}\circ D_{\cX}^n\circ E_{\cX}^n\circ \cSX(u)-(\CL\circ\widetilde{\fb})\circ D_{\cX}^n\circ E_{\cX}^n\circ \cSX(u)\|_{2}^2 \nonumber\\
        & + 12L_{\Psi}^2L_{\gb}^2\| (\CL\circ\widetilde{\fb})\circ D_{\cX}^n\circ E_{\cX}^n\circ \cSX(u)-\widetilde{\fb}\circ \cSX(u)\|_{2}^2\Big]\Big], 
        \label{eq:T1:term1:1}
    \end{align}
    where we used Lemma \ref{lem:GammaLip} in the last inequality.

To derive an upper bound of (\ref{eq:T1:term1:1}), denote 
\begin{align*}
    &{\rm I}=12L_{\Psi}^2L_{\gb}^2\|\bar{\fb}_{\rm NN}\circ D_{\cX}^n\circ E_{\cX}^n\circ \cSX(u)-(\CL\circ\widetilde{\fb})\circ D_{\cX}^n\circ E_{\cX}^n\circ \cSX(u)\|_{2}^2,\\
    &{\rm II}=12L_{\Psi}^2L_{\gb}^2\| (\CL\circ\widetilde{\fb})\circ D_{\cX}^n\circ E_{\cX}^n\circ \cSX(u)-\widetilde{\fb}\circ \cSX(u)\|_{2}^2.
\end{align*}
We have
\begin{align*}
    {\rm I}\leq &12L_{\Psi}^2L_{\gb}^2(64dR_{\cX}^2L_{\fb}^2)= 768dL_{\Psi}^2L_{\gb}^2L_{\fb}^2R_{\cX}^2,\\
    {\rm II}\leq & 48L_{\Psi}^2L_{\gb}^2d.
\end{align*}
When $\| D_{\cX}^n\circ E_{\cX}^n\circ \cSX(u)- \cSX(u)\|_{\infty}^2\leq \xi^2$, we have $D_{\cX}^n\circ E_{\cX}^n\circ \cSX(u)\in T_{\xi}(\widetilde{\cM})$, 
and by (\ref{eq:fgp:err}), 
\begin{align*}
    {\rm I}&\leq 24C_3^2L_{\Psi}^2L_{\gb}^2(\varepsilon_1^2+\xi^2),\\
    {\rm II}&\leq12L_{\Psi}^2L_{\gb}^2\| (\CL\circ\widetilde{\fb})\circ D_{\cX}^n\circ E_{\cX}^n\circ \cSX(u)-\CL\circ\widetilde{\fb}\circ \cSX(u)\|_{2}^2\\
    &\leq96L_{\Psi}^2L_{\gb}^2L_{\fb}^2
    \| D_{\cX}^n\circ E_{\cX}^n\circ \cSX(u)-\cSX(u)\|_{\cSX}^2  \leq 96L_{\Psi}^2L_{\gb}^2L_{\fb}^2\xi^2.
\end{align*}

We thus have
\begin{align}
    \EE_{\cJ_1}\EE_{u\sim \gamma} \left[ {\rm I}+ {\rm II} \right] \leq &\PP(\| D_{\cX}^n\circ E_{\cX}^n\circ \cSX(u)- \cSX(u)\|_{\infty}^2\leq \xi^2) (24C_3^2L_{\Psi}^2L_{\gb}^2+ 96L_{\Psi}^2L_{\gb}^2L_{\fb}^2)(\varepsilon_1^2+\xi^2) \nonumber\\
    &+ \PP(\| D_{\cX}^n\circ E_{\cX}^n\circ \cSX(u)- \cSX(u)\|_{\infty}^2\geq \xi^2)(768|\Omega_{\cX}|L_{\Psi}^2L_{\gb}^2L_{\fb}^2R_{\cX}^2+ 48L_{\Psi}^2L_{\gb}^2) \nonumber\\
    \leq& (24C_3^2L_{\Psi}^2L_{\gb}^2+ 96L_{\Psi}^2L_{\gb}^2L_{\fb}^2)(\varepsilon_1^2+\xi^2) \nonumber\\
    &+(768|\Omega_{\cX}|L_{\Psi}^2L_{\gb}^2L_{\fb}^2R_{\cX}^2+ 48L_{\Psi}^2L_{\gb}^2)\frac{\EE\left[ \| D_{\cX}^n\circ E_{\cX}^n\circ \cSX(u)- \cSX(u)\|_{\infty}^2 \right]}{\xi^2}  \nonumber\\
    \leq &(24C_3^2L_{\Psi}^2L_{\gb}^2+ 96L_{\Psi}^2L_{\gb}^2L_{\fb}^2)(\varepsilon_1^2+\xi^2) \nonumber\\
    &+(768|\Omega_{\cX}|L_{\Psi}^2L_{\gb}^2L_{\fb}^2R_{\cX}^2+ 48L_{\Psi}^2L_{\gb}^2)\frac{C_2n^{-\frac{2}{2+d}}\log^3 n}{\xi^2},
    \label{eq:IandII:0}
\end{align}
where we used (\ref{eq:autoXGene:markov}) in the last two inequalities.

Set $\varepsilon_1^2=\xi^2=n^{-\frac{1}{2+d}}$ with $n>4^{2+d}$, (\ref{eq:IandII:0}) becomes
\begin{align}
    \EE_{\cJ_1}\EE_{u\sim \gamma} \left[ {\rm I}+ {\rm II} \right] \leq C_4 n^{-\frac{1}{2+d}} \log^3n,
    \label{eq:IandII}
\end{align}
for some $C_4$ depending on $d,L_{\fb},L_{\gb},L_{\Psi},R_{\cX},R_{\cY},|\Omega_{\cX}|$ and is polynomial in $D_1$ and is linear in $D_2$.

Substituting (\ref{eq:IandII}) and $\varepsilon_1^2=n^{-\frac{1}{2+d}}$ into (\ref{eq:T1:term1:1}) gives rise to
\begin{align}
    {\rm T_{1a}}
        \leq  C_5 n^{-\frac{1}{2+d}}\log^3n,
        \label{eq:T1a:err}
\end{align}
for some $C_5$ depending on $d,L_{\fb},L_{\gb},L_{\Psi},R_{\cX},R_{\cY},|\Omega_{\cX}|,|\Omega_{\cY}|$ and is polynomial in $D_1$ and is linear in $D_2$.

The resulting network architecture is $\cF_{\rm NN}^{\Gamma}\in \cF_{\rm NN}(d,D_2,L_7,p_7,K_7,\kappa_7,M_7)$ with
    \begin{align*}
         &L_7=O\left(\log (\varepsilon^{-1})\right), \ p_7=O\left(\varepsilon^{-d}\right), \ K_7=O\left(\varepsilon^{-d}\log \varepsilon^{-1} \right),\ \kappa_7=O\left( \varepsilon^{-7/2}\right), \ M_7=R_{\cY}
    \end{align*}
    for $\varepsilon=n^{-\frac{2}{2+d}}$.

\paragraph{Bounding ${\rm T_{1b}}$.}

 Define the network architecture
 \begin{align*}
     \cF_{\rm NN}^{\Phi}=\{\Phi'_{\rm NN}| \Phi'_{\rm NN}=\Gamma'_{\rm NN}\circ E'_{\rm NN} \ \mbox{ for } \ \Gamma'_{\rm NN}\in \cF_{\rm NN}^{\Gamma},\ E'_{\rm NN}\in \cF_{\rm NN}^{E_{\cX}} \}
 \end{align*}
 and denote 
 $$
 \|\Phi'_{\rm NN}\|_n^2=\frac{1}{n}\sum_{i=n+1}^{2n} \|\Phi'_{\rm NN}\circ\cSX(u_i)\|_{\cSY}^2. 
 $$ 
An upper bound of ${\rm T_{1b}}$ is given by the following lemma (see a proof in Appendix \ref{sec:T1:var}):
\begin{lemma}\label{lemma:T1:var}
    Under conditions of Theorem \ref{thm:generalization}, for any $\delta>0$, we have 
    \begin{align}
    &\EE_{\cJ} \left[\frac{1}{n}\sum_{i=n+1}^{2n}\left\langle \Phi^n_{\rm NN}\circ \cSX(u_i), \cSY(\epsilon_i)\right\rangle_{\cSY}\right] \nonumber\\
    \leq& 2 \sqrt{|\Omega_{\cY}|}\sigma\left(\sqrt{ \EE_{\cJ}\left[ \| \Phi^n_{\rm NN}-\Phi\|_{n}^2\right]}+ \sqrt{D_2}\delta \right)\sqrt{\frac{4 \log \cN(\delta,\cF_{\rm NN}^{\Phi},\|\cdot\|_{L^{\infty,\infty}}) + 6}{n}} + |\Omega_{\cY}|\delta\sigma.
    \label{eq:T1:var:4}
\end{align}
\end{lemma}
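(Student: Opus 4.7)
The plan is to bound $\EE[G(\Phi^n_{\rm NN})]$, where
$G(\Phi')=\tfrac{1}{n}\sum_{i=n+1}^{2n}\langle(\Phi'-\Phi)\circ\cSX(u_i),\cSY(\epsilon_i)\rangle_{\cSY}$,
by combining a covering argument on $\cF_{\rm NN}^{\Phi}$ with sub-Gaussian concentration of the noise, and then to use a ratio-process together with Cauchy--Schwarz to produce the localized $\sqrt{\EE\|\Phi^n_{\rm NN}-\Phi\|_n^2}$ factor appearing in the statement. The first observation is a centering step: by Assumption \ref{assumption:noise}, each $\cSY(\epsilon_i)_k$ is mean-zero sub-Gaussian and independent of $u_i$, so $\EE[\langle\Phi(\cSX(u_i)),\cSY(\epsilon_i)\rangle_{\cSY}]=0$, which lets me replace $\Phi^n_{\rm NN}$ by $\Phi^n_{\rm NN}-\Phi$ inside $T_{1b}$ without changing its value.

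Next I take a $\delta$-cover $\{\Phi^j\}_{j=1}^{N}$ of $\cF_{\rm NN}^{\Phi}$ in $\|\cdot\|_{L^{\infty,\infty}}$ with $N=\cN(\delta,\cF_{\rm NN}^{\Phi},\|\cdot\|_{L^{\infty,\infty}})$, and for each realization of $\Phi^n_{\rm NN}$ pick a nearest cover element $\Phi^{j^\star}$, so that $\|\Phi^n_{\rm NN}-\Phi^{j^\star}\|_{L^{\infty,\infty}}\le\delta$. Splitting $G(\Phi^n_{\rm NN})=G(\Phi^{j^\star})+[G(\Phi^n_{\rm NN})-G(\Phi^{j^\star})]$, the discretization residual is pointwise at most $\delta\cdot\tfrac{1}{n}\sum_i\sum_k\omega_k|\epsilon_i(\yb_k)|$. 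Taking expectations, using $\sum_k\omega_k=|\Omega_{\cY}|$ together with the sub-Gaussian first-moment bound $\EE|\epsilon_i(\yb_k)|\le C\sigma$, this residual contributes at most the $|\Omega_{\cY}|\delta\sigma$ term of the stated bound.

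For the principal term $\EE[G(\Phi^{j^\star})]$ I condition on $\cJ_1$ and $\{u_i\}_{i=n+1}^{2n}$: each $G(\Phi^j)$ is a linear combination of independent sub-Gaussian entries $\epsilon_i(\yb_k)$, and is therefore sub-Gaussian in $\{\epsilon_i\}$ with variance proxy at most $\sigma^2|\Omega_{\cY}|\|\Phi^j-\Phi\|_n^2/n$ (via the elementary bound $\sum_k\omega_k^2 a_k^2\le(\max_k\omega_k)\|a\|_{\cSY}^2$ and $\max_k\omega_k\le|\Omega_{\cY}|$). Because $j^\star$ is chosen data-adaptively and hence depends on $\{\epsilon_i\}$, a plain union bound over $j$ would only give the global worst-case scale $\sup_{\Phi'\in\cF_{\rm NN}^{\Phi}}\|\Phi'-\Phi\|_n$; I therefore introduce a cutoff $r=\sqrt{D_2}\delta$ and pass to the ratio process $R(\Phi^j)=G(\Phi^j)/(\|\Phi^j-\Phi\|_n+r)$, whose conditional variance proxy is uniformly bounded in $j$ by $\sigma^2|\Omega_{\cY}|/n$. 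A sub-Gaussian maximal inequality then delivers $\EE_\epsilon[\max_j R(\Phi^j)^2\mid\cJ_1,\{u_i\}]\le\sigma^2|\Omega_{\cY}|(4\log N+6)/n$. Writing $G(\Phi^{j^\star})\le(\|\Phi^{j^\star}-\Phi\|_n+r)\max_j|R(\Phi^j)|$, applying Cauchy--Schwarz to the outer expectation, and using the triangle inequality $\|\Phi^{j^\star}-\Phi\|_n\le\|\Phi^n_{\rm NN}-\Phi\|_n+\sqrt{D_2}\delta$ (the second term being the empirical-norm size of the covering residual) yields the leading term of the lemma.

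The main obstacle is precisely this localization step: because $j^\star$ is not measurable with respect to the $u_i$'s alone, the factor $\|\Phi^{j^\star}-\Phi\|_n$ cannot simply be peeled off from a conditional expectation and the maximum collapsed. The ratio-process device with cutoff $r=\sqrt{D_2}\delta$ circumvents this by forcing the conditional scale of the maximum to be uniformly controlled, leaving only the scale $\|\Phi^{j^\star}-\Phi\|_n+r$ outside, which outer Cauchy--Schwarz converts into $\sqrt{\EE\|\Phi^n_{\rm NN}-\Phi\|_n^2}+\sqrt{D_2}\delta$. The remaining work is careful bookkeeping of the quadrature weights $\omega_k$, which produces the explicit prefactors $\sqrt{|\Omega_{\cY}|}$ and $\sqrt{D_2}$ in the final expression.
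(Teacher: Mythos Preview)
Your approach is essentially the same as the paper's: a $\delta$-cover of $\cF_{\rm NN}^{\Phi}$, a residual contributing $|\Omega_{\cY}|\delta\sigma$, a ratio-normalized process whose conditional variance proxy is uniformly $O(|\Omega_{\cY}|\sigma^2/n)$, a sub-Gaussian maximal inequality over the cover, and Cauchy--Schwarz to factor out $\sqrt{\EE\|\Phi^n_{\rm NN}-\Phi\|_n^2}$. The only cosmetic differences are that the paper divides directly by $\sqrt{n}\|\phi_{{\rm NN},j}-\Phi\|_n$ (without your additive cutoff $r$) and obtains $\sqrt{|\Omega_{\cY}|}\,\delta$ rather than $\sqrt{D_2}\,\delta$ in the covering step, since $\|\cdot\|_{\cSY}^2$ uses quadrature weights summing to $|\Omega_{\cY}|$; the $\sqrt{D_2}$ in the stated lemma appears to be a typo that you faithfully reproduced.
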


Substituting (\ref{eq:T1a:err}) and (\ref{eq:T1:var:4}) into (\ref{eq:T1:err:1}) gives rise to 
\begin{align}
    {\rm T_1}=&2\EE_{\cJ}\left[\frac{1}{n}\sum_{i=n+1}^{2n}\|\Phi^n_{\rm NN}\circ \cSX(u_i) -\cSY\circ\Psi(u_i)\|^2_{\cSY}\right] 
    =2\EE_{\cJ}\left[ \| \Phi^n_{\rm NN}-\Phi\|_{n}^2\right] \nonumber\\
    \leq &C_5 n^{-\frac{1}{2+d}}\log^3n + 4|\Omega_{\cY}|\delta\sigma+ 8\sqrt{|\Omega_{\cY}|}\sigma \left(\sqrt{ \EE_{\cJ}\left[ \| \Phi^n_{\rm NN}-\Phi\|_{n}^2\right]}+ \sqrt{|\Omega_{\cY}|}\delta \right)\sqrt{\frac{4 \log \cN(\delta,\cF_{\rm NN}^{\Phi},\|\cdot\|_{L^{\infty,\infty}}) + 6}{n}} .
    \label{eq:T1:bound:1}
\end{align}

In (\ref{eq:T1:bound:1}), the term $\EE_{\cJ}\left[ \| \Phi^n_{\rm NN}-\Phi\|_{n}^2\right]$ appears on both sides. We next derive an upper bound of ${\rm T_1}$ by applying some inequality to (\ref{eq:T1:bound:1}). 

Denote 
\begin{align*}
    &\omega=\sqrt{\EE_{\cJ}\left[ \| \Phi^n_{\rm NN}-\Phi\|_{n}^2\right]},\\
    &a=\frac{C_5 n^{-\frac{1}{2+d}}\log^3n}{2}+2|\Omega_{\cY}|\delta\sigma +4|\Omega_{\cY}|\sigma\delta\sqrt{\frac{4 \log \cN(\delta,\cF_{\rm NN}^{\Phi},\|\cdot\|_{L^{\infty,\infty}}) + 6}{n}},\\
    &b=2\sqrt{|\Omega_{\cY}|}\sigma\sqrt{\frac{4 \log \cN(\delta,\cF_{\rm NN}^{\Phi},\|\cdot\|_{L^{\infty,\infty}}) + 6}{n}}.
\end{align*}
Relation (\ref{eq:T1:bound:1}) implies
\begin{align*}
    \omega^2\leq a+2b\omega.
\end{align*}
We deduce 
\begin{align*}
    (\omega-b)^2\leq a+b^2 \Rightarrow |\omega-b|\leq \sqrt{a+b^2}\leq \sqrt{a}+b.
\end{align*}
When $\omega\geq b$, we have
\begin{align*}
    w-b\leq \sqrt{a}+b \Rightarrow w\leq \sqrt{a}+2b \Rightarrow w^2\leq (\sqrt{a}+2b)^2\leq 2a+8b^2.
\end{align*}
When $\omega<b$, we also have $\omega^2\leq 2a+8b^2$. Substituting the expression of $\omega,a,b$ into the relation $\omega^2\leq 2a+8b^2$, we have
\begin{align}
    {\rm T_1}=2\omega^2 \leq &2C_5 n^{-\frac{1}{2+d}}\log^3n+8|\Omega_{\cY}|\delta\sigma +16|\Omega_{\cY}|\sigma\delta\sqrt{\frac{4 \log \cN(\delta,\cF_{\rm NN}^{\Phi},\|\cdot\|_{L^{\infty,\infty}}) + 6}{n}} \nonumber\\
    &+ 128|\Omega_{\cY}|\sigma^2\frac{2 \log \cN(\delta,\cF_{\rm NN}^{\Phi},\|\cdot\|_{L^{\infty,\infty}}) + 3}{n}.
    \label{eq:T1}
\end{align}

\paragraph{Bounding ${\rm T_2}$.} The upper bound of ${\rm T_2}$ can be derived using the covering number $\cN(\delta,\cF_{\rm NN}^{\Phi},\|\cdot\|_{L^{\infty,\infty}})$ and Bernstein-type inequalities. The upper bound is summarized in the following lemma (see a proof in Appendix \ref{proof:T2}).
\begin{lemma}\label{lem:T2}
    Under conditions of Theorem \ref{thm:generalization}, for any $\delta>0$, we have
    \begin{align}
        {\rm T_2}\leq \frac{48R_{\cY}^2|\Omega_{\cY}|}{n}\log\cN\left(\frac{\delta}{4R_{\cY}|\Omega_{\cY}|},\cF_{\rm NN}^{\Phi},\|\cdot\|_{L^{\infty,\infty}}\right)+ 6\delta.
        \label{eq:T2}
    \end{align}
\end{lemma}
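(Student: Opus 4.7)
\textbf{Proof plan for Lemma \ref{lem:T2}.} The quantity $\rm T_2$ is a standard ``expectation minus twice empirical average'' deviation term over the random class $\cF_{\rm NN}^{\Phi}$, conditional on $\cJ_1$. My plan is to condition on $\cJ_1$ throughout (so that $E_{\cX}^n$ is fixed and $\cF_{\rm NN}^{\Phi}$ is a deterministic class), prove the inequality pointwise in $\cJ_1$, and then take expectation over $\cJ_1$ at the end. Writing $h_{\Phi'}(u,\whv) := \|\Phi'\circ\cSX(u)-\cSY(\whv)\|_{\cSY}^2$ for $\Phi'\in\cF_{\rm NN}^{\Phi}$, the inner integrand of $\rm T_2$ is $\EE[h_{\Phi_{\rm NN}^n}] - 2\cdot \frac1n\sum_{i=n+1}^{2n} h_{\Phi_{\rm NN}^n}(u_i,\whv_i)$ with $\Phi_{\rm NN}^n$ the empirical risk minimizer inside $\cF_{\rm NN}^{\Phi}$ on $\cJ_2$.

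\textbf{Step 1: Boundedness and variance control.} Using $\|\Phi'\|_{L^\infty} \le R_{\cY}$ and $\|\cSY(\whv)\|_{\cSY} \le 2\|v\|_{\cY}+\|\cSY(\epsilon)\|_{\cSY}$ with $v=\Psi(u)$ bounded, I will show $0 \le h_{\Phi'}\le C R_{\cY}^2 |\Omega_{\cY}|$ uniformly (after truncating the noise on a high-probability event, or by working with the sub-Gaussian tail directly). Crucially, I will use the standard variance bound $\Var[h_{\Phi'}] \le C R_{\cY}^2 |\Omega_{\cY}|\cdot \EE[h_{\Phi'}]$, which is the property that allows a Bernstein-type argument to convert the $\sqrt{\log \cN / n}$ rate into the fast $\log \cN / n$ rate that appears in \eqref{eq:T2}.

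\textbf{Step 2: Covering and union bound.} Let $\{\Phi_j'\}_{j=1}^N$ be a minimal $\delta/(4R_{\cY}|\Omega_{\cY}|)$-cover of $\cF_{\rm NN}^{\Phi}$ in $\|\cdot\|_{L^{\infty,\infty}}$. For any $\Phi'\in\cF_{\rm NN}^{\Phi}$ there is a $\Phi_j'$ with $\|\Phi'-\Phi_j'\|_{L^{\infty,\infty}} \le \delta/(4R_{\cY}|\Omega_{\cY}|)$, and a short calculation using the identity $a^2-b^2=(a-b)(a+b)$ together with boundedness of $\|\Phi'\circ\cSX(u)-\cSY(\whv)\|_{\cSY}$ gives $|h_{\Phi'}(u,\whv)-h_{\Phi_j'}(u,\whv)| \le \delta$ pointwise. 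Thus the expectation and empirical quantities for $\Phi'$ and $\Phi_j'$ differ by at most $\delta$ each, contributing the $6\delta$ slack in \eqref{eq:T2} (the factor $3=1+2$ from the ``expectation vs.\ twice the empirical'' form, times $2$ from the union bound slack).

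\textbf{Step 3: Bernstein step on each cover element.} For each fixed $\Phi_j'$, I will apply a one-sided Bernstein inequality to the i.i.d.\ bounded random variables $h_{\Phi_j'}(u_i,\whv_i) - \EE[h_{\Phi_j'}]$. Combined with the variance bound from Step 1 (so that $\Var \le B\cdot \EE[h_{\Phi_j'}]$ with $B = O(R_{\cY}^2 |\Omega_{\cY}|)$), and the algebraic trick $2\sqrt{xy}\le x + y$, the probability that $\EE[h_{\Phi_j'}] - 2\cdot \frac1n\sum_i h_{\Phi_j'}(u_i,\whv_i) > t$ is at most $\exp(-nt/(C R_{\cY}^2|\Omega_{\cY}|))$ for a universal constant $C$. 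A union bound over the cover turns this into $N\exp(-nt/(CR_{\cY}^2|\Omega_{\cY}|))$, and integrating the tail gives
\[
\EE_{\cJ_2}\!\left[\sup_{\Phi'\in\cF_{\rm NN}^{\Phi}}\bigl(\EE[h_{\Phi'}] - 2\tfrac1n\textstyle\sum_i h_{\Phi'}(u_i,\whv_i)\bigr)\right]
\le \frac{48R_{\cY}^2|\Omega_{\cY}|}{n}\log N + 6\delta,
\]
with the constants $48$ and $6$ tracked explicitly. Since $\Phi_{\rm NN}^n\in\cF_{\rm NN}^{\Phi}$, the expression inside the supremum dominates the $\cJ_2$-integrand of $\rm T_2$, yielding \eqref{eq:T2} after taking expectation over $\cJ_1$.

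\textbf{Main obstacle.} The delicate point is that $\whv$ contains unbounded sub-Gaussian noise, so the random variable $h_{\Phi'}(u,\whv)$ is not uniformly bounded, which interferes with a direct Bernstein bound. I will handle this either by truncating $\cSY(\epsilon)$ at $O(\sigma\sqrt{\log n})$ with negligible probability (so the truncated variable satisfies the Bernstein assumption and the untruncated tail contributes at most $o(1/n)$ to the expectation), or by using a sub-exponential Bernstein inequality directly on $h_{\Phi'}$ with moment bounds derived from the sub-Gaussianity of the noise. Either route preserves the stated form of the bound, with the constants $48$ and $6$ arising from the truncation/Bernstein constants combined with the $3$-fold slack in Step 2.
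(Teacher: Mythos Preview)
Your main misstep is a misreading of the integrand: ${\rm T_2}$ is defined with the \emph{clean} output $\widetilde v=\cSY\circ\Psi(u)$, not the noisy $\cSY(\whv)$. Indeed, the decomposition \eqref{eq:gene:err} splits the generalization error $\EE_{\cJ}\EE_u\|\Phi_{\rm NN}^n(\widetilde u)-\widetilde v\|_{\cSY}^2$ with $\widetilde v=\cSY(v)$; the noisy targets $\cSY(\widehat v_i)$ appear nowhere in ${\rm T_2}$, only implicitly through the dependence of $\Phi_{\rm NN}^n$ on $\cJ_2$. Consequently your ``main obstacle'' evaporates: the relevant loss $\widehat g(u)=\|\Phi'\circ\cSX(u)-\cSY\circ\Psi(u)\|_{\cSY}^2$ is uniformly bounded by $4R_{\cY}^2|\Omega_{\cY}|$ (since $\|\Phi'\|_{L^\infty}\le R_{\cY}$, $\|\Psi(u)\|_{L^\infty}\le R_{\cY}$, and $\|\cdot\|_{\cSY}^2\le |\Omega_{\cY}|\|\cdot\|_\infty^2$), so no truncation or sub-exponential Bernstein is needed. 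A minor side point: $\cF_{\rm NN}^{\Phi}$ is already a deterministic class (it does not involve $E_{\cX}^n$), so the conditioning on $\cJ_1$ is harmless but unnecessary for fixing the class.

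With that correction, your overall plan (variance bound $\Var[\widehat g]\le 4R_{\cY}^2|\Omega_{\cY}|\cdot\EE[\widehat g]$, covering with Lipschitz estimate $|h_{\Phi'}-h_{\Phi'_j}|\le 4R_{\cY}|\Omega_{\cY}|\|\Phi'-\Phi'_j\|_{L^{\infty,\infty}}$, then a Bernstein-type concentration over the cover) is essentially the paper's. The paper executes Step~3 slightly differently: rather than one-sided Bernstein plus tail integration, it symmetrizes with a ghost sample, passes to the cover (incurring the $6\delta$ slack exactly as in your Step~2), and then bounds the resulting maximum over cover elements via the moment generating function, choosing $t=n/(24R_{\cY}^2|\Omega_{\cY}|)$ so that the variance term cancels. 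This MGF route is what produces the explicit constant $48$; your tail-integration route would also work but tracking the constant to match $48$ would require more care.
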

\paragraph{Putting ${\rm T_1}$ and ${\rm T_2}$ together.} Substituting (\ref{eq:T1}) and (\ref{eq:T2}) into (\ref{eq:gene:err}), we have
\begin{align}
    &\EE_{\cJ}\EE_{u\sim \gamma} \|\Phi^n_{\rm NN}\circ \cSX(u) -\cSY\circ\Psi(u)\|^2_{\cSY} \nonumber\\
    \leq & 2C_5 n^{-\frac{1}{2+d}}\log^3n+(8|\Omega_{\cY}|+6)\delta\sigma +16|\Omega_{\cY}|\sigma\delta\sqrt{\frac{4 \log \cN(\delta,\cF_{\rm NN}^{\Phi},\|\cdot\|_{L^{\infty,\infty}}) + 6}{n}} +6\delta\nonumber\\
    &+ 64|\Omega_{\cY}|\sigma^2\frac{2 \log \cN(\delta,\cF_{\rm NN}^{\Phi},\|\cdot\|_{L^{\infty,\infty}}) + 3}{n} +\frac{48R_{\cY}^2|\Omega_{\cY}|}{n}\log\cN\left(\frac{\delta}{4R_{\cY}|\Omega_{\cY}|},\cF_{\rm NN}^{\Phi},\|\cdot\|_{L^{\infty,\infty}}\right).
    \label{eq:gene:error:1}
\end{align}
The network architecture satisfies $\cF_{\rm NN}^{\Phi}\subset \cF_{\rm NN}(D_1,D_2,L_8,p_8,K_8,\kappa_8,M_8)$ with
\begin{align}
    L_8=O(\log \varepsilon^{-1}), \ p_8=O(\varepsilon^{-d}), \ K_8=O(\varepsilon^{-d}\log \varepsilon^{-1}),\ \kappa_8=O(\varepsilon^{-7}), \ M_8=R_{\cY}.
    \label{eq:err:Phi:para}
\end{align}
Substituting (\ref{eq:err:Phi:para}) with $\varepsilon=n^{-\frac{1}{2+d}}, \delta=n^{-1}$ into Lemma \ref{lem:covering}, we get
\begin{align}
    &\log \cN\left(\delta,\cF_{\rm NN}^{\Phi},\|\cdot\|_{L^{\infty,\infty}}\right)\leq C_6 \varepsilon^{-d}\log^2 \varepsilon^{-1} (\log \varepsilon^{-1}+\log \delta^{-1})\leq2C_6n^{\frac{d}{2+d}}\log^3 n,
    \label{eq:err:Phi:logCover}
\end{align}
for some $C_6$ depending on $d,L_{\fb},L_{\gb},L_{\Psi},R_{\cX},R_{\cY},|\Omega_{\cX}|,|\Omega_{\cY}|$ and is polynomial in $D_1$ and is linear in $D_2$.

Substituting (\ref{eq:err:Phi:logCover}) into (\ref{eq:gene:error:1}) gives rise to
\begin{align*}
    \EE_{\cJ}\EE_{u\sim \gamma} \|\Phi^n_{\rm NN}\circ \cSX(u) -\cSY\circ\Psi(u)\|^2_{\cSY} \leq C_7(1+\sigma^2) n^{-\frac{1}{2+d}}\log^3 n,
\end{align*}
for some $C_7$ depending on $d,L_{\fb},L_{\gb},L_{\Psi},R_{\cX},R_{\cY},|\Omega_{\cX}|,|\Omega_{\cY}|$ and is polynomial in $D_1$ and is linear in $D_2$.

The resulting network architectures are stated in (\ref{eq:gene:para:1}), (\ref{eq:gene:para:2}) and (\ref{eq:gene:para:3}).

\end{proof}

\section{Conclusion and Discussion}
\label{secdiscussion}
This paper explores the use of Auto-Encoder-based neural network (AENet) for operator learning in function spaces, leveraging Auto-Encoders-based nonlinear model reduction techniques. This approach is particularly effective when input functions are situated on a nonlinear manifold. In such cases, an Auto-Encoder is utilized to identify and represent input functions as latent variables. These latent variables are then transformed during the operator learning process to produce outputs. Our study establishes a comprehensive approximation theory and performs an in-depth analysis of generalization errors. The findings indicate that the efficiency of AENet, measured in terms of sample complexity, is closely linked to the intrinsic dimensionality of the underlying model.

We next discuss some potential applications and improvement of this work.

\noindent
{\bf Network architecture:} In this paper, we have an Auto-Encoder applied on the input functions, instead of two Auto-Encoders applied on the input and output functions, respectively, since in our numerical experiments, training the Auto-Encoder on the output is almost as hard as training the transformation from the input latent variable to the output. In literature, Auto-Encoders are applied on the output in \cite{NEURIPS2022_24f49b2a} and \cite{kontolati2023learning}. For the simulation of high-dimensional PDEs, it may be important to apply an Auto-Encoder on the output to reduce its dimension.
Our proof technique may be extended to Auto-Encoder-based neural networks (AENets) where two Auto-Encoders are applied on the input and output functions, respectively. We will investigate this in our future work.

\noindent
{\bf Generating the solution manifold:} AENet has the advantage of producing the solution manifold of the operator $\Psi$ from low-dimensional latent variables. With the transformation $\Gamma_{\rm NN}^n$ given in \eqref{eq:gamma:loss}, we can express the solution manifold as $\{\Gamma_{\rm NN}^n(\zb): \ \zb \in [-1,1]^d\}$. In other words, AENet not only learns the operator $\Psi$, but also gives rise to the solution manifold. AENet is a potential tool to study the geometric structure of the solution manifold.
\\
\noindent
{\bf Data splitting in Stage I and II:} Our algorithm involves a data splitting in Stage I and II, in order to create data independence in Stage I and II for the proof of the generalization error. This data splitting strategy is only for theory purpose. In experiments, we use all training data in Stage I and II.
\\
\noindent
{\bf Optimality of convergence rate:} This paper provides the first generalization error analysis of nonlinear model reduction by deep neural networks. Theorem \ref{thm:generalization} proves an exponential convergence of the squared generalization error as $n$ increases, and the exponent depends on the intrinsic dimension of the model. The rate of convergence (exponent) in Theorem \ref{thm:generalization} may not be optimal. One of our future works is to improve the rate of convergence.

\bibliographystyle{ims}
\bibliography{refoperator}

\newpage
\appendix
\section*{Appendix}

\section{Example \ref{examplesampling} and the error bound in Remark \ref{remark.approx.interp} and Remark \ref{remark.gene.interp}}
\subsection{Proof of Example \ref{examplesampling}}
\label{applemma:examplesampling}

\begin{proof}[Proof of Example \ref{examplesampling}]
For any $u = \sum_{k=-N}^N a_k e^{2\pi i k x}$, we have
\begin{align*}
    \|u\|_{L^2([0,1])} &= \sqrt{\sum_{k=-N}^N |a_k|^2} \ge a\sqrt{2N+1},
    \\
     \sup_{x \in [0,1]}\left|\frac{du}{dx}\right| &= \sup_{x \in [0,1]}\left|\sum_{k=-N}^N a_k 2\pi i k e^{2\pi i k x}\right| \le 2\pi A\sum_{k=-N}^N |k| = 2\pi A N(N+1).
\end{align*}
Therefore,
$\delta \ge \frac{a\sqrt{2N+1}}{2\pi A N(N+1)}$.

\end{proof}

\subsection{Derivation of (\ref{eq.approx.inter.x})}
\label{sec:approx.interpolation.x}
We have
\begin{align*}
    &\sup_{u \in \cM} \|\Phi_{\rm NN}\circ \cSX(P_{{\rm intp},\cX}(\cSX'(u)))-\Phi(\widetilde{u})\|_{\infty}\\
    \leq& \sup_{u \in \cM} \left[\|\Phi_{\rm NN}\circ \cSX(u)-\cS_{\cY}\circ\Psi(u)\|_{\infty}+ \|\Phi_{\rm NN}\circ \cSX(P_{{\rm intp},\cX}(\cSX'(u)))-\Phi_{\rm NN}\circ \cSX(u)\|_{\infty}\right]\\
    \leq & \varepsilon+ \sup_{u \in \cM} \|\Phi_{\rm NN}\circ \cSX(P_{{\rm intp},\cX}(\cSX'(u)))-\Phi_{\rm NN}\circ \cSX(u)\|_{\infty},
\end{align*}
where the last inequality is based on Corollary \ref{coro:approximation}.

\subsection{Error bound in (\ref{eq.gene.inter.x})}
\label{sec:gene.interpolation.x}
We have
\begin{align*}
   & \EE_{\cJ}\EE_{u\sim \gamma} \|\Phi^n_{\rm NN}\circ \cSX(P_{{\rm intp},\cX}(\cS_{\cX}'(u))) -\cSY\circ\Psi(u)\|^2_{\cSY}
    \\
    \leq& \EE_{\cJ}\EE_{u\sim \gamma} [  \|\Phi^n_{\rm NN}\circ \cSX(u)-\cSY\circ\Psi(u)\|^2_{\cSY} + \|\Phi^n_{\rm NN}\circ \cSX(P_{{\rm intp},\cX}(\cS_{\cX}'(u))) -\Phi^n_{\rm NN}\circ \cSX(u)\|^2_{\cSY}]\\
   \leq & C (1+\sigma^2)n^{-\frac{1}{2+d}}\log^3 n+   \EE_{\cJ}\EE_{u\sim \gamma} \|\Phi^n_{\rm NN}\circ \cSX(P_{{\rm intp},\cX}(\cS_{\cX}'(u))) - \Phi^n_{\rm NN}\circ \cSX(u) \|^2_{\cSY},
\end{align*}
where the last inequality is based on Theorem \ref{thm:generalization}.

\section{Proofs of Lemmas}
\label{applemma}

\subsection{Proof of Lemma \ref{lemma:f1g1lip}}
\label{applemma:f1g1lipproof}

\begin{proof}[Proof of Lemma \ref{lemma:f1g1lip}]
We have
\begin{align*}
\widetilde{\gb} \circ \widetilde{\fb} (\cSX(u))=\cSX\circ \gb\circ\fb(u)=\cSX(u)
\end{align*}
which proves \eqref{lemma:f1g1lipeq1}. The Lipschitz property of $\widetilde{\fb}$  follows from, for any $u_1,u_2 \in \cM$,
\begin{align*}
\|\widetilde{\fb}(\cSX(u_1))-\widetilde{\fb}(\cSX(u_2))\|_2 = \|\fb(u_1)-\fb(u_2)\|_2
 \le  L_{\fb}\|u_1-u_2\|_{\cX} \le  2L_{\fb}\|\cSX(u_1)-\cSX(u_2)\|_{\cSX}.
\end{align*}
We next show that $\widetilde{\gb}$ is Lipschitz. For any $\zb_1,\zb_2\in[-1,1]^d$, we have
\begin{align*}
\|\widetilde{\gb}(\zb_1)-\widetilde{\gb}(\zb_2)\|_{\cSX} = \|\cSX\circ\gb(\zb_1)-\cSX\circ\gb(\zb_2)\|_{\cSX}
\le  2 \|\gb(\zb_1)-\gb(\zb_2)\|_{\cX}
\le  2L_{\gb}\|\zb_1-\zb_2\|_2.
\end{align*}

\end{proof}

\subsection{Proof of Lemma \ref{lemma:tildemmindim}}
\label{applemma:tildemmindimproof}

\begin{proof}[Proof of Lemma \ref{lemma:tildemmindim}]
By Lemma \ref{lemma:f1g1lip}, we have
$$\cSX(\cM) = \{\widetilde{\gb}(\zb):\ \zb\in [-1,1]^d\}$$ 
where $\widetilde{\gb}$ is Lipschitz. For any $\delta>0$, the covering number $\cN(\delta,(0,1)^d,\|\cdot\|_2)$ is upper bounded by $C\delta^{-d}$ with a constant $C>0$ \citep{shalev2014understanding}. In other words, there exists a finite set $F_{\delta} \subset [-1,1]^d$ such that
\begin{itemize}
\item $\# F_{\delta} \le C\delta^{-d}$,
\item $[-1,1]^d\subset\cup_{\zb \in F_\delta}B^d_2(\theta,\delta)$.
\end{itemize}
The Lipschitz property of $\gb$ and the condition $\|u\|_{L^\infty(\Omega_{\cX})} \le c_1\|u\|_{\cX}$ imply that
\begin{align}
&\|\widetilde{\gb}(\zb_1)-\widetilde{\gb}(\zb_2)\|_{\infty} 
= \|\cSX\circ\gb(\zb_1)-\cSX\circ\gb(\zb_2)\|_\infty
\nonumber
\\
\le & \|\gb(\zb_1)-\gb(\zb_2)\|_{L^\infty}
\le c_1 \|\gb(\zb_1)-\gb(\zb_2)\|_{\cX}
\le  c_1 L_{\gb}\|\zb_1-\zb_2\|_{2}.
\label{lemma:tildemmindimproofeq1}
\end{align}

The manifold $\cSX(\cM)$ can be covered as
\begin{align*}\cSX(\cM) 
&\subset \widetilde{\gb}\left( \cup_{\zb \in F_\delta}B^d_2(\zb,\delta)\cap [-1,1]^d\right)
 \subset \cup_{\zb \in F_\delta} \widetilde{\gb}\left( B^d_2(\zb,\delta)\cap [-1,1]^d\right)
 \subset \cup_{\zb \in F_\delta}  B^{D}_\infty\left(\widetilde{\gb}(\zb),c_1 L_{\gb}\delta\right)
\end{align*}
where the last inclusion follows from \eqref{lemma:tildemmindimproofeq1}.
By setting $c_1 L_{\gb}\delta =\varepsilon$, we have
$$\cN(\varepsilon,\cSX(\cM),\|\cdot\|_\infty) \le \#F_{\delta} \le C\left(\frac{\varepsilon}{c_1L_{\gb}}\right)^{-d}.$$
Therefore, the Minkowski dimension of $\cSX(\cM)$ is no more than $d$.
\end{proof}

\subsection{Proof of Lemma \ref{lem:Ex:0}}
\label{proof:Ex:0}
\begin{proof}[Proof of Lemma \ref{lem:Ex:0}]
    Lemma \ref{lem:Ex:0} is a variant of \cite[Theorem 5]{nakada2020adaptive}, and can be proved similarly. In \cite[Proof of Theorem 5]{nakada2020adaptive}, the authors constructed $\Phi_{\varepsilon}^{f_1}:=\Phi^{\max,5^D}\odot \Phi^{\rm sum}\odot \Phi^{\rm simul}_{\varepsilon/2}$ in order to have a constant number of layers. To relax such a requirement, we construct $\Phi_{\varepsilon}^{f_1}$ as $\Phi_{\varepsilon}^{f_1}=\Phi^{\max,\card{\cI}}\odot \Phi^{\rm simul}_{\varepsilon/2}$. Then the error bound can be derived by following the rest proof and the network architecture is specified in Lemma \ref{lem:Ex:0}.
    \end{proof}
\subsection{Proof of Lemma \ref{lem:GammaLip}}
\label{proof:GammaLip}
\begin{proof}[Proof of Lemma \ref{lem:GammaLip}]
    We have 
    \begin{align*}
        &\|\Gamma(\zb_1)-\Gamma(\zb_2)\|_{\cS_{\cY}}=\|\cS_{\cY}\circ\Psi\circ \gb(\zb_1)-\cS_{\cY}\circ\Psi\circ \gb(\zb_2)\|_{\cS_{\cY}}\\
        \leq &2\|\Psi\circ \gb(\zb_1)-\Psi\circ \gb(\zb_2)\|_{\cY}
        \leq 2L_{\Psi}\|\gb(\zb_1)- \gb(\zb_2)\|_{\cX}
        \leq  2L_{\Psi}L_{\gb}\|\zb_1-\zb_2\|_2.
    \end{align*}
\end{proof}

\subsection{Proof of Lemma \ref{lem:autoXGene}}
\label{proof:autoXGene}
\begin{proof}[Proof of Lemma \ref{lem:autoXGene}]
To simplify the notation, denote $G^n_{\rm NN}=D_{\cX}^n\circ E_{\cX}^n$. 
    We decompose the error as
    \begin{align}
        \EE_{\cJ_1}\EE_{u\sim \gamma} \left[\|G^n_{\rm NN}(\widetilde{u})-\widetilde{u}\|_{\infty}^2\right]= &\underbrace{2\EE_{\cJ_1} \left[\frac{1}{n}\sum_{i=1}^n\|G^n_{\rm NN}(\widetilde{u}_i)-\widetilde{u}_i\|_{\infty}^2\right]}_{\rm TX_1}+ \nonumber\\
        &\underbrace{\EE_{\cJ_1}\EE_{u\sim \gamma} \left[\|G^n_{\rm NN}(\widetilde{u})-\widetilde{u}\|_{\infty}^2\right]-2\EE_{\cJ_1} \left[\frac{1}{n}\sum_{i=1}^n\|G^n(\widetilde{u}_i)-\widetilde{u}_i\|_{\infty}^2\right]}_{\rm TX_2}.
        \label{eq:X:decom}
    \end{align}
    The term ${\rm TX_1}$ captures the bias and the term ${\rm TX_2}$ captures the variance.

    To bound ${\rm TX_1}$, we will use the approximation result in Theorem \ref{thm:approximation}. Let $\cF_{\rm NN}^{E_{\cX}}, \cF_{\rm NN}^{D_{\cX}}$ be specified as in Lemma \ref{lem:autoXGene}. According to Theorem \ref{thm:approximation} (i), there exists $\widetilde{\fb}_{\rm NN}\in\cF_{\rm NN}^{E_{\cX}}, \widetilde{\gb}_{\rm NN}\in \cF_{\rm NN}^{D_{\cX}}$ satisfying 
    \begin{align*}
        \|\widetilde{\gb}_{\rm NN}\circ\widetilde{\fb}_{\rm NN}(\widetilde{u})- \widetilde{\gb}\circ\widetilde{\fb}(\widetilde{u})\|_{\infty}\leq \varepsilon_2+2\sqrt{d}L_{\gb}\varepsilon_1
    \end{align*}
    for any $\widetilde{u}\in \widetilde{\cM}$. Denote $\widetilde{G}_{\rm NN}=\widetilde{\gb}_{\rm NN}\circ\widetilde{\fb}_{\rm NN}$. We bound ${\rm TX_1}$ as 
    \begin{align}
        {\rm TX_1}=&2\EE_{\cJ_1} \left[\frac{1}{n}\sum_{i=1}^n\|G^n_{\rm NN}(\widetilde{u}_i)-\widetilde{u}_i\|_{\infty}^2\right] \nonumber\\
        =&  2\EE_{\cJ_1} \inf_{G'_{\rm NN}\in \cF_{\rm NN}^G}\left[\frac{1}{n}\sum_{i=1}^n\|G'_{\rm NN}(\widetilde{u}_i)-\widetilde{u}_i\|_{\infty}^2\right] \nonumber\\
        \leq & 2\inf_{G'_{\rm NN}\in \cF_{\rm NN}^G}\EE_{\cJ_1} \left[ \frac{1}{n}\sum_{i=1}^n\|G'_{\rm NN}(\widetilde{u}_i)-\widetilde{u}_i\|_{\infty}^2\right] \nonumber\\
        \leq & 2\EE_{\cJ_1} \left[ \frac{1}{n}\sum_{i=1}^n\|\widetilde{G}_{\rm NN}(\widetilde{u}_i)-\widetilde{u}_i\|_{\infty}^2\right] \nonumber\\
        =& 2\EE_{u\sim \gamma} \left[ \|\widetilde{G}_{\rm NN}(\widetilde{u})-\widetilde{\gb}\circ\widetilde{\fb}(\widetilde{u})\|_{\infty}^2\right] \nonumber\\
        \leq & 16dL_{\gb}^2\varepsilon_1^2+ 4\varepsilon_2^2.
        \label{eq:TX1}
    \end{align}

To bound ${\rm TX}_2$, we will use the covering number of $\cF_{\rm NN}^G$.
We have the following lemma (see a proof in Appendix \ref{proof:TX2}).
\begin{lemma}\label{lem:TX2}
    Under the condition of Lemma \ref{lem:autoXGene}, for any $\delta>0$, we have
    \begin{align}
        {\rm TX_2} \leq \frac{48R_{\cX}^2}{n}\log\cN\left(\frac{\delta}{4R_{\cX}},\cF_{\rm NN}^G,\|\cdot\|_{\infty,\infty}\right)+ 6\delta.
    \end{align}
\end{lemma}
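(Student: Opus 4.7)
The plan is to bound ${\rm TX}_2$ via a standard covering-number plus one-sided Bernstein argument, exploiting the fact that the squared loss $\ell(\widetilde u;G):=\|G(\widetilde u)-\widetilde u\|_\infty^2$ is nonnegative and uniformly bounded, which puts us in the \emph{fast-rate} regime $\Var(\ell)\lesssim M\cdot\EE[\ell]$. This is precisely why the sharper form $\EE[\ell]-2\EE_n[\ell]$ (with coefficient $2$ rather than $1$) admits an $O(1/n)$ bound instead of the usual $O(1/\sqrt n)$ empirical-process rate.

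\textbf{Step 1 (discretization).} Since $\widetilde u\in[-R_\cX,R_\cX]^{D_1}$ and every $G\in\cF_{\rm NN}^G$ maps into the same cube (by $M_2=R_\cX$), we have $0\le \ell(\widetilde u;G)\le M:=4R_\cX^2$ pointwise, so $\Var\ell\le \EE[\ell^2]\le M\EE[\ell]$. Set $\eta:=\delta/(4R_\cX)$ and let $\{G_1,\ldots,G_N\}$ be a minimal $\eta$-cover of $\cF_{\rm NN}^G$ in $\|\cdot\|_{L^{\infty,\infty}}$, so $N=\cN(\eta,\cF_{\rm NN}^G,\|\cdot\|_{L^{\infty,\infty}})$. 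For any $G$, pick the closest $G_j$; using $|a^2-b^2|\le(a+b)|a-b|$ with $a,b\le 2R_\cX$,
$$|\ell(\widetilde u;G)-\ell(\widetilde u;G_j)|\le 4R_\cX\cdot\eta = \delta \qquad \text{pointwise},$$
hence $\EE[\ell(\cdot;G)]-2\EE_n[\ell(\cdot;G)] \le \EE[\ell(\cdot;G_j)]-2\EE_n[\ell(\cdot;G_j)]+3\delta$.

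\textbf{Step 2 (one-sided Bernstein and union bound).} For each fixed $G_j$, Bernstein's inequality applied to $\ell(\cdot;G_j)$, combined with $\Var\le M\EE[\ell]$, gives for any $t>0$
$$\PP\!\left(\EE[\ell(\cdot;G_j)]-\EE_n[\ell(\cdot;G_j)]>\tfrac12\EE[\ell(\cdot;G_j)]+t\right)\le \exp\!\left(-\tfrac{c\,nt}{M}\right),$$
for a universal $c>0$: the $\tfrac12\EE[\ell(\cdot;G_j)]$ offset absorbs the variance term so the Bernstein denominator collapses to a linear $M t$. On the complementary event, $\EE[\ell(\cdot;G_j)]-2\EE_n[\ell(\cdot;G_j)]\le 2t$. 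Union-bounding over the $N$ cover points and integrating via $\EE[X_+]\le\int_0^\infty\PP(X>t)\,dt$ yields
$$\EE_{\cJ_1}\!\left[\sup_{G\in\cF_{\rm NN}^G}\!\big(\EE[\ell(\cdot;G)]-2\EE_n[\ell(\cdot;G)]\big)_+\right]\le \tfrac{C_0 M}{n}\log N + 3\delta.$$
Since $G^n_{\rm NN}\in\cF_{\rm NN}^G$, this supremum dominates ${\rm TX}_2$, and combining with the $3\delta$ discretization error of Step~1 gives ${\rm TX}_2\le \tfrac{C_0 M}{n}\log N+6\delta$. Tracking constants and using $M=4R_\cX^2$ produces the stated $\tfrac{48R_\cX^2}{n}\log\cN(\delta/(4R_\cX),\cF_{\rm NN}^G,\|\cdot\|_{L^{\infty,\infty}})+6\delta$.

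The main obstacle is the bookkeeping in Step~2: getting the precise constants $48$ and $6$ requires an optimal choice of the offset fraction (half of $\EE[\ell]$) so that Bernstein degenerates to the Hoeffding-type linear regime, and a careful accounting of two separate $\delta$-approximations (the pointwise covering error, which contributes $3\delta$, and the rounding in the tail integration, which can be absorbed into another $3\delta$). The rest is structural: the transfer from $\|\cdot\|_{L^{\infty,\infty}}$ closeness of $G$'s to closeness of the squared losses via $|a^2-b^2|\le(a+b)|a-b|$, and the standard tail-to-expectation passage.
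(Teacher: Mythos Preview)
Your approach is correct and rests on the same structural insight as the paper: the loss $\ell(\widetilde u;G)=\|G(\widetilde u)-\widetilde u\|_\infty^2$ is nonnegative and bounded by $M=4R_\cX^2$, hence self-bounding ($\Var\ell\le M\,\EE[\ell]$), which is what permits the fast $O(1/n)$ rate after subtracting $\tfrac12\EE[\ell]$. The covering reduction via $|a^2-b^2|\le(a+b)|a-b|\le 4R_\cX\eta$ is also identical to the paper's (this is exactly how the paper passes from $\cN(\delta,\cG,\|\cdot\|_{L^\infty})$ to $\cN(\delta/(4R_\cX),\cF_{\rm NN}^G,\|\cdot\|_{\infty,\infty})$).

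Where you differ is in the concentration step. The paper does not apply Bernstein directly to each cover point; instead it \emph{symmetrizes} with ghost samples $\{u'_i\}$, rewrites the offset $\tfrac12\EE[\ell]$ as $\tfrac{1}{8R_\cX^2}\EE[\ell^2]$ and then as a variance of the symmetrized increments $h_j(i)=g_j^*(u'_i)-g_j^*(u_i)$, and runs a moment-generating-function argument on $\max_j\big(\tfrac1n\sum_i h_j(i)-\tfrac{1}{32R_\cX^2}\tfrac1n\sum_i\Var[h_j(i)]\big)$. Choosing $t=n/(24R_\cX^2)$ makes the Bernstein-type exponent vanish exactly, yielding the clean $\tfrac{48R_\cX^2}{n}\log N$ with no residual additive $O(M/n)$ term; the $6\delta$ arises from \emph{two} covering replacements (one for $g$, contributing $4\delta$ after the outer factor of $2$, and one for $g^2$, contributing $2\delta$). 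Your route---direct Bernstein, union bound, tail integration---is more elementary (no symmetrization) and equally valid, but your accounting of the $6\delta$ is off: your covering step only produces $3\delta$, and the leftover from tail integration is an $O(M/n)$ term, not a $\delta$ term. That said, since $3\delta\le 6\delta$ and the extra $O(M/n)$ can be absorbed into the $\log N$ term once $N\ge 2$, your argument still delivers the stated bound; the paper's MGF route just gives the constants more transparently.
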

Substituting Lemma \ref{lem:TX2} and (\ref{eq:TX1}) into (\ref{eq:X:decom}) proves Lemma \ref{lem:autoXGene}.
\end{proof}

\subsection{Proof of Lemma \ref{lem:Ex:1}}
\label{proof:Ex:1}
\begin{proof}[Proof of Lemma \ref{lem:Ex:1}]
    We will use the following lemma, which is another variant of \cite[Theorem 5]{nakada2020adaptive}.
    \begin{lemma}
    \label{lem:Ex:2}
        Let $D,d$ be positive integers with $d<D$, $M>0$ and $\Xi\subset [-1,1]^D$ some set. Suppose $d\geq d_M \Xi$. For any $\varepsilon>0$, consider a network class $\cF_{\rm NN}(D,1,L,p,K,\kappa,M)$ with
        \begin{align*}
    L=O(\log \varepsilon^{-1}),\ p=O(\varepsilon^{-d}),\ K=O(\varepsilon^{-d}),\ \kappa=O(\varepsilon^{-3-4(1+\lceil\log_2 M \rceil}).
        \end{align*}
        Then if $\varepsilon\in(0,1/4)$ and for any Lipschitz function $f: [-1,1]^{D} \rightarrow [-M,M]$ with Lipschitz constant bounded by $L_f$, there exists a network $f_{\rm NN}$ with this architecture so that
        \begin{align*}
            \|f_{\rm NN}-f\|_{L^\infty(T_{\xi}(\Xi))} \leq CD(\varepsilon+\xi)
        \end{align*}
        for some constant $C$ depending on $M$.
        The constant hidden in $O(\cdot)$ depends on $d,M,L_f, \xi$ and is only polynomial in $D$. 
    \end{lemma}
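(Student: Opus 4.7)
The plan is to adapt the construction used to prove Lemma \ref{lem:Ex:0} (itself a variant of \cite[Theorem 5]{nakada2020adaptive}), by thickening the covering boxes from radius $\varepsilon$ to radius $\varepsilon+\xi$. The core idea in the Nakada--Imaizumi proof is to build a piecewise-constant approximation of $f$ supported on small boxes whose centers form an $\varepsilon$-net of $\Xi$, then implement this piecewise-constant function by a ReLU network of size exploiting $\#(\text{net})=O(\varepsilon^{-d})$ rather than $O(\varepsilon^{-D})$.

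First, I would use $d_M\Xi\le d$ to produce an $\varepsilon$-net $\{\xb_i\}_{i=1}^N\subset\Xi$ in $\|\cdot\|_\infty$ with cardinality $N\le C\varepsilon^{-d}$, and let $Q_i=B_\infty^D(\xb_i,\varepsilon+\xi)$. Any point $\xb\in T_\xi(\Xi)$ lies within $\xi$ in $\|\cdot\|_\infty$ of some $\xb'\in\Xi$, which in turn lies within $\varepsilon$ of some $\xb_i$; hence $\xb\in Q_i$ and $\|\xb-\xb_i\|_\infty\le\varepsilon+\xi$, so $T_\xi(\Xi)\subset\bigcup_i Q_i$ and the Lipschitz property of $f$ yields $|f(\xb)-f(\xb_i)|\le L_f\sqrt{D}(\varepsilon+\xi)$.

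Second, I would realize the target piecewise-constant approximation $\sum_i f(\xb_i)\mathbf{1}_{Q_i}$ with a ReLU network by following the simultaneous approximation step $\Phi^{\rm simul}_{\varepsilon/2}$ in \cite{nakada2020adaptive}, but centered at $\xb_i$ with box radius $\varepsilon+\xi$ instead of $\varepsilon/2$; a smoothed indicator of $Q_i$ can be built from $2D$ ReLU units of weight magnitude $O((\varepsilon+\xi)^{-1})\le O(\varepsilon^{-1})$, multiplied by the scalar $f(\xb_i)\in[-M,M]$ which costs an additional $\kappa$-factor $2^{\lceil\log_2 M\rceil}$ via the binary-multiplication construction. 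Aggregating the $N=O(\varepsilon^{-d})$ such blocks using $\Phi^{\max,N}$ (as in the proof of Lemma \ref{lem:Ex:0}) gives an overall depth $L=O(\log\varepsilon^{-1})$, width $p=O(\varepsilon^{-d})$, sparsity $K=O(\varepsilon^{-d})$, and bound $\kappa=O(\varepsilon^{-3-4(1+\lceil\log_2 M\rceil)})$ matching the stated architecture, while the constants absorb a polynomial dependence on $D$ and on $\xi$. On $T_\xi(\Xi)$ the network then outputs $f(\xb_i)$ (up to simultaneous-approximation error $\varepsilon$) whenever $\xb\in Q_i$, and a triangle inequality
\[
|f_{\rm NN}(\xb)-f(\xb)|\le |f_{\rm NN}(\xb)-f(\xb_i)|+|f(\xb_i)-f(\xb)|\le C'\varepsilon+L_f\sqrt{D}(\varepsilon+\xi)\le CD(\varepsilon+\xi)
\]
gives the claimed error bound, with $C$ depending on $M$ and $L_f$ only.

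The main obstacle will be verifying that the weight magnitudes $\kappa$ remain at the stated $O(\varepsilon^{-3-4(1+\lceil\log_2 M\rceil)})$ level once the boxes have been enlarged by $\xi$: the smoothed indicators now have transition scale governed by $\varepsilon+\xi$ rather than $\varepsilon$, and one must show this cannot blow up the sparsity/weight bound beyond the stated exponents (it does not, because $\xi$ is treated as a constant absorbed into the $O(\cdot)$). A secondary subtlety is the use of $\Phi^{\max,N}$ instead of the bounded-depth $\Phi^{\max,5^D}$ of \cite{nakada2020adaptive}: this is exactly the modification already made in the proof of Lemma \ref{lem:Ex:0}, and it introduces the $\log\varepsilon^{-1}$ factor in the depth while keeping $K=O(\varepsilon^{-d})$, so the architecture parameters match those stated.
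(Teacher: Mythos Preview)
Your proposal is correct and follows essentially the same approach as the paper: cover $\Xi$ by $O(\varepsilon^{-d})$ cubes using the Minkowski-dimension bound, enlarge each cube by $\xi$ so that $T_{\xi}(\Xi)$ is covered, and then run the Nakada--Imaizumi construction with the $\Phi^{\max,\card\cI}\odot\Phi^{\rm simul}_{\varepsilon/2}$ modification already used in Lemma \ref{lem:Ex:0}. The paper's proof is terser (it simply replaces the cube diameter $r$ by $r+2\xi$ and defers everything else to \cite{nakada2020adaptive}), but the mechanism and the resulting architecture bounds are the same as yours.
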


    \begin{proof}[Proof of Lemma \ref{lem:Ex:2}]
    Lemma \ref{lem:Ex:2} is another variant of \cite[Theorem 5]{nakada2020adaptive}, and can be proved similarly. 
    In \cite[Proof of Theorem 5]{nakada2020adaptive}, the authors first cover $\Xi$ using hyper-cubes with diameter $r$, which was set to $\varepsilon/{3MD}$. Denote set of cubes by $\cC_r=\{\Lambda_{k,r}\}_{k=1}^{C_{\Xi}}$. 
    The authors in fact prove that if the network architecture is properly set, there exists a network $f_{\rm NN}$ with this architecture satisfying
    \begin{align}
        \sup_{\xb\in \bigcup_k \Lambda_{k,r}}|f_{\rm NN}(\widetilde{\xb})- f(\widetilde{\xb})|\leq \varepsilon.
    \end{align}
    
    Denote the center for $\Lambda_{k,r}$ by $c_k$. Instead covering $\Xi$ by $\cC_r$, we will use  $\cC_{r+2\xi}=\{\Lambda_{k,r+2\xi}\}_{k=1}^{C_{\Xi}}$, where $\Lambda_{k,r+2\xi}$ is the hyper-cube with center $c_k$ and diameter $r+2\xi$. Then we have $T_{\xi}(\Xi)\subset \bigcup_k \Lambda_{k,r+2\xi}$.

    Then similar to the proof of Lemma \ref{lem:Ex:0}, we construct $\Phi_{\varepsilon}^{f_1}$ as $\Phi_{\varepsilon}^{f_1}=\Phi^{\max,\card{\cI}}\odot \Phi^{\rm simul}_{\varepsilon/2}$. By following the rest of the proof, we deduce that
   \begin{align}
        \|f_{\rm NN}(\widetilde{\xb})- f(\widetilde{\xb})\|_{L^{\infty}(T_{\xi}(\Xi)}=\sup_{\xb\in T_{\xi}(\Xi)}|f_{\rm NN}(\widetilde{\xb})- f(\widetilde{\xb})|\leq \sup_{\xb\in \bigcup_k \Lambda_{k,r}}|f_{\rm NN}(\widetilde{\xb})- f(\widetilde{\xb})|\leq CD(\varepsilon+\xi)
    \end{align}
    for some constant $C$ depending on $M$. The network architecture is specified in Lemma \ref{lem:Ex:2}.

    \end{proof}

    Lemma \ref{lem:Ex:1} can be proved by following the first part of the proof of Theorem \ref{thm:approximation} and replacing Lemma \ref{lem:Ex:0} by Lemma \ref{lem:Ex:2}.
    \end{proof}

\subsection{Proof of Lemma \ref{lemma:T1:var}}
\label{sec:T1:var}

\begin{proof}[Proof of Lemma \ref{lemma:T1:var}]

Let $\{\phi_{{\rm NN},j}\}_{j=1}^{\cN(\delta,\cF_{\rm NN}^{\Phi},\|\cdot\|_{L^{\infty,\infty}})}$ be a $\delta$-cover of $\cF_{\rm NN}^{\Phi}$.  There exists $\phi_{{\rm NN},*}$ in this cover satisfying $\|\phi_{{\rm NN},*}-\Phi^n_{\rm NN}\|_{L^{\infty,\infty}}\leq \delta$. 
We have
\begin{align}
    &\EE_{\cJ} \left[\frac{1}{n}\sum_{i=n+1}^{2n}\left\langle \Phi^n_{\rm NN}\circ \cSX(u_i), \cSY(\epsilon_i)\right\rangle_{\cSY}\right] \nonumber\\
    =&\EE_{\cJ} \left[\frac{1}{n} \sum_{i=n+1}^{2n}\left\langle \Phi^n_{\rm NN}\circ \cSX(u_i)-\phi_{{\rm NN},*}\circ\cSX(u_i)+\phi_{{\rm NN},*}\circ\cSX(u_i)-\Phi\circ\cSX(u_i) , \cSY(\epsilon_i)\right\rangle_{\cSY}\right] \nonumber\\
    \leq & \EE_{\cJ} \left[\frac{1}{n}\sum_{i=n+1}^{2n}\left\langle \Phi^n_{\rm NN}\circ \cSX(u_i)-\phi_{{\rm NN},*}\circ\cSX(u_i), \cSY(\epsilon_i)\right\rangle_{\cSY}\right] \nonumber\\ 
    &  + \EE_{\cJ} \left[\frac{1}{n}\sum_{i=n+1}^{2n}\left\langle \phi_{{\rm NN},*}\circ\cSX(u_i)-\Phi\circ\cSX(u_i) , \cSY(\epsilon_i)\right\rangle_{\cSY}\right].
    \label{eq:T1:var:0}
\end{align}
For the first term in (\ref{eq:T1:var:0}), by Lemma \ref{lem:sampleinequality} and Jensen's inequality, we have
\begin{align}
    &\EE_{\cJ} \left[\frac{1}{n}\sum_{i=n+1}^{2n}\left\langle \Phi^n_{\rm NN}\circ \cSX(u_i)-\phi_{{\rm NN},*}\circ\cSX(u_i), \cSY(\epsilon_i)\right\rangle_{\cSY}\right] \nonumber\\
    \leq & \EE_{\cJ} \left[\frac{1}{n}\sum_{i=n+1}^{2n}\| \Phi^n_{\rm NN}\circ \cSX(u_i)-\phi_{{\rm NN},*}\circ\cSX(u_i)\|_{\cSY}\| \cSY(\epsilon_i)\|_{\cSY}\right] \nonumber\\
    \leq  &\frac{\sqrt{|\Omega_{\cY}|}\delta }{n}\sum_{i=n+1}^{2n}\EE_{\cJ} \left[\| \cSY(\epsilon_i)\|_{\cSY}\right] \nonumber\\
    \leq&  \frac{\sqrt{|\Omega_{\cY}|}\delta}{n}\sum_{i=n+1}^{2n}\sqrt{\EE_{\cJ} \left[\| \cSY(\epsilon_i)\|_{\cSY}^2\right]} \nonumber\\
    \leq &|\Omega_{\cY}|\delta\sigma.\label{eq:T1:innerSY}
\end{align}
Substituting (\ref{eq:T1:innerSY}) into (\ref{eq:T1:var:0}) gives rise to
\begin{align}
    &\EE_{\cJ} \left[\frac{1}{n}\sum_{i=n+1}^{2n}\left\langle \Phi^n_{\rm NN}\circ \cSX(u_i), \cSY(\epsilon_i)\right\rangle_{\cSY}\right]\nonumber\\
    \leq &\EE_{\cJ} \left[\frac{1}{n}\sum_{i=n+1}^{2n}\left\langle \phi_{{\rm NN},*}\circ\cSX(u_i)-\Phi\circ\cSX(u_i) , \cSY(\epsilon_i)\right\rangle_{\cSY}\right] + |\Omega_{\cY}|\delta\sigma \nonumber\\
    =& \EE_{\cJ} \left[ \frac{\|\phi_{{\rm NN},*}-\Phi\|_n}{\sqrt{n}}\frac{\sum_{i=n+1}^{2n}\left\langle \phi_{{\rm NN},*}\circ\cSX(u_i)-\Phi\circ\cSX(u_i) , \cSY(\epsilon_i)\right\rangle_{\cSY}}{\sqrt{n}\|\phi_{{\rm NN},*}-\Phi\|_n}\right] + |\Omega_{\cY}|\delta\sigma.
    \label{eq:T1:var:1}
\end{align}
Note that 
\begin{align}
    &\|\phi_{{\rm NN},*}-\Phi\|_n \nonumber\\
    = &\sqrt{\frac{1}{n}\sum_{i=n+1}^{2n} \|\phi_{{\rm NN},*}\circ\cSX(u_i)-\Phi^n_{\rm NN}\circ\cSX(u_i)+ \Phi^n_{\rm NN}\circ\cSX(u_i)-\Phi\circ\cSX(u_i)\|_{\cSY}^2 } \nonumber\\
    \leq &\sqrt{\frac{2}{n}\sum_{i=n+1}^{2n} \left(\|\phi_{{\rm NN},*}\circ\cSX(u_i)-\Phi^n_{\rm NN}\circ\cSX(u_i)\|_{\cSY}^2+ \| \Phi^n_{\rm NN}\circ\cSX(u_i)-\Phi\circ\cSX(u_i)\|_{\cSY}^2\right) } \nonumber\\
    \leq & \sqrt{\frac{2}{n}\sum_{i=n+1}^{2n} \left(|\Omega_{\cY}|\delta^2+ \| \Phi^n_{\rm NN}\circ\cSX(u_i)-\Phi\circ\cSX(u_i)\|_{\cSY}^2\right) } \nonumber\\
    \leq &\sqrt{2}\| \Phi^n_{\rm NN}-\Phi\|_{n}+ \sqrt{2|\Omega_{\cY}|}\delta,
    \label{eq:T1:nnorm}
\end{align}
where we used $(a+b)^2\leq 2a^2+2b^2$ in the first inequality, and $\sqrt{a^2+b^2}\leq a+b$ for $a,b\geq0$ in the last inequality.

Combining (\ref{eq:T1:nnorm}) and (\ref{eq:T1:var:1}), we have
\begin{align}
    &\EE_{\cJ} \left[\frac{1}{n}\sum_{i=n+1}^{2n}\left\langle \Phi^n_{\rm NN}\circ \cSX(u_i), \cSY(\epsilon_i)\right\rangle_{\cSY}\right]\nonumber \\
    \leq & \EE_{\cJ} \left[ \frac{\sqrt{2}\| \Phi^n_{\rm NN}-\Phi\|_{n}+ \sqrt{2|\Omega_{\cY}|}\delta}{\sqrt{n}}\frac{\sum_{i=n+1}^{2n}\left\langle \phi_{{\rm NN},*}\circ\cSX(u_i)-\Phi\circ\cSX(u_i) , \cSY(\epsilon_i)\right\rangle_{\cSY}}{\sqrt{n}\|\phi_{{\rm NN},*}\circ\cSX(u_i)-\Phi\circ\cSX(u_i)\|_n}\right] + |\Omega_{\cY}|\delta\sigma.
    \label{eq:T1:var:2}
\end{align}
Denote $z_j=\frac{\sum_{i=n+1}^{2n}\left\langle \phi_{{\rm NN},j}\circ\cSX(u_i)-\Phi\circ\cSX(u_i) , \cSY(\epsilon_i)\right\rangle_{\cSY}}{\sqrt{n}\|\phi_{{\rm NN},j}\circ\cSX(u_i)-\Phi\circ\cSX(u_i)\|_n}. $
Since $\phi_{{\rm NN},*}$ is one element of the $\delta$-cover of $\cF_{\Phi}$, we have
\begin{align*}
    \left|\frac{\sum_{i=n+1}^{2n}\left\langle \phi_{{\rm NN},*}\circ\cSX(u_i)-\Phi\circ\cSX(u_i) , \cSY(\epsilon_i)\right\rangle_{\cSY}}{\sqrt{n}\|\phi_{{\rm NN},*}\circ\cSX(u_i)-\Phi\circ\cSX(u_i)\|_n} \right|\leq\max_j |z_j|.
\end{align*}
Apply Cauchy-Schwarz inequality to (\ref{eq:T1:var:2}), we have
\begin{align}
    &\EE_{\cJ} \left[\frac{1}{n}\sum_{i=n+1}^{2n}\left\langle \Phi^n_{\rm NN}\circ \cSX(u_i), \cSY(\epsilon_i)\right\rangle_{\cSY}\right] \nonumber\\
    \leq & \EE_{\cJ} \left[ \frac{\sqrt{2}\| \Phi^n_{\rm NN}-\Phi\|_{n}+ \sqrt{2|\Omega_{\cY}|}\delta}{\sqrt{n}}\max_j |z_j|\right] + |\Omega_{\cY}|\delta\sigma \nonumber\\
    \leq& \sqrt{\frac{2}{n}} \sqrt{ \EE_{\cJ}\left[ \left(\| \Phi^n_{\rm NN}-\Phi\|_{n}+ \sqrt{|\Omega_{\cY}|}\delta\right)^2\right] \EE_{\cJ}\left[\max_j |z_j|^2\right]} + |\Omega_{\cY}|\delta\sigma \nonumber\\
    \leq& \sqrt{\frac{2}{n}} \left(\sqrt{ \EE_{\cJ}\left[ 2\| \Phi^n_{\rm NN}-\Phi\|_{n}^2\right]}+ \sqrt{2|\Omega_{\cY}|}\delta \right)\sqrt{\EE_{\cJ}\left[\max_j |z_j|^2\right]} + |\Omega_{\cY}|\delta\sigma \nonumber\\
    =& 2 \left(\sqrt{ \EE_{\cJ}\left[ \| \Phi^n_{\rm NN}-\Phi\|_{n}^2\right]}+ \sqrt{|\Omega_{\cY}|}\delta \right)\sqrt{\frac{\EE_{\cJ}\left[\max_j |z_j|^2\right]}{n}} + |\Omega_{\cY}|\delta\sigma.
    \label{eq:T1:var:3}
\end{align}
Since each element of $\cSY(\epsilon)$ is sub-Gaussian with variance parameter $\sigma^2$, for given $\{u_i\}_{i=1}^n$, each $z_j$ is sub-Gaussian with variance parameter $|\Omega_{\cY}|\sigma^2$. Thus $\EE_{\cJ}\left[\max_j |z_j|^2\right]$ involves a collection of squared sub-Gaussian variables. We bound it using moment generating function. For any $t>0$, we have
\begin{align}
    \EE_{\cJ}\left[\max_j |z_j|^2|\{u_i\}_{i=1}^n\right]=& \frac{1}{t} \log \exp \left( t\EE_{\cJ}\left[\max_j z_j^2|\{u_i\}_{i=1}^n\right] \right) \nonumber\\
    \leq & \frac{1}{t} \log \EE_{\cJ} \left[ \exp\left( t\max_j z_j^2| \{u_i\}_{i=1}^n \right) \right] \nonumber\\
    \leq & \frac{1}{t} \log \EE_{\cJ} \left[ \sum_j \exp(tz_j^2)| \{u_i\}_{i=1}^n\right] \nonumber\\
    \leq & \frac{1}{t} \log \cN(\delta,\cF_{\rm NN}^{\Phi},\|\cdot\|_{\infty,\infty}) + \frac{1}{t} \log \EE_{\cJ}\left[ \exp(tz_1^2)| \{u_i\}_{i=1}^n\right].
\end{align}
Since $z_1$ is sub-Gaussian with variance parameter $|\Omega_{\cY}|\sigma$ for given $\{u_i\}_{i=1}^n$, we have
\begin{align}
    \EE_{\cJ}\left[ \exp(tz_1^2)| \{u_i\}_{i=1}^n\right]=&1+ \sum_{k=1}^{\infty} \frac{t^p \EE_{\cJ}\left[ z_1^{2k}|\{u_i\}_{i=1}^n\right]}{k!} \nonumber\\
    =& 1+\sum_{k=1}^{\infty} \left[\frac{t^p }{k!} \int_0^{\infty} \PP(z_1\geq \eta^{1/2k})d\eta \right] \nonumber\\
    \leq & 1+2\sum_{k=1}^{\infty} \left[\frac{t^p }{k!} \int_0^{\infty} \exp\left( -\frac{\eta^{1/2p}}{2|\Omega_{\cY}|\sigma^2}\right)d\eta \right] \nonumber\\
    =& 1+\sum_{k=1}^{\infty} \frac{2k(2t|\Omega_{\cY}|\sigma^2)^k}{k!}\Gamma_G(k) \nonumber\\
    =& 1+2\sum_{k=1}^{\infty} (2t|\Omega_{\cY}|\sigma^2)^k,
\end{align}
where $\Gamma_G$ denotes the Gamma function. Setting $t=(4|\Omega_{\cY}|\sigma^2)^{-1}$, we have
\begin{align}
    \EE_{\cJ}\left[\max_j |z_j|^2|\{u_i\}_{i=1}^n\right] \leq& 4|\Omega_{\cY}|\sigma^2 \log \cN(\delta,\cF_{\rm NN}^{\Phi},\|\cdot\|_{\infty,\infty}) + 4|\Omega_{\cY}|\sigma^2\log 3 \nonumber\\
    \leq & 4|\Omega_{\cY}|\sigma^2 \log \cN(\delta,\cF_{\rm NN}^{\Phi},\|\cdot\|_{\infty,\infty}) + 6|\Omega_{\cY}|\sigma^2.
    \label{eq:T1:max}
\end{align}
Substituting (\ref{eq:T1:max}) into (\ref{eq:T1:var:2}) proves the lemma.

\end{proof}

\subsection{Proof of Lemma \ref{lem:T2}}
\label{proof:T2}
\begin{proof}[Proof of Lemma \ref{lem:T2}]
Lemma \ref{lem:T2} can be proved by following the proof of Lemma \ref{lem:TX2}. One only need to replace the definition of $\widehat{g}(u)$ to $\widehat{g}(u)=\|\Phi^n_{\rm NN}\circ \cSX(u) -\cSY\circ\Psi(u)\|^2_{\cSY}$. The proof is omitted here.

\end{proof}

\subsection{Proof of Lemma \ref{lem:TX2}}
\label{proof:TX2}
\begin{proof}[Proof of Lemma \ref{lem:TX2}]
    Denote $\widehat{g}(u)=\|G^n\circ \cS_{\cX}(u) -\cS_{\cX}(u)\|^2_{\infty}=\|G^n(\widetilde{u}) -\widetilde{u}\|^2_{\infty}$. We have $\|\widehat{g}\|_{L^{\infty}(\cM)}\leq 4R_{\cX}^2$. We deduce
    \begin{align}
        {\rm TX_2}=& \EE_{\cJ_1} \left[\EE_{u\sim \gamma} [\widehat{g}(u)]-\frac{2}{n} \sum_{i=1}^n \widehat{g}(u_i) \right] \nonumber\\
        =& 2\EE_{\cJ_1} \left[ \frac{1}{2}\EE_{u\sim\gamma} [\widehat{g}(u)]-\frac{1}{n} \sum_{i=1}^n \widehat{g}(u_i) \right] \nonumber\\
        =&2\EE_{\cJ_1} \left[ \EE_{u\sim\gamma} [\widehat{g}(u)]-\frac{1}{n} \sum_{i=1}^n \widehat{g}(u_i)-\frac{1}{2}\EE_{u\sim\gamma} [\widehat{g}(u)] \right].
        \label{eq:TX2:err:0}
    \end{align}
    Note that 
    \begin{align}
        \EE_{u\sim \gamma} [\widehat{g}^2(u)]= &\EE_{u\sim\gamma} \left[ \|G^n\circ \cSX(u) -\cSX(u)\|^4_{\infty} \right] \nonumber\\
        = & \EE_{u\sim\gamma} \left[ \|G^n\circ \cSX(u) -\cSX(u)\|^2_{\infty} \widehat{g}(u)\right] \nonumber\\
        \leq& \EE_{u\sim\gamma} \left[ 4R_{\cX}^2 \widehat{g}(u)\right].
        \label{eq:TX2:gsq}
    \end{align}

    Using relation (\ref{eq:TX2:gsq}), we have
    \begin{align}
        {\rm TX_2}\leq 2\EE_{\cJ_1} \left[ \EE_{u\sim\gamma} [\widehat{g}(u)]-\frac{1}{n} \sum_{i=1}^n \widehat{g}(u_i)-\frac{1}{8R^2_{\cX}}\EE_{u\sim\gamma} [\widehat{g}^2(u)] \right].
        \label{eq:TX2:err:1}
    \end{align}
    Let $\{u'_i\}_{i=1}^n$ be independent copies of $\{u_i\}_{i=1}^n$. Denote the function class
    \begin{align*}
        \cG=\left\{ g(u)=\|G'_{\rm NN}\circ \cSX(u) -\cSX(u)\|^2_{\infty}| G'_{\rm NN}\in \cF_{\rm NN}^G\right\}.
    \end{align*}
    We have $\|g\|_{L^{\infty}(\cM)}\leq 4R_{\cX}^2$ for any $g\in \cG$.
    We bound (\ref{eq:TX2:err:1}) as
    \begin{align}
        {\rm TX_2}\leq& 2\EE_{\cJ_1} \left[ \sup_{g\in \cG}\left(\EE_{u'\sim\gamma} [g(u')]-\frac{1}{n} \sum_{i=1}^n g(u_i)-\frac{1}{8R^2_{\cX}}\EE_{u\sim\gamma} [g^2(u)]\right) \right] \nonumber\\
        =& 2\EE_{\cJ_1} \left[ \sup_{g\in \cG}\left(\EE_{u'\sim\gamma} \left[\frac{1}{n}\sum_{i=1}^n \left( g(u')-g(u_i)\right)\right]-\frac{1}{16R^2_{\cX}}\EE_{u',u\sim\gamma} \left[g^2(u')+g^2(u)\right]\right) \right].
        \label{eq:TX2:err:2}
    \end{align}
    We then consider a $\delta$-cover of $\cG$: $\cG^*=\{g_i^*\}_{i=1}^{\cN(\delta,\cG,\|\cdot\|_{L^{\infty}})}$, where $\cN(\delta,\cG,\|\cdot\|_{L^{\infty}})$ is the covering number. For any $g\in \cG$, there is a $g^*\in \cG^*$ so that $\|g-g^*\|_{L^{\infty}}\leq \delta$.

    We will derive an upper bound for (\ref{eq:TX2:err:2}) by replacing $g$ by $g^*$. Then the problem is converted to analyzing the concentration result on a finite set. First note that
    \begin{align}
        g(u')-g(u)=&g(u')-g^*(u')+g^*(u')-g^*(u)+g^*(u)-g(u) \nonumber\\
        \leq &g^*(u')-g^*(u)+2\delta,
        \label{eq:TX2:gdiff}
    \end{align}
    and
    \begin{align}
        g^2(u')+g^2(u)=&\left(g^2(u')-(g^*)^2(u')\right) + \left((g^*)^2(u')+(g^*)^2(u)\right) -\left((g^*)^2(u)-g^2(u)\right) \nonumber\\
        =&(g^*)^2(u')+(g^*)^2(u)+(g(u')-g^*(u'))(g(u')+g^*(u')) - (g^*(u)-g(u))(g^*(u)+g(u)) \nonumber\\
        \geq & (g^*)^2(u')+(g^*)^2(u)-|g(u')-g^*(u')||g(u')+g^*(u')| - |g^*(u)-g(u)||g^*(u)+g(u)| \nonumber\\
        \geq& (g^*)^2(u')+(g^*)^2(u)-8R_{\cX}^2\delta-8R_{\cX}^2\delta \nonumber\\
        =& (g^*)^2(u')+(g^*)^2(u)-16R_{\cX}^2\delta.
        \label{eq:TX2:g2lowerbound}
    \end{align}
    Utilizing (\ref{eq:TX2:gdiff}) and (\ref{eq:TX2:g2lowerbound}) in (\ref{eq:TX2:err:2}), we get
    \begin{align}
        {\rm TX_2}\leq&  2\EE_{\cJ_1} \EE_{u'\sim\gamma}\left[ \sup_{g\in \cG}\left( \frac{1}{n}\sum_{i=1}^n \left( g^*(u'_i)-g^*(u_i)\right)-\frac{1}{16R^2_{\cX}}\EE_{u',u\sim\gamma} \left[(g^*)^2(u')+(g^*)^2(u)\right]\right) \right] +6\delta \nonumber\\
        =& 2\EE_{\cJ_1} \EE_{u'\sim\gamma} \left[ \max_j\left( \frac{1}{n}\sum_{i=1}^n \left( g_j^*(u'_i)-g_j^*(u_i)\right)-\frac{1}{16R^2_{\cX}}\EE_{u',u\sim\gamma} \left[(g_j^*)^2(u')+(g_j^*)^2(u)\right]\right) \right] +6\delta.
        \label{eq:TX2:err:3}
    \end{align}
    Denote $h_j(i)=g_j^*(u'_i)-g_j^*(u_i)$. We have $\EE_{u'_i,u_i\sim \gamma} h_j(i)=0$ and 
    \begin{align*}
        \Var[h_j(i)]=\EE_{u'_i,u_i\sim \gamma}\left[h^2_j(i)\right]= \EE_{u'_i,u_i\sim \gamma}\left[\left(g_j^*(u'_i)-g_j^*(u_i)\right)^2\right] \leq 2\EE_{u'_i,u_i\sim \gamma}\left[(g_j^*)^2(u'_i)+(g_j^*)^2(u_i)\right].
    \end{align*}
    Thus (\ref{eq:TX2:err:3}) can be written as
    \begin{align}
        &{\rm TX_2}\leq {\rm TX_2'} +6\delta \nonumber\\
        &\mbox{ with } {\rm TX_2}'=2\EE_{\cJ_1} \EE_{u'\sim\gamma} \left[ \max_j\left( \frac{1}{n}\sum_{i=1}^n h_j(i)-\frac{1}{32R^2_{\cX}}\frac{1}{n}\sum_{i=1}^n \Var\left[h_j(i)\right]\right) \right]. 
        \label{eq:TX2:err:4}
     \end{align}
    We will derive an upper bound for ${\rm TX_2}$ using moment generating function. Note that $\|h_j\|_{L^{\infty}}\leq 8R_{\cX}^2$. For $0<t/n<3/(8R_{\cX}^2)$, we have
    \begin{align}
        \EE_{u'_i,u_i\sim \gamma}\left[\exp\left(\frac{t}{n}h_j(i)\right)\right]= & \EE_{u'_i,u_i\sim \gamma} \left[ 1+\frac{t}{n}h_j(i)+ \sum_{k=2}^{\infty} \frac{(t/n)^kh_j^k(i)}{k!} \right] \nonumber\\
        \leq& \EE_{u'_i,u_i\sim \gamma} \left[ 1+\frac{t}{n}h_j(i)+ \sum_{k=2}^{\infty} \frac{(t/n)^kh_j^2(i)(8R_{\cX}^2)^{k-2}}{2\times 3^{k-2}} \right] \nonumber\\
        =& \EE_{u'_i,u_i\sim \gamma} \left[ 1+\frac{t}{n}h_j(i)+ \frac{(t/n)^2h^2_j(i)}{2}\sum_{k=2}^{\infty} \frac{(t/n)^{k-2}(8R_{\cX}^2)^{k-2}}{ 3^{k-2}}\right] \nonumber\\
        =& \EE_{u'_i,u_i\sim \gamma} \left[ 1+\frac{t}{n}h_j(i)+ \frac{(t/n)^2h^2_j(i)}{2}\frac{1}{1-8tR_{\cX}^2/(3n)}\right] \nonumber\\
        =&1+(t/n)^2\Var\left[h_j(i)\right]\frac{1}{2-16R_{\cX}^2/3} \nonumber\\
        \leq& \exp\left(\Var\left[h_j(i)\right]\frac{3(t/n)^2}{6-48tR_{\cX}^2/n} \right),
        \label{eq:TX2:exp:tool}
    \end{align}
    where the last inequality used the relation $1+a\leq \exp(a)$.

We use (\ref{eq:TX2:exp:tool}) to bound ${\rm TX_2}'$ as
\begin{align}
    \exp\left(t\frac{\rm T_2'}{2}\right)= &\exp \left( t\EE_{\cJ_1} \EE_{u'\sim\gamma} \left[ \max_j\left( \frac{1}{n}\sum_{i=1}^n h_j(i)-\frac{1}{32R^2_{\cX}}\frac{1}{n}\sum_{i=1}^n \Var\left[h_j(i)\right]\right) \right]\right) \nonumber\\
    \leq & \EE_{\cJ_1} \EE_{u'\sim\gamma} \left[ \exp\left( t\max_j\left( \frac{1}{n}\sum_{i=1}^n h_j(i)-\frac{1}{32R^2_{\cX}}\frac{1}{n}\sum_{i=1}^n \Var\left[h_j(i)\right]\right) \right)\right] \nonumber\\
    \leq &  \EE_{\cJ_1} \EE_{u'\sim\gamma} \left[ \sum_j \exp\left(  \sum_{i=1}^n\left( \frac{t}{n}h_j(i)-\frac{t}{n}\frac{1}{32R^2_{\cX}} \Var\left[h_j(i)\right]\right) \right)\right] \nonumber\\
    \leq&  \sum_j \exp\left( \sum_{i=1}^n\left( \Var\left[h_j(i)\right]\frac{3(t/n)^2}{6-48tR_{\cX}^2/n}-\frac{t}{n} \frac{1}{32R^2_{\cX}} \Var\left[h_j(i)\right]\right) \right) \nonumber\\
    =& \sum_j \exp\left(  \sum_{i=1}^n\frac{t}{n}\Var\left[h^2_j(i)\right]\left( \frac{3t/n}{6-48tR_{\cX}^2/n}-\frac{1}{32R^2_{\cX}} \right) \right).
\end{align}
Set $t=\frac{n}{24R_{\cX}^2}$ so that $\frac{3t/n}{6-48tR_{\cX}^2/n}-\frac{1}{32R^2_{\cX}}=0$. We have
\begin{align}
    t\frac{\rm T_2'}{2}\leq \log \sum_j \exp(0) \Rightarrow {\rm T_2'}\leq \frac{2}{t} \log\cN(\delta,\cG,\|\cdot\|_{L^{\infty}})=\frac{48R_{\cX}^2}{n}\log\cN(\delta,\cG,\|\cdot\|_{L^{\infty}}).
    \label{eq:tx2:t2p}
\end{align}
We then derive a relation between $\cN(\delta,\cG,\|\cdot\|_{L^{\infty}})$ and $\cN(\delta,\cF_{\rm NN}^G,\|\cdot\|_{\infty,\infty})$. Note that for any $g_1,g_2\in \cG$, there are $G_{\rm NN,1}',G_{\rm NN,2}'\in \cF_{\rm NN}^G$ with $g_1(u)=\|G_{\rm NN,1}'\circ \cSX(u) -\cSX(u)\|^2_{\infty}, g_2(u)=\|G_{\rm NN,2}'\circ \cSX(u) -\cSX(u)\|^2_{\infty}$.  

We have
\begin{align}
    &\|g_1-g_2\|_{L^{\infty}(\cM)} \nonumber\\
    =& \sup_{u\in \cM} \left|\|G_{\rm NN,1}'\circ \cSX(u) -\cSX(u)\|^2_{\infty}-\|G_{\rm NN,2}'\circ \cSX(u) -\cSX(u)\|^2_{\infty}\right| \nonumber\\
    =& \sup_{u\in \cM} \left| \max_k\left( [G_{\rm NN,1}'\circ \cSX(u)]_k - [\cSX(u)]_k\right)^2-\max_{k'}\left([G_{\rm NN,2}'\circ \cSX(u)]_{k'} -[\cSX(u)]_{k'}\right)^2\right| \nonumber\\
    \leq& \sup_{u\in \cM} \left| \max_k\left[\left( [G_{\rm NN,1}'\circ \cSX(u)]_k - [\cSX(u)]_k\right)^2-\left([G_{\rm NN,2}'\circ \cSX(u)]_{k} -[\cSX(u)]_{k}\right)^2\right]\right| \nonumber\\
    =& \sup_{u\in \cM} \left|\max_k \left[\left( [G_{\rm NN,1}'\circ \cSX(u)]_k-[G_{\rm NN,2}'\circ \cSX(u)]_k\right)\left( [G_{\rm NN,1}'\circ \cSX(u)]_k+ [G_{\rm NN,2}'\circ \cSX(u)]_k -2[\cSX(u)]_k\right)\right]\right| \nonumber\\
    \leq& \sup_{u\in \cM} \left\| G_{\rm NN,1}'\circ \cSX(u)-G_{\rm NN,2}'\circ \cSX(u)\right\|_{\infty} \left\| G_{\rm NN,1}'\circ \cSX(u)+ G_{\rm NN,2}'\circ \cSX(u) -2\cSX(u)\right\|_{\infty} \nonumber\\
    \leq & 4R_{\cX}\sup_{u\in \cM} \left\| G_{\rm NN,1}'\circ \cSX(u)-G_{\rm NN,2}'\circ \cSX(u)\right\|_{\infty} \nonumber\\
    =& 4R_{\cX}\left\| G_{\rm NN,1}'-G_{\rm NN,2}'\right\|_{\infty,\infty}.
    \label{eq:TX2:coverningrelation}
\end{align}
Substituting (\ref{eq:TX2:coverningrelation}) and (\ref{eq:tx2:t2p}) into (\ref{eq:TX2:err:4}) gives rise to
\begin{align*}
    {\rm TX_2}\leq \frac{48R_{\cX}^2}{n}\log\cN\left(\frac{\delta}{4R_{\cX}},\cF_{\rm NN}^G,\|\cdot\|_{\infty,\infty}\right)+ 6\delta.
\end{align*}
The lemma is proved.
\end{proof}

\section{Basic properties about $\|\cdot\|_{\cSX}$ and $\|\cdot \|_{\cSY}$}
\label{sec:empirical:property}
In this section, we provide some basic properties of $\|\cdot\|_{\cSX}$ and $\|\cdot \|_{\cSY}$. These properties will be used frequently in the proof of our main results.
\begin{lemma}\label{lem:sampleLip}
    Suppose Assumption \ref{assumption:samplinglip} holds. The discretization operator $\cS_{\cX},\cS_{\cY}$ are Lipschitz with Lipschitz constant 2:
    \begin{align*}
        \|\cS_{\cX}(u_1)-\cS_{\cX}(u_2)\|_{\cS_{\cX}}\leq 2\|u_1-u_2\|_{\cX},\ \text{ and } \
        \|\cS_{\cY}(v_1)-\cS_{\cY}(v_2)\|_{\cS_{\cY}}\leq 2\|v_1-v_2\|_{\cY}
    \end{align*}
    for any $u_1,u_2\in \cX, v_1,v_2\in \cY.$
\end{lemma}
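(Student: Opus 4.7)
The plan is to exploit the fact that the discretization operator $\cS_{\cX}$ acts as point evaluation $\cS_{\cX}(u) = \{u(\xb_i)\}_{i=1}^{D_1}$, and so is linear in its argument. Likewise, the norm $\|\cdot\|_{\cS_{\cX}}$ induced by the inner product in \eqref{eqdiscreteinnerproduct} is quadratic: $\|\cS_{\cX}(u)\|_{\cS_{\cX}}^2 = \sum_{i=1}^{D_1}\omega_i u(\xb_i)^2$. Combining these two observations yields the identity
$$
\|\cS_{\cX}(u_1)-\cS_{\cX}(u_2)\|_{\cS_{\cX}} \;=\; \|\cS_{\cX}(u_1-u_2)\|_{\cS_{\cX}}.
$$

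First I would verify the linearity/quadratic structure explicitly by writing out the expression in \eqref{eqdiscreteinnerproduct} with $u_1$ replaced by $u_1-u_2$ and $u_2$ replaced by $u_1-u_2$, reducing the squared difference norm to $\sum_{i=1}^{D_1} \omega_i(u_1(\xb_i)-u_2(\xb_i))^2$. Second, I would apply the upper inequality of Assumption~\ref{assumption:samplinglip} to the function $u_1-u_2$, which immediately gives the Lipschitz constant $2$. Finally, the identical reasoning with $\cS_{\cY}$ and $\|\cdot\|_{\cS_{\cY}}$ in place of $\cS_{\cX}$ and $\|\cdot\|_{\cS_{\cX}}$ handles the second claim.

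The only mildly delicate point is that Assumption~\ref{assumption:samplinglip} is phrased for a single function $u \in \cX$, whereas here it must be invoked on the difference $u_1-u_2$, which may not literally belong to $\cX$. The intended interpretation (consistent with the discussion surrounding \eqref{eqsamplingcondition} and Example~\ref{examplesampling}) is that the quadrature-based discretization provides a norm equivalence on a sufficiently regular function space containing $\cX$ (and in particular its differences). This is the main obstacle to state the proof cleanly; I expect it to be handled implicitly by the fact that $\cX\subset C^1(\Omega_{\cX})$ satisfies \eqref{eqdiffbound}, so the interpolation error estimate applies to $u_1-u_2$ just as it does to any $u \in \cX$, giving the $2$-Lipschitz bound without modification.
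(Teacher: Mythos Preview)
Your proposal is correct and follows essentially the same approach as the paper: use linearity of $\cS_{\cX}$ to rewrite the difference as $\|\cS_{\cX}(u_1-u_2)\|_{\cS_{\cX}}$, then apply the upper bound in Assumption~\ref{assumption:samplinglip}. Your observation that Assumption~\ref{assumption:samplinglip} is literally stated only for $u\in\cX$ (and $u_1-u_2$ need not lie in $\cX$) is a valid caveat that the paper's proof simply glosses over.
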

\begin{proof}[Proof of Lemma \ref{lem:sampleLip}]
    For any $u_1,u_2\in \cX$, we have 
    $
        \|\cS_{\cX}(u_1)-\cS_{\cX}(u_2)\|_{\cS_{\cX}}=\|\cS_{\cX}(u_1-u_2)\|_{\cS_{\cX}} \leq 2\|u_1-u_2\|_{\cX}.
    $
    The case for $\cS_{\cY}$ can be proved similary.
\end{proof}

\begin{lemma}\label{lem:sampleinequality}
    The operation $\langle \cdot,\cdot\rangle_{\cSX}$ and $\langle \cdot,\cdot\rangle_{\cSY}$ satisfies
    \begin{align*}
        &|\langle \cSX(u_1),\cSX(u_2)\rangle_{\cSX}| \leq \|\cSX(u_1)\|_{\cSX}\|\cSX(u_2)\|_{\cSX},\\
        &|\langle \cSY(v_1),\cSY(v_2)\rangle_{\cSY}|\leq \|\cSY(v_1)\|_{\cSY}\|\cSY(v_2)\|_{\cSY}
    \end{align*}
    for any $u_1,u_2\in \cX$ and $v_1,v_2\in \cY$.
\end{lemma}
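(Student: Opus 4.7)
The plan is to recognize that $\langle\cdot,\cdot\rangle_{\cSX}$ and $\langle\cdot,\cdot\rangle_{\cSY}$ are genuine inner products on $\RR^{D_1}$ and $\RR^{D_2}$ respectively, so the claim is simply the Cauchy--Schwarz inequality for a weighted Euclidean inner product. Since the quadrature weights $\omega_i>0$ appearing in \eqref{eqdiscreteinnerproduct} are strictly positive, the bilinear form $\langle \ab,\bbb\rangle_{\cSX}=\sum_{i=1}^{D_1}\omega_i a_i b_i$ on $\RR^{D_1}$ is symmetric, bilinear, and positive definite, and the induced norm is exactly $\|\cdot\|_{\cSX}$. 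The same holds for $\cSY$.

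A direct argument avoids invoking the abstract form of Cauchy--Schwarz. For any $u_1,u_2 \in \cX$, define auxiliary Euclidean vectors $\ab,\bbb \in \RR^{D_1}$ by $a_i=\sqrt{\omega_i}\,u_1(\xb_i)$ and $b_i=\sqrt{\omega_i}\,u_2(\xb_i)$ (well defined since $\omega_i>0$). Then
\begin{equation*}
\langle\cSX(u_1),\cSX(u_2)\rangle_{\cSX}=\sum_{i=1}^{D_1}a_i b_i=\langle \ab,\bbb\rangle_{\RR^{D_1}},\qquad \|\cSX(u_j)\|_{\cSX}=\|\ab\|_2 \text{ or }\|\bbb\|_2
\end{equation*}
for $j=1,2$, so the classical Euclidean Cauchy--Schwarz inequality applied to $\ab,\bbb$ yields the desired bound. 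The argument for $\cSY$ is identical after replacing $D_1,\omega_i,\xb_i$ by $D_2,\omega'_j,\yb_j$ in the corresponding quadrature rule.

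There is no real obstacle here: the lemma is essentially a restatement of Cauchy--Schwarz for a weighted discrete inner product, and the only thing to check is positivity of the weights, which is guaranteed by the definition in \eqref{eqdiscreteinnerproduct}. The whole proof should take just a few lines.
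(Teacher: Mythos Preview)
Your proposal is correct and matches the paper's proof almost exactly: the paper also absorbs $\sqrt{\omega_i}$ into the vectors to reduce to the standard Euclidean inner product and then applies H\"older (i.e., Cauchy--Schwarz). There is nothing to add.
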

\begin{proof}[Proof of Lemma \ref{lem:sampleinequality}]
    We prove the inequality for $\langle \cdot,\cdot\rangle_{\cSX}$. The inequality for $\cSY$ can be proved similarly. 
     Denote $
        \widetilde{\ub}=[w_1^{1/2}u(\xb_1),...,w_{D_1}^{1/2}u(\xb_{D_1})]^{\top}.$
    By H\"{o}lder's inequality, we  have
    \begin{align*}
       & |\langle \cSX(u_1),\cSX(u_2)\rangle_{\cSX}|= \sum_{i=1}^{D_1} w_iu_1(\xb_i)u_2(\xb_i)
        = \left|\sum_{i=1}^{D_1} (w_i^{1/2}u_1(\xb_i))(w_i^{1/2}u_2(\xb_i))\right|\\
        =& |\langle \widetilde{\ub}_1,\widetilde{\ub}_2|\rangle
        \leq  \|\widetilde{\ub}_1\|_2\|\widetilde{\ub}_2\|_2
        = \sqrt{\sum_{i=1}^{D_1} w_iu_1^2(\xb_i)}\sqrt{\sum_{i=1}^{D_1} w_iu_2^2(\xb_i)}
        = \|\cSX(u_1)\|_{\cSX}\|\cSX(u_2)\|_{\cSX}.
    \end{align*}
\end{proof}

\begin{lemma}\label{lem:normInequality}
     $\|\cdot\|_{\cSX}$ and $\|\cdot \|_{\cSY}$ are norms in $\RR^{D_1}$ and $\RR^{D_2}$, respectively.  
\end{lemma}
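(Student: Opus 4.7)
The plan is to verify the three defining properties of a norm—positive definiteness, absolute homogeneity, and the triangle inequality—for $\|\cdot\|_{\cSX}$ on $\RR^{D_1}$; the argument for $\|\cdot\|_{\cSY}$ on $\RR^{D_2}$ is completely symmetric and can be dispatched by remarking that the same reasoning applies. The key observation that makes everything work is that the quadrature weights $\omega_i > 0$ are strictly positive, so $\|\cdot\|_{\cSX}$ is essentially a weighted Euclidean norm on $\RR^{D_1}$.

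First, for any $\wtu = (\wtu_1,\ldots,\wtu_{D_1})^\top \in \RR^{D_1}$, I would expand $\|\wtu\|_{\cSX}^2 = \sum_{i=1}^{D_1} \omega_i \wtu_i^2$ using the definition in \eqref{eqdiscreteinnerproduct}. Non-negativity is immediate because every summand is non-negative, and for the definiteness direction, if $\|\wtu\|_{\cSX} = 0$ then each term $\omega_i \wtu_i^2 = 0$; since $\omega_i > 0$ strictly, this forces $\wtu_i = 0$ for all $i$. Absolute homogeneity follows by factoring $c^2$ out of the sum and taking the square root.

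For the triangle inequality, I would appeal directly to the Cauchy–Schwarz-type bound already established in Lemma \ref{lem:sampleinequality}, which gives $|\langle \wtu_1, \wtu_2\rangle_{\cSX}| \le \|\wtu_1\|_{\cSX}\|\wtu_2\|_{\cSX}$. Expanding the square,
\begin{align*}
\|\wtu_1 + \wtu_2\|_{\cSX}^2 &= \|\wtu_1\|_{\cSX}^2 + 2\langle \wtu_1,\wtu_2\rangle_{\cSX} + \|\wtu_2\|_{\cSX}^2 \\
&\le \|\wtu_1\|_{\cSX}^2 + 2\|\wtu_1\|_{\cSX}\|\wtu_2\|_{\cSX} + \|\wtu_2\|_{\cSX}^2 = (\|\wtu_1\|_{\cSX} + \|\wtu_2\|_{\cSX})^2,
\end{align*}
and taking square roots yields the triangle inequality.

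There is no real obstacle here—the lemma is essentially a sanity check that the discretized inner product structure really does induce a bona fide norm on the finite-dimensional ambient space, which justifies the many places in the paper where $\|\cdot\|_{\cSX}$ and $\|\cdot\|_{\cSY}$ are treated as norms (e.g., in Cauchy–Schwarz applications and triangle-inequality splits during the generalization error analysis). The only point worth being explicit about is the positivity of the quadrature weights $\omega_i$, which is built into the definition in \eqref{eqdiscreteinnerproduct}; without it, definiteness would fail. The $\cSY$ case is identical with $D_1$ and $\omega_i$ replaced by $D_2$ and the corresponding weights for $\cY$.
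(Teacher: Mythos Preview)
Your proposal is correct and mirrors the paper's own proof almost verbatim: positivity of the quadrature weights gives non-negativity and definiteness, factoring gives absolute homogeneity, and the triangle inequality follows by expanding the square and invoking the Cauchy--Schwarz bound of Lemma~\ref{lem:sampleinequality}. The only cosmetic difference is that you write the argument directly for arbitrary vectors $\wtu\in\RR^{D_1}$, whereas the paper phrases it for discretizations $\cSX(u)$ of functions $u\in\cX$; your formulation is arguably more faithful to the lemma's claim that $\|\cdot\|_{\cSX}$ is a norm on all of $\RR^{D_1}$.
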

\begin{proof}[Proof of Lemma \ref{lem:normInequality}]
For any $u\in\cX$, we have $\|\cSX(u)\|_{\cSX} \ge 0$ since the $w_i$'s are positive. Furthermore, $\|\cSX(u)\|_{\cSX} = 0$ if and only if $\cSX(u) = \mathbf{0}$. For any $\lambda\in \RR$, we have
\begin{align*}
\|\cSX(\lambda u)\|_{\cSX} = \sqrt{\sum_{i=1}^{D_1} w_i \lambda^2 u(\xb_i)^2} = |\lambda| \|\cSX( u)\|_{\cSX}.
\end{align*}
\commentout{
    \begin{align*}
        &\|\cS_{\cX}(u_1)+\cS_{\cX}(u_2)\|_{\cS_{\cX}}\leq \|\cS_{\cX}(u_1)\|_{\cS_{\cX}}+ \|\cS_{\cX}(u_2)\|_{\cX},\\
        &\|\cS_{\cY}(v_1)+\cS_{\cY}(v_2)\|_{\cS_{\cY}}\leq \|\cS_{\cY}(v_1)\|_{\cS_{\cY}}+\|\cS_{\cY}(v_2)\|_{\cY},
    \end{align*}
    for any $u_1,u_2\in \cX, v_1,v_2\in \cY.$
    }
We next prove the triangle inequality:    
    For any $u_1,u_2\in \cX$, we have
    \begin{align*}
       & \|\cS_{\cX}(u_1)+\cS_{\cX}(u_2)\|_{\cS_{\cX}} \\
       &=\sqrt{\langle \cS_{\cX}(u_1)+\cS_{\cX}(u_2),\cS_{\cX}(u_1)+\cS_{\cX}(u_2)\rangle_{\cX}}\\
        =&\sqrt{\sum_{i=1}^{D_1} w_i(u_1(\xb_i)+u_2(\xb_i))^2}
        = \sqrt{\sum_{i=1}^{D_1} w_iu_1^2(\xb_i)+2\sum_{i=1}^{D_1}w_i u_1(\xb_i)u_2(\xb_i)+\sum_{i=1}^{D_1}w_i u_2(\xb_i)^2)}\\
        =& \sqrt{\|\cS_{\cX}(u_1)\|_{\cS_{\cX}}^2+ 2\langle \cS_{\cX}(u_1), \cS_{\cX}(u_2)\rangle_{\cS_{\cX}} +\cS_{\cX}(u_2)\|_{\cS_{\cX}}^2 }\\
        \leq & \sqrt{\|\cS_{\cX}(u_1)\|_{\cS_{\cX}}^2+ 2\|\cS_{\cX}(u_1)\|_{\cS_{\cX}}\|\cS_{\cX}(u_2)\|_{\cS_{\cX}} +\cS_{\cX}(u_2)\|_{\cS_{\cX}}^2 }
        =\|\cS_{\cX}(u_1)\|_{\cS_{\cX}}+\|\cS_{\cX}(u_2)\|_{\cS_{\cX}},
    \end{align*}
    where we use Lemma \ref{lem:sampleinequality} in the first inequality. The result about $\|\cdot\|\cS_{\cY}$ can be proved similarly.
\end{proof}

\end{document}